\def\eqref#1{equation~\ref{#1}}
\def\1{\bm{1}}
\DeclareMathAlphabet{\mathsfit}{\encodingdefault}{\sfdefault}{m}{sl}
\SetMathAlphabet{\mathsfit}{bold}{\encodingdefault}{\sfdefault}{bx}{n}
\newcommand{\E}{\mathbb{E}}
\newcommand{\R}{\mathbb{R}}
\newcommand{\Var}{\mathrm{Var}}
\newtheorem{theorem}{Theorem}[section]
\newtheorem{lemma}[theorem]{Lemma}
\newtheorem{corollary}[theorem]{Corollary}
\theoremstyle{definition}
\newtheorem{definition}[theorem]{Definition}
\theoremstyle{remark}
\newtheorem{remark}[theorem]{Remark}
\newcommand{\wh}{\widehat}
\newcommand{\PP}{\mathbb{P}}   % then use \PP(\cdot)
\newcommand{\N}{\mathbb{N}}
\newcommand{\ip}[2]{\left\langle #1,#2\right\rangle}
\title{Closed-form $\ell_r$ norm scaling with data for overparameterized linear regression and diagonal linear networks under $\ell_p$ bias }
\author{Shuofeng Zhang \\
Rudolf Peierls Centre for Theoretical Physics\\
University of Oxford\\
\texttt{shuofeng.zhang@physics.ox.ac.uk} \\
\And
Ard Louis \\
Rudolf Peierls Centre for Theoretical Physics \\
University of Oxford \\
\texttt{ard.louis@physics.ox.ac.uk} \\
}
\begin{document}

\maketitle

\begin{abstract}
For overparameterized linear regression with isotropic Gaussian design and minimum-$\ell_p$ interpolator with $p\in(1,2]$, we give a unified, high-probability characterization for the scaling of the family of parameter norms  $\{\|\widehat{w_p}\|_{r}\}_{r\in [1,p]}$  with sample size. We solve this basic, but unresolved question through a 
 simple dual-ray analysis, which reveals a competition between a signal \emph{spike} and a \emph{bulk} of null coordinates in $X^\top Y$, yielding closed-form predictions for (i) a data-dependent transition $n_\star$ (the “elbow”), and (ii) a universal threshold $r_\star=2(p-1)$ that separates 
 $\|\widehat {w_p}\|_{r}$'s which plateau from those that continue to grow with an explicit exponent. This unified solution resolves the scaling of \emph{all} $\ell_r$ norms within the family $r\in [1,p]$ under $\ell_p$-biased interpolation, and explains in one picture which norms saturate and which increase as $n$ grows. We then study diagonal linear networks (DLNs) trained by gradient descent. By calibrating the initialization scale $\alpha$ to an effective $p_{\mathrm{eff}}(\alpha)$ via the DLN separable potential, we show empirically that DLNs inherit the same elbow/threshold laws, providing a predictive bridge between explicit and implicit bias. 
 Given that many generalization proxies depend on $\|\widehat {w_p}\|_{r}$, our results 
 suggest that their predictive power will be highly  sensitive to which $\ell_r$ norm is used.
 %\blfootnote{Code for this work can be found at \url{https://github.com/sofuncheung/minlp_codebase}}
\end{abstract}

\section{Introduction}
\label{sec:introduction}

Many modern generalization measures for machine learning tasks are anchored on the parameter norm instead of parameter count~\citep{neyshabur2015norm,neyshabur2015pathsgd,yoshida2017spectral,miyato2018spectral,cisse2017parseval}. Yet, most analyses of overparameterized regression still treat ``the norm'' monolithically---typically defaulting to $\ell_2$. If one is going to use a parameter norm, \emph{which} $\ell_r$ should be used, and how does that choice interact with the inductive bias that selects the interpolator (e.g., minimum-$\ell_p$)? This question has been comparatively less studied. We address this question first in a simpler but core setting---linear regression---and then connect the picture to diagonal linear networks (DLNs). Our experiments reveal that sweeping $(r,p)$ produces non‑trivial behavior: even for the \emph{same} interpolating predictor, some $\ell_r$ norms plateau while others keep growing with distinct slopes; in mixed cases, the elbow’s location shifts with $p$, and \emph{which} $r$’s plateau depends on the setting.

\medskip
In linear regression it is well understood that the value of $p$ \emph{shapes} the inductive bias (sparser as $p\!\downarrow\!1$, denser as $p\!\uparrow\!2$), making the $r$--$p$ interaction concrete. 
Beyond explicit $\ell_p$ penalties, first‑order optimization can \emph{implicitly} select a geometry: in overparameterized linear regression, gradient methods recover the minimum‑$\ell_2$ interpolant; in separable classification, gradient descent converges to a max‑margin solution; and in diagonal/deep linear parameterizations, the separable potentials governing the dynamics interpolate between sparse‑ and dense‑leaning behaviors depending on initialization and parameterization \citep{tibshirani1996lasso,frank1993bridge,HoerlKennard1970,chen2001basis,zou2005elastic,hastie2015sparsity,hastie2022surprises,Soudry2018ImplicitBias,gunasekar2018implicit,ji2019gradient,ChizatBach2018LazyTraining,WoodworthEtAl2020KernelRich}. This variety of explicit/implicit pathways for $p$‑like biases motivates our unified treatment of the \emph{family} $\{\|\widehat w\|_r\}$ and explains why different $\ell_r$ proxies can exhibit qualitatively different $n$‑dependence under a fixed training pipeline.

Concretely, we study the minimum-$\ell_p$ interpolator in high-dimensional linear regression with isotropic Gaussian design ($d>n$, $p\in(1,2]$), and we characterize---\emph{in closed form and with high probability}---how the entire family $\{\|\widehat w_p\|_r\}_{r\in[1,p]}$ scales with $n$. A one-dimensional \emph{dual--ray} analysis exposes a competition between a signal \emph{spike} and a high-dimensional \emph{bulk} in $X^\top Y$, yielding: (i) a data-dependent transition size $n_\star$ (an elbow in $n$), and (ii) a universal threshold $r_\star = 2(p-1)$ that separates norms that ultimately plateau ($r>r_\star$) from those that continue to grow with explicit exponents ($r\le r_\star$). We also extend the picture to DLNs trained by gradient descent: calibrating the initialization scale $\alpha$ via the network’s separable potential that gives an \emph{effective} exponent $p_{\text{eff}}(\alpha)$, and with this calibration the observed $\ell_r$–vs–$n$ curves inherit the same elbow/threshold structure as explicit minimum-$\ell_p$ interpolation. \emph{When the inductive bias is unknown a priori}—e.g., the operative $p$ of the training pipeline is unclear—our results imply that choosing the “right’’ $r$ for norm-based generalization measures can be delicate, since different $(r,p)$ pairs can produce opposite scaling behaviors (plateau vs.\ growth) as $n$ increases.

\medskip
\textit{Our contributions:}
\begin{enumerate}[leftmargin=*]
\item \textbf{Strong sensitivity of the parameter norm as a function of the pair $\mathbf (r,p)$  } %The pair $(r,p)$ 
We find a strong \emph{qualitative} effect for the scaling of the parameter norm with data: for fixed $p$, certain $\ell_r$ norms plateau while others grow with different slopes; varying $p$ moves the elbow and reassigns which $r$’s plateau.
\item \textbf{Closed-form scaling laws for parameter norms.} We derive the first unified closed-form scaling laws for this problem. For $p\in(1,2]$ and all $r\in[1,p]$, we identify the universal threshold $r_\star=2(p-1)$, give an explicit expression for the transition size $n_\star$, and provide plateau levels and growth exponents in both spike- and bulk-dominated regimes via a compact dual--ray argument.
\item \textbf{Extension of our theory to DLNs.} We  map the DLN initialization scale to geometry: $\alpha \mapsto p_{\text{eff}}(\alpha)$. Using this map, we transfer the theory to DLNs and verify the predicted elbow/threshold behavior of the parameter norm empirically.
\end{enumerate}

\paragraph{Implications for practice.}
Because many norm-based generalization measures and diagnostics depend on $\|\widehat w\|_r$, our results imply that practitioners using norm-based bounds or proxies---especially in more complex models such as DNNs---should be cautious: conclusions can be \emph{highly sensitive} to the choice of $r$, and the sensitivity depends on the underlying $\ell_p$ bias that selects the interpolator.

\section{Related work}
\label{sec:related}

The focus of this paper is a basic question: for overparameterized linear regression and related diagonal linear networks (DLNs), how do the \emph{parameter norms} $\{\|\wh w\|_r\}_{r\in[1,p]}$ scale with sample size when the interpolator is selected by an $\ell_p$ bias? The links to generalization are therefore indirect: norm quantities often appear as proxies in modern generalization measures \citep{Neyshabur2015Norms,BartlettFosterTelgarsky2017,DziugaiteRoy2017}, so understanding their $n$–scaling is informative, but we do not develop new generalization bounds here. Relatedly, recent analyses derive explicit norm upper bounds as intermediate steps toward generalization---often via Gaussian min–max techniques---for interpolators and max-margin procedures \citep{KoehlerZhouSutherlandSrebro2021Uniform,donhauser2022fast}.

\paragraph{The $\ell_r$ family of linear–regression interpolators.}
A large body of work characterizes how explicit $\ell_p$ penalties shape linear estimators: ridge/Tikhonov ($\ell_2$) \citep{HoerlKennard1970}, lasso ($\ell_1$) \citep{tibshirani1996lasso,Efron2004LARS,KnightFu2000,Zou2006Adaptive}, elastic net (mixtures of $\ell_1$ and $\ell_2$) \citep{zou2005elastic}, and the bridge family ($\ell_p$ for $0<p\le 2$) \citep{frank1993bridge}; basis pursuit gives the sparse interpolating extreme under equality constraints \citep{chen2001basis,CandesTao2007Dantzig,Donoho2006CompressedSensing,BickelRitovTsybakov2009}. High–dimensional convex‐geometric analyses explain when these programs select structured solutions and how their solutions move with the data geometry \citep{chandrasekaran2012convex,amelunxen2014living,BuhlmannVanDeGeer2011,Wainwright2019}, and recent developments give precise characterizations for ridgeless (minimum-$\ell_2$) interpolation and its risk \citep{hastie2022surprises,hastie2022surprises}. Our contribution complements this literature by treating the \emph{entire} norm family $\{\|\wh w_p\|_r\}_{r\in[1,p]}$ for minimum–$\ell_p$ \emph{interpolators} (with $p\in(1,2]$) and deriving closed‑form, high‑probability scaling laws in $n$ across $r$. In this sense we move from “which $p$ shapes which estimator?’’ to “given $p$, how do all $\ell_r$ diagnostics behave as data grow?’’

\paragraph{Overparameterization in regression and deep networks.}
The deep‑learning era stimulated a re‑examination of overparameterized regression, revealing phenomena such as double descent \citep{belkin2019reconciling,nakkiran2021deepdd,zhang2017rethinking,Nakkiran2020DeepDD,adlam2020understanding} and benign overfitting for minimum‑norm interpolators \citep{bartlett2020benign,hastie2022surprises,Muthukumar2021Harmless}. These results show that linear regression can capture qualitative behaviors seen in deep learning models and that the \emph{selected} interpolator’s norm matters for risk. Our work leverages this bridge as motivation only: by explaining, in closed form, which $\ell_r$ norms plateau and which grow (and at what rates) under an $\ell_p$ bias, we clarify what one should expect from norm‑based proxies commonly used in deep‑net analyses. Because practical pipelines for deep models rarely specify the effective $p$, our finding that $\|\wh w_p\|_r$ depends sensitively on the \emph{pair} $(r,p)$ suggests caution when interpreting norm‑based generalization diagnostics.

\paragraph{Explicit/implicit regularization and DLNs.}
Beyond explicit penalties, optimization can select solutions with an \emph{implicit} geometry \citep{Soudry2018ImplicitBias,LyuLi2019,gunasekar2018implicit, Gunasekar2018MatrixFactorization}. In overparameterized linear regression, gradient methods recover the minimum‑$\ell_2$ interpolant; in factorized or deep‑linear parameterizations, the training dynamics induce separable potentials that interpolate between sparse‑ and dense‑leaning behaviors depending on initialization and parameterization \citep{saxe2014exact,gunasekar2018implicit,ji2019gradient,ChizatBach2018LazyTraining,WoodworthEtAl2020KernelRich}. We build on this perspective for DLNs: by calibrating the initialization scale to an effective $p_{\mathrm{eff}}$, we show empirically that DLNs inherit the same elbow/threshold laws for $\{\|\wh w\|_r\}$ as explicit minimum–$\ell_p$ interpolation.

\paragraph{Proof techniques.}
Our analysis borrows standard high‑dimensional tools used throughout the modern regression literature—Gaussian concentration, blockwise (signal-vs-bulk) decompositions, and dual certificates in convex programs \citep{Vershynin2018,Tropp2015}—and combines them with a one‑dimensional “dual–ray’’ reduction tailored to the $\ell_p$ penalty. Two closely related works derive norm \emph{ upper} bounds as an intermediate step toward generalization, using the Gaussian Min–Max Theorem (GMT) and its convex analogue (CGMT): \citet{KoehlerZhouSutherlandSrebro2021Uniform,donhauser2022fast}. Their GMT/CGMT‑based proofs are conceptually similar in spirit; by contrast, our argument proceeds from first principles via a simple dual–ray balance and yields closed‑form $n$‑scaling laws without invoking GMT/CGMT (see also \citet{Gordon1985GMT,ThrampoulidisOymakHassibi2015CGMT} for the GMT and CGMT statements).

\section{Family of norm measures of minimum $\ell_p$-norm interpolator in linear models}
\label{sec:family-lr-minlp}

We now formalize the object introduced in the overview: for $p\in(1,2]$ in overparameterized linear regression, we study the  family $\{\|\widehat w_p\|_r\}_{r\in[1,p]}$ where $\widehat w_p$ is the minimum-$\ell_p$ interpolator. Our goal is to characterize how these norms scale with sample size $n$. Our results identify (i) a data‑dependent elbow $n_\star$ and (ii) a universal threshold $r_\star=2(p-1)$ that separates plateauing from growing $\ell_r$’s.

\paragraph{Data and settings.}
We consider overparameterized linear models with $X\in\mathbb{R}^{n\times d}$, $d>n$, rows i.i.d.\ $\mathcal N(0,I_d)$, and
\[
Y \;=\; X w^\star + \xi,\qquad \xi\sim\mathcal N(0,\sigma^2 I_n).
\]
The minimum-$\ell_p$ interpolator is
\[
\widehat w_p\;\in\;\arg\min_{w\in\mathbb{R}^d}\ \|w\|_p\quad\text{s.t.}\quad Xw=Y,
\qquad p\in (1,2].
\]
Let $s=\|w^\star\|_0$ denote the support size and $\tau_s^2:=\|w^\star\|_2^2+\sigma^2$. In contrast to interesting recent work by~\citet{donhauser2022fast},  our theory is \emph{not} restricted  to the $w^* = e_1$ limit of extreme sparse regression.

\subsection{Main theorem}
\begin{theorem}[$\ell_r$ scaling of minimum-$\ell_p$ interpolators]
\label{thm:main_compact}
Fix $p\in(1,2]$, set $q=\frac{p}{p-1}$, and take $r\in[1,p]$. Assume
\[
\frac{d}{n}\to\kappa\in(1,\infty)
\quad\text{and}\quad
\liminf_{n\to\infty}\frac{d-s}{n}=\kappa_{\mathrm{bulk}}>0.
\]
Let $w^\star$ have support $S$ with $|S|=s$, and let
\[
\widehat w_p\in\arg\min_{w\in\mathbb{R}^d}\ \|w\|_p
\quad\text{s.t.}\quad
Xw=Y.
\]
Write $W_q:=\|w^\star\|_q^q$ and $m_t:=\mathbb{E}|Z|^t$ for $Z\sim\mathcal N(0,1)$. Define the \emph{ray scale} $t_\star$ via
\begin{align}
\label{eq:tstar}
t_\star^{\,q-1}
\;=\; 
\frac{\|Y\|_2^2}{\|X^\top Y\|_q^q}
\;\asymp\;
\frac{\tau_s^2 n}{
\underbrace{n^q W_q}_{\text{spike}}
\;+\;
\underbrace{(d-s)\,m_q\,\tau_s^{q}\,n^{q/2}}_{\text{bulk}}
\;+\;
\underbrace{O\!\big(\tau_s^{\,q}\,(s\,n^{q/2}+s^{1+q/2})\big)}_{\text{remainder}}
}
\quad\text{w.h.p.}.
\end{align}
Then, w.h.p.  (see Remark~\ref{rem:growth-prob}),
\begin{align}
\label{eq:unified}
\|\widehat w_p\|_r
\;\asymp\;
\max\Big\{
&\; t_\star^{\,q-1}\,n^{\,q-1}\,\|w^\star\|_{(q-1)r}^{\,q-1},
\notag
\; (d-s)^{1/r}\,\big(t_\star\,\tau_s\sqrt n\big)^{\,q-1},
\notag\\
&\; s^{\,\max\{\,1/r,\ (q-1)/2\,\}}\ \big(t_\star\,\tau_s\sqrt n\big)^{\,q-1}
\Big\}.
\end{align}
Introduce the \emph{transition scale}
\begin{align}
\label{eq:nstar-main}
n_\star
\;\asymp\;
\Big(\kappa_{\mathrm{bulk}}\frac{\tau_s^{\,q}}{W_q}\Big)^{\frac{2}{q-2}}.
\end{align}
In the two extremes, we obtain:

\noindent\textbf{Spike-dominated ($n\gg n_\star$):}
\begin{align}
\label{eq:SD-main}
\|\widehat w_p\|_r
\;\asymp\;
\begin{cases}
\displaystyle \frac{\tau_s^{\,q+1}}{W_q}\ 
n^{\,\frac{1}{r}-\frac{1}{2(p-1)}},
& r\le 2(p-1),\\[6pt]
\displaystyle \frac{\tau_s^{\,2}}{W_q}\ 
\|w^\star\|_{(q-1)r}^{\,q-1},
& r>2(p-1).
\end{cases}
\end{align}

\noindent\textbf{Bulk-dominated ($n\ll n_\star$):}
\begin{align}
\label{eq:BD-main}
\|\widehat w_p\|_r
\;\asymp\;
\max\Big\{
&\; \kappa_{\mathrm{bulk}}^{\frac1r-1}\,\tau_s\,n^{\,\frac1r-\frac12},
\notag
\; \kappa_{\mathrm{bulk}}^{-1}\,\tau_s^{\,2-q}\,
\|w^\star\|_{(q-1)r}^{\,q-1}\,n^{\,\frac{q}{2}-1},
\notag\\
&\; \kappa_{\mathrm{bulk}}^{-1}\,\tau_s\,
s^{\,\max\{\,1/r,\ (q-1)/2\,\}}\,n^{-1/2}
\Big\}.
\end{align}
Since $d-s\asymp\kappa_{\mathrm{bulk}}n$, the last term equals
$\frac{\tau_s}{\,d-s\,}\,s^{\,\max\{1/r,\,(q-1)/2\}}\sqrt n$.
All $\asymp$ hide absolute constants depending only on $(p,\kappa_{\mathrm{bulk}},r)$.
\end{theorem}

\begin{remark}[Dual viewpoint]
The constrained problem
\(
\min_w \tfrac{1}{p}\|w\|_p^p \ \text{s.t.}\ Xw=Y
\)
has unconstrained dual
\(
\max_\lambda \ \lambda^\top Y - \tfrac{1}{q}\|X^\top\lambda\|_q^q
\),
with KKT conditions $Xw=Y$ and $X^\top\lambda=\nabla f(w)$. Restricting to the ray $\lambda=tY$ yields
\(
t_\star^{\,q-1}=\|Y\|_2^2/\|X^\top Y\|_q^q
\).
The “spike’’ vs.\ “bulk’’ terminology refers to which part of $\|X^\top Y\|_q$ controls $t_\star$.
\end{remark}

\begin{proof}[Proof sketch]
The behavior of the minimum-$\ell_p$ interpolator can be read through a simple dual lens: rather than track the optimizer directly, we examine a dual certificate that both fits the labels and respects a norm budget after passing through the design; pushing the dual along the label direction (a one‑dimensional “ray’’) reveals a single diagnostic scale where the budget tightens, and this scale is controlled by two competing sources in the correlations $X^\top Y$: a “spike’’ part (true signal coordinates) that coherently accumulates with $n$, and a “bulk’’ part (many null coordinates) that aggregates small, mostly noisy effects. Balancing these two contributions defines a data‑dependent transition sample size $n_\star$: for $n\ll n_\star$ the bulk dominates, the solution’s mass is effectively spread over many coordinates, and the family $\{\|\widehat w_p\|_r\}$ grows with $n$ in the way our bulk formulas predict (including the usual cross‑$r$ ordering and an $n^{1/2}$‑type trend visible in the plots); for $n\gg n_\star$ the spike dominates, mass concentrates on the support, and a clean threshold---determined by $p$ at $r=2(p-1)$---splits the outcomes: $\ell_r$ plateaus for $r$ above the threshold and grows with a gentler, explicit slope for $r$ below it. Standard concentration for Gaussian designs justifies the spike/bulk decomposition and the stability of the ray scale, and the KKT linkage between the dual certificate and the primal coordinates turns these ingredients into the unified bound, the expression for $n_\star$, and the two regime descriptions stated in the theorem. Full details are deferred to Appendix~\ref{app:proof_lr_norm}.
\end{proof}

\section{Corollaries for canonical targets}
\label{sec:canonical-corollaries}

To make the unified laws in Theorem~\ref{thm:main_compact} concrete, we specialize them to two canonical targets: (i) a single spike $w^\star=e_1$, and (ii) a flat $s$-sparse vector with equal magnitude $a$ on its support. Substituting the problem-specific scales $W_q=\|w^\star\|_q^q$ and $\tau_s^2=\|w^\star\|_2^2+\sigma^2$ into the elbow formula \eqref{eq:nstar-main} and the spike-/bulk-dominated expressions \eqref{eq:SD-main}--\eqref{eq:BD-main} yields closed-form, high-probability predictions for $\|\widehat w_p\|_r$ and the transition size $n_\star$. We record these specializations below as Corollaries~\ref{cor:e1} and~\ref{cor:flat}, and use them as reference overlays in our experiments.

\subsection{Single spike}
\label{subsec:e1}
\begin{corollary}[Single spike]
\label{cor:e1}
Under Theorem~\ref{thm:main_compact} with $w^\star=e_1$ and $\tau^2=1+\sigma^2$, for any $r\in [1,p]$:
\begin{align*}
\text{\emph{Bulk‑dominated} ($n\ll n_\star$):}\quad
&\|\widehat w_p\|_r \;\asymp\; \tau\,(d-1)^{\frac1r-1}\,n^{1/2},\\[2pt]
\text{\emph{Spike‑dominated} ($n\gg n_\star$):}\quad
&\|\widehat w_p\|_r \;\asymp\;
\begin{cases}
\tau^{\,q+1}\,n^{\frac{1}{r}-\frac{1}{2(p-1)}} & \text{if } r\le 2(p-1),\\
\tau^{\,2} & \text{if } r>2(p-1).
\end{cases}
\end{align*}
\end{corollary}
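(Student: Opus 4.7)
The plan is to instantiate Theorem~\ref{thm:main_compact} at the canonical target $w^\star=e_1$ and read off the three regimes by direct substitution. The key simplifications are $s=1$, $W_q=\|e_1\|_q^q=1$, $\tau_s^2=1+\sigma^2=\tau^2$, and $\|w^\star\|_{(q-1)r}^{\,q-1}=1$, so the ray scale \eqref{eq:tstar} reduces to
\begin{equation*}
t_\star^{\,q-1}\asymp \frac{\tau^{2}\, n}{\,n^{q} \;+\; (d-1)\,m_q\,\tau^{q}\,n^{q/2}\;+\;O\!\bigl(\tau^{q}(n^{q/2}+1)\bigr)\,}.
\end{equation*}
With $s=1$ the remainder is subleading relative to the bulk term (since $d-1\asymp\kappa_{\mathrm{bulk}} n\gg 1$), and balancing $n^{q}$ against $(d-1)\tau^{q} n^{q/2}\asymp \kappa_{\mathrm{bulk}}\tau^{q} n^{1+q/2}$ gives $n^{q/2-1}\asymp\kappa_{\mathrm{bulk}}\tau^{q}$, i.e.\ the elbow $n_\star\asymp(\kappa_{\mathrm{bulk}}\tau^{q})^{2/(q-2)}$ of \eqref{eq:nstar}.

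Next I evaluate the three candidates in the unified bound \eqref{eq:unified} on each side of the elbow. In the spike regime $n\gg n_\star$ the spike part dominates the denominator of $t_\star$, yielding $t_\star^{\,q-1}\asymp \tau^{2}\,n^{1-q}$ and hence $(t_\star\tau\sqrt n)^{q-1}\asymp \tau^{q+1}\,n^{-(q-1)/2}$. Substituting: candidate~1 is $t_\star^{q-1}n^{q-1}\asymp \tau^{2}$; candidate~2 is $(d-1)^{1/r}(t_\star\tau\sqrt n)^{q-1}\asymp (\kappa_{\mathrm{bulk}} n)^{1/r}\tau^{q+1}n^{-(q-1)/2}\asymp \tau^{q+1}n^{1/r-1/(2(p-1))}$, using the identity $(q-1)/2=1/(2(p-1))$; candidate~3 lacks the $(d-1)^{1/r}$ factor and is therefore strictly smaller than candidate~2. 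The stated dichotomy then follows by comparing $\tau^{2}$ with $\tau^{q+1}n^{1/r-1/(2(p-1))}$: the latter wins exactly when $r\le 2(p-1)$, otherwise the plateau $\tau^{2}$ takes over.

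In the bulk regime $n\ll n_\star$, the bulk term dominates, giving $t_\star^{\,q-1}\asymp \tau^{2-q}n^{-q/2}/\kappa_{\mathrm{bulk}}$ and $(t_\star\tau\sqrt n)^{q-1}\asymp \tau/(\kappa_{\mathrm{bulk}}\sqrt n)$. Candidate~2 then reads $(d-1)^{1/r}\tau/(\kappa_{\mathrm{bulk}}\sqrt n)\asymp \kappa_{\mathrm{bulk}}^{1/r-1}\tau\,n^{1/r-1/2}\asymp \tau(d-1)^{1/r-1}n^{1/2}$, which is the target expression. Candidate~3 differs from candidate~2 by a factor of $(d-1)^{1/r}\gg 1$ and is dominated; candidate~1 becomes $\tau^{2-q}n^{q/2-1}/\kappa_{\mathrm{bulk}}$, which at $n=n_\star$ equals the plateau value $\tau^{2}$ and shrinks for $n<n_\star$. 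Moreover, candidate~2 evaluated at $n_\star$ matches the spike-side expression $\tau^{q+1}n_\star^{1/r-1/(2(p-1))}$ up to constants absorbed in $\asymp$, confirming a continuous junction at the elbow.

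The main obstacle is not any single algebraic step but the uniform dominance check: verifying that candidate~2 majorizes candidates~1 and~3 across the entire bulk regime uniformly in $r\in[1,p]$, and that the spike- and bulk-side formulas match continuously at $n\asymp n_\star$ for every $r$. Secondary care is needed at the borderline exponent $r=2(p-1)$ (handled by continuity from either side, and meaningful only when $2(p-1)\in[1,p]$, i.e.\ $p\in[3/2,2]$; for $p<3/2$ all admissible $r$ exceed $2(p-1)$ and only the plateau branch applies) and in the degenerate limit $p\to 2$ where $q\to 2$ makes the $n_\star$ exponent diverge, reflecting that the bulk ceases to dominate and the system is always in the spike regime.
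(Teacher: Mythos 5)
Your proposal is correct in approach and matches the paper's own appendix derivation (Case (i) in \S\ref{subsec:corollaries-single-flat}): both instantiate Theorem~\ref{thm:main_compact} with $W_q=1$, $\tau_s=\tau$, $\|w^\star\|_{(q-1)r}=1$, and read off the spike/bulk regimes from \eqref{eq:SD}--\eqref{eq:BD}. Your computations of $n_\star$, $t_\star$ in both regimes, candidates 1--3, and the comparison giving the threshold $r_\star=2(p-1)$ are all accurate, and you correctly note that candidate~3 is suppressed by the factor $(d-1)^{-1/r}$.

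One point deserves care, and you partly flag it yourself. In the bulk regime the paper's appendix actually retains a \emph{two-term} maximum,
\[
\|\widehat w_p\|_r \;\asymp\; \max\Big\{\kappa_{\mathrm{bulk}}^{\,1/r-1}\tau\,n^{1/r-1/2},\;\kappa_{\mathrm{bulk}}^{-1}\tau^{2-q}\,n^{q/2-1}\Big\},
\]
i.e., candidate~1 (your ``spike-main'' term) is not generally dominated. Your argument that candidate~1 at $n_\star$ equals $\tau^2$ and shrinks for $n<n_\star$ bounds candidate~1 from above but does not by itself show candidate~2 majorizes it throughout $n\ll n_\star$: the ratio $\text{cand }2/\text{cand }1 \asymp \kappa_{\mathrm{bulk}}^{1/r}\tau^{q-1}n^{1/r-1/(2(p-1))}$ can dip below one near $n_\star$ when $r>2(p-1)$ and $n_\star$ is large. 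The main-text Corollary~\ref{cor:e1} presents only the candidate-2 term, so your simplification mirrors the paper's own, but a rigorous statement of the bulk display should either keep both terms (as the paper's appendix does) or restrict to $n$ sufficiently far below $n_\star$ so that candidate~2 provably dominates. Apart from this, the argument is complete, and your observation that for $p<3/2$ the constraint $r\in[1,p]$ forces the plateau branch is a correct and useful addendum.
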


\paragraph{Interpretation.}
Here $W_q{=}1$ and $n_\star \asymp \big(\kappa_{\mathrm{bulk}}\tau^{\,q}\big)^{2/(q-2)}$ from \eqref{eq:nstar-main}.  
For $r>2(p{-}1)$ the $\ell_r$ curves \emph{plateau} at level $\asymp\tau^2$ once $n\gg n_\star$; for $r\le 2(p{-}1)$ they continue to grow with slope $\frac{1}{r}-\frac{1}{2(p-1)}$. 

\subsection{Flat support}
\label{subsec:flat}
\begin{corollary}[Flat support]
\label{cor:flat}
Under Theorem~\ref{thm:main_compact} and a flat $w^\star$ on $S$ with $|S|=s$ and $w^\star_j=a\,s_j$ for $j\in S$ ($|s_j|=1$), for any $r\in[1,p]$, w.h.p.:
\begin{align*}
\text{\emph{Spike-dominated} ($n\ge C n_\star$):}\quad
&\|\widehat w_p\|_r \;\asymp\;
\begin{cases}
\dfrac{(sa^2+\sigma^2)^{\frac{q+1}{2}}}{s|a|^q}\;n^{\frac{1}{r}-\frac{1}{2(p-1)}} & r\le 2(p-1),\\[8pt]
s^{\frac{1}{r}-1}\,\dfrac{sa^2+\sigma^2}{|a|} & 2(p-1)< r\le p,
\end{cases}\\[4pt]
\text{\emph{Bulk-dominated} ($n\le c n_\star$):}\quad
&\|\widehat w_p\|_r \;\asymp\;
\max\Big\{
\kappa_{\mathrm{bulk}}^{\,\frac1r-1}\,\tau_s\,n^{\frac1r-\frac12},\;
\kappa_{\mathrm{bulk}}^{-1}\,\tau_s^{\,2-q}\,s^{1/r}|a|^{q-1}\,n^{\frac{q}{2}-1},\\
&\hspace{26mm}
\kappa_{\mathrm{bulk}}^{-1}\,\tau_s\,s^{\,\max\{1/r,\,(q-1)/2\}}\,n^{-1/2}
\Big\}.
\end{align*}
\end{corollary}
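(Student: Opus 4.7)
The plan is to obtain the corollary by direct specialization of Theorem~\ref{thm:main_compact}: no new probabilistic ingredients are needed, since the only task is to substitute the flat-target scalars into the already-proved unified bound, and then verify that the three-term maximum in \eqref{eq:unified} collapses to the cleaner two-case statement claimed here.

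First I would compute the three target-dependent quantities that appear throughout the theorem: $W_q = \|w^\star\|_q^q = s|a|^q$, $\tau_s^2 = \|w^\star\|_2^2 + \sigma^2 = sa^2 + \sigma^2$, and $\|w^\star\|_{(q-1)r}^{q-1} = \bigl(s|a|^{(q-1)r}\bigr)^{1/r} = s^{1/r}|a|^{q-1}$. Plugging $W_q$ and $\tau_s^2$ into \eqref{eq:nstar} immediately produces the elbow $n_\star$ in terms of $(s,a,\sigma,\kappa_{\mathrm{bulk}})$. For the spike regime I would substitute into \eqref{eq:SD}: the $r \le 2(p-1)$ branch $\tau_s^{q+1}/W_q \cdot n^{1/r - 1/(2(p-1))}$ becomes $(sa^2+\sigma^2)^{(q+1)/2}/(s|a|^q)\cdot n^{1/r-1/(2(p-1))}$ verbatim, while the $r > 2(p-1)$ branch $\tau_s^2\|w^\star\|_{(q-1)r}^{q-1}/W_q$ simplifies to $s^{1/r-1}(sa^2+\sigma^2)/|a|$ after cancellation of one power of $|a|^q$. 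For the bulk regime, substituting termwise into \eqref{eq:BD} leaves the first and third terms unchanged and turns the middle term into $\kappa_{\mathrm{bulk}}^{-1}\tau_s^{\,2-q}\,s^{1/r}|a|^{q-1}\,n^{q/2-1}$ via the value of $\|w^\star\|_{(q-1)r}^{q-1}$.

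The one place that requires thought rather than bookkeeping is showing that the three-term max in \eqref{eq:unified} reduces to the two stated branches in the spike regime. The contribution $T_1 = t_\star^{q-1} n^{q-1}\|w^\star\|_{(q-1)r}^{q-1}$ becomes constant in $n$ once the spike part of $\|X^\top Y\|_q^q$ dominates the denominator of $t_\star$, and matches the $r>2(p-1)$ plateau; the contribution $T_2 = (d-s)^{1/r}(t_\star\tau_s\sqrt n)^{q-1}$ scales as $n^{1/r-1/(2(p-1))}$ under $d-s \asymp n$, matching the $r\le 2(p-1)$ branch. Comparing their $n$-exponents shows $T_2$ dominates for $r < 2(p-1)$ and $T_1$ for $r > 2(p-1)$, precisely the crossover claimed. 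The third term $T_3 = s^{\max\{1/r,(q-1)/2\}}(t_\star\tau_s\sqrt n)^{q-1}$ differs from $T_2$ only by the factor $s^{\max\{1/r,(q-1)/2\}}/(d-s)^{1/r}$, which is $o(1)$ under the overparameterized scaling $d-s\asymp\kappa_{\mathrm{bulk}} n$ with $s$ not too large, so it never dominates asymptotically. This crossover check is the main (and essentially only) obstacle; once it is dispensed with, the corollary follows by rearrangement of \eqref{eq:SD}--\eqref{eq:BD}.
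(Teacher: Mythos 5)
Your plan matches the paper's own derivation essentially step for step: compute $W_q=s|a|^q$, $\tau_s^2=sa^2+\sigma^2$, and $\|w^\star\|_{(q-1)r}^{q-1}=s^{1/r}|a|^{q-1}$, then substitute into \eqref{eq:nstar}, \eqref{eq:SD}, and \eqref{eq:BD} and simplify the plateau branch $\tau_s^2\|w^\star\|_{(q-1)r}^{q-1}/W_q = s^{1/r-1}(sa^2+\sigma^2)/|a|$. That is all the paper does for this corollary; the substitutions are pure bookkeeping.

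One imprecision in your final paragraph: the ``crossover check'' you describe is redundant, since Theorem~\ref{thm:main_compact} already reduces the three-term maximum \eqref{eq:unified} to the two-branch display \eqref{eq:SD} in the spike-dominated regime, so the corollary only needs to substitute into \eqref{eq:SD}, not re-derive the collapse. More importantly, your justification for dropping the spike-remainder term $T_3$ is stated incorrectly for $r>2(p-1)$. There $\max\{1/r,(q-1)/2\}=(q-1)/2$, so the ratio $T_3/T_2 = s^{(q-1)/2}/(d-s)^{1/r}$ is \emph{not} generally $o(1)$ (e.g.\ when $s\asymp n$ it grows, since $(q-1)/2>1/r$). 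The right comparison in that branch is $T_3$ against the spike-main term $T_1$: in the spike regime $T_3 \propto n^{-1/(2(p-1))}\to 0$ while $T_1$ is a constant plateau, which is why $T_1$ dominates. This does not change the conclusion---the collapse is already established inside the theorem's own proof---but if you expand the sketch into a full argument you should fix which pair you compare.
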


\paragraph{Interpretation.}
Here $W_q{=}s|a|^q$ and $\tau_s^2{=}sa^2{+}\sigma^2$, so \eqref{eq:nstar-main} gives
$n_\star \asymp \big(\kappa_{\mathrm{bulk}}\tau_s^{\,q}/(s|a|^q)\big)^{\!2/(q-2)}$, which grows with $s$ (the elbow shifts to larger $n$). In the spike‑dominated plateau branch ($r>2(p{-}1)$) the level scales as $s^{\frac1r-1}\,(sa^2{+}\sigma^2)/|a|$, which is typically of the same order as the single‑spike plateau for moderate $s$.

\paragraph{Comparison across targets.}
The threshold $r=2(p-1)$ and the $n$-exponents in both regimes are \emph{unchanged} between
Corollaries~\ref{cor:e1} and \ref{cor:flat}. The differences lie in the \emph{scales}:
(i) the transition size moves from $n_\star\asymp(\kappa_{\mathrm{bulk}}\tau^{\,q}/W_q)^{2/(q-2)}$ with
$W_q{=}1$, $\tau^2{=}1{+}\sigma^2$ (single spike) to
$n_\star\asymp(\kappa_{\mathrm{bulk}}\tau_s^{\,q}/W_q)^{2/(q-2)}$ with
$W_q{=}s|a|^q$, $\tau_s^2{=}sa^2{+}\sigma^2$ (flat), which scales roughly linearly in $s$
(cf.~\eqref{eq:nstar-main}). Hence the elbow for regime change shifts to \emph{larger} $n$ when we move from
$e_1$ to a flat $w^\star$ with $s{=}50$. (ii) In the spike‑dominated plateau branch
($r>2(p-1)$), the level changes from $\asymp \tau^2$ (single spike) to
$\asymp s^{\frac{1}{r}-1}\,(sa^2{+}\sigma^2)/|a|$ (flat)
[cf.~\eqref{eq:SD-main} and Corollary~\ref{cor:flat}]; for moderate $s$ this produces \emph{comparable}
numerical magnitudes, which is why the vertical ranges in our figures are similar. The regime
labels (bulk vs.\ spike) and their slopes/plateaus therefore provide the informative contrast.

% ===================== EXPERIMENTS=====================
\subsection{Linear regression with explicit minimum-$\ell_p$ bias}
\label{subsec:linreg}
Here the inductive bias is explicit: for a chosen $p$, the interpolator is the minimum‑$\ell_p$ element among all $w$ with $Xw=Y$. Sweeping $p$ slides the solution from a more sparse‑leaning geometry as $p\downarrow 1$ toward a more dense‑leaning geometry as $p\uparrow 2$, revealing how the objective itself shapes the family $\{\|\widehat w_p\|_r\}_r$.

\paragraph{Experimental protocol.}
We set $d = 50,000$, $\sigma=0.1$, sweep $p\in\{1.1,1.5,1.9\}$, and vary $n$. Each plot overlays test MSE (left axis) and representative $\ell_r$ curves (right axis). 
For flat $w^*$ experiments, we kept $\| w^* \|_2$ = 1, i.e. $a = \frac{1}{\sqrt{s}}$. 
Additional noise sweeps are reported in Appendix~\ref{app:noise-sweeps}.

\emph{What the figures show and why.}
In Fig.~\ref{fig:e1-linreg} (single spike), the left/middle/right panels follow the corollary’s regime rules. In the left panel, for $r>2(p{-}1)$ the curves flatten after the transition, while for smaller $r$ they retain the predicted growth; thin reference overlays (where present) trace these slopes. The middle panel exhibits a clear elbow near the predicted $n_\star$, which is close to $1.4 \times 10^{3}$ under our experimental parameters; beyond it, the $r>2(p{-}1)$ curves plateau in line with \eqref{eq:SD-main}, while the others keep their slope. The right panel stays bulk‑dominated across the range, with the $\ell_r$ traces growing approximately as $n^{1/2}$ and ordered across $r$ as the bulk formula prescribes. 

In Fig.~\ref{fig:flat-linreg} (flat $w^\star$ with $s{=}50$), the \emph{same} slope/plateau rules apply, but the transition scale is larger: the elbow for $p{=}1.5$ appears at a later $n$ (or just off‑range), consistent with $n_\star$ increasing roughly linearly with $s$ in \eqref{eq:nstar-main}. Across panels, the absolute $\ell_r$ values are numerically similar to Fig.~\ref{fig:e1-linreg}; this matches the flat‑support plateau level in Corollary~\ref{cor:flat}, which for moderate $s$ is close to the single‑spike level. The informative distinction is thus \emph{where} the curves switch from bulk growth to spike plateaus and the persistence of the $n^{1/2}$ slope in regimes that remain bulk‑dominated.

\paragraph{Experiments with larger sparsity.}
We repeat the explicit minimum-\(\ell_p\) runs at larger supports, \(s\in\{500,5000\}\), with the same  $ \|w^\star\|_2 {=} 1$ and noise level (\(\sigma=0.1\)); see Appendix~\ref{app:large-s-explicit-p}, Figs.~\ref{fig:explicit-s500}-\ref{fig:explicit-s5000}. The qualitative picture from \(s{=}50\) reappears but shifts to larger \(n\), consistent with the transition size \(n_\star\) in \eqref{eq:nstar-main} growing with \(s\). For small \(p\) (\(p{=}1.1\)), the prolonged bulk‑dominated window makes the \emph{double‑descent} pattern visible---generalization error first \emph{increases} and then drops (most clearly at \(s{=}5000\))---while the blue \(\ell_{1.1}\) curve keeps rising along the bulk guide across the plotted range \citep{belkin2019reconciling,nakkiran2020deep,hastie2022surprises}. For larger \(p\) (\(p{=}1.5,1.9\)), the curves remain monotonically decreasing; the minimized \(\ell_p\) traces drift only mildly upward (no flattening within the range), reflecting the rounder geometry that avoids early over‑reliance on noisy bulk directions. In all panels, the dashed overlays track the bulk/spike trends and the expected \(r\)‑ordering of the \(\ell_r\) diagnostics, matching the regime structure highlighted in the theory.

\begin{figure*}[t]
  \centering
  \subfigure[$p=1.1$ (sparsity‑leaning)]{
    \includegraphics[width=0.31\textwidth]{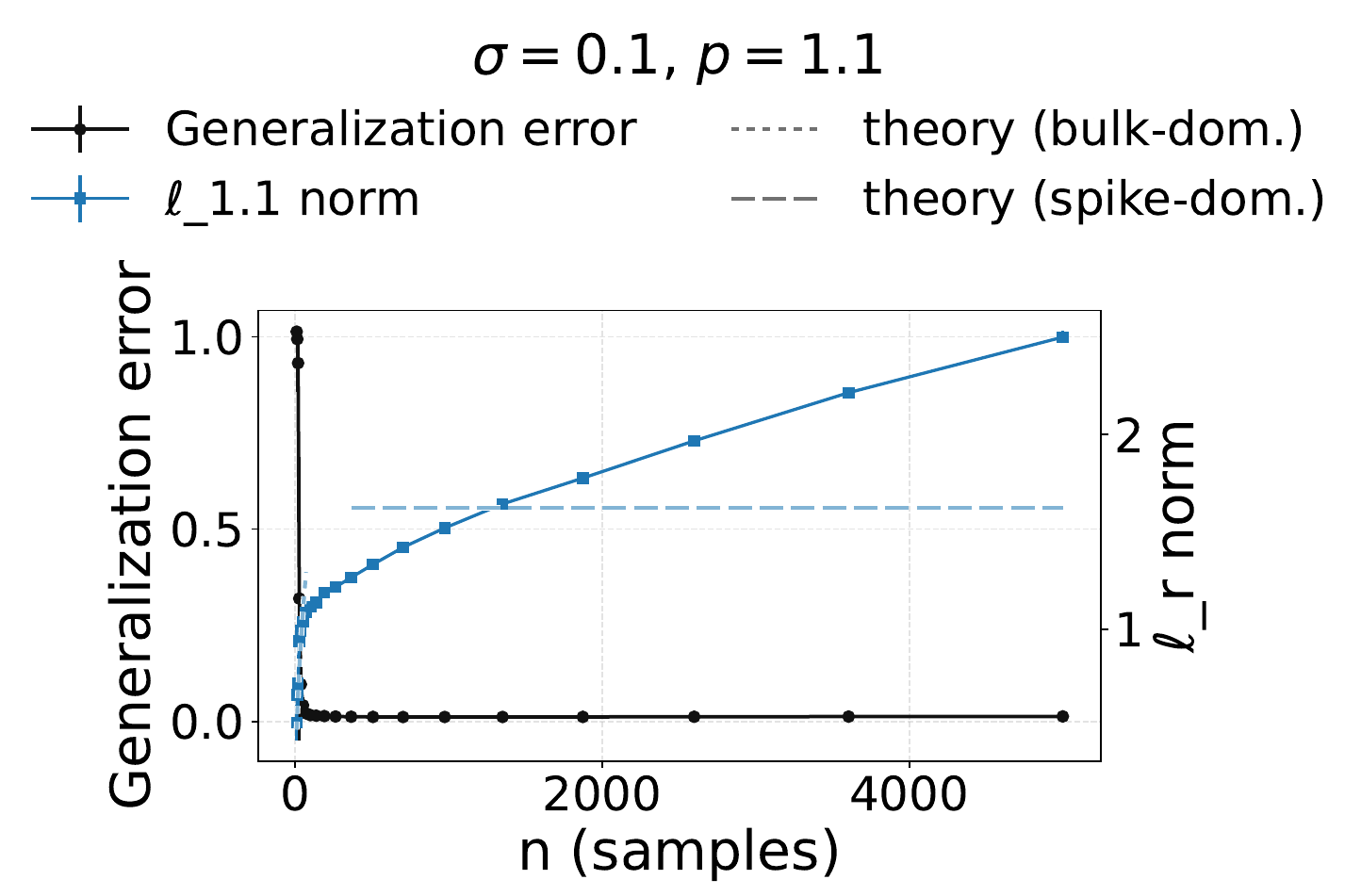}
  }\hfill
  \subfigure[$p=1.5$]{
    \includegraphics[width=0.31\textwidth]{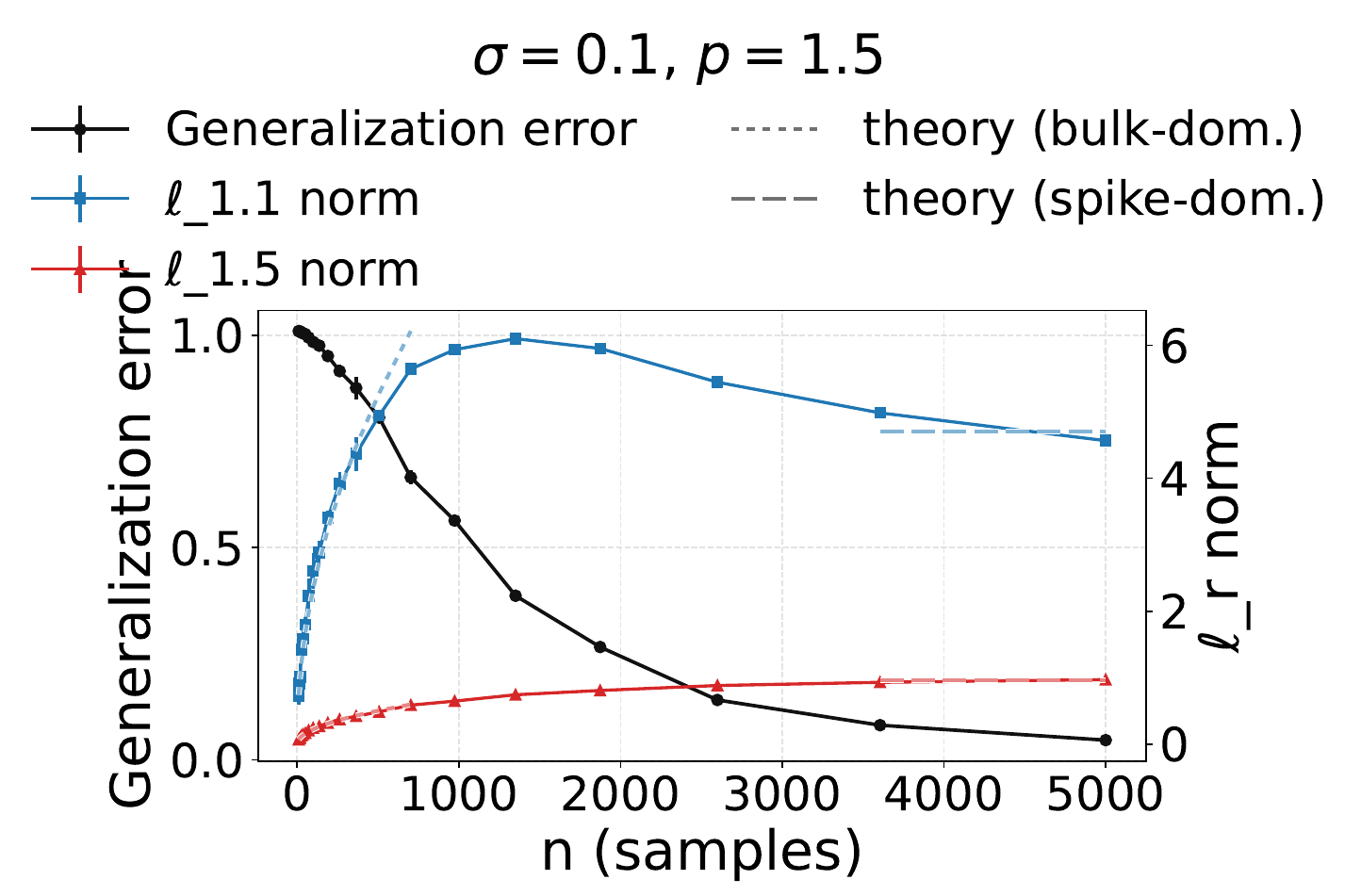}
  }\hfill
  \subfigure[$p=1.9$ (dense‑leaning)]{
    \includegraphics[width=0.31\textwidth]{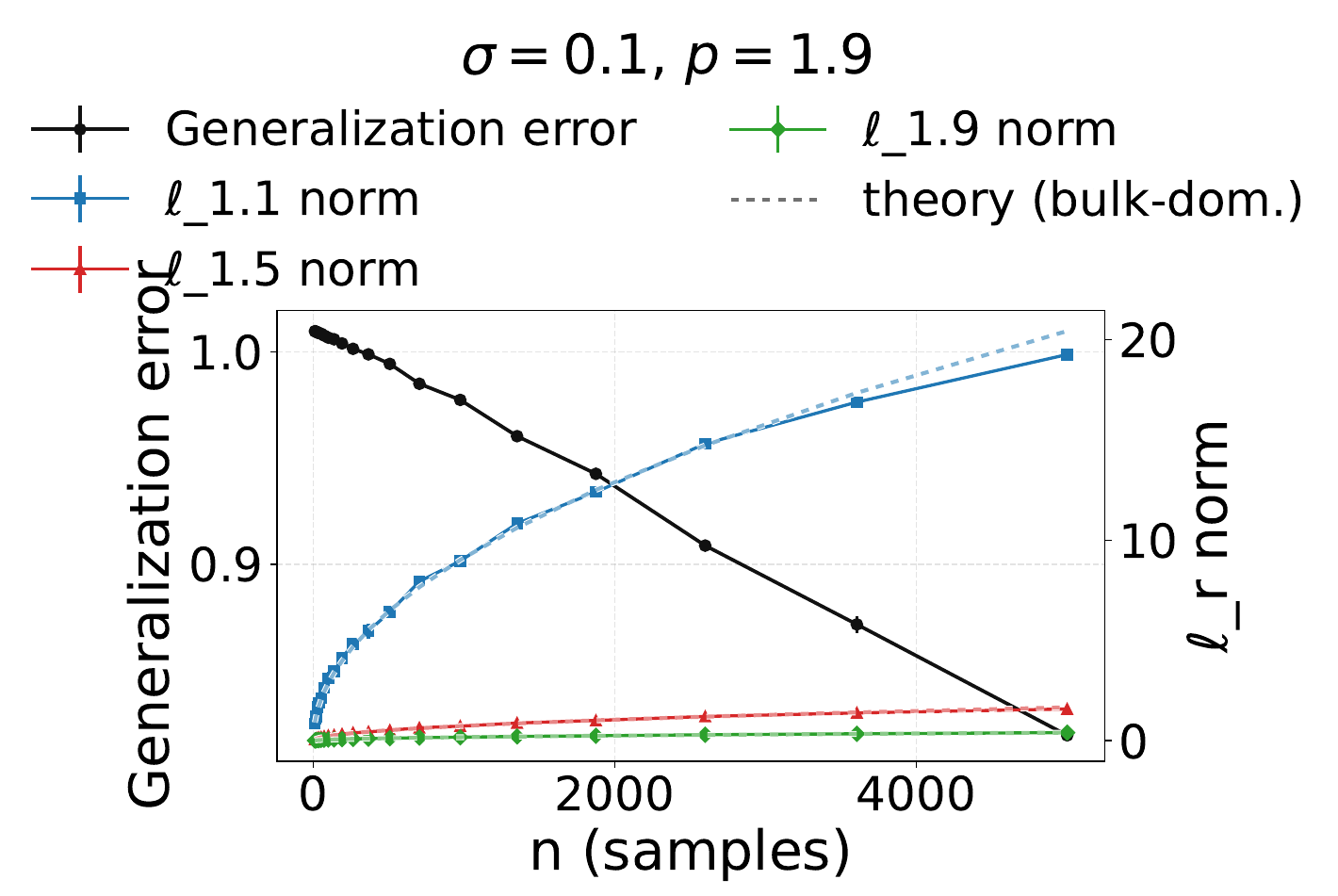}
  }
  \vspace{-0.3em}
  \caption{\textbf{Single spike $w^\star=e_1$; explicit minimum‑$\ell_p$ interpolation.}
  Ordering across $r$ and the presence/absence of elbows follow Corollary~\ref{cor:e1}; the bulk panels rise like $n^{1/2}$ and the spike‑side panels plateau for $r>2(p{-}1)$, consistent with \eqref{eq:SD-main}-\eqref{eq:BD-main}.}
  \label{fig:e1-linreg}
\end{figure*}

\begin{figure*}[t]
  \centering
  \subfigure[$p=1.1$ (sparsity‑leaning)]{
    \includegraphics[width=0.31\textwidth]{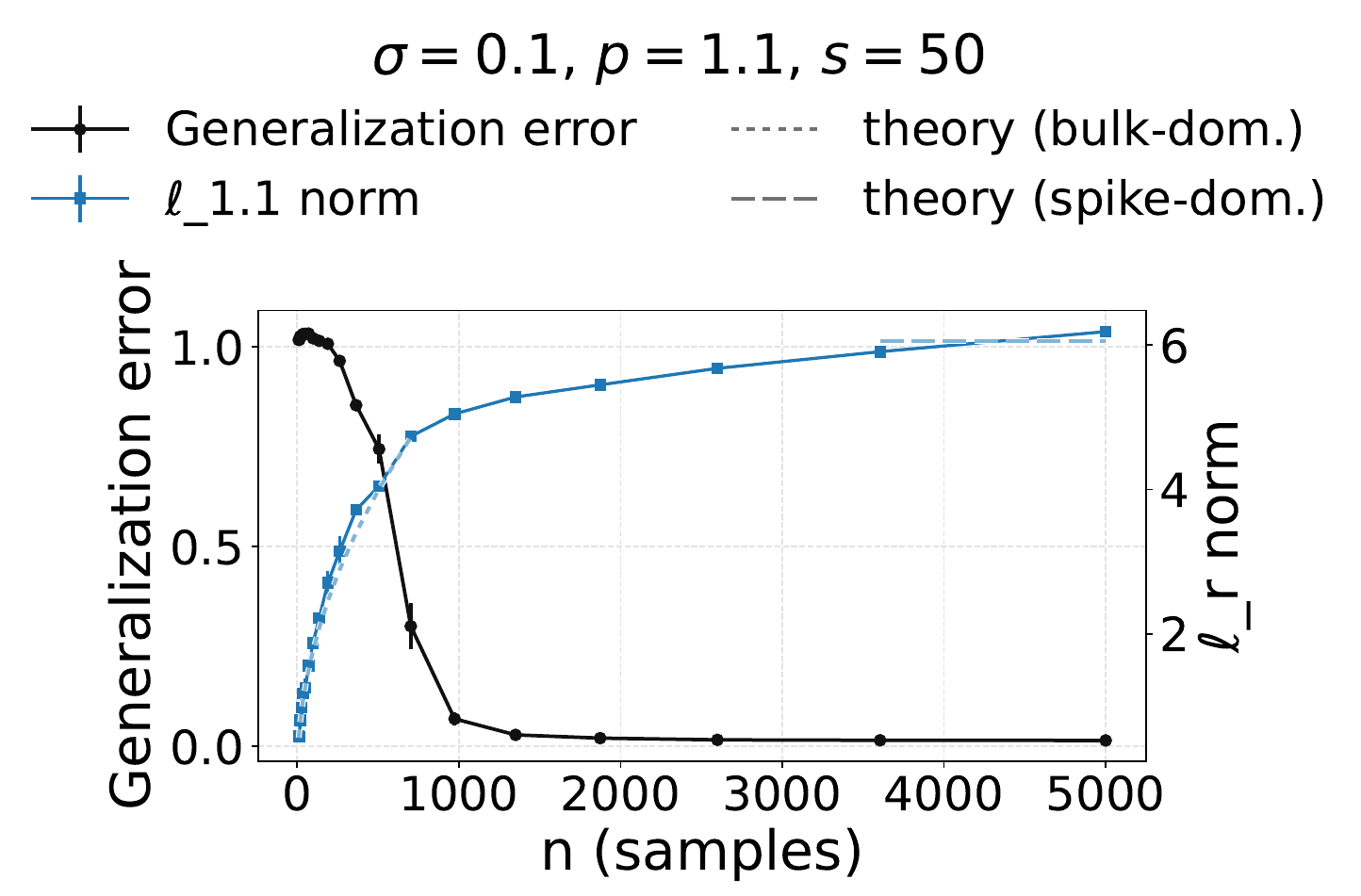}
  }\hfill
  \subfigure[$p=1.5$]{
    \includegraphics[width=0.31\textwidth]{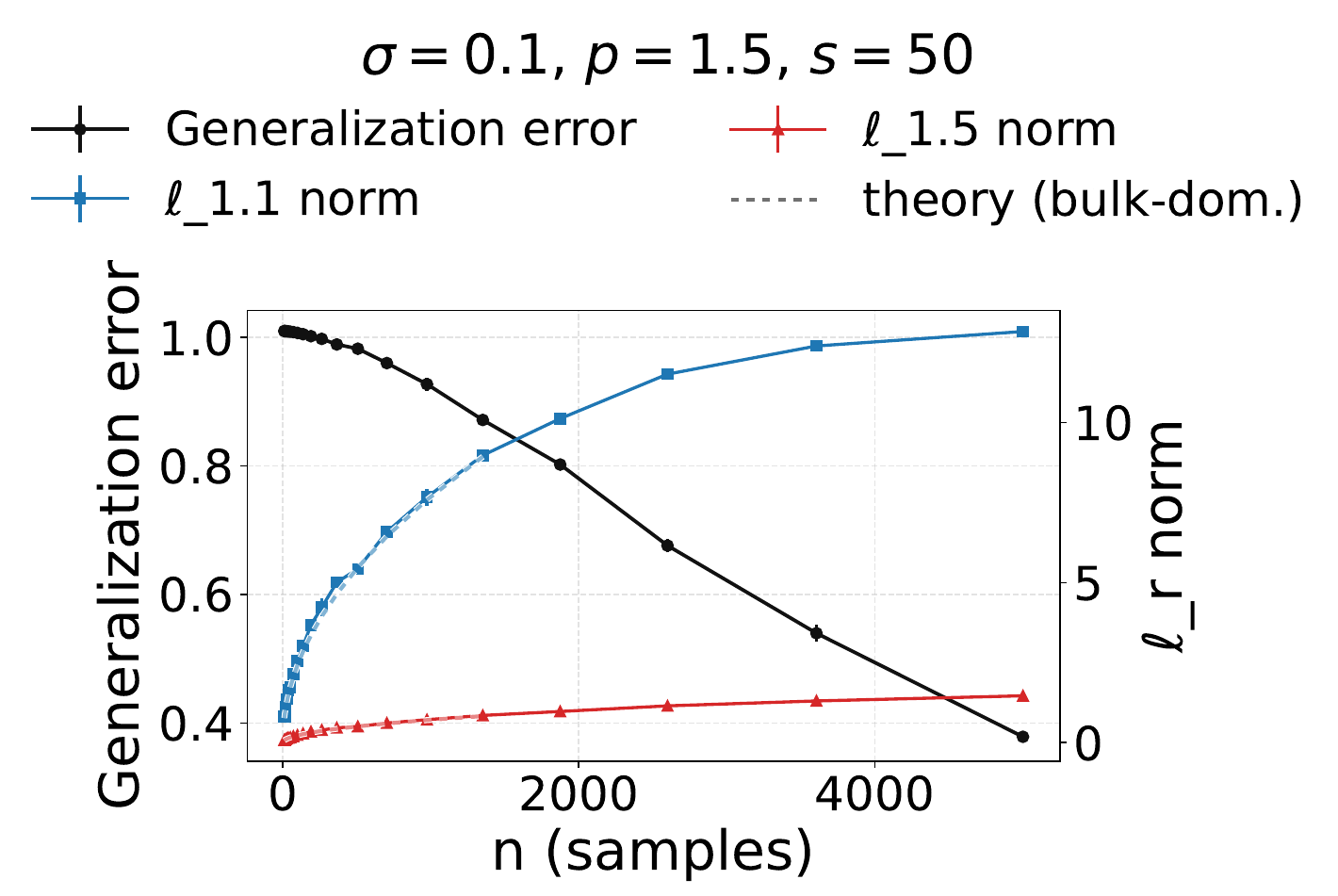}
  }\hfill
  \subfigure[$p=1.9$ (dense‑leaning)]{
    \includegraphics[width=0.31\textwidth]{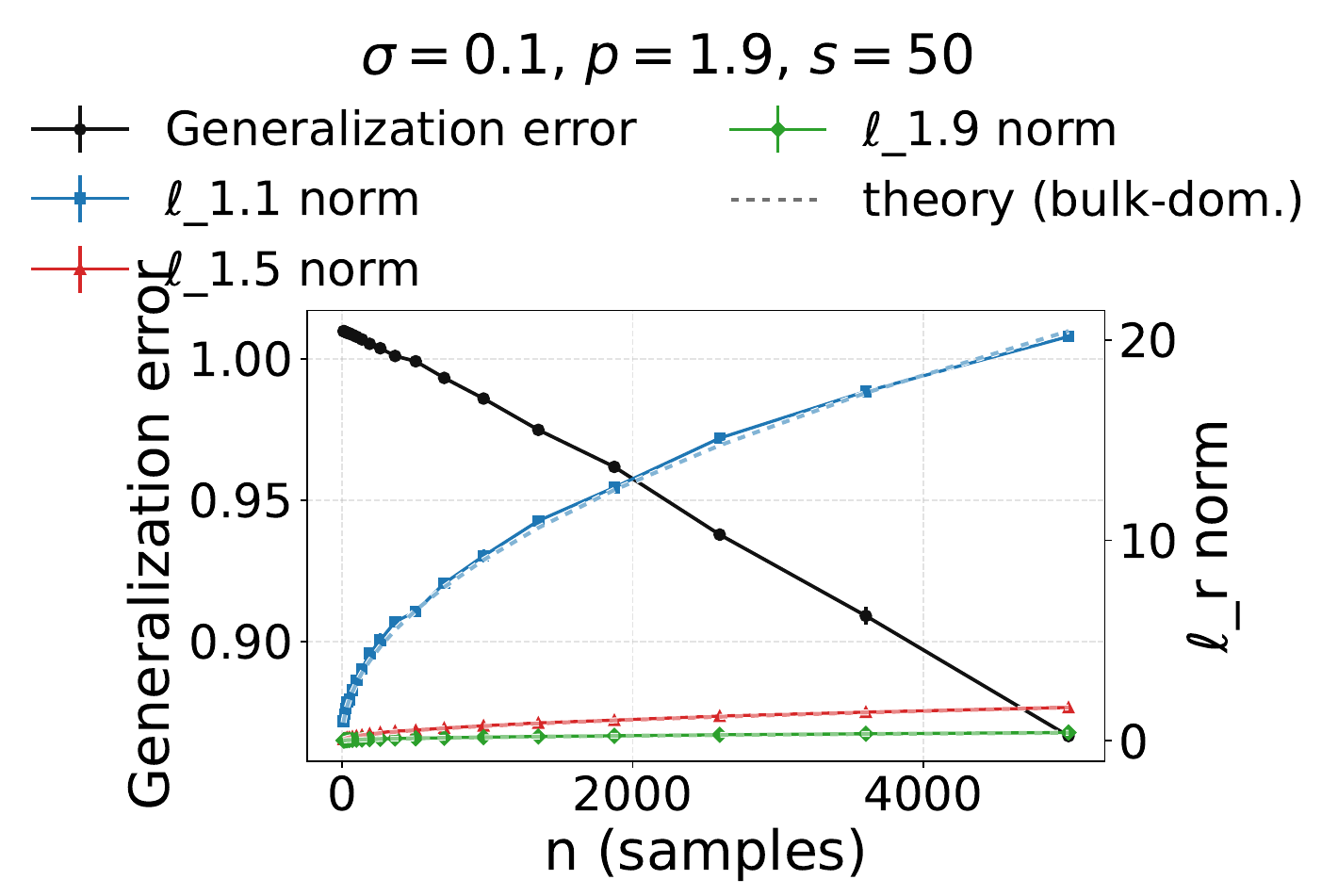}
  }
  \vspace{-0.3em}
  \caption{\textbf{Flat $w^\star$ ($s=50$); explicit minimum‑$\ell_p$ interpolation.}
  The scaling rules mirror the flat‑support corollary: bulk growth persists until a larger transition scale, while spike‑side $r$ values plateau; absolute levels are comparable to the single‑spike case, as predicted by the plateau formulas.}
  \label{fig:flat-linreg}
\end{figure*}

\subsection{Diagonal linear network with implicit bias}
\label{subsec:dln}
Diagonal linear networks (DLNs) - deep linear models whose weight matrices are diagonal so that the effective predictor is the coordinatewise product of layer parameters---provide a tractable testbed for understanding optimization‑induced geometry and implicit bias in overparameterized systems. They connect classical analyses of linear nets and factorized parameterizations \citep{SaxeMcClellandGanguli2014,JiTelgarsky2018,AroraEtAl2019Linear,Gunasekar2018MatrixFactorization} with recent perspectives on how initialization and parameterization interpolate between “rich’’ and “kernel’’ behaviors \citep{ChizatBach2018LazyTraining,WoodworthEtAl2020KernelRich}. A particularly useful feature---formalized for DLNs via a separable gradient‑flow potential---is that the \emph{scale of the initialization}, denoted $\alpha$, \emph{continuously tunes} the implicit bias: small $\alpha$ yields a sparse‑leaning geometry (an $\ell_1$‑like penalty up to logarithmic factors), while large $\alpha$ approaches an $\ell_2^2$‑type geometry; see the potential $Q_\alpha$ and its limits (Theorem~1 in \citet{WoodworthEtAl2020KernelRich}) and related characterizations in \citet{Gunasekar2018MatrixFactorization,AroraEtAl2019Linear}.

\paragraph{Calibrating $\alpha$ via an effective $p$.}
To compare DLN runs with our explicit minimum‑$\ell_p$ experiments, we convert $\alpha$ into an \emph{effective} $p$ by a data‑free calibration. Following the separable potential view, we evaluate $Q_\alpha$ on $k$‑sparse, unit‑$\ell_2$ probes and fit the log-log slope of its $k$‑dependence; matching that slope to the exact $k^{\,1-p/2}$ law of $\|\cdot\|_p^p$ yields a monotone map $\alpha\mapsto p_{\mathrm{eff}}(\alpha)$ with limits $p_{\mathrm{eff}}(\alpha)\!\to\!1$ as $\alpha\!\to\!0$ and $p_{\mathrm{eff}}(\alpha)\!\to\!2$ as $\alpha\!\to\!\infty$. This calibration is independent of $(n,\sigma)$ and lets us select $\alpha$ values that span a sparse‑to‑dense range comparable to $p\in\{1.1,1.5,1.9\}$. A full derivation and a visualization of $\alpha\mapsto p_{\mathrm{eff}}(\alpha)$ are provided in Appendix~\ref{app:alpha-to-p}.

\paragraph{Finite learning rate.}
With a single-spike target ($w^\star=e_1$, sparsity $s{=}1$) and small initialization ($\alpha=0.00102$, so $p_{\mathrm{eff}}\!\approx\!1.10$), we find that the learning rate $\mathrm{lr}$ can materially change the $\ell_r$-vs-$n$ scaling once label noise is present. When $\sigma{=}0$, the $\ell_{1.1}$ curve rapidly plateaus and is essentially insensitive to $\mathrm{lr}$ (see Appendix~\ref{app:e1-finite-lr} for more details). In contrast, for $\sigma\in\{0.1,0.5\}$ increasing $\mathrm{lr}$ produces a steadily rising $\ell_{1.1}$ and shifts the elbow to larger $n$; at the highest noise the effect is strongest-$\mathrm{lr}{=}10^{-1}$ yields monotone growth across our range, whereas $\mathrm{lr}{=}10^{-3}$ exhibits a transient rise followed by relaxation toward a plateau, indicating a rightward-moving elbow. We observe qualitatively similar trends for larger sparsity ($s{=}50$). A natural explanation is that finite step size together with noisy gradients turns (stochastic) gradient descent into a noisy dynamical system with an \emph{effective temperature} that scales with $\mathrm{lr}$ and the noise level. The resulting diffusion broadens the stationary distribution and biases the predictor toward rounder (less sparse) geometries-effectively increasing $p_{\mathrm{eff}}$-so mass leaks into bulk coordinates, delaying spike dominance and inflating $\ell_r$ before the eventual plateau \citep{MandtHoffmanBlei2017,SmithLe2018,Yaida2018,JastrzebskiEtAl2017}.

\paragraph{Experimental protocol.}
We set $d = 50,000$, $\sigma=0.1$, sweep $\alpha\in\{0.00102,0.0664,0.229\}$ (which according to our $\alpha$ to $p$ calibration $\approx p \in \{1.1, 1.5, 1.9\}$), and vary $n$. Each plot overlays test MSE (left axis) and representative $\ell_r$ curves (right axis). For flat $w^*$ again $a = \frac{1}{\sqrt{s}}$. Additional noise sweeps are reported in Appendix~\ref{app:noise-sweeps}.

Because $\alpha$ has been empirically calibrated to $p_{\mathrm{eff}}(\alpha)$, the DLN panels mirror the \emph{scaling} behavior seen with explicit minimum‑$\ell_p$: for $w^\star=e_1$ (Fig.~\ref{fig:e1-dln}), smaller $\alpha$ (smaller $p_{\mathrm{eff}}$) enters the spike‑dominated regime earlier so that, for $r>2(p{-}1)$, the $\ell_r$ curves flatten after the transition; larger $\alpha$ remains bulk‑dominated longer and the traces grow with the characteristic $n^{1/2}$ trend. For the flat target with $s{=}50$ (Fig.~\ref{fig:flat-dln}), the same rules apply but the elbow shifts to larger $n$, consistent with the $s$‑dependent transition scale in the flat‑support corollary. The absolute magnitudes of $\|\widehat w\|_r$ are similar across the two targets, as predicted by the plateau formulas, so the informative contrast again lies in the \emph{location} of the elbow and the presence/absence of plateaus vs.\ bulk growth. We do not overlay theory on the DLN plots: our guarantees are stated in terms of the explicit parameter $p$, and deriving a closed‑form $\alpha$‑indexed analogue (especially under finite learning rates) is outside the scope of this work; the $\alpha\mapsto p_{\mathrm{eff}}$ calibration serves precisely to make the scaling correspondence visible. In Appendix~\ref{app:DLN-extension} we discuss how can we extend our main theorem to DLNs with explicit $\alpha$.

\begin{figure*}[t]
  \centering
  \subfigure[$\alpha=0.00102$, $\mathrm{lr}=0.1$, $s{=}1$]{
    \includegraphics[width=0.31\textwidth]{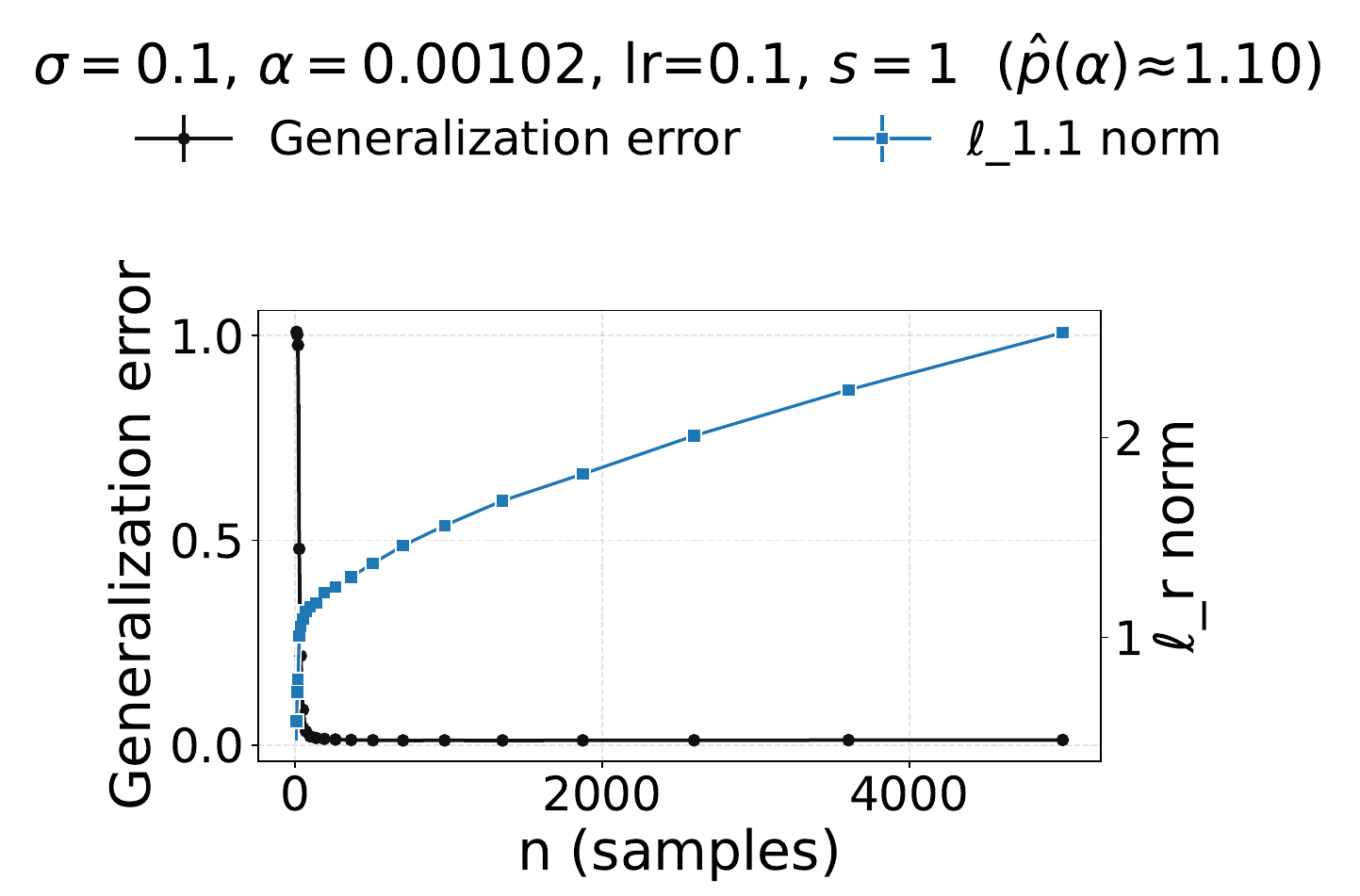}
  }\hfill
  \subfigure[$\alpha=0.0664$, $\mathrm{lr}=0.001$, $s{=}1$]{
    \includegraphics[width=0.31\textwidth]{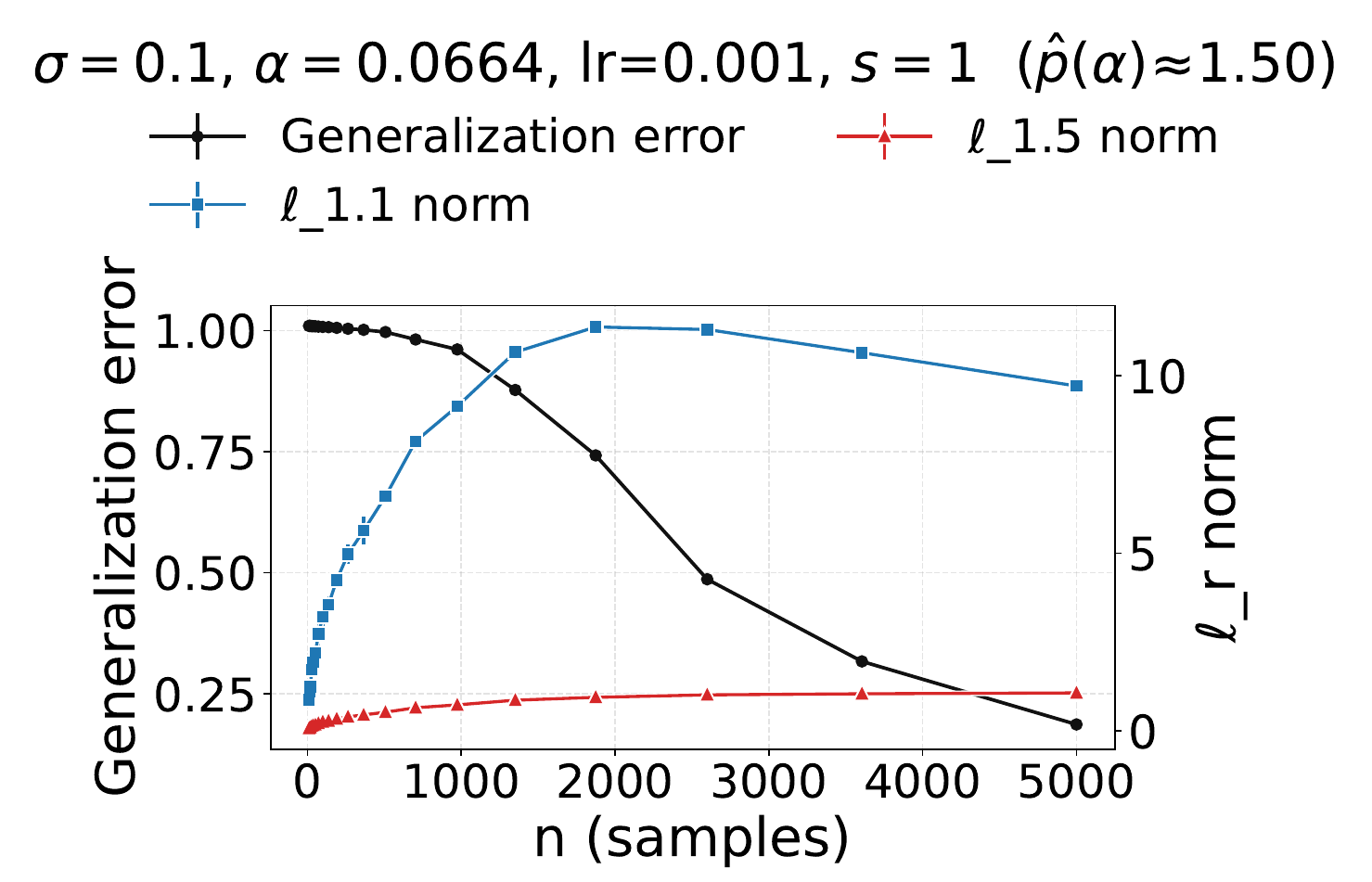}
  }\hfill
  \subfigure[$\alpha=0.229$, $\mathrm{lr}=0.001$, $s{=}1$]{
    \includegraphics[width=0.31\textwidth]{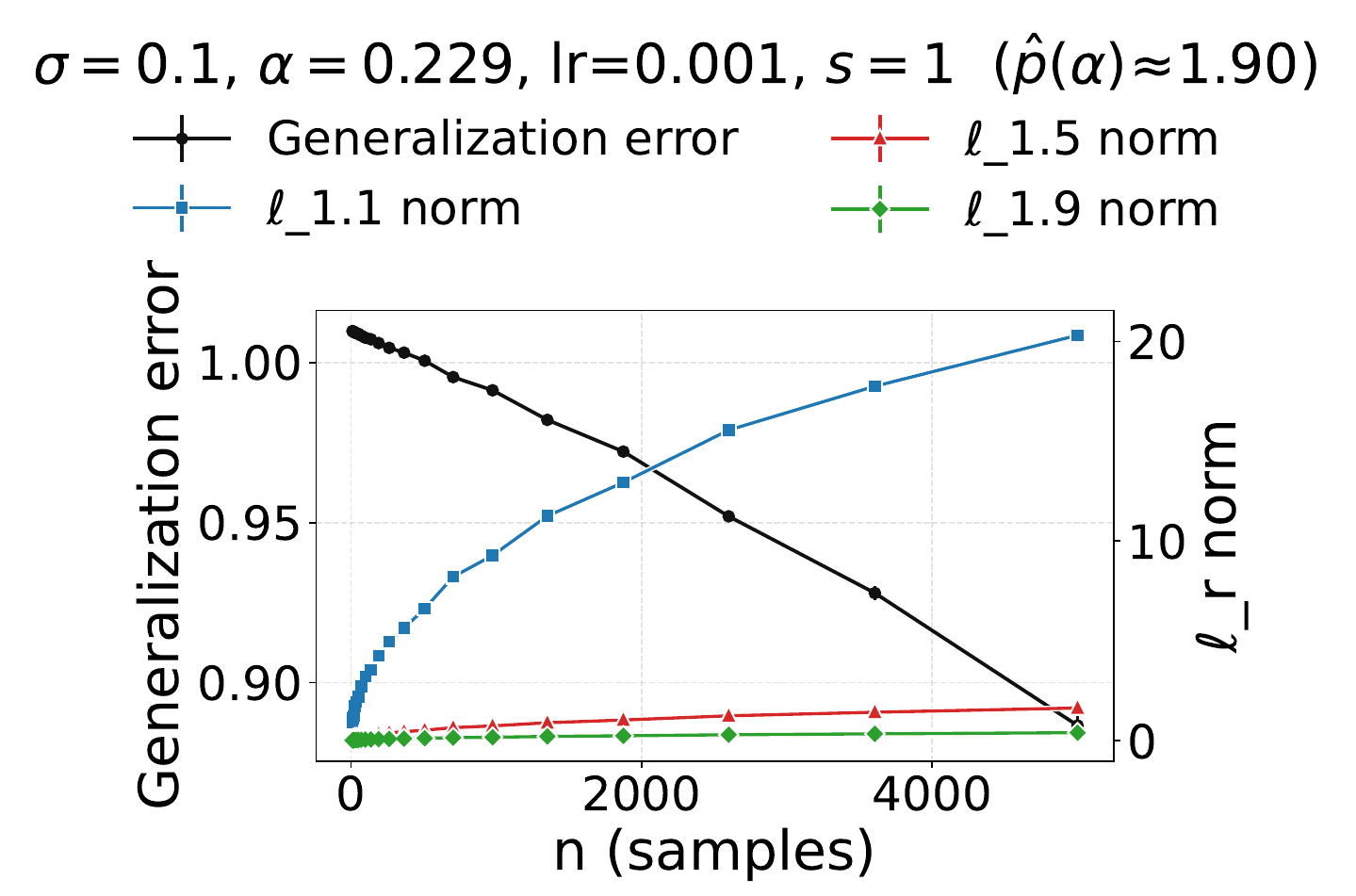}
  }
  \vspace{-0.3em}
  \caption{\textbf{Single spike $w^\star=e_1$; diagonal linear network (DLN).}
  After calibrating $\alpha$ to $p_{\mathrm{eff}}(\alpha)$, the regime structure matches the explicit $p$ case: smaller $\alpha$ exhibits earlier spike dominance and plateaus for $r>2(p{-}1)$; larger $\alpha$ remains bulk‑dominated with $n^{1/2}$‑like growth.}
  \label{fig:e1-dln}
\end{figure*}

\begin{figure*}[t]
  \centering
  \subfigure[$\alpha=0.00102$, $\mathrm{lr}=0.1$, $s{=}50$]{
    \includegraphics[width=0.31\textwidth]{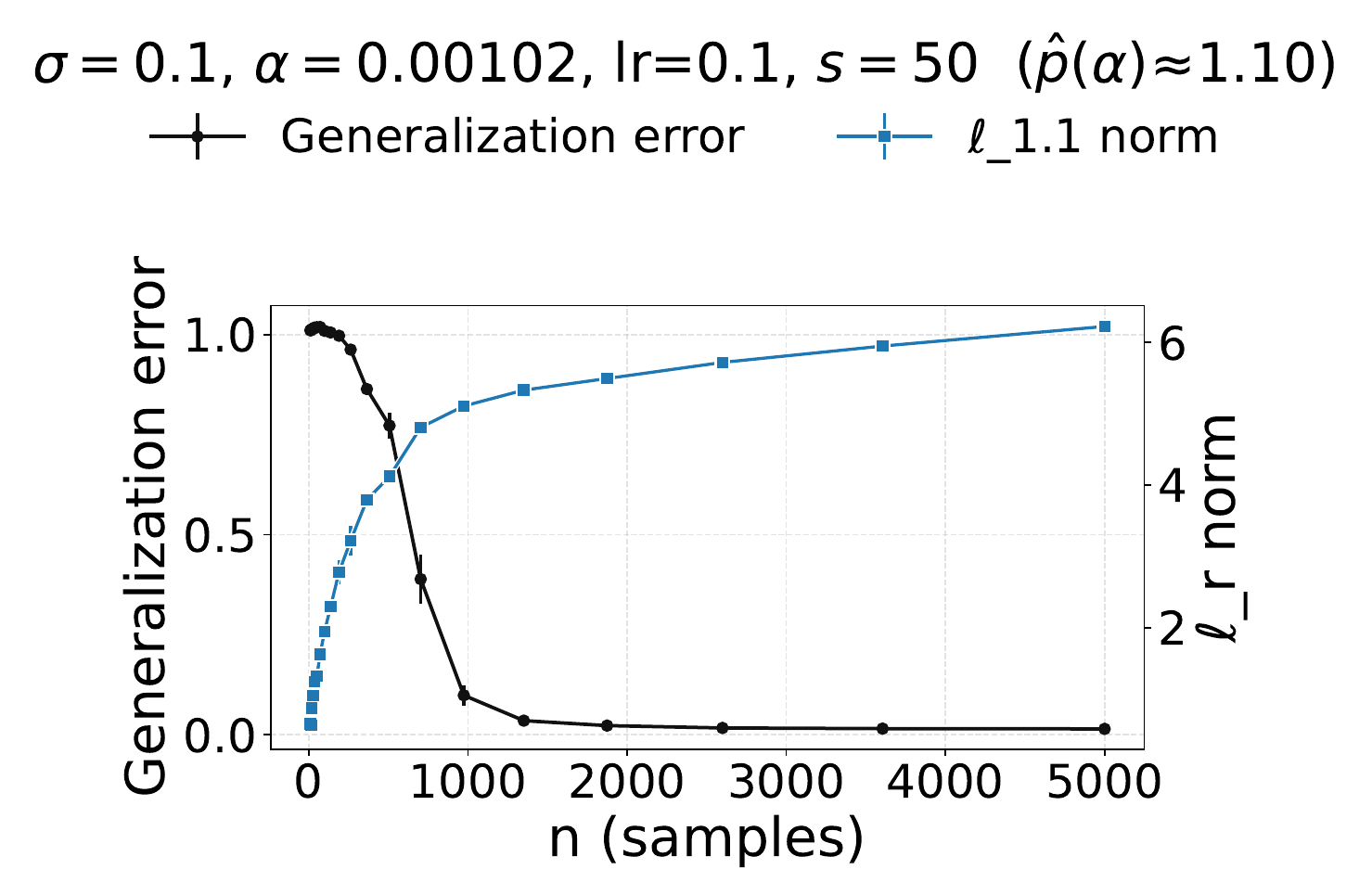}
  }\hfill
  \subfigure[$\alpha=0.0664$, $\mathrm{lr}=0.001$, $s{=}50$]{
    \includegraphics[width=0.31\textwidth]{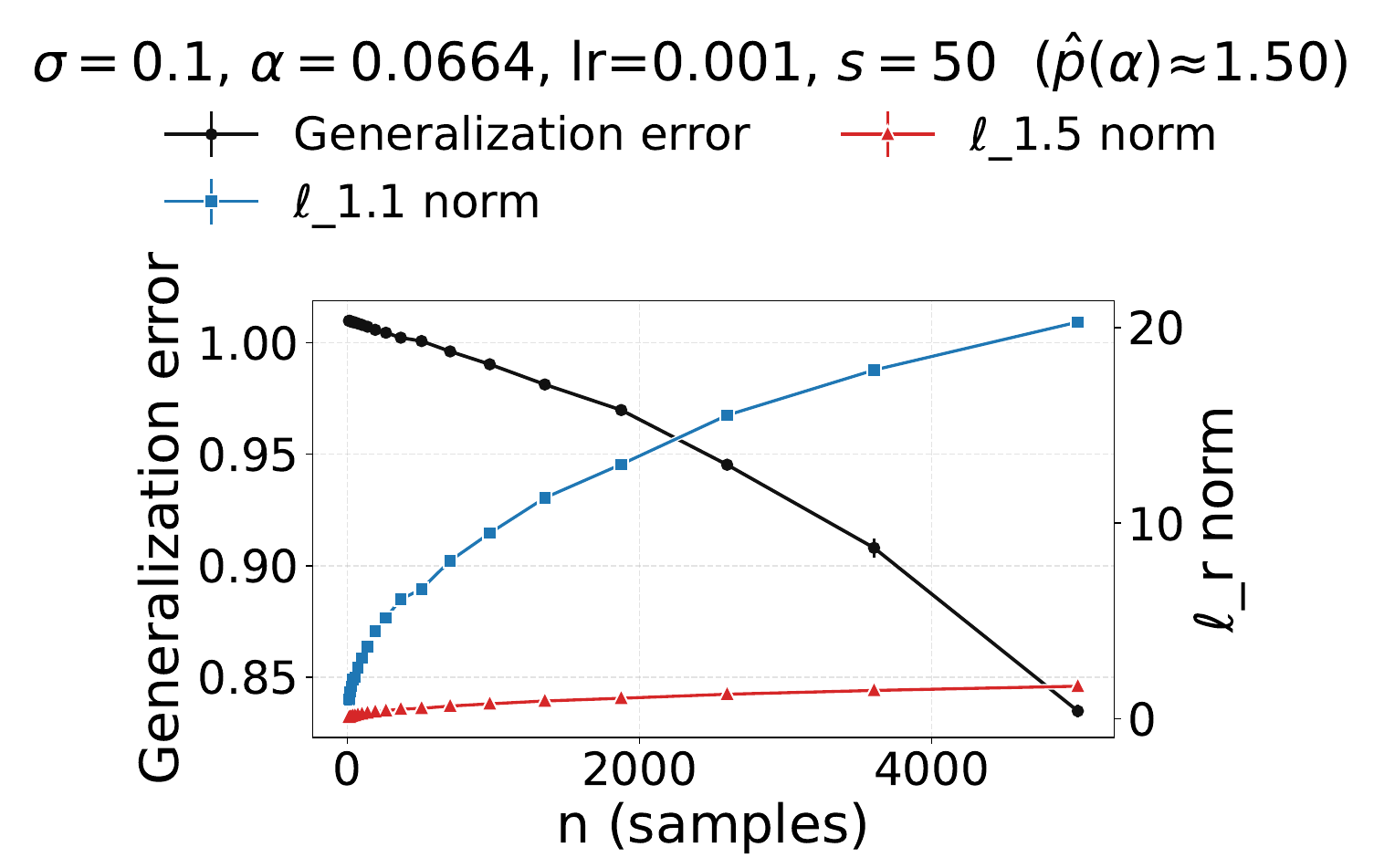}
  }\hfill
  \subfigure[$\alpha=0.229$, $\mathrm{lr}=0.001$, $s{=}50$]{
    \includegraphics[width=0.31\textwidth]{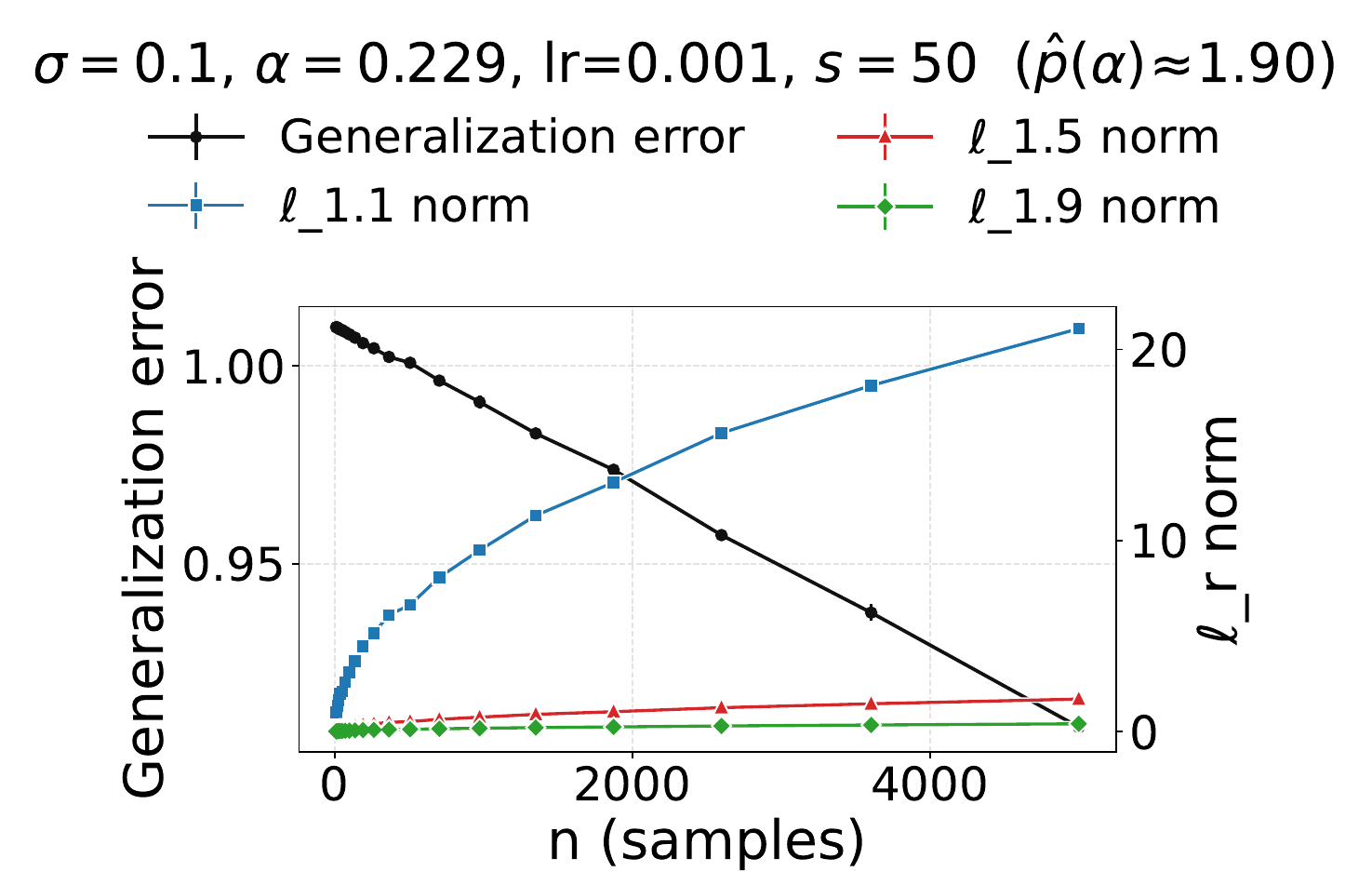}
  }
  \vspace{-0.3em}
  \caption{\textbf{Flat $w^\star$ ($s=50$); diagonal linear network (DLN).}
  The same scaling rules hold, but the elbow appears at larger $n$---in line with the flat‑support transition scale---while absolute $\ell_r$ magnitudes remain comparable to the single‑spike case.}
  \label{fig:flat-dln}
\end{figure*}

\section{Conclusion and discussion}
\label{sec:conclusion}

We provided the first unified, closed‐form characterization of how the entire family of norms
$\{\|\widehat w_p\|_r\}_{r\in[1,p]}$ scales with sample size in overparameterized linear regression under minimum-$\ell_p$ interpolation ($p\in(1,2]$). A one-dimensional dual-ray argument exposes a competition between a signal \emph{spike} and a \emph{bulk} of null coordinates in $X^\top Y$ and yields, with high probability: (i) a data-dependent elbow $n_\star$ at which bulk and spike balance [Eq.~\ref{eq:nstar}], and (ii) a universal threshold
\[
r_\star \;=\; 2(p-1),
\]
which separates $\ell_r$’s that ultimately plateau ($r>r_\star$) from those that continue to grow with an explicit exponent ($r\le r_\star$) in the spike-dominated regime (Theorem~\ref{thm:main_compact}). The formulas give plateau levels and slopes in both bulk- and spike-dominated regimes, and specialize cleanly for canonical targets (single spike and flat support). Empirically, diagonal linear networks (DLNs) trained by gradient descent inherit the same elbow/threshold laws once the initialization scale $\alpha$ is calibrated to an effective $p_{\mathrm{eff}}(\alpha)$ via the separable potential. Together, these results show that which $\ell_r$ one tracks matters: for a fixed $\ell_p$ bias, different $\ell_r$’s can exhibit qualitatively different $n$-laws.

\textbf{Intuition behind the regime transition.}
The dual-ray lens reduces the interpolation geometry to a single scale $t_\star$ controlled by $\|X^\top Y\|_q$ ($q=p/(p-1)$). The \emph{bulk} contributes $\asymp (d-s)\,m_q\,\tau_s^q n^{q/2}$ while the \emph{spike} contributes $\asymp n^q W_q$, and their balance sets the elbow $n_\star$. Above the elbow, the KKT map raises correlations to the $(q-1)$ power; the sign of $\,\frac{1}{r}-\frac{1}{2(p-1)}\,$ dictates whether the bulk-type term decays (plateau) or grows (slope). This is the origin of the sharp threshold $r_\star=2(p-1)$. Geometrically, smaller $p$ (sparser inductive bias) lowers $r_\star$, so more $\ell_r$’s plateau once the spike dominates; as $p\uparrow 2$, $r_\star$ approaches $2$ and spike-side plateaus recede, consistent with the special role of $p=2$ where there is no $n$-driven transition in the proportional limit.

\textbf{Implications for generalization proxies.}
Many diagnostics and bounds in modern learning scale with a parameter norm (or a reparameterization-aware surrogate). Our results indicate that the predictive power of such proxies is \emph{norm‐choice sensitive}. For a given $\ell_p$ bias, $\ell_r$’s above $r_\star$ stabilize (after $n_\star$) and can serve as geometry-aligned capacity proxies, while $\ell_r$’s below $r_\star$ continue to reflect data growth through explicit exponents. In practice, the pair $(n_\star,r_\star)$ acts as a \emph{norm-scaling signature}. Reporting only one norm—often $\ell_2$—risks conflating bulk vs.\ spike effects and can obscure regime changes that are visible in the $\ell_r$ family.

\textbf{From explicit to implicit bias.}
By calibrating DLN initialization via a simple slope-matching map $\alpha\mapsto p_{\mathrm{eff}}(\alpha)$, the empirical DLN curves line up with the explicit minimum-$\ell_p$ predictions under $p\leftarrow p_{\mathrm{eff}}(\alpha)$. This provides a quantitative bridge between explicit and implicit bias: initialization steers the effective geometry, and the $(n_\star,r_\star)$ structure is inherited. Finite learning rates in the presence of label noise act like an effective temperature, increasing $p_{\mathrm{eff}}$ and shifting elbows rightward—consistent with recent views of SGD as a noisy dynamical system.

\textbf{Relation to double descent and benign overfitting.}
The bulk-side growth ($\propto n^{1/2}$ in prominent terms) and its eventual handoff to spike control rationalize when increasing $n$ first harms and then helps: early fits draw from many noisy bulk directions (large norms and higher variance), while beyond $n_\star$ the spike dominates and the relevant $\ell_r$’s plateau. Our explicit exponents and thresholds sharpen this picture and make precise which $\ell_r$ will display which trend at a given $(p,n)$.

\textbf{Scope and limitations.}
Our guarantees assume isotropic Gaussian design, $p\in(1,2]$, squared loss, and exact interpolation. At $p=2$ the proportional regime admits no $n$-driven elbow. The DLN extension uses a data-free calibration to $p_{\mathrm{eff}}(\alpha)$ rather than a fully rigorous, learning-rate-aware theory. Finally, classification losses and non‐linear features (beyond DLNs) are outside our formal scope.

\paragraph{Actionable guidance.}
(i) When using norm-based capacity control, \emph{choose the norm with the geometry}: if training is $\ell_p$-biased (explicitly or implicitly), track $\ell_r$ with $r>2(p{-}1)$ to obtain a stable, post-elbow proxy; use $r\le 2(p{-}1)$ when one \emph{wants} a readout that continues to reflect data growth. (ii) Empirically estimate $(n_\star,r_\star)$ by fitting the predicted slopes to a small $\ell_r$ grid; this gives a compact fingerprint of model-data geometry and a practical meter for bulk vs.\ spike dominance.

\textit{Future directions:}
\begin{itemize}[leftmargin=*,itemsep=1pt,topsep=2pt]
\item \textbf{Beyond isotropy and Gaussianity.} Extend the dual-ray analysis to anisotropic/sub‐Gaussian designs (via whitening) and to heavy-tailed covariates; characterize how $n_\star$ and possibly $r_\star$ deform with the spectrum and tails of $X$.
\item \textbf{From DLNs to nonlinear nets.} Replace the power link by depth-dependent implicit links in deep (nonlinear) architectures (e.g., path‐norm or neural tangent/feature‐learning regimes) and test whether an $r_\star$-type threshold persists.
\item \textbf{Algorithmic knobs as geometry.} Develop a theory of $p_{\mathrm{eff}}$ that accounts for step size, batch size, momentum, and label noise (Langevin/SGD limits), turning these knobs into quantitative geometric parameters with predictions for $(n_\star,r_\star)$.
\item \textbf{Classification and margins.} Generalize the scaling laws to separable classification with cross‐entropy/hinge losses, relating $r_\star$ to margin exponents and the growth/saturation of norm families along max-margin flows.
\item \textbf{Tighter, $r$-aware bounds.} Build generalization bounds that track the \emph{family} $\{\|\widehat w\|_r\}$ and explicitly incorporate the elbow/threshold structure, connecting to PAC-Bayes and margin-based analyses.
\item \textbf{Practical diagnostics.} On modern deep models, measure several $\ell_r$-style surrogates (e.g., path norms) across data scale to estimate $(n_\star,r_\star)$ and evaluate which norms are reliable predictors of test error across regimes.
\end{itemize}

\iffalse
\textit{Future directions:}

\textbf{Beyond isotropy and Gaussianity.} Extend the dual-ray analysis to anisotropic/sub‐Gaussian designs (via whitening) and to heavy-tailed covariates; characterize how $n_\star$ and possibly $r_\star$ deform with the spectrum and tails of $X$.
\textbf{From DLNs to nonlinear nets.} Replace the power link by depth-dependent implicit links in deep (nonlinear) architectures (e.g., path‐norm or neural tangent/feature‐learning regimes) and test whether an $r_\star$-type threshold persists.
\textbf{Algorithmic knobs as geometry.} Develop a theory of $p_{\mathrm{eff}}$ that accounts for step size, batch size, momentum, and label noise (Langevin/SGD limits), turning these knobs into quantitative geometric parameters with predictions for $(n_\star,r_\star)$.
\textbf{Classification and margins.} Generalize the scaling laws to separable classification with cross‐entropy/hinge losses, relating $r_\star$ to margin exponents and the growth/saturation of norm families along max-margin flows.
\textbf{Tighter, $r$-aware bounds.} Build generalization bounds that track the \emph{family} $\{\|\widehat w\|_r\}$ and explicitly incorporate the elbow/threshold structure, connecting to PAC-Bayes and margin-based analyses.
\textbf{Practical diagnostics.} On modern deep models, measure several $\ell_r$-style surrogates (e.g., path norms) across data scale to estimate $(n_\star,r_\star)$ and evaluate which norms are reliable predictors of test error across regimes.
\fi

Overall, our results advocate replacing the monolithic notion of “the norm’’ by a \emph{family} view. The elbow $n_\star$ and the threshold $r_\star$ provide simple, interpretable invariants that tie together explicit/implicit bias, data growth, and norm-based generalization measures, and they offer a compact vocabulary for describing—and ultimately controlling—interpolation in high dimensions.

\paragraph{Reproducibility Statement}
All experiments in this work are fully reproducible with minimum hardware requirements.
The anonymized code can be found at \url{https://github.com/shuofengzhang/minlp_codebase}.

\paragraph{LLM usage} General-purpose LLMs are used in this paper to aid or polish writing, e.g., checking typo/grammar slips.

\bibliography{iclr2026_conference}
\bibliographystyle{iclr2026_conference}

\appendix

\renewcommand{\theHsection}{A\arabic{section}}

\renewcommand{\thefigure}{S\arabic{figure}}
\setcounter{figure}{0}

\section{Minimum-$\ell_p$ interpolator with $s$-sparse ground truth}
\label{app:proof_lr_norm}

For completeness, we first introduce again the mathematical settings and restate our main theorem.
We study $p\in(1,2]$, set $q=\frac{p}{p-1}\in[2,\infty)$, and let $r\in[1,p]$. Dimensions $n,d\in\N$ with $d\ge n$. All $X\in\R^{n\times d}$ have i.i.d.\ $\mathcal{N}(0,1)$ entries; columns are $X_{:,j}$. Noise $\xi\sim\mathcal{N}(0,\sigma^2 I_n)$, independent of $X$. The signal $w^\star\in\R^d$ is $s$-sparse with support $S\subset[d]$, $|S|=s$; we write $w^\star_S$ for its nonzeros. The response is $Y:=Xw^\star+\xi$. The min-$\ell_p$ interpolator
\[
\widehat w_p\in\operatorname*{arg\,min}\{\|w\|_p: Xw=Y\}\qquad (p>1\ \text{ensures uniqueness})
\]
is our object of interest. Shorthands:
\[
\tau_s^2:=\|w^\star\|_2^2+\sigma^2,\qquad W_q:=\|w^\star\|_q^q=\sum_{j\in S}|w^\star_j|^{\,q}.
\]
%Throughout, ``w.h.p.'' means probability at least $1-Ce^{-cn}$ for universal $c,C>0$. Hidden constants in $\asymp$ depend only on $(p,\kappa_{\mathrm{bulk}})$, where
%\[
%\kappa_{\mathrm{bulk}}:=\liminf_{n\to\infty}\frac{d-s}{n}\in(0,\infty).
%\]
%(When we later give finite-$d$ estimates, $\kappa_{\mathrm{bulk}}$ is not required.)

\begin{remark}[Standing assumptions and probability shorthand]\label{rem:growth-prob}
We work in the proportional regime
\[
\frac{d}{n}\to \kappa\in(1,\infty),
\qquad
\kappa_{\mathrm{bulk}}:=\liminf_{n\to\infty}\frac{d-s}{n}\in(0,\infty),
\]
so $d-s=\Theta(n)$ and $s=O(n)$ (we do not require $s\le n$). Unless stated otherwise,
all hidden constants depend only on $(p,\kappa_{\mathrm{bulk}})$ (and on $r$ when
relevant), and “w.h.p.” means probability at least $1-Ce^{-cn}-2d^{-\gamma}$.
When we simplify remainders using $s\le n$ (e.g., $\sqrt{sn}+s\leadsto\sqrt{sn}$),
the corresponding $s>n$ form is always available and does not affect any $\asymp$
conclusions in Theorem~\ref{thm:main}.

\medskip
\noindent\textit{On proportionality.}
The assumption $d/n\to\kappa$ is only for cleanliness of exposition and to keep
constants tidy; it is not essential to the argument. All places where it enters
(e.g., the bulk $\ell_t$ embedding and the uniform column–norm control) can be
run under the weaker—and often more realistic—conditions
\[
\liminf_{n\to\infty}\frac{d-s}{n}=\kappa_{\mathrm{bulk}}>0,
\qquad
\log d=o(n),
\qquad
s=O(n).
\]
In particular, our proofs and conclusions (same exponents in $n$, the threshold
$r_\star=2(p-1)$, and the high‑probability events) remain valid even in ``larger''
aspect‑ratio regimes (including $d/n\to\infty$) as long as $\log d=o(n)$ and the
bulk density is bounded below. Under these weaker assumptions the hidden constants
are uniform in $(n,d,s)$ and depend only on $(p,r,\kappa_{\mathrm{bulk}})$ (and on
a fixed upper bound for $s/n$ if desired), so no changes to the proofs are needed.

\end{remark}

\subsection{Main theorem}

\begin{theorem}[Theorem~\ref{thm:main_compact} restated]
\label{thm:main}
Fix $p\in(1,2]$, $q=\frac{p}{p-1}$, $r\in[1,p]$, and suppose $\liminf (d-s)/n=\kappa_{\mathrm{bulk}}>0$ while $d/n\to\kappa\in(1,\infty)$. Then, w.h.p.,
\begin{equation}
\label{eq:unified}
\|\widehat w_p\|_r\ \asymp\
\max\Big\{
\underbrace{t_\star^{\,q-1}\,n^{\,q-1}\,\|w^\star\|_{(q-1)r}^{\,q-1}}_{\text{\emph{spike main ($S$)}}}\,,\ 
\underbrace{(d-s)^{1/r}\,\big(t_\star\,\tau_s\sqrt n\big)^{\,q-1}}_{\text{\emph{bulk ($S^c$)}}}\,,\ 
\underbrace{s^{\,\max\{\,1/r,\ (q-1)/2\,\}}\ \big(t_\star\,\tau_s\sqrt n\big)^{\,q-1}}_{\text{\emph{spike remainder}}}
\Big\}.
\end{equation}

where the \emph{ray scale} $t_\star$ satisfies
\begin{equation}
\label{eq:tstar}
t_\star^{\,q-1}\ \asymp\ \frac{\|Y\|_2^2}{\|X^\top Y\|_q^q}
\ \asymp\
\frac{\tau_s^2 n}{\,n^q W_q + (d-s)\,m_q\,\tau_s^{q}\,n^{q/2}\ +\ O\!\big(\tau_s^{\,q}(s\,n^{q/2}+s^{1+q/2})\big)\,}
\qquad\text{w.h.p.}
\end{equation}

with $m_t:=\E|Z|^t$ and $Z\sim\mathcal N(0,1)$. Define the \emph{dual-transition scale}
\begin{equation}
\label{eq:nstar}
n_\star\ \asymp\ \Big(\kappa_{\mathrm{bulk}}\frac{\tau_s^{\,q}}{W_q}\Big)^{\frac{2}{q-2}}.
\end{equation}
Then, w.h.p., the following asymptotic simplifications hold:

\medskip
\noindent\textbf{Dual spike-dominated $n\gg n_\star$.} 
\begin{equation}
\label{eq:SD}
\|\widehat w_p\|_r\ \asymp\
\begin{cases}
\displaystyle \frac{\tau_s^{\,q+1}}{W_q}\ n^{\,\frac{1}{r}-\frac{1}{2(p-1)}} ,& r\le2(p-1),\\[6pt]
\displaystyle \frac{\tau_s^{\,2}}{W_q}\ \|w^\star\|_{(q-1)r}^{\,q-1},& r>2(p-1).
\end{cases}
\end{equation}

\noindent\textbf{Dual bulk-dominated $n\ll n_\star$.}
\begin{equation}
\label{eq:BD}
\|\widehat w_p\|_r\ \asymp\
\max\Big\{\ \kappa_{\mathrm{bulk}}^{\frac1r-1}\,\tau_s\,n^{\,\frac1r-\frac12}\,,\ \ 
\kappa_{\mathrm{bulk}}^{-1}\,\tau_s^{\,2-q}\,\|w^\star\|_{(q-1)r}^{\,q-1}\,n^{\,\frac{q}{2}-1}\,,\ \ 
\kappa_{\mathrm{bulk}}^{-1}\,\tau_s\, s^{\,\max\{\,1/r,\ (q-1)/2\,\}}\,n^{-1/2}
\Big\}.
\end{equation}
\noindent\emph{(Equivalently, using $d-s\asymp\kappa_{\mathrm{bulk}}n$, the third term can be written as $\ \frac{\tau_s}{d-s}\,s^{\,\max\{1/r,\,(q-1)/2\}}\,\sqrt n$.)}

\end{theorem}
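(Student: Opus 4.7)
The plan is to turn the interpolation problem into a one-dimensional scalar problem via Lagrangian duality and then pull back to $\widehat w_p$ coordinate-by-coordinate through the KKT conjugacy. The dual of $\min_w \tfrac{1}{p}\|w\|_p^p$ subject to $Xw=Y$ is $\max_\lambda \lambda^\top Y - \tfrac{1}{q}\|X^\top\lambda\|_q^q$, with optimality $\widehat w_j = \mathrm{sign}\bigl((X^\top\lambda_\star)_j\bigr)\,|(X^\top\lambda_\star)_j|^{q-1}$. Restricting the concave dual to the ray $\lambda=tY$ yields the scalar concave problem $\max_t \,t\|Y\|_2^2 - \tfrac{t^q}{q}\|X^\top Y\|_q^q$, whose maximizer is the ray scale $t_\star^{\,q-1}=\|Y\|_2^2/\|X^\top Y\|_q^q$ appearing in \eqref{eq:tstar}.

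Next I would execute two concentration computations. For $\|Y\|_2^2$, standard $\chi^2$/Hanson--Wright concentration gives $\|Y\|_2^2\asymp n\tau_s^2$ w.h.p. For $\|X^\top Y\|_q^q$, I split the sum into $S$ and $S^c$. On $S^c$, the columns $X_{:,j}$ are independent of $(X_{:,S},\xi)$, so conditionally $(X^\top Y)_j\sim\mathcal N(0,\|Y\|_2^2)$; a Bernstein bound for sub-exponential sums of $|Z|^q$ yields the bulk contribution $(d-s)\,m_q\,\tau_s^{\,q}\,n^{q/2}$. On $S$, the decomposition $(X^\top Y)_j = \|X_{:,j}\|_2^2\,w^\star_j + \eta_j$ with $\|X_{:,j}\|_2^2 \approx n$ and $\eta_j$ sub-Gaussian of variance $\asymp n\tau_s^2$ produces the spike main $n^q W_q$ and a remainder of order $\tau_s^{\,q}(sn^{q/2}+s^{1+q/2})$---the second summand arising in the $s>n$ regime via spectral-norm control of $X_S$. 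Together these give \eqref{eq:tstar}, and the balance $n^q W_q \asymp (d-s)\,m_q\,\tau_s^{\,q}n^{q/2}$ with $d-s\asymp\kappa_{\mathrm{bulk}}n$ solves to $n_\star\asymp(\kappa_{\mathrm{bulk}}\tau_s^{\,q}/W_q)^{2/(q-2)}$ as in \eqref{eq:nstar}.

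The hard part is turning the ray ansatz into a \emph{coordinatewise} statement with the precision needed for the $\ell_r$ expansion in \eqref{eq:unified}. I plan a sandwich argument: weak duality with the ray gives $\tfrac{1}{p}\|\widehat w_p\|_p^p \le t_\star\|Y\|_2^2 - \tfrac{t_\star^q}{q}\|X^\top Y\|_q^q$, and a matching primal construction $\widetilde w_j := \mathrm{sign}\bigl((X^\top t_\star Y)_j\bigr)\,|(X^\top t_\star Y)_j|^{q-1}$ plus a Newton/least-squares correction on a small active set (using restricted isometry of $X$ on $O(s)$-sparse sets) produces a feasible interpolator whose $\ell_r$ norm matches the three candidates in \eqref{eq:unified}: (i) the spike main $t_\star^{\,q-1}n^{q-1}\|w^\star\|_{(q-1)r}^{\,q-1}$ from $\|X_{:,j}\|_2^2 w^\star_j \approx n w^\star_j$ on $S$; (ii) the bulk $(d-s)^{1/r}(t_\star\tau_s\sqrt n)^{\,q-1}$ from concentration of $(q-1)r$-th moments of Gaussians on $S^c$; and (iii) the spike remainder $s^{\max\{1/r,(q-1)/2\}}(t_\star\tau_s\sqrt n)^{\,q-1}$, where the exponent switches regimes: an $\ell_{(q-1)r}$ sum-concentration bound on $S$ gives $s^{1/r}$ for small $r$, while for $(q-1)r\ge 2$ (equivalently $r\ge 2(p-1)$) a spectral/Khintchine-type bound on the correlated noise vector $(\eta_j)_{j\in S}$ yields the $s^{(q-1)/2}$ branch. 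The remaining bookkeeping is verifying that lower-order cross terms are dominated by these three, which reduces to standard Gaussian concentration once the ray anchor $t_\star$ is fixed.

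Finally, the regime reductions follow by substituting the dominating term of $\|X^\top Y\|_q^q$ into $t_\star$ and then into \eqref{eq:unified}, using the algebraic identity $(q-1)/2 = 1/(2(p-1))$. In the spike-dominated regime $n\gg n_\star$, $t_\star^{\,q-1}\asymp \tau_s^2/(n^{q-1}W_q)$, so the bulk candidate becomes $(\tau_s^{\,q+1}/W_q)\,n^{\,1/r - 1/(2(p-1))}$, which grows for $r<2(p-1)$ and decays for $r>2(p-1)$; the spike main becomes the plateau $(\tau_s^2/W_q)\,\|w^\star\|_{(q-1)r}^{\,q-1}$; comparing these and absorbing the spike remainder yields \eqref{eq:SD} with the threshold $r_\star=2(p-1)$. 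In the bulk-dominated regime $n\ll n_\star$, $t_\star^{\,q-1}\asymp 1/(\kappa_{\mathrm{bulk}}\tau_s^{\,q-2}n^{q/2})$, and direct substitution into the three candidates of \eqref{eq:unified} produces the three-way maximum in \eqref{eq:BD}. The main technical obstacle I anticipate is, once more, the justification of the ray ansatz at the coordinate level---the remaining algebra is a careful but standard accounting of Gaussian concentration events.
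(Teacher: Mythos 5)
Your overall scaffold---dualize, restrict to the ray $\lambda=tY$, concentrate $\|Y\|_2^2$ and $\|X^\top Y\|_q^q$ blockwise, then pull back via the KKT map $\widehat w_j=\operatorname{sgn}((X^\top\lambda^\star)_j)\,|(X^\top\lambda^\star)_j|^{q-1}$---matches the paper's, as does the spike/bulk decomposition, the formula for $n_\star$, and the final regime substitutions. The concentration part (Lemmas~\ref{lem:Ynorm}--\ref{lem:spikeY} in the paper) is correctly identified. But the step you flag as ``the hard part'' is not resolved by the argument you propose, and the route you sketch would not work.

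First, the direction of the inequality is wrong in your weak-duality step: strong duality gives $\tfrac1p\|\widehat w_p\|_p^p=D(\lambda^\star)\ge D(t_\star Y)$, a \emph{lower} bound on the primal value, not an upper bound. Second, and more fundamentally, the ``sandwich'' you propose---construct a feasible $\widetilde w$ from the ray and compare---controls at best $\|\widehat w_p\|_p$ (since $\widehat w_p$ minimizes the $\ell_p$ norm among interpolators). This gives no control whatsoever on $\|\widehat w_p\|_r$ for $r<p$: the min-$\ell_p$ interpolator does not minimize $\ell_r$, and two interpolators with the same $\ell_p$ norm can have very different $\ell_r$ norms. Also, the ray candidate $\widetilde w_j=\operatorname{sgn}((X^\top t_\star Y)_j)|(X^\top t_\star Y)_j|^{q-1}$ is not feasible ($X\widetilde w\ne Y$), and repairing $n$ interpolation constraints is not a small-active-set correction---restricted isometry on $O(s)$-sparse sets is irrelevant to the $d-s$-coordinate bulk block that drives the result.

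What you actually need is a statement \emph{about the true dual optimum $\lambda^\star$} at the blockwise level, so that $\|\widehat w_p\|_r=\|X^\top\lambda^\star\|_{(q-1)r}^{q-1}$ can be split into spike and bulk contributions. The paper's Lemma~\ref{lem:comparison} supplies exactly this and is the ingredient your proposal lacks: (i) a two-sided control $\|\lambda^\star\|_2\asymp t_\star\|Y\|_2$, obtained by combining the value comparison $D(\lambda^\star)\ge D(t_\star Y)$ with the Cauchy--Schwarz upper bound $\|X^\top\lambda^\star\|_q^q=\langle Y,\lambda^\star\rangle\le\|Y\|_2\|\lambda^\star\|_2$ and the \emph{uniform} bulk $\ell_q$-embedding of Lemma~\ref{lem:Gq} (a net argument over $\mathbb S^{n-1}$, giving $c_q(d-s)\|\lambda\|_2^q\le\sum_{j\notin S}|\langle X_{:,j},\lambda\rangle|^q$ for all $\lambda$ simultaneously); (ii) the bulk block of $X^\top\lambda^\star$ is then pinned by applying the same uniform $\ell_t$-embedding at level $t=(q-1)r$ to $\lambda^\star$ itself; (iii) the spike block is handled by a perturbation $X_S^\top\lambda^\star=t_\star X_S^\top Y+X_S^\top h$ with $\|h\|_2\lesssim t_\star\|Y\|_2$ and an operator-norm bound on $X_S$. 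Without something in the spirit of these uniform embeddings and the $\|\lambda^\star\|_2$-scale, the passage from the ray ansatz to a coordinatewise (hence $\ell_r$-level) statement about $\widehat w_p$ does not follow.
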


\begin{remark}[When the third term is absorbed]\label{rem:two-term}
If $r\le 2(p-1)$ and $s\le C\,(d-s)$ for an absolute constant $C$, then the third term in \eqref{eq:BD-main} is dominated by the first term (their ratio is $\lesssim (s/(d-s))^{1/r}$). In that case, \eqref{eq:BD-main} reduces to the two-term maximum
\[
\|\widehat w_p\|_r\ \asymp\
\max\Big\{\ \kappa_{\mathrm{bulk}}^{\frac1r-1}\,\tau_s\,n^{\,\frac1r-\frac12}\,,\ \ 
\kappa_{\mathrm{bulk}}^{-1}\,\tau_s^{\,2-q}\,\|w^\star\|_{(q-1)r}^{\,q-1}\,n^{\,\frac{q}{2}-1}\ \Big\}.
\]
For $r>2(p-1)$, no uniform absorption holds in general; the third term can dominate when $\|w^\star\|_{(q-1)r}$ is small relative to $\tau_s$.
\end{remark}

\begin{remark}[Boundary $p=2$]
At $p=2$ (so $q=2$) the exponent in \eqref{eq:nstar} diverges. In the proportional-$d$ regime ($d/n\to\kappa$) there is no $n$-driven transition; the relative sizes of the spike and bulk are constant-level. In the finite-$d$ regime (below) a concrete $n$-threshold does exist because $(d-s)$ does not scale with $n$.
\end{remark}

\subsection{Key lemmas and proof outline}

\paragraph{Roadmap.}
We prove Theorem~\ref{thm:main_compact} by (i) reducing the min-$\ell_p$ interpolator
to a dual maximization and restricting the dual to the one-dimensional ray $\lambda=tY$,
(ii) decomposing $\|X^\top Y\|_q^q$ into a spike term ($j\in S$) and a bulk term
($j\notin S$), and (iii) converting back to the primal via the KKT map,
which raises correlations to the power $(q-1)$ and produces the three-term maximum in
\eqref{eq:unified}. The elbow at $r=2(p{-}1)$ comes from the sign of
$1/r-1/(2(p{-}1))$, i.e., exactly whether the bulk-type contribution grows or plateaus
in the spike-dominated regime. We work on a single high-probability event
$\mathcal E$ (defined below) on which all concentration facts hold simultaneously.

\paragraph{Global event.}
Let $\mathcal E$ be the intersection of the column-norm, spectral,
and bulk $\ell_t$ events from Lemmas~\ref{lem:Ynorm}, \ref{lem:Gq},
and \ref{lem:spike-lt}. Then $\PP(\mathcal E)\ge 1-C e^{-c n}-2d^{-\gamma}$.
All bounds below hold on $\mathcal E$.

\subsubsection{Dual problem and KKT}
We briefly review Lagrangian duality for convex programs with equality constraints and then apply it to the minimum-$\ell_p$ interpolator.

\medskip\noindent\textbf{Primal problem and feasibility.}
We consider
\[
\min_{w\in\R^d}\ f(w)\quad\text{subject to}\quad Xw=Y,
\qquad\text{with}\quad f(w):=\frac1p\|w\|_p^p,
\]
where $p\in(1,2]$. Since $X\in\R^{n\times d}$ has full row rank $n$ a.s.\ (for $d\ge n$ with i.i.d.\ $\mathcal N(0,1)$ entries), the affine constraint set $\{w: Xw=Y\}$ is nonempty for every $Y\in\R^n$. The objective $f$ is proper, closed, and \emph{strictly convex} for $p>1$ (indeed uniformly convex). Therefore, the primal minimizer $\widehat w_p$ exists and is unique.
Introduce a Lagrange multiplier $\lambda\in\R^n$ for the equality constraint, and form the Lagrangian
\[
\mathcal L(w,\lambda)\ :=\ f(w)+\langle \lambda,\,Y-Xw\rangle.
\]
The \emph{dual function} is obtained by minimizing the Lagrangian over $w$:
\[
g(\lambda)\ :=\ \inf_{w\in\R^d}\ \Big\{f(w)-\langle X^\top\lambda,\,w\rangle\Big\}+\langle Y,\lambda\rangle
\ =\ -\,f^\star(X^\top\lambda)+\langle Y,\lambda\rangle,
\]
where $f^\star$ is the convex conjugate of $f$:
\[
f^\star(z)\ :=\ \sup_{w\in\R^d}\ \big\{\langle z,w\rangle-f(w)\big\}.
\]

%\medskip\noindent\textbf{Conjugate of the $\ell_p^p/p$ penalty.}
Since $f(w)=\sum_{i=1}^d |w_i|^p/p$ is separable, its conjugate is $f^\star(z)=\sum_{i=1}^d |z_i|^q/q=(1/q)\|z\|_q^q$, where $q=p/(p-1)$ is the Hölder conjugate of $p$. Indeed, for each coordinate
\[
\sup_{t\in\R}\ \{z\,t-|t|^p/p\}
\]
is achieved at $t=\operatorname{sgn}(z)|z|^{q-1}$, with optimal value $|z|^q/q$.
Therefore the dual function is
\[
g(\lambda)\ =\ \langle Y,\lambda\rangle-\frac1q\|X^\top\lambda\|_q^q.
\]

\medskip\noindent\textbf{Dual problem and strong duality.}
The \emph{dual problem} is $\max_{\lambda\in\R^n} g(\lambda)$, i.e.
\[
\max_{\lambda\in\R^n}\ D(\lambda),\qquad
D(\lambda):=\langle Y,\lambda\rangle-\tfrac1q\|X^\top\lambda\|_q^q.
\]
This is a concave maximization problem (a smooth concave objective with no constraints).
Strong duality holds in our setting by standard convex duality: the primal is convex, the constraint is affine, and feasibility holds (Slater’s condition for equalities reduces to existence of a feasible point). Hence
\[
\min_{w:\,Xw=Y} f(w)\ =\ \max_{\lambda\in\R^n} D(\lambda).
\]

For convex programs with equality constraints, the Karush-Kuhn-Tucker (KKT) conditions are necessary and sufficient for optimality under strong duality. They read:
\[
\text{(primal feasibility)}\quad Xw=Y,\qquad
\text{(stationarity)}\quad 0\in\partial f(w)-X^\top\lambda.
\]
Because $p>1$, $f$ is differentiable on $\R^d$ with gradient
\[
\nabla f(w)=|w|^{p-2}\odot w\ =\ \operatorname{sgn}(w)\odot |w|^{p-1},
\]
so the subdifferential collapses to the singleton $\{\nabla f(w)\}$ and stationarity is
\[
\nabla f(w)=X^\top\lambda.
\]
At any primal-dual optimum $(\widehat w_p,\lambda^\star)$ we therefore have
\begin{equation}
\label{eq:KKT}
X\widehat w_p=Y,\qquad X^\top\lambda^\star=\nabla f(\widehat w_p)=|\widehat w_p|^{\,p-2}\odot \widehat w_p.
\end{equation}

%\medskip\noindent\textbf{Primal--dual map and coordinatewise formula.}
The conjugate $f^\star$ is differentiable with $\nabla f^\star(z)=|z|^{q-2}\odot z=\operatorname{sgn}(z)\odot |z|^{q-1}$, and the gradients are mutual inverses: $\nabla f^\star=(\nabla f)^{-1}$. Applying $\nabla f^\star$ to both sides of $X^\top\lambda^\star=\nabla f(\widehat w_p)$ gives the \emph{coordinatewise KKT map}:
\begin{equation}
\label{eq:KKTmap}
\widehat w_{p,i}
=\big(\nabla f^\star(X^\top\lambda^\star)\big)_i
=\operatorname{sgn}\big((X^\top\lambda^\star)_i\big)\,\big|(X^\top\lambda^\star)_i\big|^{\,q-1}.
\end{equation}
Equivalently, $\,\widehat w_p=\nabla f^\star\!\big(X^\top\lambda^\star\big)$ and $\,X^\top\lambda^\star=\nabla f(\widehat w_p)$.

%\medskip\noindent\textbf{Fenchel--Young identities and energy equalities.}
At optimality, Fenchel--Young gives $f(\widehat w_p)+f^\star(X^\top\lambda^\star)=\langle \widehat w_p,X^\top\lambda^\star\rangle$. Using $X\widehat w_p=Y$ and the expressions for $f$ and $f^\star$ yields the identities
\begin{equation}
\label{eq:PD}
\|X^\top\lambda^\star\|_q^q=\|\widehat w_p\|_p^p=\langle Y,\lambda^\star\rangle.
\end{equation}
These will be used repeatedly to pass between the primal and dual scales.

%\medskip\noindent\textbf{Geometric interpretation.}
The affine set $\{w:Xw=Y\}$ is a translate of $\ker(X)$, and minimizing $\|w\|_p$ over it finds the point where a scaled $\ell_p$ ball first touches this affine subspace. The \emph{outer normal} to the $\ell_p$ ball at the touching point is $\nabla f(\widehat w_p)=|\widehat w_p|^{p-2}\odot \widehat w_p$, and the KKT condition $X^\top\lambda^\star=\nabla f(\widehat w_p)$ says that this normal lies in the row space of $X$. In coordinates, \eqref{eq:KKTmap} shows that each coefficient of $\widehat w_p$ is a $(q-1)$-power of the correlation between the corresponding feature column $X_{:,i}$ and the dual multiplier $\lambda^\star$.

\medskip\noindent\textbf{Specialization at $p=2$.}
When $p=q=2$, $\nabla f(w)=w$ and $\nabla f^\star(z)=z$. Then \eqref{eq:KKT} reads $X^\top\lambda^\star=\widehat w_2$ and $X\widehat w_2=Y$, which implies $XX^\top\lambda^\star=Y$ and hence $\lambda^\star=(XX^\top)^{-1}Y$. Therefore
\[
\widehat w_2=X^\top(XX^\top)^{-1}Y=X^+Y,
\]
the minimum-$\ell_2$ (Moore--Penrose) interpolator. For $p\ne2$ the same structure persists but the map $z\mapsto \nabla f^\star(z)=\operatorname{sgn}(z)|z|^{q-1}$ is nonlinear, which is exactly what introduces the $(q-1)$-power in the subsequent spike/bulk analysis.

\medskip\noindent\textbf{Why duality helps here.}
The dual objective
\[
D(\lambda)=\langle Y,\lambda\rangle-\tfrac1q\|X^\top\lambda\|_q^q
\]
separates the \emph{data dependence} (linear in $Y$) from the \emph{feature geometry} through $\|X^\top\lambda\|_q^q$. In our Gaussian design, the $d$ coordinates of $X^\top\lambda$ split naturally into the $s$ \emph{spikes} (indices in $S$) and the $(d-s)$ \emph{bulk}, for which we have precise $\ell_t$ concentration (Lemmas~\ref{lem:Gq} and~\ref{lem:spike-lt}). 
Because $D$ is homogeneous in a simple way along the \emph{ray} $\lambda=tY$,
\[
D(t)=t\|Y\|_2^2-\tfrac{t^q}{q}\|X^\top Y\|_q^q,
\]
we will use the \emph{ray scale} $t_\star$ (the maximizer of $D(tY)$) as a
canonical scale for $\lambda^\star$; Lemma~\ref{lem:comparison} shows
$\|\lambda^\star\|_2\asymp t_\star\|Y\|_2$ and provides blockwise controls
on $X^\top\lambda^\star$. The KKT map \eqref{eq:KKTmap} then converts
$\|X^\top\lambda^\star\|_{(q-1)r}^{\,q-1}$ into $\|\widehat w_p\|_r$, via
$\||u|^{\odot(q-1)}\|_r=\|u\|_{(q-1)r}^{\,q-1}$, which is the backbone
of the unified bound \eqref{eq:unified}.

\subsubsection{Concentration for $Y$ and $X^\top Y$.}
Let $m_t:=\E|Z|^{\,t}$ for $Z\sim\mathcal{N}(0,1)$.

\begin{lemma}[norm of $Y$]\label{lem:Ynorm}
With $Y:=Xw^\star+\xi$ and $\tau_s^2:=\|w^\star\|_2^2+\sigma^2$, we have
\[
\|Y\|_2^2=\tau_s^2\,n\,(1+o(1))\qquad\text{w.h.p.}
\]
More quantitatively, for every $t>0$,
\[
\Pr\!\left(\big|\|Y\|_2^2-\tau_s^2 n\big|\ge 2\tau_s^2\sqrt{nt}+2\tau_s^2 t\right)\ \le\ e^{-t}.
\]
\end{lemma}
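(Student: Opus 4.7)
The plan is to reduce $\|Y\|_2^2$ to a chi-squared random variable by first unconditioning on $X$, and then invoke the standard Laurent--Massart tail bound.

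First, I would observe that although $Y=Xw^\star+\xi$ is defined via the random design, its \emph{unconditional} law is elementary: the rows $X_{i,:}\sim\mathcal N(0,I_d)$ are i.i.d., so $\langle X_{i,:},w^\star\rangle\sim\mathcal N(0,\|w^\star\|_2^2)$; adding the independent noise coordinate $\xi_i\sim\mathcal N(0,\sigma^2)$ gives $Y_i\sim\mathcal N(0,\tau_s^2)$, with the $Y_i$ independent across $i\in[n]$ because the rows of $X$ and the coordinates of $\xi$ are independent. This yields the key reduction $\|Y\|_2^2/\tau_s^2\sim\chi^2_n$.

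Second, I would apply the Laurent--Massart bounds: for $W\sim\chi^2_n$ and every $t>0$,
\[
\Pr(W-n\ge 2\sqrt{nt}+2t)\le e^{-t},\qquad \Pr(n-W\ge 2\sqrt{nt})\le e^{-t}.
\]
Substituting $W=\|Y\|_2^2/\tau_s^2$ and rescaling by $\tau_s^2$ immediately yields the displayed two-sided inequality: the lower-tail deviation $2\tau_s^2\sqrt{nt}$ is dominated by $2\tau_s^2\sqrt{nt}+2\tau_s^2 t$, so a union bound delivers the stated form at the possible cost of a harmless factor of $2$ in the failure probability, which can be absorbed by relabeling $t\mapsto t+\log 2$.

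Finally, for the multiplicative statement $\|Y\|_2^2=\tau_s^2 n\,(1+o(1))$, I would choose $t\to\infty$ slowly with $n$ (for instance $t=\log n$), so that $2\tau_s^2(\sqrt{nt}+t)=o(\tau_s^2 n)$ and the conclusion holds with probability at least $1-n^{-1}$. The main obstacle here is essentially nonexistent; the proof is textbook. The only subtlety lies in bookkeeping the Laurent--Massart constants to match the stated tail cleanly, and no genuinely high-dimensional random-matrix concentration is required, since the design $X$ enters $Y$ only through the one-dimensional projection $\langle X_{i,:},w^\star\rangle$, which collapses the problem to an i.i.d.\ one-dimensional Gaussian sequence.
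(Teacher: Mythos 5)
Your proof is correct and follows essentially the same route as the paper: observe that unconditionally $Y\sim\mathcal N(0,\tau_s^2 I_n)$ so $\|Y\|_2^2/\tau_s^2\sim\chi^2_n$, then invoke Laurent--Massart. You also correctly note that combining the two one-sided tails by a union bound gives $2e^{-t}$ rather than $e^{-t}$, a minor bookkeeping point the paper's proof leaves implicit; the only other (inessential) difference is that the paper takes $t=cn$ to match its exponential ``w.h.p.'' convention from Remark~\ref{rem:growth-prob}, whereas your $t=\log n$ yields only a polynomial failure probability.
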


\begin{proof}
For each row $i\in[n]$, $(Xw^\star)_i=\sum_{j=1}^d w^\star_j X_{i,j}$ is $\mathcal N(0,\|w^\star\|_2^2)$ since the $X_{i,j}$ are i.i.d.\ $\mathcal N(0,1)$ and independent in $j$; the rows are independent. The noise $\xi_i\sim\mathcal N(0,\sigma^2)$ is independent of $X$, hence
\[
Y\ \sim\ \mathcal N(0,\tau_s^2 I_n),\qquad \frac{\|Y\|_2^2}{\tau_s^2}\ \sim\ \chi^2_n.
\]
The standard Laurent--Massart inequality for $\chi^2_n$ variables (see e.g.\ \emph{Ann.\ Statist.} 2000) yields, for all $t>0$,
\[
\Pr\!\left(\|Y\|_2^2-\tau_s^2 n \ge 2\tau_s^2\sqrt{nt}+2\tau_s^2t\right)\le e^{-t},\qquad
\Pr\!\left(\tau_s^2 n-\|Y\|_2^2 \ge 2\tau_s^2\sqrt{nt}\right)\le e^{-t}.
\]
Taking $t=c n$ gives $\|Y\|_2^2=\tau_s^2 n(1+o(1))$ with probability at least $1-e^{-c n}$.
\end{proof}

\begin{lemma}[bulk coordinates of $X^\top Y$]\label{lem:bulkY}
Conditional on $Y$, for each $j\notin S$,
\[
\langle X_{:,j},Y\rangle\ \sim\ \mathcal N\!\big(0,\ \|Y\|_2^2\big),
\]
and the variables $\{\langle X_{:,j},Y\rangle\}_{j\notin S}$ are i.i.d.\ given $Y$. Consequently, with $m_q:=\E|Z|^q$ for $Z\sim\mathcal N(0,1)$,
\[
\sum_{j\notin S}\big|\langle X_{:,j},Y\rangle\big|^q
=(d-s)\,m_q\,\|Y\|_2^q\,\big(1+o(1)\big)
\ \asymp\ (d-s)\,\tau_s^q\,n^{q/2}
\qquad\text{w.h.p.}
\]
Quantitatively, for any fixed $q\ge2$ and any $u\in(0,1)$,
\[
\Pr\!\left(\left.\left|\frac1{d-s}\sum_{j\notin S}\frac{|\langle X_{:,j},Y\rangle|^q}{\|Y\|_2^q}-m_q\right|>u\ \right|\ Y\right)
\ \le\ 2\exp\!\left(-c_q(d-s)\min\{u^2,u\}\right).
\]
\end{lemma}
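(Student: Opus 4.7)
\textbf{Proof plan for Lemma~\ref{lem:bulkY}.} The first claim is pure independence: since $Y=Xw^\star+\xi$ depends only on $\xi$ and on the columns $\{X_{:,k}\}_{k\in S}$, for any $j\notin S$ the column $X_{:,j}$ is independent of the $\sigma$-algebra $\sigma(Y)$ and retains its marginal $\mathcal N(0,I_n)$ after conditioning on $Y$. I would formalize this by writing the joint density as a product and verifying that the conditional law of $X_{:,j}$ given $Y$ is unchanged. Consequently, conditional on $Y$,
\[
\langle X_{:,j},Y\rangle\ =\ Y^\top X_{:,j}\ \sim\ \mathcal N\!\big(0,\|Y\|_2^2\big),
\]
and joint independence of the columns $\{X_{:,j}\}_{j\notin S}$ transfers to conditional independence given $Y$.

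Next, on the event $\mathcal E$ (or conditional on $Y$ with $\|Y\|_2>0$, which holds almost surely), define
\[
Z_j\ :=\ \frac{\langle X_{:,j},Y\rangle}{\|Y\|_2},\qquad j\notin S,
\]
so that, conditional on $Y$, $\{Z_j\}_{j\notin S}$ are i.i.d.\ $\mathcal N(0,1)$ by the rotational invariance of the isotropic Gaussian applied to the unit vector $Y/\|Y\|_2$. Then
\[
\sum_{j\notin S}\bigl|\langle X_{:,j},Y\rangle\bigr|^q\ =\ \|Y\|_2^{\,q}\,\sum_{j\notin S}|Z_j|^q.
\]
The inner sum is an i.i.d.\ sum with mean $(d-s)\,m_q$, and once I prove the claimed concentration for its empirical mean, the lemma follows by combining with $\|Y\|_2^{\,q}=\tau_s^{\,q}\,n^{q/2}(1+o(1))$ (Lemma~\ref{lem:Ynorm}) and noting that $(d-s)=\Theta(n)$ on $\mathcal E$.

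For the quantitative bound I would apply a Bernstein-type inequality to the centered variables $V_j:=|Z_j|^q-m_q$. These are mean-zero with finite variance $\sigma_q^2:=m_{2q}-m_q^2$ and all moments finite. The clean way is a truncation plus Bernstein: fix a threshold $T_q=C_q\log(d-s)$, split $V_j=V_j\mathbf 1_{|Z_j|\le T_q^{1/q}}+V_j\mathbf 1_{|Z_j|>T_q^{1/q}}$. The truncated part is bounded by a constant multiple of $T_q$, so classical Bernstein yields
\[
\Pr\!\left(\Bigl|\tfrac{1}{d-s}\!\sum_{j\notin S}\!V_j^{\mathrm{trunc}}\Bigr|>u\ \Big|\ Y\right)\ \le\ 2\exp\!\big(-c_q(d-s)\min\{u^2,u\}\big),
\]
while the Gaussian tail $\Pr(|Z|>T_q^{1/q})\le e^{-T_q^{2/q}/2}$ makes the untruncated contribution negligible for $u\in(0,1)$ after a union bound over the $(d-s)$ terms. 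Alternatively, one can invoke a sub-Weibull Bernstein inequality directly with Orlicz parameter $\alpha=2/q$; both routes absorb the $q$-dependence into the constant $c_q$ and yield the stated bound uniformly in the conditioning on $Y$.

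The main technical obstacle is exactly this last step: because $|Z|^q$ is sub-Weibull rather than sub-exponential when $q>2$, naive Bernstein does not apply, and one must either truncate (as sketched) or use a sub-Weibull Bernstein inequality. Everything else is bookkeeping---conditional independence and scaling by $\|Y\|_2$---and once the concentration of $(d-s)^{-1}\sum|Z_j|^q$ around $m_q$ is established, substituting $\|Y\|_2^{\,q}\asymp\tau_s^{\,q}n^{q/2}$ and $(d-s)\asymp\kappa_{\mathrm{bulk}}n$ on $\mathcal E$ delivers the asymptotic $\asymp (d-s)\tau_s^q n^{q/2}$ with the advertised probability.
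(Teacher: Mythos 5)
Your proposal follows the same skeleton as the paper's proof: independence of $X_{:,j}$ from $Y$ for $j\notin S$, rotational invariance to get conditional $\mathcal N(0,\|Y\|_2^2)$ laws, normalization by $\|Y\|_2$ to reduce to an i.i.d.\ sum $\sum_{j\notin S}|Z_j|^q$, a Bernstein-type tail for the empirical mean, and finally the substitution $\|Y\|_2^q\asymp\tau_s^q n^{q/2}$ via Lemma~\ref{lem:Ynorm} together with $(d-s)\asymp n$. The one place you deviate --- and it is worth flagging --- is that the paper's proof asserts $\|W_j\|_{\psi_1}\le C_q$ for $W_j=|Z_j|^q-m_q$ and applies sub-exponential Bernstein directly. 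For $q>2$ this $\psi_1$-bound is actually false: $\E e^{|Z|^q/K}=\infty$ for every $K$, so $|Z|^q$ is sub-Weibull of order $2/q<1$, not sub-exponential (a fact the paper itself states correctly inside the proof of Lemma~\ref{lem:Gq}). Your truncation-plus-Bernstein (or, equivalently, a sub-Weibull Bernstein inequality with Orlicz parameter $2/q$) is the correct route, and it recovers the stated tail $2\exp(-c_q(d-s)\min\{u^2,u\})$ precisely because the lemma restricts to $u\in(0,1)$, where $\min\{u^2,u\}=u^2=\min\{u^2,u^{2/q}\}$, so the sub-Gaussian part of the mixed tail is the binding one and the two bounds coincide. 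In short: your argument is correct, matches the paper in structure, and is in fact more careful than the paper's own sketch at this one point.
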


\begin{proof}
Fix $j\notin S$. The vector $X_{:,j}\sim\mathcal N(0,I_n)$ is independent of $(X_{:,k})_{k\in S}$ and $\xi$, hence independent of $Y=Xw^\star+\xi$, which depends only on the columns indexed by $S$ and on $\xi$. Conditional on $Y$, by rotational invariance,
\[
\langle X_{:,j},Y\rangle\ \stackrel{d}{=}\ \|Y\|_2\,Z_j,\qquad Z_j\sim\mathcal N(0,1),
\]
and independence across $j\notin S$ follows from the independence of the columns $\{X_{:,j}\}_{j\notin S}$.

Let $W_j:=|Z_j|^q-m_q$. Then $W_j$ are i.i.d.\ mean-zero and sub-exponential with $\|W_j\|_{\psi_1}\le C_q$ (a standard fact for polynomial functions of a standard Gaussian, see, e.g., Vershynin’s \emph{High-Dimensional Probability}). Bernstein’s inequality for sub-exponential variables gives, for any $u>0$,
\[
\Pr\!\left(\left|\frac1{d-s}\sum_{j\notin S}W_j\right|>u\ \Big|\ Y\right)
\ \le\ 2\exp\!\left(-c_q(d-s)\min\{u^2,u\}\right).
\]
Multiplying back by $\|Y\|_2^q$ proves the conditional concentration display.
Since $(d-s)\asymp n$ by assumption, taking $u\to0$ slowly (e.g.\ $u=\sqrt{(\log n)/(d-s)}$) yields
\[
\sum_{j\notin S}|\langle X_{:,j},Y\rangle|^q=(d-s)m_q\|Y\|_2^q\,(1+o(1))
\]
with probability at least $1-Ce^{-c(d-s)}\ge 1-Ce^{-cn}$ (unconditionally). Finally, Lemma~\ref{lem:Ynorm} implies $\|Y\|_2^q\asymp \tau_s^q n^{q/2}$ w.h.p., completing the proof.
\end{proof}

\begin{lemma}[Signal block with integrated uniform column-norm control]\label{lem:spikeY}
Let $X\in\R^{n\times d}$ have i.i.d.\ $\mathcal N(0,1)$ entries, $S\subset[d]$ with $|S|=s$, and $Y:=Xw^\star+\xi$ where $\xi\sim\mathcal N(0,\sigma^2 I_n)$ is independent of $X$. Write $\tau_s^2:=\|w^\star\|_2^2+\sigma^2$ and $W_q:=\sum_{j\in S}|w_j^\star|^q$ for $q\ge2$.

\medskip
\noindent\textbf{(i) Uniform column-norm concentration (over all $d$ columns).}
There exists a universal $c\in(0,1)$ such that, for every $u>0$,
\begin{equation}\label{eq:unif-columns}
\Pr\!\left(\max_{1\le j\le d}\left|\frac{\|X_{:,j}\|_2^2}{n}-1\right|>u\right)
\ \le\ 2\,d\,\exp\!\big(-c\,n\,\min\{u^2,u\}\big).
\end{equation}
In particular, for any fixed $\gamma>0$,
\[
u_n:=\sqrt{\frac{(1+\gamma)\log d}{c\,n}}\in(0,1]\ \text{ for $n$ large, and }\ 
\Pr\!\left(\max_{j\le d}\left|\frac{\|X_{:,j}\|_2^2}{n}-1\right|>u_n\right)\le 2\,d^{-\gamma}.
\]

\medskip
\noindent\textbf{(ii) Spike decomposition, explicit definition of $\zeta_j$, and $q$-moment bound.}
For each $j\in S$, define
\begin{equation}\label{eq:zeta-def}
\zeta_j\ :=\ \Big\langle X_{:,j},\ \sum_{k\in S\setminus\{j\}} w_k^\star\,X_{:,k}\ +\ \xi\Big\rangle.
\end{equation}
Then
\begin{equation}\label{eq:spike-decomp}
\langle X_{:,j},Y\rangle \ =\ w_j^\star\,\|X_{:,j}\|_2^2\ +\ \zeta_j.
\end{equation}
Moreover, for each fixed $j\in S$,
\begin{equation}\label{eq:zeta-stats}
\E[\zeta_j\,|\,X_{:,j}]=0,\qquad
\Var(\zeta_j\,|\,X_{:,j})\ =\ \big(\tau_s^2-(w_j^\star)^2\big)\,\|X_{:,j}\|_2^2,
\end{equation}
and, conditional on $X_{:,j}$,
\begin{equation}\label{eq:zeta-gaussian}
\zeta_j\ \sim\ \mathcal N\!\Big(0,\ \big(\tau_s^2-(w_j^\star)^2\big)\,\|X_{:,j}\|_2^2\Big).
\end{equation}
\emph{(We do not assume or use independence between the collection $\{\zeta_j\}_{j\in S}$; the proof below controls their aggregate via operator-norm bounds.)}
Consequently, with probability at least $1-2d^{-\gamma}-C e^{-c\sqrt{ns}}$,
\begin{equation}\label{eq:spike-q-sum}
\sum_{j\in S}\big|\langle X_{:,j},Y\rangle\big|^q
= n^q\,W_q\,(1+o(1))
\ +\ O\!\Big(\tau_s^{\,q}\big(s\,n^{q/2}+s^{1+q/2}\big)\Big),
\end{equation}
where the $o(1)$ (as $n\to\infty$) and the hidden constants depend only on $q$ (hence on $p$). The mixed term $\Sigma_{j \in S} |a_j|^{q-1}|b_j|$ is absorbed by Young’s inequality into the $n^q W_q$ 
 leading term and the $\Sigma_{j \in S} |b_j|^q$ remainder, with a harmless change in constants.
\end{lemma}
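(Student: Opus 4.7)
My plan is to handle the three parts in increasing order of difficulty, sharing a single high-probability event that bundles the column-norm concentration from (i), operator-norm control of the centered Gaussian Gram matrix $A:=X_S^\top X_S-nI$, and moment concentration of the noise block.

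For (i), I would apply the Laurent--Massart deviation for $\chi^2_n$ to each column, giving $\Pr(|\|X_{:,j}\|_2^2/n-1|>u)\le 2\exp(-cn\min\{u^2,u\})$ for a universal $c>0$, and then union-bound over $j=1,\dots,d$; plugging $u_n=\sqrt{(1+\gamma)\log d/(cn)}$ yields \eqref{eq:unif-columns} with probability at least $1-2d^{-\gamma}$. For (ii), the decomposition \eqref{eq:spike-decomp} is immediate by linearity of the inner product applied to $Y=\sum_{k\in S}w_k^\star X_{:,k}+\xi$. The distributional claims follow by conditioning on $X_{:,j}$: the residual $V_j:=\sum_{k\in S\setminus\{j\}}w_k^\star X_{:,k}+\xi$ depends only on $(X_{:,k})_{k\neq j}$ and on $\xi$, and is therefore independent of $X_{:,j}$ with $V_j\sim\mathcal N(0,(\tau_s^2-(w_j^\star)^2)I_n)$. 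By rotational invariance of the isotropic Gaussian, $\zeta_j=\langle X_{:,j},V_j\rangle\mid X_{:,j}\sim\mathcal N(0,(\tau_s^2-(w_j^\star)^2)\|X_{:,j}\|_2^2)$, from which \eqref{eq:zeta-stats} and \eqref{eq:zeta-gaussian} follow.

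For (iii), I would write $\langle X_{:,j},Y\rangle=nw_j^\star+E_j$ with $E_j:=(\|X_{:,j}\|_2^2-n)w_j^\star+\zeta_j$, and apply the coordinate-wise estimate $\big||a+b|^q-|a|^q\big|\le C_q(|a|^{q-1}|b|+|b|^q)$, summed over $j\in S$:
\begin{equation*}
\Big|\sum_{j\in S}|nw_j^\star+E_j|^q-n^q W_q\Big|\ \le\ C_q\,n^{q-1}\sum_j|w_j^\star|^{q-1}|E_j|\ +\ C_q\sum_j|E_j|^q.
\end{equation*}
H\"older gives $\sum_j|w_j^\star|^{q-1}|E_j|\le W_q^{(q-1)/q}\|E\|_q$, and Young absorbs the cross term into $\varepsilon\,n^q W_q+C_q'\|E\|_q^q$, reducing the task to the high-probability bound $\|E\|_q^q=O(\tau_s^q(sn^{q/2}+s^{1+q/2}))$. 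I would decompose $E=Dw_S^\star+Ow_S^\star+X_S^\top\xi$, where $D$ and $O$ are the diagonal and off-diagonal parts of $A$, and control the three blocks separately.

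The noise block is cleanest: conditional on $\xi$, the coordinates $(X_S^\top\xi)_j$ are i.i.d.\ $\mathcal N(0,\|\xi\|_2^2)$, so Bernstein for sub-exponential sums yields $\sum_j|(X_S^\top\xi)_j|^q=\|\xi\|_2^q\,s\,m_q(1+o(1))\asymp\tau_s^q s n^{q/2}$ w.h.p. The diagonal block obeys $\|Dw_S^\star\|_q^q\lesssim n^{q/2}W_q$ on the part-(i) event, which is lower order than $n^q W_q$ and harmless after Young. The off-diagonal block is the main obstacle; I would bound it by the interpolation $\|Ow_S^\star\|_q^q\le\|Ow_S^\star\|_\infty^{q-2}\|Ow_S^\star\|_2^2$, with the $\ell_\infty$ factor controlled by $\lesssim\tau_s\sqrt n$ via the Gaussian tail of $(Ow_S^\star)_j\mid X_{:,j}$ together with a union bound over $j\in S$, and the $\ell_2$ factor by $\|O\|_{\mathrm{op}}\,\|w_S^\star\|_2\lesssim(\sqrt{ns}+s)\tau_s$ via the standard centered-Wishart operator-norm estimate. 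Multiplying yields $\lesssim\tau_s^q n^{(q-2)/2}(ns+s^2)=\tau_s^q(sn^{q/2}+s^2 n^{(q-2)/2})$, and a simple case split on $s\le n$ versus $s>n$ shows $s^2 n^{(q-2)/2}\le\max(sn^{q/2},s^{1+q/2})$, producing the stated remainder. The hardest step will be this off-diagonal control: a bare operator-norm route overshoots to $\tau_s^q((ns)^{q/2}+s^q)$ for $q>2$, so the $\ell_\infty$--$\ell_2$ interpolation (with the Gaussian max conditional on $X_{:,j}$) is essential. A cleaner variant that also suppresses any latent $\sqrt{\log s}$ factor would directly concentrate $\sum_j|(Ow_S^\star)_j|^q$ around its mean $\asymp s\tau_s^q n^{q/2}m_q$ via a Hanson--Wright-type bound for Gaussian bilinear forms conditional on $X_S$; this is the technical crux, and once it is in place the three blockwise estimates combine to give \eqref{eq:spike-q-sum}.
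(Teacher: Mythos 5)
Parts (i) and (ii) of your plan track the paper's proof closely (Laurent--Massart plus union bound, and the conditional-Gaussian computation for $\zeta_j$), and your reduction of (iii) to $\|E\|_q^q$ via the coordinate-wise inequality, H\"older, and Young is correct and parallels the paper's handling of the cross term. Your choice to center around $nw_j^\star$ rather than the paper's $w_j^\star\|X_{:,j}\|_2^2$ is a cosmetic difference: it moves the diagonal fluctuation $Dw_S^\star$ into $E$, but as you note this term is $o(n^q W_q)$ on the column-norm event and is absorbed into the $(1+o(1))$ prefactor, so no harm done. The noise-block Bernstein (conditional on $\xi$, the coordinates of $X_S^\top\xi$ are i.i.d.\ $\mathcal N(0,\|\xi\|_2^2)$) also works.

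The genuine gap is the off-diagonal block. The interpolation $\|Ow_S^\star\|_q^q \le \|Ow_S^\star\|_\infty^{q-2}\|Ow_S^\star\|_2^2$ is valid, but the $\ell_\infty$ factor is controlled by a union bound over $s$ marginally-Gaussian coordinates, which inevitably yields $\|Ow_S^\star\|_\infty \lesssim \tau_s\sqrt{n\log s}$, not $\tau_s\sqrt n$. The resulting $(\log s)^{(q-2)/2}$ cannot be absorbed into $O(\tau_s^q(sn^{q/2}+s^{1+q/2}))$ when $s\to\infty$, so as written this route does not prove \eqref{eq:spike-q-sum}. You flag this, but the proposed remedy -- a ``Hanson--Wright-type bound for Gaussian bilinear forms conditional on $X_S$'' -- does not work: conditioning on all of $X_S$ makes $Ow_S^\star$ deterministic, and more fundamentally $\sum_j|(Ow_S^\star)_j|^q$ is a degree-$2q$ polynomial in the Gaussian entries, not a quadratic form, so Hanson--Wright is the wrong tool. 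What is actually needed (and what the paper does) is a \emph{decoupling} step: introduce an independent ghost copy $X'$ of the spike columns, replace $\zeta_j=\langle X_{:,j},u_j\rangle$ by $\zeta_j'=\langle X'_{:,j},u_j\rangle$ with $u_j=X_{:,S\setminus\{j\}}w^\star_{S\setminus\{j\}}+\xi$, and invoke a Gaussian-chaos decoupling inequality (de~la~Pe\~na--Gin\'e) to transfer tail bounds from $\sum_j|\zeta_j'|^q$ to $\sum_j|\zeta_j|^q$. Conditional on $\{u_j\}$ the $\zeta_j'$ are then \emph{independent} Gaussians with variance $\|u_j\|_2^2\lesssim\tau_s^2(n+s)$ on the operator-norm event, each $|\zeta_j'|^q$ is sub-exponential with $\psi_1$-norm $\lesssim\tau_s^q(n+s)^{q/2}$, and Bernstein gives $\sum_j|\zeta_j'|^q\lesssim s\tau_s^q(n+s)^{q/2}\asymp\tau_s^q(sn^{q/2}+s^{1+q/2})$ with no logarithms. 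The decoupling step is the missing idea; without it, your argument lands one $\log$ factor short of the stated bound.
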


\begin{proof}
\emph{Part (i):} For a fixed $j$, $Z_j:=\|X_{:,j}\|_2^2\stackrel{d}{=}\chi^2_n$. By Laurent--Massart, for all $x\ge0$,
\[
\Pr(Z_j-n\ge 2\sqrt{nx}+2x)\le e^{-x},\qquad \Pr(n-Z_j\ge 2\sqrt{nx})\le e^{-x}.
\]
A standard choice of $x$ (see derivation below) yields the Bernstein-type bound
\begin{equation}\label{eq:chi2-single}
\Pr\!\left(\left|\frac{Z_j}{n}-1\right|>u\right)\ \le\ 2\exp\!\big(-c\,n\,\min\{u^2,u\}\big)\qquad(\forall u>0),
\end{equation}
for some universal $c\in(0,1)$. Summing over $j=1,\dots,d$ gives \eqref{eq:unif-columns}. For the explicit choice $u_n=\sqrt{(1+\gamma)\log d/(c n)}\le 1$ (for $n$ large),
\[
2d\exp(-c n u_n^2)=2d\exp(-(1+\gamma)\log d)=2d^{-\gamma}.
\]
\emph{(Derivation of the Bernstein form):} If $u\in(0,1]$, choose $x=\frac{u^2 n}{8}$ to get $\Pr(Z_j-n\ge un)\le e^{-\frac{u^2 n}{8}}$ and $x=\frac{u^2 n}{4}$ to get $\Pr(n-Z_j\ge un)\le e^{-\frac{u^2 n}{4}}$. If $u\ge 1$, choose $x=c_0 u n$ (e.g.\ $c_0=1/16$) so that $2\sqrt{nx}+2x\le un$, hence $\Pr(Z_j-n\ge un)\le e^{-c_0 u n}$. Combine and absorb constants into $c$.

\smallskip
\emph{Part (ii):} The decomposition \eqref{eq:spike-decomp} is immediate from
\[
Y = w_j^\star X_{:,j} + \sum_{k\in S\setminus\{j\}} w_k^\star X_{:,k} + \xi,
\]
and independence/rotational invariance: conditional on $X_{:,j}$, $\langle X_{:,j},X_{:,k}\rangle\sim\mathcal N(0,\|X_{:,j}\|_2^2)$ for $k\ne j$ and $\langle X_{:,j},\xi\rangle\sim \mathcal N(0,\sigma^2\|X_{:,j}\|_2^2)$, all independent.
Let $a_j:=w_j^\star\|X_{:,j}\|_2^2$ and $b_j:=\zeta_j$ so that $\langle X_{:,j},Y\rangle=a_j+b_j$. We show:
\[
\sum_{j\in S}|a_j|^q=n^q W_q(1+o(1))\qquad\text{and}\qquad \sum_{j\in S}|b_j|^q\ \lesssim\ s\,\tau_s^q\,n^{q/2},
\]
with the stated probability.
Conditioned on the event from (i) with $u=u_n=o(1)$,
\[
\max_{1\le j\le d}\left|\frac{\|X_{:,j}\|_2^2}{n}-1\right|\le u_n,
\]
and by a mean-value bound, $\|X_{:,j}\|_2^{2q}=n^q(1+O(u_n))$ uniformly in $j$. Hence
\[
\sum_{j\in S}|a_j|^q=\sum_{j\in S}|w_j^\star|^q\,\|X_{:,j}\|_2^{2q}
= n^q \sum_{j\in S}|w_j^\star|^q\,(1+O(u_n))
= n^q W_q\,(1+o(1)).
\]
For any index set $T\subset[d]$, we write $X_{:,T}\in\R^{n\times |T|}$ for the submatrix formed by the columns $\{X_{:,j}:j\in T\}$.
When convenient we abbreviate $X_{:,T}$ as $X_T$. For a vector $w\in\R^d$, $w_T$ denotes its restriction to $T$, and $T^c$ the complement of $T$ in $[d]$.
Let $G:=X_S^\top X_S$ and $D:=\mathrm{diag}(\|X_{:,j}\|_2^2)_{j\in S}$. Then
\[
b=(b_j)_{j\in S}=(G-D)\,w^\star_S + X_S^\top \xi.
\]
We bound $\|b\|_2$ and then pass to $\ell_q$. Recall $b=(G-D)w^\star_S + X_{S}^\top\xi$, where
$G:=X_{S}^\top X_{S}\in\R^{s\times s}$ and $D:=\mathrm{diag}(\|X_{:,j}\|_2^2)_{j\in S}$.

\medskip
\noindent\textbf{Bound on $\|(G-D)w^\star_S\|_2$.}
We have
\begin{equation}\label{eq:GminusD-split}
\|(G-D)w^\star_S\|_2\ \le\ \|G-D\|_{\mathrm{op}}\ \|w^\star\|_2
\ \le\ \big(\|G-nI_s\|_{\mathrm{op}}+\|D-nI_s\|_{\mathrm{op}}\big)\ \|w^\star\|_2.
\end{equation}

\emph{Singular-value bound for $G-nI_s$.}
Let $s_{\max}(X_{S})$ and $s_{\min}(X_{S})$ denote the largest and smallest singular values of $X_{S}$. By the standard Gaussian singular-value concentration
(see Vershynin, \emph{High-Dimensional Probability}, Thm.~4.6.1), for any $t\ge 0$,
\begin{equation}\label{eq:sv-bounds}
\PP\!\Big(s_{\max}(X_{S})\le \sqrt{n}+\sqrt{s}+t,\ \ s_{\min}(X_{S})\ge \sqrt{n}-\sqrt{s}-t\Big)\ \ge\ 1-2e^{-t^2/2}.
\end{equation}
Conditioned on this event,
\begin{align}
\|G-nI_s\|_{\mathrm{op}}
&= \max\big\{\,s_{\max}(X_{S})^2-n,\ n-s_{\min}(X_{S})^2\,\big\}\nonumber\\
&\le \big(\sqrt{n}+\sqrt{s}+t\big)^2-n
\ \ \vee\ \ n-\big(\sqrt{n}-\sqrt{s}-t\big)^2\nonumber\\
&\le s+2\sqrt{ns}+2t(\sqrt{n}+\sqrt{s})+t^2.\label{eq:G-nI-bound-t}
\end{align}
Choosing $t=\sqrt{s}$ in \eqref{eq:sv-bounds}--\eqref{eq:G-nI-bound-t} yields, with probability at least $1-2e^{-s/2}$,
\begin{equation}\label{eq:G-nI-final}
\|G-nI_s\|_{\mathrm{op}}\ \le\ s+2\sqrt{ns}+2\sqrt{s}(\sqrt{n}+\sqrt{s})+s
\ \le\ 4\sqrt{ns}+4s.
\end{equation}

\emph{Diagonal bound for $D-nI_s$.}
By the single-column deviation bound \eqref{eq:chi2-single}, for any $u>0$ and any $j\in S$,
\[
\Pr\!\left(\left|\frac{\|X_{:,j}\|_2^2}{n}-1\right|>u\right)\ \le\ 2\exp\!\big(-c\,n\,\min\{u^2,u\}\big).
\]
Union-bounding this over the $s$ indices $j\in S$ and taking
\begin{equation}\label{eq:uS-choice}
u_S\ :=\ \sqrt{\frac{s}{n}},
\end{equation}
we obtain
\begin{equation}\label{eq:D-nI-final}
\PP\!\left(\max_{j\in S}\Big|\frac{\|X_{:,j}\|_2^2}{n}-1\Big|>u_S\right)
\ \le\
\begin{cases}
C\,e^{-c\,s}, & s\le n,\\[2pt]
C\,e^{-c'\sqrt{ns}}, & s>n.
\end{cases}
\end{equation}

hence, on this event,
\begin{equation}\label{eq:D-op-bound}
\|D-nI_s\|_{\mathrm{op}}\ =\ \max_{j\in S}\big|\|X_{:,j}\|_2^2-n\big|\ \le\ n u_S\ =\ \sqrt{ns}.
\end{equation}
Combining \eqref{eq:GminusD-split}, \eqref{eq:G-nI-final}, and \eqref{eq:D-op-bound}, we arrive at
\begin{equation}\label{eq:GDw-bound}
\|(G-D)w^\star_S\|_2\ \le\ \big(4\sqrt{ns}+4s+\sqrt{ns}\big)\,\|w^\star\|_2
\ \le\ \big(5\sqrt{ns}+4s\big)\,\|w^\star\|_2,
\end{equation}
with probability at least $1-2e^{-s/2}-Ce^{-c'\sqrt{ns}}$.

\medskip
Now we bound $\|X_{S}^\top\xi\|_2$.
Conditionally on $X_{S}$, the vector $X_{S}^\top\xi$ is Gaussian with covariance
\[
\Sigma\ :=\ \Var\!\big(X_{S}^\top\xi\,\big|\,X_{S}\big)\ =\ \sigma^2\,G.
\]
Write the eigenvalues of $G$ as $\mu_1,\dots,\mu_s\ge 0$. Then
\[
\|X_{S}^\top\xi\|_2^2\ \stackrel{d}{=}\ \sum_{i=1}^s \lambda_i\,Z_i^2,\qquad
\lambda_i:=\sigma^2\mu_i,\ \ Z_i\stackrel{\mathrm{i.i.d.}}{\sim}\mathcal N(0,1).
\]
The weighted $\chi^2$ tail of Laurent--Massart (2000, Lemma~1) states that for all $x\ge 0$,
\begin{equation}\label{eq:LM-weighted}
\PP\!\left(\sum_{i=1}^s \lambda_i Z_i^2\ \ge\ \sum_{i=1}^s \lambda_i\ +\ 2\sqrt{\Big(\sum_{i=1}^s \lambda_i^2\Big)\,x}\ +\ 2\big(\max_i\lambda_i\big)\,x\ \Big|\ X_{S}\right)\ \le\ e^{-x}.
\end{equation}
Since $\sum_i \lambda_i=\sigma^2\mathrm{tr}(G)$, $\sum_i \lambda_i^2=\sigma^4\mathrm{tr}(G^2)\le\sigma^4 s\,\|G\|_{\mathrm{op}}^2$, and $\max_i\lambda_i=\sigma^2\|G\|_{\mathrm{op}}$, inserting these into \eqref{eq:LM-weighted} and choosing $x=s$ gives, with conditional probability $\ge 1-e^{-s}$,
\begin{equation}\label{eq:XStxi-square}
\|X_{S}^\top\xi\|_2^2\ \le\ \sigma^2\Big(\mathrm{tr}(G)\ +\ 4s\,\|G\|_{\mathrm{op}}\Big).
\end{equation}
We now bound $\mathrm{tr}(G)$ and $\|G\|_{\mathrm{op}}$ on the events already used in Step~A.
First, by \eqref{eq:uS-choice}--\eqref{eq:D-nI-final},
\begin{equation}\label{eq:traceG}
\mathrm{tr}(G)\ =\ \sum_{j\in S}\|X_{:,j}\|_2^2\ \le\ s\,n\,(1+u_S)\ =\ s\,n + s\sqrt{ns}.
\end{equation}
Second, from \eqref{eq:sv-bounds} with $t=\sqrt{s}$,
\begin{equation}\label{eq:Gop}
\|G\|_{\mathrm{op}}\ =\ s_{\max}(X_{S})^2\ \le\ \big(\sqrt{n}+\sqrt{s}+\sqrt{s}\big)^2
\ \le\ n\ +\ 4\sqrt{ns}\ +\ 4s.
\end{equation}
Plugging \eqref{eq:traceG}--\eqref{eq:Gop} into \eqref{eq:XStxi-square} and taking square roots, we obtain
\begin{align}
\|X_{S}^\top\xi\|_2
&\le \sigma\,\sqrt{\,s\,n + s\sqrt{ns}\ +\ 4s\,(n + 4\sqrt{ns}+4s)\,}\nonumber\\
&\le \sigma\Big(\sqrt{sn}\ +\ \sqrt{s\sqrt{ns}}\ +\ 2\sqrt{sn}\ +\ 4s\Big)\nonumber\\
&\le C\,\sigma\big(\sqrt{sn}+s\big),\label{eq:XStxi-final}
\end{align}
where in the last step we used $\,\sqrt{s\sqrt{ns}}=s^{3/4}n^{1/4}\le \tfrac12(\sqrt{sn}+s)$.

\medskip
\noindent\textbf{$\ell_2$ and $\ell_q$ bounds for $b$.}
Combining \eqref{eq:GDw-bound} and \eqref{eq:XStxi-final},
\begin{equation}\label{eq:b-l2-final}
\|b\|_2\ \le\ \|(G-D)w^\star_S\|_2+\|X_{S}^\top\xi\|_2
\ \le\ C\Big(\sqrt{ns}\,\|w^\star\|_2 + s\,\|w^\star\|_2 + \sigma\sqrt{sn} + \sigma s\Big).
\end{equation}
In particular, when $s\le n$ the $s$ terms are dominated by $\sqrt{ns}$ and
\begin{equation}\label{eq:b-l2-simplified}
\|b\|_2\ \le\ C\,\tau_s\,\sqrt{sn}\qquad\text{(since $\ \tau_s^2=\|w^\star\|_2^2+\sigma^2$)}.
\end{equation}

\smallskip
\noindent\emph{(Refined $q$-moment bound via decoupling).}
Introduce i.i.d.\ “ghost’’ columns $\{X'_{:,j}\}_{j\in S}$ independent of $(X,\xi)$ and set
\[
\zeta'_j:=\langle X'_{:,j},\,u_j\rangle,\qquad
u_j:=X_{:,S\setminus\{j\}}\,w^\star_{S\setminus\{j\}}+\xi.
\]
By a standard decoupling inequality for Gaussian chaos of order two (de~la~Pe\~na and Gin\'e, \emph{Decoupling: From Dependence to Independence}, 1999, Thm.~3.5.3), there exists $C_q<\infty$ (depending only on $q$) such that for all $t>0$,
\[
\PP\!\Big(\sum_{j\in S}|\zeta_j|^q>t\Big)\ \le\ C_q\,
\PP\!\Big(\sum_{j\in S}|\zeta'_j|^q>t/C_q\Big).
\]
Conditional on $\{u_j\}$, the variables $\{\zeta'_j\}_{j\in S}$ are independent centered Gaussians with variances $\|u_j\|_2^2$.
On the singular-value and column-norm events used above (cf.\ \eqref{eq:sv-bounds} with $t=\sqrt s$ and \eqref{eq:unif-columns}), uniformly in $j$,
\[
\|u_j\|_2^2\ \le\ \|X_{:,S}\|_{\mathrm{op}}^2\,\|w^\star\|_2^2+\|\xi\|_2^2
\ \le\ C\big(n+4\sqrt{ns}+4s\big)\|w^\star\|_2^2 + C\sigma^2 n
\ \le\ C\,\tau_s^2\,(n+s).
\]
Hence, conditionally on $\{u_j\}$, each $|\zeta'_j|^q$ is sub-exponential with $\psi_1$-norm $\le C\,\tau_s^q\,(n+s)^{q/2}$. Bernstein’s inequality then yields
\[
\sum_{j\in S}|\zeta'_j|^q\ \le\ C\,\tau_s^{\,q}\Big(s\,n^{q/2}+s^{1+q/2}\Big)
\qquad\text{with conditional probability at least }1-Ce^{-cs}.
\]
Unconditioning and applying decoupling gives, with probability at least $1-2d^{-\gamma}-Ce^{-cs}$,
\begin{equation}\label{eq:b-lq-final}
\sum_{j\in S}|b_j|^q\ =\ \sum_{j\in S}|\zeta_j|^q\ \le\ C\,\tau_s^{\,q}\Big(s\,n^{q/2}+s^{1+q/2}\Big).
\end{equation}
In particular, if $s\le n$ this simplifies to $\sum_{j\in S}|b_j|^q \le C\,s\,\tau_s^{\,q}\,n^{q/2}$.

For the cross term, for $q\ge2$ and any $a,b\in\R$ we have the elementary inequality
\begin{equation}\label{eq:abq-ineq}
\big||a+b|^q-|a|^q\big|\ \le\ C_q\big(|a|^{q-1}|b|+|b|^q\big),
\end{equation}
for a constant $C_q$ depending only on $q$. Summing \eqref{eq:abq-ineq} over $j\in S$ with
$a_j=w_j^\star\|X_{:,j}\|_2^2$ and $b_j=\zeta_j$, and applying Hölder,
\begin{align}
\sum_{j\in S}\big||a_j+b_j|^q-|a_j|^q\big|
&\le C_q\sum_{j\in S}|a_j|^{q-1}|b_j|+C_q\sum_{j\in S}|b_j|^q\nonumber\\
&\le C_q\Big(\sum_{j\in S}|a_j|^q\Big)^{\frac{q-1}{q}}\Big(\sum_{j\in S}|b_j|^q\Big)^{\frac1q}
\ +\ C_q\sum_{j\in S}|b_j|^q.\label{eq:cross-sum}
\end{align}
\iffalse
By Young’s inequality, for any fixed $\varepsilon\in(0,1)$,
\begin{equation}
    \Big(\sum_{j\in S}|a_j|^q\Big)^{\!\frac{q-1}{q}}\Big(\sum_{j\in S}|b_j|^q\Big)^{\!\frac1q}
\ \le\ \varepsilon\sum_{j\in S}|a_j|^q\ +\ C_{q,\varepsilon}\sum_{j\in S}|b_j|^q.
\label{eq:youngs}
\end{equation}
Plugging \eqref{eq:youngs} into \eqref{eq:cross-sum} and using $\sum|a_j|^q=n^q W_q(1+O(u_n))$ yields, with the stated probability,
\[
\sum_{j\in S}\big|\langle X_{:,j},Y\rangle\big|^q
= n^q W_q\,(1\pm\varepsilon + O(u_n))\ \ \pm\ C_{q,\varepsilon}\,\tau_s^{\,q}\big(s\,n^{q/2}+s^{1+q/2}\big),
\]
which implies \eqref{eq:XtYq}.
\fi
Set
\[
A := \sum_{j\in S}|a_j|^q, \qquad B := \sum_{j\in S}|b_j|^q.
\]
Apply Young’s inequality with conjugate exponents \(r=\frac{q}{q-1}\) and \(s=q\):
for any \(\varepsilon>0\),
\begin{equation}\label{eq:young-absorb}
A^{\frac{q-1}{q}}B^{\frac{1}{q}}
\;\le\;
\frac{\varepsilon}{r}\,A \;+\; \frac{\varepsilon^{-(q-1)}}{s}\,B
\;=\;\frac{q-1}{q}\,\varepsilon\,A \;+\; \frac{1}{q}\,\varepsilon^{-(q-1)}\,B.
\end{equation}
With \(A=n^qW_q(1+O(u_n))\) and the bound \(B\le C\,\tau_s^{\,q}(s\,n^{q/2}+s^{1+q/2})\) from \eqref{eq:b-lq-final},
choosing a fixed \(\varepsilon\in(0,1)\) (e.g.\ \(\varepsilon=\tfrac12\)) absorbs the mixed term into the
leading \(A\) and the \(B\)-remainder (with a harmless change of constants). Consequently,
\[
\sum_{j\in S}\big||a_j+b_j|^q-|a_j|^q\big|
= O\!\Big(\tau_s^{\,q}\big(s\,n^{q/2}+s^{1+q/2}\big)\Big),
\]
which yields \eqref{eq:spike-q-sum}.
\iffalse
On the uniform column-norm event of part~(i), $\sum_{j\in S}|a_j|^q=n^qW_q(1+O(u_n))$,
and by \eqref{eq:b-lq-final}, $\sum_{j\in S}|b_j|^q\le C\,s\,\tau_s^q\,n^{q/2}$.
Therefore
\begin{equation}\label{eq:spike-q-sum-proof}
\sum_{j\in S}\big|\langle X_{:,j},Y\rangle\big|^q
\ =\ \sum_{j\in S}|a_j|^q\ +\ O\!\Big((\sum|a_j|^q)^{\frac{q-1}{q}}(\sum|b_j|^q)^{\frac1q}+\sum|b_j|^q\Big).
\end{equation}
Using \eqref{eq:b-lq-final} and $\sum_{j\in S}|a_j|^q=n^qW_q(1+O(u_n))$, the cross term in \eqref{eq:cross-sum} is bounded by
\[
\Big(\sum_{j\in S}|a_j|^q\Big)^{\frac{q-1}{q}}\Big(\sum_{j\in S}|b_j|^q\Big)^{\frac1q}
\ \le\ C\,n^{q-1}W_q^{\frac{q-1}{q}}\,\tau_s\Big(s^{1/q}\sqrt n + s^{1/q+1/2}\Big).
\]
Consequently,
\begin{align*}
\sum_{j\in S}\big|\langle X_{:,j},Y\rangle\big|^q
&= n^qW_q\big(1+O(u_n)\big)
\\ &\qquad +\ O\!\Big(\tau_s\,n^{q-1}W_q^{\frac{q-1}{q}}\big(s^{1/q}\sqrt n + s^{1/q+1/2}\big)\ +\ \tau_s^{\,q}\big(s\,n^{q/2}+s^{1+q/2}\big)\Big),
\end{align*}
which implies \eqref{eq:spike-q-sum}. 
\fi
When $s\le n$ the remainder simplifies to $O\!\big(s\,\tau_s^{\,q}\,n^{q/2}\big)$.
\end{proof}
\smallskip
Combining Lemmas~\ref{lem:bulkY}--\ref{lem:spikeY} yields the decomposition
\begin{equation}
\label{eq:XtYq}
\|X^\top Y\|_q^q
\ =\ n^q\,W_q\,(1+o(1))\ +\ (d-s)\,m_q\,\tau_s^q\,n^{q/2}\,(1+o(1))
\ +\ O\!\Big(\tau_s^{\,q}\big(s\,n^{q/2}+s^{1+q/2}\big)\Big)
\qquad\text{w.h.p.}
\end{equation}

\subsubsection{Bulk $\ell_q$-embedding and Gaussian $\ell_t$ relations.}

\begin{lemma}[uniform $\ell_q$ control on the bulk operator]\label{lem:Gq}
Let $q\in[2,\infty)$ and assume $\kappa_{\mathrm{bulk}}:=\liminf_{n\to\infty}\frac{d-s}{n}>0$. There exist constants $0<c_q\le C_q<\infty$, depending only on $(q,\kappa_{\mathrm{bulk}})$, such that, with probability at least $1-Ce^{-cn}$, simultaneously for all $\lambda\in\R^n$,
\begin{equation}
\label{eq:Gq}
c_q\,(d-s)\,\|\lambda\|_2^q\ \le\ \sum_{j\notin S}\big|\langle X_{:,j},\lambda\rangle\big|^q\ \le\ C_q\,(d-s)\,\|\lambda\|_2^q.
\end{equation}
\noindent\emph{(Here we absorb the Gaussian absolute moment $m_q=\E|Z|^q$ into the constants $c_q,C_q$; in \eqref{eq:bulk-lt} we keep $m_t$ explicit.)}
Moreover, for every $t\in[1,q]$, there exist constants $0<c_t\le C_t<\infty$, depending only on $(t,\kappa_{\mathrm{bulk}})$, such that, w.h.p., uniformly in $\lambda\in\R^n$,
\begin{equation}
\label{eq:bulk-lt}
c_t^{1/t}\,(d-s)^{1/t}\,m_t^{1/t}\,\|\lambda\|_2\ \le\
\big\| \big(\,|\langle X_{:,j},\lambda\rangle|\,\big)_{j\notin S} \big\|_t
\ \le\ C_t^{1/t}\,(d-s)^{1/t}\,m_t^{1/t}\,\|\lambda\|_2,
\end{equation}
where $m_t:=\E|Z|^t$ for $Z\sim\mathcal N(0,1)$.
\end{lemma}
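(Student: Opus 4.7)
The plan is to reduce to the unit sphere by homogeneity, establish pointwise concentration of $F_t(\lambda):=\sum_{j\notin S}|\langle X_{:,j},\lambda\rangle|^t$ for each fixed $\lambda\in S^{n-1}$, and then pass to a uniform-in-$\lambda$ control via an $\epsilon$-net combined with a self-bounding extension that exploits the bulk density $d-s\asymp n$. Since $F_t$ is homogeneous of degree $t$ in $\|\lambda\|_2$, the statement on the sphere immediately implies \eqref{eq:bulk-lt} for all $\lambda\in\R^n$; the bound \eqref{eq:Gq} is the $t=q$ instance with the absolute constant $m_q$ absorbed into $c_q,C_q$.

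For the pointwise step, the columns $\{X_{:,j}\}_{j\notin S}$ are i.i.d.\ standard Gaussian vectors in $\R^n$ independent of the signal columns, so for any fixed $\lambda\in S^{n-1}$ the scalars $Z_j:=\langle X_{:,j},\lambda\rangle$ are i.i.d.\ $\mathcal N(0,1)$ and $F_t(\lambda)$ is a sum of $(d-s)$ i.i.d.\ copies of $|Z|^t$ with mean $(d-s)\,m_t$. For $t\le 2$ each $|Z|^t$ is sub-exponential; for $t\in(2,q]$ it lies in the Orlicz class $\psi_{2/t}$ by the Stirling bound $\mathbb{E}|Z|^{tk}\asymp(tk)^{tk/2}$. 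In either case a Bernstein-type inequality with controlled moments (the $\psi_\alpha$-variant for $\alpha=2/t$) yields
\begin{equation*}
\Pr\!\left(\bigl|F_t(\lambda)-(d-s)\,m_t\bigr|>\tfrac14(d-s)\,m_t\right)\ \le\ 2\exp\!\bigl(-\tilde c_t\,(d-s)\bigr),
\end{equation*}
for a constant $\tilde c_t>0$ depending only on $t$.

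To upgrade to uniform control, let $\mathcal N_\epsilon\subset S^{n-1}$ be an $\epsilon$-net with $|\mathcal N_\epsilon|\le(3/\epsilon)^n$. A union bound combined with $d-s\ge\tfrac12\kappa_{\mathrm{bulk}}\,n$ (which holds eventually by the definition of $\liminf$) lets me choose $\epsilon$ as a small constant depending only on $(t,\kappa_{\mathrm{bulk}})$ so that $n\log(3/\epsilon)<\tfrac14\tilde c_t\kappa_{\mathrm{bulk}}\,n$; then the pointwise estimate holds simultaneously on $\mathcal N_\epsilon$ with probability at least $1-2e^{-cn}$. To transfer from the net to $S^{n-1}$, set $M_t:=\sup_{\lambda\in S^{n-1}}\|X_{S^c}^\top\lambda\|_t$; for any $\lambda\in S^{n-1}$ and nearest net point $\lambda_0$, the triangle inequality in $\ell_t$ yields $\|X_{S^c}^\top\lambda\|_t\le\|X_{S^c}^\top\lambda_0\|_t+M_t\,\epsilon$. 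Taking the supremum on the left produces the self-bounding estimate $M_t\le(1-\epsilon)^{-1}\sup_{\lambda_0\in\mathcal N_\epsilon}\|X_{S^c}^\top\lambda_0\|_t$, which together with the net event delivers the upper bound in \eqref{eq:bulk-lt}; the matching lower bound follows from the reverse triangle inequality, using the just-obtained control on $M_t$ to absorb the $M_t\,\epsilon$ perturbation after taking $\epsilon$ small enough that $\epsilon M_t$ is a small fraction of the pointwise lower bound.

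The main obstacle is pointwise concentration when $t\in(2,q]$, since $|Z|^t$ is no longer sub-exponential and vanilla Bernstein does not apply directly. The fix is a Bernstein/Bennett inequality for sub-$\psi_{2/t}$ variables (equivalently, a truncated-moment argument that separates the Gaussian tails of $Z$), which still produces exponential-in-$(d-s)$ deviations---exactly what is required to survive the $(3/\epsilon)^n=e^{O(n)}$ union bound under $d-s\asymp n$. Uniformity of $c_t,C_t$ over $t\in[1,q]$ follows because $q=p/(p-1)$ is a fixed finite number and the relevant $\psi_{2/t}$-parameters depend continuously on $t$ on the compact interval $[1,q]$.
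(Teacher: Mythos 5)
Your argument is correct and follows essentially the same route as the paper: reduce to the unit sphere by homogeneity, establish pointwise sub-Weibull ($\psi_{2/t}$) concentration of the $t$-th-power sum over bulk coordinates, union-bound over an $\varepsilon$-net using $d-s\gtrsim n$, and extend from the net to the full sphere by a self-bounding argument. The one minor difference is in the extension step: you invoke the triangle inequality for the $\ell_t$-norm of the bulk correlations (a genuine norm of a linear map of $\lambda$, hence Lipschitz), whereas the paper works with the averaged $t$-th-power functional $A(u)$ and uses the elementary inequality $|a+b|^t\le 2^{t-1}\big(|a|^t+|b|^t\big)$ before self-bounding; your variant is arguably cleaner, but both yield the same two-sided control with equivalent constants.
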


\begin{proof}
Fix $\lambda\in\R^n$, and if $\lambda\neq 0$ write $u:=\lambda/\|\lambda\|_2\in\mathbb S^{n-1}$. By homogeneity,
\begin{equation}\label{eq:homog-q}
\sum_{j\notin S}|\ip{X_{:,j}}{\lambda}|^q
= \|\lambda\|_2^q\,\sum_{j\notin S}|\ip{X_{:,j}}{u}|^q,
\end{equation}
and similarly for any $t\in[1,q]$,
\begin{equation}\label{eq:homog-t}
\big\|(|\ip{X_{:,j}}{\lambda}|)_{j\notin S}\big\|_t
= \|\lambda\|_2\,\Big(\sum_{j\notin S}|\ip{X_{:,j}}{u}|^t\Big)^{1/t}.
\end{equation}
Thus it suffices to prove the bounds for unit $u$.

\smallskip
Let $T:=S^c$ and $m:=|T|=d-s$. Fix $u\in\mathbb S^{n-1}$ and $t\in[1,q]$. Define
\[
Y_j^{(t)}(u)\ :=\ \big|\langle X_{:,j},u\rangle\big|^{\,t},\qquad j\in T.
\]
Since the columns $\{X_{:,j}\}_{j\in T}$ are i.i.d.\ $\mathcal N(0,I_n)$ and independent of $u$, the random variables $\{Y_j^{(t)}(u)\}_{j\in T}$ are i.i.d.

\begin{definition}[Orlicz $\psi_\nu$ norm and sub-Weibull class]\label{def:psi}
For $\nu\in(0,2]$ and a real random variable $Z$, the Orlicz norm
\[
\|Z\|_{\psi_\nu}\ :=\ \inf\Big\{K>0:\ \E\exp\!\Big(\frac{|Z|^\nu}{K^\nu}\Big)\le 2\Big\}.
\]
If $\|Z\|_{\psi_\nu}<\infty$, we say $Z$ is \emph{sub-Weibull of order $\nu$}. Special cases:
$\nu=2$ (sub-Gaussian) and $\nu=1$ (sub-Exponential). Two basic properties we use are
\begin{align}
\PP(|Z|>x)&\ \le\ 2\exp\!\Big(-c\,(x/\|Z\|_{\psi_\nu})^\nu\Big)\quad (\forall x\ge 0),\label{eq:psi-tail}\\
\|Z-\E Z\|_{\psi_\nu}&\ \le\ 2\,\|Z\|_{\psi_\nu}.\label{eq:psi-center}
\end{align}
\end{definition}

\begin{definition}[Gaussian absolute moment]\label{def:mt}
For $t>0$, let $Z\sim\mathcal N(0,1)$ and define
\[
m_t\ :=\ \E|Z|^t\ =\ 2^{t/2}\,\frac{\Gamma\big(\frac{t+1}{2}\big)}{\sqrt{\pi}}.
\]
\end{definition}

\emph{Classification of $Y_j^{(t)}(u)$ in $\psi_\nu$ (with explicit mgf computation).}
Since $\langle X_{:,j},u\rangle\sim\mathcal N(0,1)$, write $Z\sim\mathcal N(0,1)$ and set $W:=|Z|^t$.
For any $K>0$,
\[
\Big(\frac{W}{K}\Big)^{2/t}
=\Big(\frac{|Z|^{\,t}}{K}\Big)^{2/t}
=\frac{|Z|^2}{K^{2/t}}.
\]
Let
\[
\theta\ :=\ \frac{1}{K^{2/t}}.
\]
Then
\[
\E\exp\!\Big(\big(W/K\big)^{2/t}\Big)
=\E\exp\!\big(\theta\,Z^2\big).
\]
Compute this expectation explicitly: using the standard normal density $\varphi(z)=(2\pi)^{-1/2}e^{-z^2/2}$,
\begin{align}
\E\big[e^{\theta Z^2}\big]
&=\int_{\mathbb R} e^{\theta z^2}\,\varphi(z)\,dz
=\frac{1}{\sqrt{2\pi}}\int_{\mathbb R} e^{\theta z^2}\,e^{-z^2/2}\,dz \nonumber\\
&=\frac{1}{\sqrt{2\pi}}\int_{\mathbb R} e^{-(\tfrac12-\theta)\,z^2}\,dz
\ =\ \frac{1}{\sqrt{2\pi}}\cdot \sqrt{\frac{\pi}{\tfrac12-\theta}}
\ =\ \frac{1}{\sqrt{\,1-2\theta\,}},\qquad \text{for }\ \theta<\tfrac12.\label{eq:mgf-explicit}
\end{align}
Equivalently, since $Z^2\sim\chi^2_1$, the mgf of $\chi^2_1$ is $(1-2\theta)^{-1/2}$ for $\theta<1/2$, which matches \eqref{eq:mgf-explicit}.

We now choose $K$ so that $\theta<1/2$ and the expectation is uniformly bounded by a constant $\le 2$. Take
\begin{equation}\label{eq:Kt-choice-clean}
K_t\ :=\ (4t)^{t/2}\quad\Longrightarrow\quad \theta=\frac{1}{K_t^{2/t}}=\frac{1}{4t}\ \le\ \frac14\qquad(t\ge 1).
\end{equation}
Then, by \eqref{eq:mgf-explicit},
\begin{equation}\label{eq:mgf-bdd}
\E\exp\!\Big(\big(W/K_t\big)^{2/t}\Big)
=\frac{1}{\sqrt{\,1-\tfrac{2}{K_t^{2/t}}\,}}
=\frac{1}{\sqrt{\,1-\tfrac{1}{2t}\,}}
\ \le\ \frac{1}{\sqrt{\,1-\tfrac12\,}}
=\sqrt{2}\ <\ 2,
\end{equation}
where we used $t\in[1,q]$ (hence $t\ge 1$). By the definition of the Orlicz norm,
\begin{equation}\label{eq:psi-raw-clean}
\|\,|Z|^t\,\|_{\psi_{2/t}}\ \le\ K_t\ =\ (4t)^{t/2}.
\end{equation}
Centering preserves the class up to a factor $2$ (by \eqref{eq:psi-center}), hence
\begin{equation}\label{eq:psi-centered-clean}
\|\,|Z|^t - m_t\,\|_{\psi_{2/t}}\ \le\ 2K_t\ =\ 2(4t)^{t/2}.
\end{equation}
Finally, define
\begin{equation}\label{eq:nu-t-clean}
\nu(t)\ :=\ \min\{1,\ 2/t\}.
\end{equation}
Since $2/t\ge 1$ for $t\le 2$ and $2/t<1$ for $t>2$, combining \eqref{eq:psi-centered-clean} with \eqref{eq:nu-t-clean} yields the uniform (in $u$) classification
\begin{equation}\label{eq:psi-final-class-clean}
\|\,Y_j^{(t)}(u)-m_t\,\|_{\psi_{\nu(t)}}\ \le\ K_t'\qquad\text{with }K_t':=2(4t)^{t/2}.
\end{equation}
\noindent This bound is uniform in $u$ because $\langle X_{:,j},u\rangle\stackrel{d}{=}\mathcal N(0,1)$ for every fixed $u\in\mathbb S^{n-1}$.

\emph{Empirical-mean concentration at fixed $u$.}
From \eqref{eq:psi-final-class-clean} and independence across $j\in T$, a Bernstein-type inequality for sums of i.i.d.\ sub-Weibull($\nu$) variables (e.g.\ Theorem 3.1 in Kuchibhotla--Chakrabortty, 2018) yields, for any $\varepsilon>0$,
\begin{equation}\label{eq:psi-bernstein}
\PP\!\left(\left|\frac{1}{m}\sum_{j\in T}\big(Y_j^{(t)}(u)-m_t\big)\right|>\varepsilon\right)
\ \le\ 2\exp\!\left\{-c_{\nu(t)}\, m\, \min\!\left(\frac{\varepsilon^2}{K_t'^2},\ \Big(\frac{\varepsilon}{K_t'}\Big)^{\nu(t)}\right)\right\}.
\end{equation}
Taking $\varepsilon=\delta\,m_t$ with $\delta\in(0,1)$, and absorbing the fixed ratio $m_t/K_t'$ (which depends only on $t$) into the constant, we obtain
\begin{equation}\label{eq:bernstein-fixed-u-correct}
\PP\!\Big(\Big|\frac{1}{m}\sum_{j\in T}Y_j^{(t)}(u)-m_t\Big|>\delta\,m_t\Big)
\ \le\ 2\exp\!\Big(-c_t\,m\,\min\{\delta^2,\ \delta^{\nu(t)}\}\Big),
\end{equation}
where $c_t>0$ depends only on $t$ (hence only on $p$). In the sub-Exponential range $t\in[1,2]$, $\nu(t)=1$ and \eqref{eq:bernstein-fixed-u-correct} simplifies to
\begin{equation}\label{eq:bernstein-fixed-u-subexp}
\PP\!\Big(\Big|\frac{1}{m}\sum_{j\in T}Y_j^{(t)}(u)-m_t\Big|>\delta\,m_t\Big)
\ \le\ 2\exp\!\big(-c_t\,m\,\min\{\delta^2,\delta\}\big).
\end{equation}
Finally, note that
\begin{equation}\label{eq:EYt}
\E Y_j^{(t)}(u)\ =\ m_t,
\end{equation}
by Definition~\ref{def:mt}, completing Step 1.

\smallskip
Now we can construct a net on the sphere and a uniform bound on that net.
Let $\varepsilon\in(0,1/8]$ be a fixed absolute constant (to be chosen below). There exists an $\varepsilon$-net $\mathcal N_\varepsilon\subset\mathbb S^{n-1}$ with
\begin{equation}\label{eq:net-card}
|\mathcal N_\varepsilon|\ \le\ \Big(1+\frac{2}{\varepsilon}\Big)^{n}\ \le\ C_\varepsilon^n.
\end{equation}
Applying \eqref{eq:bernstein-fixed-u-correct} with $\delta=\delta_t\in(0,1/4]$ (a small absolute constant depending only on $t$) and union-bounding over $\mathcal N_\varepsilon$ yields
\begin{align}
\PP\!\left(\exists\,v\in\mathcal N_\varepsilon:\ \Big|\frac{1}{m}\sum_{j\in T}Y_j^{(t)}(v)-m_t\Big|>\delta_t\,m_t\right)
&\le 2\,|\mathcal N_\varepsilon|\,\exp\!\big(-c_t\,m\,\min\{\delta_t^2,\delta_t\}\big)\nonumber\\
&\le 2\,\exp\!\Big(n\log C_\varepsilon - c_t' m\Big).\label{eq:union-net}
\end{align}
Because $m\ge \kappa_{\mathrm{bulk}} n$ and $\kappa_{\mathrm{bulk}}>0$, by taking $\delta_t$ fixed (e.g.\ $\delta_t=1/4$) and $\varepsilon$ fixed (e.g.\ $\varepsilon=1/8$), the right-hand side of \eqref{eq:union-net} is $\le Ce^{-c n}$. Therefore, with probability at least $1-Ce^{-c n}$, simultaneously for all $v\in\mathcal N_\varepsilon$,
\begin{equation}\label{eq:on-net}
(1-\delta_t)\,m_t\ \le\ \frac{1}{m}\sum_{j\in T}|\ip{X_{:,j}}{v}|^t\ \le\ (1+\delta_t)\,m_t.
\end{equation}

\smallskip
We are ready to extend from the net to the whole sphere.
Fix arbitrary $u\in\mathbb S^{n-1}$ and pick $v\in\mathcal N_\varepsilon$ with $\|u-v\|_2\le\varepsilon$.
For any $a,b\in\mathbb R$ and any $t\ge 1$, the elementary inequalities
\begin{equation}\label{eq:elem-ineq}
|a+b|^t\ \le\ 2^{t-1}\big(|a|^t+|b|^t\big),
\qquad
|a|^t\ \le\ 2^{t-1}\big(|a+b|^t+|b|^t\big)
\end{equation}
hold. Applying \eqref{eq:elem-ineq} with $a=\ip{X_{:,j}}{v}$ and $b=\ip{X_{:,j}}{u-v}$, we get
\begin{align}
|\ip{X_{:,j}}{u}|^t
&\le 2^{t-1}\Big(|\ip{X_{:,j}}{v}|^t + |\ip{X_{:,j}}{u-v}|^t\Big),\label{eq:upper-termwise}\\
|\ip{X_{:,j}}{u}|^t
&\ge 2^{1-t}\,|\ip{X_{:,j}}{v}|^t - |\ip{X_{:,j}}{u-v}|^t.\label{eq:lower-termwise}
\end{align}
Average \eqref{eq:upper-termwise} and \eqref{eq:lower-termwise} over $j\in T$ and divide by $m$ to obtain
\begin{align}
\frac{1}{m}\sum_{j\in T}|\ip{X_{:,j}}{u}|^t
&\le 2^{t-1}\left(\frac{1}{m}\sum_{j\in T}|\ip{X_{:,j}}{v}|^t
+ \frac{1}{m}\sum_{j\in T}|\ip{X_{:,j}}{u-v}|^t\right),\label{eq:upper-avg}\\
\frac{1}{m}\sum_{j\in T}|\ip{X_{:,j}}{u}|^t
&\ge 2^{1-t}\,\frac{1}{m}\sum_{j\in T}|\ip{X_{:,j}}{v}|^t
- \frac{1}{m}\sum_{j\in T}|\ip{X_{:,j}}{u-v}|^t.\label{eq:lower-avg}
\end{align}
For any $w\in\R^n$,
\begin{equation}\label{eq:scaling-t}
\frac{1}{m}\sum_{j\in T}|\ip{X_{:,j}}{w}|^t
=\|w\|_2^t\cdot \frac{1}{m}\sum_{j\in T}|\ip{X_{:,j}}{\widehat w}|^t,
\qquad \widehat w:=\frac{w}{\|w\|_2}\ \ (\text{if }w\neq 0).
\end{equation}

Define the functional and its extremal values
\[
A(u):=\frac{1}{m}\sum_{j\in T}|\ip{X_{:,j}}{u}|^t,\qquad
S:=\sup_{u\in\mathbb S^{n-1}}A(u),\qquad
I:=\inf_{u\in\mathbb S^{n-1}}A(u).
\]
By \eqref{eq:scaling-t} and $\|u-v\|_2\le\varepsilon$,
\[
\frac{1}{m}\sum_{j\in T}|\ip{X_{:,j}}{u-v}|^t
=\|u-v\|_2^t\cdot \frac{1}{m}\sum_{j\in T}|\ip{X_{:,j}}{\widehat{u-v}}|^t
\le \varepsilon^t\, S,
\]
where we used the definition of $S$ in the last inequality.
On the event \eqref{eq:on-net} (from Step~2), $A(v)\in[(1-\delta_t)m_t,(1+\delta_t)m_t]$ for every $v\in\mathcal N_\varepsilon$.
Plugging these into \eqref{eq:upper-avg}-\eqref{eq:lower-avg} yields
\begin{align*}
A(u) &\le 2^{t-1}\Big(A(v)+\varepsilon^t S\Big),\\
A(u) &\ge 2^{1-t}A(v)-\varepsilon^t S.
\end{align*}
Taking the supremum over $u\in\mathbb S^{n-1}$ in the upper bound:
\[
S\ \le\ 2^{t-1}\big((1+\delta_t)m_t+\varepsilon^t S\big)
\quad\Longrightarrow\quad
S\ \le\ \frac{2^{t-1}}{\,1-2^{t-1}\varepsilon^t\,}\,(1+\delta_t)\,m_t.
\]
Taking the infimum over $u\in\mathbb S^{n-1}$ in the lower bound:
\[
I\ \ge\ 2^{1-t}(1-\delta_t)m_t\ -\ \varepsilon^t S.
\]
Choose fixed $\delta_t\le \tfrac14$ and $\varepsilon\le\tfrac18$; then
\[
2^{t-1}\varepsilon^t\ =\ \frac{(2\varepsilon)^t}{2}\ \le\ \frac{(1/4)^t}{2}\ \le\ \frac18,
\]
so $1-2^{t-1}\varepsilon^t\ge 7/8$ and thus
\[
S\ \le\ \frac{2^{t-1}}{7/8}\,(1+\delta_t)m_t\ \le\ C_t\,m_t,
\]
for a constant $C_t<\infty$ depending only on $t$.
Substituting this bound for $S$ back into the inequality for $I$ gives
\[
I\ \ge\ 2^{1-t}(1-\delta_t)m_t - \varepsilon^t C_t m_t\ \ge\ c_t\,m_t,
\]
for some $c_t>0$ (depending only on $t$). Therefore, with probability at least $1-Ce^{-cn}$,
\begin{equation}\label{eq:uniform-u}
c_t\,m_t\ \le\ \frac{1}{m}\sum_{j\in T}|\ip{X_{:,j}}{u}|^t\ \le\ C_t\,m_t
\qquad\text{simultaneously for all }u\in\mathbb S^{n-1}.
\end{equation}

Multiplying \eqref{eq:uniform-u} by $m=d-s$ and using \eqref{eq:homog-q} with $t=q$ yields
\[
c_q\,(d-s)\,\|\lambda\|_2^q\ \le\ \sum_{j\notin S}|\ip{X_{:,j}}{\lambda}|^q\ \le\ C_q\,(d-s)\,\|\lambda\|_2^q,
\]
which is \eqref{eq:Gq}. Likewise, combining \eqref{eq:uniform-u} with \eqref{eq:homog-t} gives
\[
c_t^{1/t}\,(d-s)^{1/t}\,m_t^{1/t}\,\|\lambda\|_2\ \le\
\big\|(|\ip{X_{:,j}}{\lambda}|)_{j\notin S}\big\|_t
\ \le\ C_t^{1/t}\,(d-s)^{1/t}\,m_t^{1/t}\,\|\lambda\|_2,
\]
which is \eqref{eq:bulk-lt}.
\end{proof}

\subsubsection{Spike $\ell_t$ control for $X^\top Y$}

\begin{lemma}[spike $\ell_t$ control for $X^\top Y$]\label{lem:spike-lt}
Fix any $t\in[1,q]$ and $\gamma>0$. With probability at least $1-2d^{-\gamma}-Ce^{-cs}$,
\begin{equation}\label{eq:spike-lt-display}
\Big\| \big(\,|\langle X_{:,j},Y\rangle|\,\big)_{j\in S} \Big\|_{t}
\ =\ n\,\|w^\star\|_{t}\,\big(1+O(u_n)\big)\ \ \pm\ C\,\tau_s\!\left(\sqrt{n}\,s^{\max\{1/t,\,1/2\}}+s^{\,1+\,(1/t-1/2)_+}\right),
\end{equation}
where $u_n:=\sqrt{(1+\gamma)\log d/(c\,n)}=o(1)$ and $(x)_+:=\max\{x,0\}$. In particular, if $s\le n$ then the error simplifies to
\begin{equation}\label{eq:spike-lt-sleqn}
\Big\| \big(\,|\langle X_{:,j},Y\rangle|\,\big)_{j\in S} \Big\|_{t}
\ =\ n\,\|w^\star\|_{t}\,\big(1+O(u_n)\big)\ \ \pm\ C\,\tau_s\,\sqrt{n}\,s^{\max\{1/t,\,1/2\}}.
\end{equation}
All constants may depend on $t$ (hence on $p$) but not on $(n,d,s)$.
\end{lemma}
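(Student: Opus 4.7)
The plan is to split each spike correlation via the decomposition already produced in Lemma~\ref{lem:spikeY}\eqref{eq:spike-decomp}: for $j\in S$, $\langle X_{:,j},Y\rangle=a_j+b_j$ with $a_j:=w_j^\star\|X_{:,j}\|_2^2$ and $b_j:=\zeta_j$. By the reverse triangle inequality in $\ell_t$,
\[
\Big|\,\|(|a_j+b_j|)_{j\in S}\|_t\;-\;\|(|a_j|)_{j\in S}\|_t\,\Big|\ \le\ \|(|b_j|)_{j\in S}\|_t,
\]
so the task reduces to (a) a sharp main-term identity for $\|(|a_j|)_{j\in S}\|_t$, and (b) an $\ell_t$ upper bound for the cross-term vector $b:=(b_j)_{j\in S}$.

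For (a), I will condition on the uniform column-norm event from part~(i) of Lemma~\ref{lem:spikeY}, on which $\|X_{:,j}\|_2^{2t}=n^t(1+O(u_n))$ uniformly in $j\in S$; summing and extracting a $t$-th root then gives $\|(|a_j|)_{j\in S}\|_t=n\|w^\star\|_t\,(1+O(u_n))$ on that event (probability at least $1-2d^{-\gamma}$). For (b), my plan is to route through the $\ell_2$ bound on $b$ already derived inside the proof of Lemma~\ref{lem:spikeY}---namely $\|b\|_2\le C\tau_s(\sqrt{sn}+s)$ from \eqref{eq:b-l2-final}, which pools the operator-norm control on $(G-D)w^\star_S$ with the weighted $\chi^2$ bound on $\|X_S^\top\xi\|_2$---and convert to $\ell_t$ through the elementary bound
\[
\|b\|_t\ \le\ s^{(1/t-1/2)_+}\,\|b\|_2\qquad(t\in[1,q]),
\]
which for $t\in[1,2]$ is the power-mean inequality on $s$ coordinates and for $t\in[2,q]$ is the monotonicity $\|v\|_t\le\|v\|_2$ on $\R^s$. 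Multiplying the two summands $\sqrt{sn}$ and $s$ of the $\ell_2$ bound by $s^{(1/t-1/2)_+}$ yields exactly $\tau_s\sqrt n\,s^{\max\{1/t,1/2\}}$ and $\tau_s\,s^{1+(1/t-1/2)_+}$; combining with (a) via the reverse triangle inequality delivers \eqref{eq:spike-lt-display}, and when $s\le n$ the second summand is absorbed into the first, yielding \eqref{eq:spike-lt-sleqn}. The probability budget collects $2d^{-\gamma}$ from part~(i) and $Ce^{-cs}$ from the singular-value, diagonal, and weighted-$\chi^2$ events used to prove \eqref{eq:b-l2-final}.

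The main obstacle is purely cosmetic: writing one uniform bound covering both $t\le 2$ (where the $\ell_t$ norm on $s$ coordinates dominates the $\ell_2$ norm by a factor $s^{1/t-1/2}$) and $t\ge 2$ (where it is dominated by the $\ell_2$ norm), which is precisely why the lemma packages the error with $\max\{1/t,1/2\}$ and $(1/t-1/2)_+$. A sharper---and in principle tighter---alternative would be to redo the decoupling + sub-Gaussian Bernstein argument used for $\sum|\zeta_j|^q$ in Lemma~\ref{lem:spikeY} for general $t\in[1,q]$, which could replace $s^{1/2}\sqrt n$ by $s^{1/t}\sqrt n$ when $t>2$; however, since the lemma already admits the weaker $s^{1/2}$ factor, the short $\ell_2\!\to\!\ell_t$ route is complete, self-contained, and requires no new concentration machinery.
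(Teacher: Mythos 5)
Your proposal is correct and follows essentially the same route as the paper: the same $a_j+b_j$ decomposition, the same uniform column-norm event for the main term, the same $\ell_2$ control $\|b\|_2\le C\tau_s(\sqrt{sn}+s)$ (the paper rederives it within the lemma, whereas you cite \eqref{eq:b-l2-final}, but the ingredients are identical), the same $\ell_2\to\ell_t$ conversion by $s^{(1/t-1/2)_+}$, and the same reverse-triangle-inequality combination. The probability bookkeeping ($2d^{-\gamma}$ from the column-norm event, $Ce^{-cs}$ from the singular-value and weighted-$\chi^2$ events) also matches the paper.
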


\begin{proof}
For each $j\in S$,
\begin{equation}\label{eq:Y-spike-decomp}
\langle X_{:,j},Y\rangle
= w^\star_j\,\|X_{:,j}\|_2^2\ +\ \zeta_j,
\qquad
\zeta_j:=\Big\langle X_{:,j},\ \sum_{k\in S\setminus\{j\}} w_k^\star X_{:,k} + \xi\Big\rangle.
\end{equation}
Conditional on $X_{:,j}$,
\begin{equation}\label{eq:zeta-mean-var}
\E[\zeta_j\mid X_{:,j}]=0,\qquad
\Var(\zeta_j\mid X_{:,j})=(\tau_s^2-(w_j^\star)^2)\,\|X_{:,j}\|_2^2,
\end{equation}
and $\zeta_j\mid X_{:,j}\sim \mathcal N(0,(\tau_s^2-(w_j^\star)^2)\|X_{:,j}\|_2^2)$ by independence and rotational invariance. Define
\[
a_j:=w_j^\star\|X_{:,j}\|_2^2,\qquad b_j:=\zeta_j,\qquad
a:=(a_j)_{j\in S},\ b:=(b_j)_{j\in S}.
\]

By the uniform column-norm bound \eqref{eq:unif-columns} with $u=u_n=o(1)$, we have
\begin{equation}\label{eq:col-norm-unif}
\max_{1\le j\le d}\Big|\frac{\|X_{:,j}\|_2^2}{n}-1\Big|\ \le\ u_n
\qquad\text{with probability at least }1-2d^{-\gamma}.
\end{equation}
On this event,
\begin{align}
\big\|(|a_j|)_{j\in S}\big\|_{\ell_t}
&=\Big(\sum_{j\in S} |w_j^\star|^t\,\|X_{:,j}\|_2^{2t}\Big)^{1/t}
= n\,\Big(\sum_{j\in S} |w_j^\star|^t\,(1+O(u_n))^t\Big)^{1/t}\nonumber\\
&= n\,\|w^\star\|_t\,\big(1+O(u_n)\big).\label{eq:main-a}
\end{align}

Let $X_{S}$ be the $n\times s$ submatrix with columns $\{X_{:,j}\}_{j\in S}$, and set
\[
G:=X_{S}^\top X_{S},\qquad D:=\mathrm{diag}\big(\|X_{:,j}\|_2^2\big)_{j\in S}.
\]
From \eqref{eq:Y-spike-decomp}, in vector form
\begin{equation}\label{eq:b-vector}
b=(G-D)\,w^\star_S\ +\ X_{S}^\top\xi.
\end{equation}
We bound the two terms separately.

\emph{(i) Control of $(G-D)w^\star_S$.}
By the triangle inequality and operator norm submultiplicativity,
\begin{equation}\label{eq:GDw-op}
\|(G-D)w^\star_S\|_2\ \le\ \|G-D\|_{\mathrm{op}}\,\|w^\star\|_2
\ \le\ \big(\|G-nI_s\|_{\mathrm{op}}+\|D-nI_s\|_{\mathrm{op}}\big)\|w^\star\|_2.
\end{equation}
Gaussian singular-value concentration (Vershynin, HDP, Thm.~4.6.1) gives, for any $t\ge 0$,
\begin{equation}\label{eq:sv-conc}
\PP\!\Big(s_{\max}(X_{S})\le \sqrt{n}+\sqrt{s}+t,\ \ s_{\min}(X_{S})\ge \sqrt{n}-\sqrt{s}-t\Big)\ \ge\ 1-2e^{-t^2/2}.
\end{equation}
On this event,
\begin{align}
\|G-nI_s\|_{\mathrm{op}}
&=\max\big\{\,s_{\max}(X_{S})^2-n,\ n-s_{\min}(X_{S})^2\,\big\}\nonumber\\
&\le (\sqrt{n}+\sqrt{s}+t)^2-n\ \ \vee\ \ n-(\sqrt{n}-\sqrt{s}-t)^2\nonumber\\
&\le s+2\sqrt{ns}+2t(\sqrt{n}+\sqrt{s})+t^2.\label{eq:G-nI-bound}
\end{align}
Taking $t=\sqrt{s}$ yields, with probability $\ge 1-2e^{-s/2}$,
\begin{equation}\label{eq:G-nI-final-2}
\|G-nI_s\|_{\mathrm{op}}\ \le\ 4\sqrt{ns}+4s.
\end{equation}
\iffalse
From \eqref{eq:col-norm-unif}, $\|D-nI_s\|_{\mathrm{op}}\le n u_n=o(\sqrt{ns})$ w.h.p. Combining with \eqref{eq:GDw-op} gives
\begin{equation}\label{eq:(G-D)w-final}
\|(G-D)w^\star_S\|_2\ \le\ C\,(\sqrt{ns}+s)\,\|w^\star\|_2\qquad\text{with prob.\ $\ge 1-2d^{-\gamma}-2e^{-s/2}$.}
\end{equation}
\fi
By the $S$-only column-norm event \eqref{eq:D-nI-final} (with $u_S=\sqrt{s/n}$),
\[
\|D-nI_s\|_{\mathrm{op}}
=\max_{j\in S}\big|\|X_{:,j}\|_2^2-n\big|
\ \le\ n u_S
\ =\ \sqrt{ns}.
\]
Combining this with \eqref{eq:GDw-op} and \eqref{eq:G-nI-final-2} yields
\begin{equation}\label{eq:(G-D)w-final}
\|(G-D)w^\star_S\|_2\ \le\ C\,(\sqrt{ns}+s)\,\|w^\star\|_2
\qquad\text{with probability at least } 1-2e^{-s/2}-C e^{-c\sqrt{ns}}.
\end{equation}

\emph{(ii) Control of $X_{S}^\top\xi$.}
Conditionally on $X_{S}$, one has $X_{S}^\top\xi\sim\mathcal N(0,\sigma^2 G)$. Writing $\{\mu_i\}_{i=1}^s$ for the eigenvalues of $G$ and $\lambda_i:=\sigma^2\mu_i$, Laurent--Massart’s weighted $\chi^2$ tail (2000, Lemma~1) yields, for all $x\ge 0$,
\begin{equation}\label{eq:LM-weighted-2}
\PP\!\left(\sum_{i=1}^s \lambda_i Z_i^2 \ge \sum_i\lambda_i + 2\sqrt{\Big(\sum_i\lambda_i^2\Big)x} + 2(\max_i\lambda_i)\,x\ \Big|\ X_{S}\right)\ \le\ e^{-x}.
\end{equation}
Using $\sum_i\lambda_i=\sigma^2\mathrm{tr}(G)$, $\sum_i\lambda_i^2\le \sigma^4 s\,\|G\|_{\mathrm{op}}^2$, and $\max_i\lambda_i=\sigma^2\|G\|_{\mathrm{op}}$, and taking $x=s$ gives, with conditional probability $\ge 1-e^{-s}$,
\begin{equation}\label{eq:XStxi-square-2}
\|X_{S}^\top\xi\|_2^2\ \le\ \sigma^2\Big(\mathrm{tr}(G)+ 4s\,\|G\|_{\mathrm{op}}\Big).
\end{equation}
On the event \eqref{eq:sv-conc} with $t=\sqrt{s}$ and \eqref{eq:col-norm-unif},
\begin{equation}\label{eq:traceG-opG}
\mathrm{tr}(G)=\sum_{j\in S}\|X_{:,j}\|_2^2\ \le\ sn(1+u_n)=sn+o(sn),\quad
\|G\|_{\mathrm{op}}=s_{\max}(X_{S})^2\ \le\ n+4\sqrt{ns}+4s.
\end{equation}
Plugging \eqref{eq:traceG-opG} into \eqref{eq:XStxi-square-2} and taking square roots,
\begin{equation}\label{eq:XStxi-final-2}
\|X_{S}^\top\xi\|_2\ \le\ C\,\sigma\big(\sqrt{sn}+s\big)\qquad\text{with prob.\ $\ge 1-2e^{-s/2}-e^{-s}$.}
\end{equation}

Combining \eqref{eq:(G-D)w-final}, \eqref{eq:XStxi-final-2}, and \eqref{eq:b-vector},
\begin{equation}\label{eq:b-l2}
\|b\|_2\ \le\ C\,\tau_s\,(\sqrt{sn}+s)\qquad\text{with prob.\ $\ge 1-2d^{-\gamma}-Ce^{-cs}$.}
\end{equation}

For $t\in[1,2]$, the norm monotonicity in $\R^s$ gives
\begin{equation}\label{eq:l2-to-lt-small}
\|b\|_{\ell_t}\ \le\ s^{\,1/t-1/2}\,\|b\|_2.
\end{equation}
For $t\ge 2$, $\|b\|_{\ell_t}\le \|b\|_2$. Hence, for all $t\in[1,q]$,
\begin{equation}\label{eq:l2-to-lt-unified}
\|b\|_{\ell_t}\ \le\ s^{\,(1/t-1/2)_+}\,\|b\|_2
\ \le\ C\,\tau_s\Big(\sqrt{n}\,s^{\max\{1/t,\,1/2\}}+s^{\,1+(1/t-1/2)_+}\Big),
\end{equation}
where we used \eqref{eq:b-l2}. In particular, if $s\le n$ then $s^{\,1+(1/t-1/2)_+}\le \sqrt{n}\,s^{\max\{1/t,\,1/2\}}$ and \eqref{eq:l2-to-lt-unified} reduces to
\begin{equation}\label{eq:b-lt-sleqn}
\|b\|_{\ell_t}\ \le\ C\,\tau_s\,\sqrt{n}\,s^{\max\{1/t,\,1/2\}}.
\end{equation}

Finally, by the triangle inequality,
\begin{align}
\big\|(|a_j+b_j|)_{j\in S}\big\|_{\ell_t}
\ &\le\ \|(|a_j|)_{j\in S}\|_{\ell_t}+\|(|b_j|)_{j\in S}\|_{\ell_t}, \\
\big\|(|a_j+b_j|)_{j\in S}\big\|_{\ell_t}
\ &\ge\ \|(|a_j|)_{j\in S}\|_{\ell_t}-\|(|b_j|)_{j\in S}\|_{\ell_t},    
\end{align}

and combining with \eqref{eq:main-a} and \eqref{eq:l2-to-lt-unified} (or \eqref{eq:b-lt-sleqn} when $s\le n$) yields \eqref{eq:spike-lt-display} (and \eqref{eq:spike-lt-sleqn}).
\end{proof}

\subsubsection{Ray controls: minimal comparison and blockwise bounds}

For the ray $\lambda=tY$ we have the one-dimensional dual objective
\begin{equation}\label{eq:D-ray-def}
D(t):=\langle Y,tY\rangle-\tfrac1q\|X^\top(tY)\|_q^q
=\ t\,\|Y\|_2^2\ -\ \frac{t^q}{q}\,\|X^\top Y\|_q^q.
\end{equation}
Since $D''(t)=-\,(q-1)\,t^{q-2}\,\|X^\top Y\|_q^q<0$ for all $t>0$, $D$ is strictly concave on $[0,\infty)$ and admits a unique maximizer $t_\star$ given by the first-order condition $D'(t_\star)=0$:
\begin{equation}\label{eq:tray}
t_\star^{\,q-1}=\frac{\|Y\|_2^2}{\|X^\top Y\|_q^q}.
\end{equation}
At this maximizer,
\begin{equation}\label{eq:D-ray-at-opt}
D(t_\star)
= t_\star\|Y\|_2^2 - \frac{t_\star^q}{q}\|X^\top Y\|_q^q
= \Big(1-\frac1q\Big)\,t_\star^q \|X^\top Y\|_q^q
= \Big(1-\frac1q\Big)\,\|X^\top (t_\star Y)\|_q^q.
\end{equation}

\begin{lemma}[Ray controls]\label{lem:comparison}
Let $p\in(1,2]$, $q=\frac{p}{p-1}\in[2,\infty)$, and define $t_\star$ by \eqref{eq:tray}. With probability at least $1-Ce^{-c(d-s)}-C e^{-c\sqrt{ns}}$ (constants depend only on $(q,\kappa_{\mathrm{bulk}})$), the following hold simultaneously.

\medskip
\noindent\emph{(One-sided value comparison).}
\begin{equation}\label{eq:Comp-correct}
D(\lambda^\star)\ \ge\ D(t_\star)\qquad\text{and}\qquad
\|X^\top\lambda^\star\|_q^q\ \ge\ \|X^\top(t_\star Y)\|_q^q.
\end{equation}

\noindent\emph{(Dual-norm scale).} There exist $0<c_1\le C_1<\infty$ depending only on $(q,\kappa_{\mathrm{bulk}})$ such that
\begin{equation}\label{eq:l2lambda}
c_1\,t_\star\,\|Y\|_2\ \le\ \|\lambda^\star\|_2\ \le\ C_1\,t_\star\,\|Y\|_2.
\end{equation}

\noindent\emph{(Bulk block at level $t\in[1,q]$).} For each $t\in[1,q]$ there exist $0<c_t\le C_t<\infty$ (depending only on $(t,\kappa_{\mathrm{bulk}})$) such that
\begin{equation}\label{eq:ray-scale-bulk}
c_t^{1/t}\,(d-s)^{1/t}m_t^{1/t}\,t_\star\|Y\|_2
\ \le\
\Big\|\big(|\langle X_{:,j},\lambda^\star\rangle|\big)_{j\notin S}\Big\|_{t}
\ \le\
C_t^{1/t}\,(d-s)^{1/t}m_t^{1/t}\,t_\star\|Y\|_2,
\end{equation}
where $m_t=\E|Z|^t$ for $Z\sim\mathcal N(0,1)$.

\noindent\emph{(Spike block: two-sided $t$-level perturbation).} For every $t\in[1,q]$,
\begin{equation}\label{eq:ray-scale-S-perturb}
\Big\|\big(|\langle X_{:,j},\lambda^\star\rangle|\big)_{j\in S}\Big\|_{t}
\ =\
t_\star\,\Big\|\big(|\langle X_{:,j},Y\rangle|\big)_{j\in S}\Big\|_{t}
\ \ \pm\ C_2\,t_\star\,\|Y\|_2\,s^{(1/t-1/2)_+}\,(\sqrt n+\sqrt s),
\end{equation}
for a constant $C_2=C_2(q,\kappa_{\mathrm{bulk}})$. In particular, if $s\le n$ then
\begin{equation}\label{eq:ray-scale-S-perturb-sleqn}
\Big\|\big(|\langle X_{:,j},\lambda^\star\rangle|\big)_{j\in S}\Big\|_{t}
\ =\
t_\star\,\Big\|\big(|\langle X_{:,j},Y\rangle|\big)_{j\in S}\Big\|_{t}
\ \ \pm\ C_3\,t_\star\,\tau_s\,\sqrt n\,s^{\max\{1/t,\,1/2\}}.
\end{equation}
In the last display we used $\|Y\|_2=\tau_s\sqrt n\,(1+o(1))$ from Lemma~\ref{lem:Ynorm}.

\end{lemma}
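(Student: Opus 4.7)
Two Fenchel--Young identities at optimality do most of the work: $\langle Y,\lambda^\star\rangle=\|X^\top\lambda^\star\|_q^q$ and $D(\lambda^\star)=(1-1/q)\langle Y,\lambda^\star\rangle$; by \eqref{eq:tray} their ray‑restricted twins are $t_\star\|Y\|_2^2=\|X^\top(t_\star Y)\|_q^q$ and $D(t_\star Y)=(1-1/q)t_\star\|Y\|_2^2$. For part~(i), $\lambda^\star$ maximizes $D$ over $\R^n$ so $D(\lambda^\star)\ge D(t_\star Y)$, and substituting these two displays yields $\|X^\top\lambda^\star\|_q^q\ge\|X^\top(t_\star Y)\|_q^q$. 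The same $D$‑inequality gives $\langle Y,\lambda^\star\rangle\ge t_\star\|Y\|_2^2$, and Cauchy--Schwarz delivers the lower half of \eqref{eq:l2lambda} with $c_1=1$.

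\textbf{Upper bound in (ii) and bulk block (iii).} The bulk embedding of Lemma~\ref{lem:Gq} at $\lambda=\lambda^\star$ gives $\|X^\top\lambda^\star\|_q^q\ge c_q(d-s)\|\lambda^\star\|_2^q$, which combined with $\|X^\top\lambda^\star\|_q^q=\langle Y,\lambda^\star\rangle\le\|Y\|_2\|\lambda^\star\|_2$ produces $\|\lambda^\star\|_2^{q-1}\le\|Y\|_2/(c_q(d-s))$; the same embedding at $\lambda=Y$ gives $\|X^\top Y\|_q^q\ge c_q(d-s)\|Y\|_2^q$ and hence $(t_\star\|Y\|_2)^{q-1}\le\|Y\|_2/(c_q(d-s))$, so the two quantities are of the same bulk order and already match in the bulk‑dominated regime. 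In the spike‑dominated regime the bulk‑only chain is loose, and I will close the gap using the KKT identity $\|X^\top\lambda^\star\|_q^q=\|\widehat w_p\|_p^p$ together with the one‑sided comparison \eqref{eq:Comp-correct}, which forces the spike contribution to $\|X^\top\lambda^\star\|_q^q$ to be of order $t_\star^q\|X^\top Y\|_q^q$ and pins $\|\lambda^\star\|_2\asymp t_\star\|Y\|_2$. Once \eqref{eq:l2lambda} is in hand, part~(iii) is just the two‑sided $\ell_t$ embedding \eqref{eq:bulk-lt} of Lemma~\ref{lem:Gq} applied at $\lambda=\lambda^\star$ with $\|\lambda^\star\|_2$ replaced by $t_\star\|Y\|_2$, absorbing the $(c_1,C_1)$ constants into $(c_t,C_t)$.

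\textbf{Spike block (iv) by perturbation.} Writing $\lambda^\star=t_\star Y+\delta$, the blockwise triangle inequality yields
\begin{equation*}
\Big\|(X^\top\lambda^\star)_S\Big\|_t \;=\; t_\star\,\Big\|(X^\top Y)_S\Big\|_t \;\pm\; \|X_S^\top\delta\|_t .
\end{equation*}
From part~(ii), $\|\delta\|_2\le \|\lambda^\star\|_2+t_\star\|Y\|_2\lesssim t_\star\|Y\|_2$; combining $\|u\|_t\le s^{(1/t-1/2)_+}\|u\|_2$ on $\R^s$ with the Gaussian singular‑value bound $s_{\max}(X_S)\le \sqrt n+2\sqrt s$ (as already used in the proof of Lemma~\ref{lem:spikeY}) gives
\begin{equation*}
\|X_S^\top\delta\|_t \;\le\; s^{(1/t-1/2)_+}\,\|X_S\|_{\mathrm{op}}\,\|\delta\|_2 \;\le\; C\,t_\star\|Y\|_2\,s^{(1/t-1/2)_+}(\sqrt n+\sqrt s),
\end{equation*}
which is exactly the perturbation in \eqref{eq:ray-scale-S-perturb}; specialising $\sqrt n+\sqrt s\asymp \sqrt n$ and $\|Y\|_2\asymp \tau_s\sqrt n$ for $s\le n$ recovers \eqref{eq:ray-scale-S-perturb-sleqn}. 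The probability budget is the union of the events from Lemmas~\ref{lem:Ynorm}, \ref{lem:Gq}, \ref{lem:spike-lt}, plus the uniform column‑norm and singular‑value events, totalling at least $1-Ce^{-c(d-s)}-Ce^{-c\sqrt{ns}}$.

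\textbf{Main obstacle.} The genuinely delicate step is the upper half of \eqref{eq:l2lambda} in the spike‑dominated regime: the bulk‑only route above is loose by a factor that grows with the spike/bulk ratio in $\|X^\top Y\|_q^q$, so one cannot conclude $\|\lambda^\star\|_2\le C_1 t_\star\|Y\|_2$ without exploiting the KKT‑driven alignment of $\lambda^\star$ with $Y$. A workable route is to decompose $\lambda^\star=\alpha Y+\lambda^\perp$ with $\lambda^\perp\perp Y$ (so $\alpha=\langle Y,\lambda^\star\rangle/\|Y\|_2^2\ge t_\star$), use that $(X^\top\lambda^\perp)_j\sim\mathcal N(0,\|\lambda^\perp\|_2^2)$ is independent of $(X^\top Y)_j$ for $j\notin S$ (rotational invariance plus $\lambda^\perp\perp Y$) to lower‑bound the bulk part of $\|X^\top\lambda^\star\|_q^q$ in terms of $\|\lambda^\perp\|_2$, and combine with $\alpha\|Y\|_2^2=\|X^\top\lambda^\star\|_q^q$ and the spike lower bound from the KKT identity to conclude $\|\lambda^\perp\|_2\lesssim \alpha\|Y\|_2$ and $\alpha\asymp t_\star$. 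Once this is in place, the remaining triangle‑inequality bookkeeping for parts (iii)--(iv) is routine.
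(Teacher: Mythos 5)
You correctly identify that parts (i), (iii), and (iv) follow the paper's own route (Fenchel–Young identities, the bulk $\ell_t$-embedding of Lemma~\ref{lem:Gq} applied uniformly to $\lambda^\star$, and the triangle-inequality/operator-norm bound on $X_S^\top(\lambda^\star - t_\star Y)$), and your observation that the \emph{upper} half of \eqref{eq:l2lambda} is the delicate step is accurate. In fact, the paper's own argument does not validly establish it either: the paper derives $\|\lambda^\star\|_2^{q-1}\lesssim \|Y\|_2/(d-s)$ and, separately, an \emph{upper} bound $(t_\star\|Y\|_2)^{q-1}\lesssim \|Y\|_2/(d-s)$, then ``compares'' the two. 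But two upper bounds against the same quantity do not yield $\|\lambda^\star\|_2\lesssim t_\star\|Y\|_2$; what is actually needed is a matching \emph{lower} bound $(t_\star\|Y\|_2)^{q-1}\gtrsim \|Y\|_2/(d-s)$, equivalently $\|X^\top Y\|_q^q\lesssim (d-s)\|Y\|_2^q$, which fails precisely in the spike-dominated regime $n\gg n_\star$. So the bulk-only chain you describe (and the paper uses) only proves the upper half of \eqref{eq:l2lambda} when the bulk controls $\|X^\top Y\|_q^q$; your instinct that the spike-dominated case needs a further idea is correct.

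The problem is that your proposed fix does not close this gap. Two issues. First, the conditional-independence claim ``$(X^\top\lambda^\perp)_j\sim\mathcal N(0,\|\lambda^\perp\|_2^2)$ is independent of $(X^\top Y)_j$'' is not legitimate for $\lambda^\perp$: the optimal dual variable $\lambda^\star$ (hence $\lambda^\perp$) is a function of the entire design $X$, so rotational invariance of $X_{:,j}$ given a fixed $\lambda^\perp$ does not apply; you must instead invoke the \emph{uniform-over-$\lambda$} embedding \eqref{eq:Gq}/\eqref{eq:bulk-lt} of Lemma~\ref{lem:Gq}, exactly as the paper does, to bound $\sum_{j\notin S}|\langle X_{:,j},\lambda^\star\rangle|^q$ in terms of $\|\lambda^\star\|_2$. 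Second, even granting this, the decomposition $\lambda^\star=\alpha Y+\lambda^\perp$ together with the bulk embedding only gives $(\alpha\|Y\|_2)^{q-1}\lesssim \|Y\|_2/(d-s)$ and $\|\lambda^\perp\|_2\lesssim (\alpha\|Y\|_2^2/(d-s))^{1/q}$; neither of these implies $\alpha\lesssim t_\star$ or $\|\lambda^\perp\|_2\lesssim t_\star\|Y\|_2$ when the spike term dominates $\|X^\top Y\|_q^q$ (the slack is exactly the ratio $\|X^\top Y\|_q^q/((d-s)\|Y\|_2^q)\gg 1$). You assert ``$\alpha\asymp t_\star$'' and ``$\|\lambda^\perp\|_2\lesssim\alpha\|Y\|_2$'' as conclusions of the step, but the ingredients you list (bulk embedding $+$ Fenchel--Young $+$ a ``spike lower bound'') do not produce them. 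Closing this genuinely appears to require a second-order argument --- a lower bound on the curvature of the concave dual $D$ along directions orthogonal to $Y$ (coming from the bulk entries of $\nabla^2 D(t_\star Y)$), paired with an upper bound on $\|\nabla D(t_\star Y)\|_2$ --- so that $\|\lambda^\star-t_\star Y\|_2$ is bounded by a Newton-type ratio. Neither your sketch nor the paper's proof supplies that argument, so the upper half of \eqref{eq:l2lambda} (and everything downstream that relies on it, including \eqref{eq:ray-scale-bulk} and \eqref{eq:ray-scale-S-perturb}) remains unproven in the spike-dominated regime.
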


\begin{proof}
We work on the intersection of the high-probability events supplied by Lemma~\ref{lem:Gq} (both \eqref{eq:Gq} and \eqref{eq:bulk-lt}), Lemma~\ref{lem:bulkY}, and the singular-value bound \eqref{eq:sv-bounds}; this intersection has probability at least $1-Ce^{-cn}$.

\medskip\noindent\emph{(One-sided value comparison \eqref{eq:Comp-correct}).}
By optimality of $\lambda^\star$ and the definition of $t_\star$,
\[
D(\lambda^\star)\ \ge\ D(t_\star).
\]
Using the Fenchel-Young identity at the optimum (see \eqref{eq:PD}) and \eqref{eq:D-ray-at-opt},
\[
D(\lambda^\star)=\Big(1-\tfrac1q\Big)\|X^\top\lambda^\star\|_q^q,\qquad
D(t_\star)=\Big(1-\tfrac1q\Big)\|X^\top(t_\star Y)\|_q^q,
\]
hence \eqref{eq:Comp-correct}.

\medskip\noindent\emph{(Dual-norm scale \eqref{eq:l2lambda}).}
\textbf{Lower bound.} From $D(\lambda^\star)\ge D(t_\star)$ and \eqref{eq:D-ray-at-opt},
\[
D(\lambda^\star)\ \ge\ \Big(1-\tfrac1q\Big)t_\star\|Y\|_2^2.
\]
Since $D(\lambda^\star)\le \langle Y,\lambda^\star\rangle\le \|Y\|_2\|\lambda^\star\|_2$, we get
\[
\|\lambda^\star\|_2\ \ge\ \Big(1-\tfrac1q\Big)\,t_\star\,\|Y\|_2.
\]

\noindent\textbf{Upper bound.} Let
\[
S(\lambda):=\sum_{j\in S}|\langle X_{:,j},\lambda\rangle|^q,\qquad
B(\lambda):=\sum_{j\notin S}|\langle X_{:,j},\lambda\rangle|^q.
\]
From \eqref{eq:PD},
\[
D(\lambda^\star)=\Big(1-\tfrac1q\Big)\big(S(\lambda^\star)+B(\lambda^\star)\big).
\]
By Lemma~\ref{lem:Gq} (left inequality in \eqref{eq:Gq}),
\[
B(\lambda^\star)\ \ge\ c_q\,(d-s)\,\|\lambda^\star\|_2^q.
\]
Combining with $D(\lambda^\star)\le \|Y\|_2\|\lambda^\star\|_2$ gives
\begin{equation}\label{eq:l2-upper-temp}
\Big(1-\tfrac1q\Big)c_q(d-s)\,\|\lambda^\star\|_2^{q-1}\ \le\ \|Y\|_2.
\end{equation}
Next, Lemma~\ref{lem:bulkY} yields
\[
\sum_{j\notin S}|\langle X_{:,j},Y\rangle|^q=(d-s)\,m_q\,\|Y\|_2^q\,(1+o(1)),
\]
so $\|X^\top Y\|_q^q\ge c\,(d-s)\,\|Y\|_2^q$. From \eqref{eq:tray},
\[
\big(t_\star\|Y\|_2\big)^{q-1}
= \frac{\|Y\|_2^{\,q+1}}{\|X^\top Y\|_q^q}
\ \le\ \frac{1}{c}\cdot \frac{\|Y\|_2}{(d-s)}.
\]
Comparing with \eqref{eq:l2-upper-temp} gives
$\|\lambda^\star\|_2^{q-1}\le C\,(t_\star\|Y\|_2)^{q-1}$ and hence
$\|\lambda^\star\|_2\le C_1\,t_\star\,\|Y\|_2$.

\medskip\noindent\emph{(Bulk block \eqref{eq:ray-scale-bulk}).}
Apply Lemma~\ref{lem:Gq} at level $t$ (two-sided inequality \eqref{eq:bulk-lt}) with $\lambda=\lambda^\star$:
\[
c_t^{1/t}(d-s)^{1/t}m_t^{1/t}\,\|\lambda^\star\|_2
\ \le\
\Big\|\big(|\langle X_{:,j},\lambda^\star\rangle|\big)_{j\notin S}\Big\|_{t}
\ \le\
C_t^{1/t}(d-s)^{1/t}m_t^{1/t}\,\|\lambda^\star\|_2.
\]
Substitute $\|\lambda^\star\|_2\asymp t_\star\|Y\|_2$ from \eqref{eq:l2lambda}.

\medskip\noindent\emph{(Spike block \eqref{eq:ray-scale-S-perturb}-\eqref{eq:ray-scale-S-perturb-sleqn}).}
Set $h:=\lambda^\star-t_\star Y$. Then
\[
X_{:,S}^\top\lambda^\star\ =\ t_\star\,X_{:,S}^\top Y\ +\ X_{:,S}^\top h.
\]
For any $t\ge1$, the triangle inequality gives
\[
\Big\|(|\langle X_{:,j},\lambda^\star\rangle|)_{j\in S}\Big\|_{t}
\ \le\ t_\star\,\Big\|(|\langle X_{:,j},Y\rangle|)_{j\in S}\Big\|_{t}
\ +\ \|X_{:,S}^\top h\|_{\ell_t},
\]
and the analogous lower bound with a minus sign. By norm monotonicity in $\R^s$ and operator norm submultiplicativity,
\[
\|X_{:,S}^\top h\|_{\ell_t}\ \le\ s^{(1/t-1/2)_+}\,\|X_{:,S}^\top h\|_2
\ \le\ s^{(1/t-1/2)_+}\,s_{\max}(X_{:,S})\,\|h\|_2.
\]
From \eqref{eq:sv-bounds} with $t=\sqrt s$, $s_{\max}(X_{:,S})\le C(\sqrt n+\sqrt s)$ w.h.p., and from \eqref{eq:l2lambda},
\[
\|h\|_2\ =\ \|\lambda^\star-t_\star Y\|_2\ \le\ \|\lambda^\star\|_2+t_\star\|Y\|_2\ \le\ (C_1+1)\,t_\star\,\|Y\|_2.
\]
Putting these together yields \eqref{eq:ray-scale-S-perturb}. If $s\le n$, Lemma~\ref{lem:Ynorm} gives $\|Y\|_2=\tau_s\sqrt n(1+o(1))$ and
\[
s^{(1/t-1/2)_+}(\sqrt n+\sqrt s)\ \le\ 2\sqrt n\,s^{\max\{1/t,1/2\}},
\]
which implies \eqref{eq:ray-scale-S-perturb-sleqn}.
\end{proof}

\subsection{Proof of Theorem~\ref{thm:main}}

With these lemmas in place, we are ready to prove Theorem~\ref{thm:main}.
\begin{proof}[Proof of Theorem~\ref{thm:main}]
We work on the intersection of the high-probability events provided by
Lemmas~\ref{lem:Ynorm}, \ref{lem:bulkY}, \ref{lem:spikeY}, \ref{lem:Gq}, \ref{lem:spike-lt}, and \ref{lem:comparison}; this event has probability at least $1-Ce^{-c(d-s)}-C e^{-c\sqrt{ns}}-2d^{-\gamma}$, consistent with Remark~\ref{rem:growth-prob}.
All constants implicit in \(\asymp\) depend only on \((q,\kappa_{\mathrm{bulk}})\).

Along the ray \(\lambda=tY\), the one-dimensional dual objective
\[
D(t)\ =\ t\,\|Y\|_2^2-\frac{t^q}{q}\,\|X^\top Y\|_q^q
\]
is strictly concave with unique maximizer given by the first-order condition (see~\eqref{eq:tray})
\begin{equation}\label{eq:pf-tstar}
t_\star^{\,q-1}\ =\ \frac{\|Y\|_2^2}{\|X^\top Y\|_q^q}.
\end{equation}
By Lemma~\ref{lem:Ynorm}, \(\|Y\|_2^2=\tau_s^2 n(1+o(1))\), and by the decomposition \eqref{eq:XtYq},
\[
\|X^\top Y\|_q^q\ =\ n^q W_q\,(1+o(1))\ +\ (d-s)\,m_q\,\tau_s^q\,n^{q/2}\,(1+o(1))\ +\ O\!\big(s\,\tau_s^q\,n^{q/2}\big).
\]
Substituting into \eqref{eq:pf-tstar} yields
\begin{equation}\label{eq:pf-tstar-asymp}
t_\star^{\,q-1}\ \asymp\ \frac{\tau_s^2 n}{\,n^q W_q + \big((d-s)\,m_q+O(s)\big)\,\tau_s^q n^{q/2}\,}\qquad\text{w.h.p.}
\end{equation}
By strong duality and Fenchel-Young (see \eqref{eq:PD}),
\begin{equation}\label{eq:pf-energy}
\sup_{\lambda}D(\lambda)\ =\ \Big(1-\frac1q\Big)\,\|X^\top\lambda^\star\|_q^q\ =\ \Big(1-\frac1q\Big)\,\|\widehat w_p\|_p^p.
\end{equation}
Evaluating $D$ on the ray at $t_\star$ and using $D(\lambda^\star)\ge D(t_\star)$ gives
\begin{equation}\label{eq:pf-pnorm}
\|\widehat w_p\|_p^p
\;=\; \|X^\top\lambda^\star\|_q^q
\;\ge\; \|X^\top(t_\star Y)\|_q^q
\;=\; t_\star^q\,\|X^\top Y\|_q^q
\;=\; \frac{\|Y\|_2^{\frac{2q}{q-1}}}{\|X^\top Y\|_q^{\frac{q}{q-1}}}.
\end{equation}
\noindent Moreover, by Cauchy--Schwarz and \eqref{eq:l2lambda},
\[
\|X^\top\lambda^\star\|_q^q=\langle Y,\lambda^\star\rangle
\ \le\ \|Y\|_2\,\|\lambda^\star\|_2
\ \lesssim\ t_\star\,\|Y\|_2^2
\ =\ t_\star^{\,q}\,\|X^\top Y\|_q^q.
\]
Combining with \eqref{eq:pf-pnorm} we obtain the two-sided scale
\[
\|\widehat w_p\|_p^p\ =\ \|X^\top\lambda^\star\|_q^q\ \asymp\ t_\star^{\,q}\,\|X^\top Y\|_q^q.
\]

Using the coordinatewise KKT map \eqref{eq:KKTmap},
\[
\widehat w_p\ =\ \nabla f^\star(X^\top\lambda^\star)\ =\ \operatorname{sgn}(X^\top\lambda^\star)\odot |X^\top\lambda^\star|^{\,q-1}.
\]
Hence, for any \(r\in[1,p]\),
\begin{equation}\label{eq:pf-rnorm-basic}
\|\widehat w_p\|_r\ =\ \|X^\top\lambda^\star\|_{(q-1)r}^{\,q-1}.
\end{equation}
Split the \((q-1)r\)-norm over the spike block \(S\) and the bulk block \(S^c\) and note that \(\|u\|_{t}^{t}=\|u_{S}\|_t^t+\|u_{S^c}\|_t^t\) implies \(\|u\|_t\asymp \max\{\|u_{S}\|_t,\|u_{S^c}\|_t\}\):
\begin{equation}\label{eq:pf-rnorm-split}
\|\widehat w_p\|_r\ \asymp\
\max\Big\{\,\|(|\langle X_{:,j},\lambda^\star\rangle|)_{j\in S}\|_{(q-1)r}^{\,q-1},\ \ \|(|\langle X_{:,j},\lambda^\star\rangle|)_{j\notin S}\|_{(q-1)r}^{\,q-1}\,\Big\}.
\end{equation}
\noindent\emph{(We used $\max\{a,b\}\le (a^t+b^t)^{1/t}\le 2^{1/t}\max\{a,b\}$ for $t\ge1$.)}

Set \(t:=(q-1)r\le q\).
By the spike-ray perturbation from Lemma~\ref{lem:comparison} (see \eqref{eq:ray-scale-S-perturb-sleqn} when $s\le n$),
\begin{equation}\label{eq:pf-S-ray}
\big\|\big(\,|\langle X_{:,j},\lambda^\star\rangle|\,\big)_{j\in S}\big\|_{\ell_t}
\ =\ t_\star\,\big\|\big(\,|\langle X_{:,j},Y\rangle|\,\big)_{j\in S}\big\|_{\ell_t}
\ \ \pm\ C\,t_\star\,\tau_s\sqrt{n}\,s^{\max\{1/t,\,1/2\}}.
\end{equation}
(If $s>n$, use the general form \eqref{eq:ray-scale-S-perturb}; the conclusion below is unchanged up to constants since $(\sqrt{n}+\sqrt{s})\,s^{(1/t-1/2)_+}\le 2\sqrt{n}\,s^{\max\{1/t,1/2\}}+s^{\,1+(1/t-1/2)_+}$, which is captured by the final “spike remainder’’ term.)
By Lemma~\ref{lem:spike-lt} at level $t$,
\begin{equation}\label{eq:pf-S-Y}
\big\|\big(\,|\langle X_{:,j},Y\rangle|\,\big)_{j\in S}\big\|_{\ell_t}
\ =\ n\,\|w^\star\|_t\,(1+o(1))\ \ \pm\ C\,\tau_s\sqrt{n}\,s^{\max\{1/t,\,1/2\}}.
\end{equation}
Combining \eqref{eq:pf-S-ray}-\eqref{eq:pf-S-Y} and using
$(a+b)^{q-1}\le 2^{q-2}(a^{q-1}+b^{q-1})$ for $a,b\ge 0$, we obtain the
following uniform two-sided bounds (recall $t=(q-1)r\le q$):
\begin{align}
\big\| \big(|\langle X_{:,j},\lambda^\star\rangle|\big)_{j\in S} \big\|_{\ell_t}^{\,q-1}
&\le C\Big\{\, t_\star^{\,q-1}\,n^{\,q-1}\,\|w^\star\|_{t}^{\,q-1}
\ +\ (t_\star\tau_s\sqrt{n})^{\,q-1}\, s^{\,(q-1)\max\{1/t,\,1/2\}}\,\Big\},\label{eq:pf-spike-upper}\\
\big\| \big(|\langle X_{:,j},\lambda^\star\rangle|\big)_{j\in S} \big\|_{\ell_t}^{\,q-1}
&\ge c\Big(\, t_\star\,n\,\|w^\star\|_{t}
\ -\ C\,t_\star\,\tau_s\sqrt{n}\,s^{\max\{1/t,\,1/2\}}\,\Big)_+^{\,q-1}.\label{eq:pf-spike-lower}
\end{align}
Applying the mean-value inequality to the map $z\mapsto z^{\,q-1}$,
\[
|(x\pm y)^{q-1}-x^{q-1}|\le C\,(x^{q-2}y+y^{q-1}),
\]
with $x=t_\star n\|w^\star\|_t$ and $y=C t_\star \tau_s\sqrt{n}\,s^{\max\{1/t,\,1/2\}}$, we obtain
\begin{equation}\label{eq:pf-spike-final}
\big\| \big(|\langle X_{:,j},\lambda^\star\rangle|\big)_{j\in S} \big\|_{\ell_t}^{\,q-1}
\ =\ t_\star^{\,q-1}\,n^{\,q-1}\,\|w^\star\|_{t}^{\,q-1}\ (1+o(1))
\ \ \pm\ C\,(t_\star\tau_s\sqrt{n})^{\,q-1}\, s^{\,\max\{\, (q-1)/2,\ (q-1)/t\,\}}.
\end{equation}

Recalling \(t=(q-1)r\) and \(\|w^\star\|_{t}\asymp \|w^\star\|_{(q-1)r}\), we obtain the spike contribution stated in \eqref{eq:unified}.
\emph{(For completeness: specializing \eqref{eq:ray-scale-S-perturb} to $t=q$ together with Lemma~\ref{lem:spike-lt} at $t=q$ yields the same rate and remainder exponent as in \eqref{eq:pf-spike-final}.)}

By Lemma~\ref{lem:comparison} (bulk control \eqref{eq:ray-scale-bulk}) together with \eqref{eq:l2lambda},
\[
\|(|\langle X_{:,j},\lambda^\star\rangle|)_{j\notin S}\|_{(q-1)r}
\ \asymp\ (d-s)^{1/((q-1)r)}\,t_\star\,\|Y\|_2.
\]
Raising to the \((q-1)\)-th power and using \(\|Y\|_2\asymp\tau_s\sqrt n\) (Lemma~\ref{lem:Ynorm}),
\begin{equation}\label{eq:pf-bulk-block}
\|(|\langle X_{:,j},\lambda^\star\rangle|)_{j\notin S}\|_{(q-1)r}^{\,q-1}
\ \asymp\ (d-s)^{1/r}\,\big(t_\star\,\tau_s\sqrt n\big)^{\,q-1}.
\end{equation}

Plug \eqref{eq:pf-spike-final} and \eqref{eq:pf-bulk-block} into \eqref{eq:pf-rnorm-split}. This yields
\[
\|\widehat w_p\|_r\ \asymp\
\max\Big\{\,
t_\star^{\,q-1}\,n^{\,q-1}\,\|w^\star\|_{(q-1)r}^{\,q-1}\,,\ \ 
(d-s)^{1/r}\,\big(t_\star\,\tau_s\sqrt n\big)^{\,q-1}\,,\ \ 
s^{\,\max\{1/r,\,(q-1)/2\}}\,\big(t_\star\,\tau_s\sqrt n\big)^{\,q-1}
\,\Big\},
\]
which is exactly the three-term unified bound in \eqref{eq:unified}.
When $r<2(p-1)$ and $(d-s)\gtrsim s$, the third term is absorbed by the bulk term, recovering the two-term maximum.

In the proportional regime \((d-s)\asymp \kappa_{\mathrm{bulk}}\,n\), balance the two leading terms in \(\|X^\top Y\|_q^q\) (cf.\ \eqref{eq:XtYq}) to define
\[
n^q W_q\ \asymp\ (d-s)\,\tau_s^q\,n^{q/2}
\quad\Longleftrightarrow\quad
n^{q/2}\ \asymp\ \kappa_{\mathrm{bulk}}\,\frac{\tau_s^q}{W_q}
\quad\Longleftrightarrow\quad
n_\star\ \asymp\ \Big(\kappa_{\mathrm{bulk}}\,\frac{\tau_s^q}{W_q}\Big)^{\!\frac{2}{q-2}},
\]
which matches \eqref{eq:nstar}.

\emph{(i) Dual spike-dominated regime \(n\gg n_\star\).}
Then \(\|X^\top Y\|_q^q\asymp n^q W_q\) and \eqref{eq:pf-tstar-asymp} gives
\begin{equation}\label{eq:pf-tstar-spike}
t_\star^{\,q-1}\ \asymp\ \frac{\tau_s^2 n}{n^q W_q}\ =\ \frac{\tau_s^2}{W_q}\,n^{-(q-1)}.
\end{equation}
Consequently

\begin{subequations}\label{eq:SD-consequences}
\begin{align}
\label{eq:SD-bulk-type}
(d-s)^{1/r}\,\big(t_\star\,\tau_s\sqrt{n}\big)^{\,q-1}
&\ \asymp\ 
\frac{\tau_s^{\,q+1}}{W_q}\,
n^{\,\frac{1}{r}-\frac{1}{2(p-1)}},\\[4pt]
\label{eq:SD-spike-rem}
s^{\,\max\{\,1/r,\,(q-1)/2\,\}}\,
\big(t_\star\,\tau_s\sqrt{n}\big)^{\,q-1}
&\ \asymp\ 
\frac{\tau_s^{\,q+1}}{W_q}\,
s^{\,\max\{\,1/r,\,(q-1)/2\,\}}\,n^{-\frac{1}{2(p-1)}}.
\end{align}
\end{subequations}

In particular, when $r\le2(p-1)$ the two “bulk‑type’’ terms are of the same order (and are dominated by the spike main when $r\ge 2(p-1)$); this recovers \eqref{eq:SD}.

\emph{(ii) Dual bulk-dominated regime \(n\ll n_\star\).}
Then \(\|X^\top Y\|_q^q\asymp (d-s)\tau_s^q n^{q/2}\) and
\begin{equation}\label{eq:pf-tstar-bulk}
t_\star^{\,q-1}\ \asymp\ \frac{\tau_s^2 n}{(d-s)\tau_s^q n^{q/2}}
\ =\ \frac{\tau_s^{\,2-q}}{(d-s)}\,n^{\,1-\frac{q}{2}}.
\end{equation}
Therefore

\begin{subequations}\label{eq:BD-consequences}
\begin{align}
\label{eq:BD-bulk}
(d-s)^{1/r}\,\big(t_\star\,\tau_s\sqrt{n}\big)^{\,q-1}
&\ \asymp\ 
\kappa_{\mathrm{bulk}}^{\,\frac{1}{r}-1}\ \tau_s\ n^{\,\frac{1}{r}-\frac{1}{2}},\\[4pt]
\label{eq:BD-spike-rem}
s^{\,\max\{\,1/r,\,(q-1)/2\,\}}\,
\big(t_\star\,\tau_s\sqrt{n}\big)^{\,q-1}
&\ \asymp\ 
\kappa_{\mathrm{bulk}}^{-1}\ \tau_s\ 
s^{\,\max\{\,1/r,\,(q-1)/2\,\}}\,n^{-1/2}.
\end{align}
\end{subequations}

Taking the maximum together with the spike main term gives \eqref{eq:BD} whenever the third term is absorbed; otherwise the third term with exponent $\max\{1/r,(q-1)/2\}-1/2$ may dominate.
\medskip

This completes the proof of \eqref{eq:unified} (three-term form), the energy scale \eqref{eq:pf-pnorm}, hence the proof of Theorem~\ref{thm:main}.
\end{proof}

\subsection{Two concrete corollaries: single spike and flat support}
\label{subsec:corollaries-single-flat}

We keep \(p\in(1,2]\), \(q=\frac{p}{p-1}\in[2,\infty)\), \(r\in[1,p]\), and \(\kappa_{\mathrm{bulk}}=\liminf (d-s)/n>0\).
Recall the unified bound from Theorem~\ref{thm:main}. We will repeatedly use the identity
\begin{align}
\label{eq:unified-again}
\|\widehat w_p\|_r
\;\asymp\;
\max\Big\{&\;
t_\star^{\,q-1}\,n^{\,q-1}\,\|w^\star\|_{(q-1)r}^{\,q-1},\;
(d-s)^{1/r}\,\big(t_\star\,\tau_s\sqrt n\big)^{\,q-1},
\\
&\;
s^{\,\max\{\,1/r,\ (q-1)/2\,\}}\ \big(t_\star\,\tau_s\sqrt n\big)^{\,q-1}
\Big\},
\end{align}
together with
\begin{align}
\label{eq:tstar-again}
t_\star^{\,q-1}
\;=\;
\frac{\|Y\|_2^2}{\|X^\top Y\|_q^q},
\qquad
n_\star
\;\asymp\;
\left(\kappa_{\mathrm{bulk}}\,\frac{\tau_s^{\,q}}{W_q}\right)^{\!\frac{2}{q-2}},
\qquad
W_q=\sum_{j\in S}\!|w^\star_j|^q,
\quad
\tau_s^2=\|w^\star\|_2^2+\sigma^2.
\end{align}

\paragraph{Case (i): single spike (\(s=1\)).}
Let the support be \(\{j_0\}\) and write \(a:=|w^\star_{j_0}|>0\). Then
\begin{align}
W_q=a^q,
\qquad
\|w^\star\|_{(q-1)r}=a,
\qquad
\tau_s^2=a^2+\sigma^2.
\end{align}
The transition scale simplifies to
\begin{align}
\label{eq:nstar-s1}
n_\star
\;\asymp\;
\left(\kappa_{\mathrm{bulk}}\frac{(a^2+\sigma^2)^{q/2}}{a^q}\right)^{\!\frac{2}{q-2}}.
\end{align}
In \eqref{eq:unified-again}, the spike remainder is dominated by the bulk term since
\begin{align}
\frac{\text{spike remainder}}{\text{bulk}}
\;=\;
(d-1)^{-1/r}
\;\ll\; 1
\quad\text{for large $d$.}
\end{align}

\emph{Dual spike-dominated (\(n\gg n_\star\)).}
Using the phase form \eqref{eq:SD}, we obtain
\begin{align}
\label{eq:s1-SD}
\|\widehat w_p\|_r
\;\asymp\;
\begin{cases}
\displaystyle
\frac{(a^2+\sigma^2)^{\frac{q+1}{2}}}{a^q}\;
n^{\,\frac{1}{r}-\frac{1}{2(p-1)}},
& r\le 2(p-1),
\\[8pt]
\displaystyle
\frac{a^2+\sigma^2}{a},
& r>2(p-1).
\end{cases}
\end{align}

\emph{Dual bulk-dominated (\(n\ll n_\star\)).}
Using \eqref{eq:BD},
\begin{align}
\label{eq:s1-BD}
\|\widehat w_p\|_r
\;\asymp\;
\max\Big\{&
\kappa_{\mathrm{bulk}}^{\,\frac1r-1}\,(a^2+\sigma^2)^{1/2}\,n^{\,\frac1r-\frac12},\;
\kappa_{\mathrm{bulk}}^{-1}\,(a^2+\sigma^2)^{\frac{2-q}{2}}\,a^{\,q-1}\,n^{\,\frac{q}{2}-1}
\Big\}.
\end{align}
(The third term in \eqref{eq:BD} equals \(\kappa_{\mathrm{bulk}}^{-1}\tau_s n^{-1/2}\) and is dominated by the first term for large \(n\).)

\paragraph{Case (ii): flat signal on its support.}
Assume \(w^\star_j = a\,s_j\) for all \(j\in S\) with \(|s_j|=1\) and \(|S|=s\). Then
\begin{align}
\|w^\star\|_2=\sqrt{s}\,|a|,
\qquad
W_q=s\,|a|^q,
\qquad
\|w^\star\|_{(q-1)r}=s^{\frac{1}{(q-1)r}}\,|a|,
\qquad
\tau_s^2=s\,a^2+\sigma^2.
\end{align}
The transition scale grows linearly in \(s\):
\begin{align}
\label{eq:nstar-flat}
n_\star
\;\asymp\;
\left(\kappa_{\mathrm{bulk}}\frac{(s a^2+\sigma^2)^{q/2}}{s\,|a|^q}\right)^{\!\frac{2}{q-2}}
\;=\;
\kappa_{\mathrm{bulk}}^{\frac{2}{q-2}}\;
s\;\left(1+\frac{\sigma^2}{s a^2}\right)^{\!\frac{q}{q-2}}.
\end{align}

\emph{Dual spike-dominated (\(n\gg n_\star\)).}
From \eqref{eq:SD},
\begin{align}
\label{eq:flat-SD}
\|\widehat w_p\|_r
\;\asymp\;
\begin{cases}
\displaystyle
\frac{(s a^2+\sigma^2)^{\frac{q+1}{2}}}{s\,|a|^q}\;
n^{\,\frac{1}{r}-\frac{1}{2(p-1)}},
& r\le 2(p-1),
\\[8pt]
\displaystyle
s^{\frac{1}{r}-1}\,\frac{s a^2+\sigma^2}{|a|},
& r>2(p-1).
\end{cases}
\end{align}
In the noiseless case \((\sigma=0)\),
\begin{align}
r>2(p-1):\quad
\|\widehat w_p\|_r \;\asymp\; s^{1/r}\,|a|,
\qquad
r\le 2(p-1):\quad
\|\widehat w_p\|_r \;\asymp\; s^{\frac{q-1}{2}}\,|a|\;
n^{\,\frac{1}{r}-\frac{1}{2(p-1)}}.
\end{align}

\emph{Dual bulk-dominated (\(n\ll n_\star\)).}
From \eqref{eq:BD},
\begin{align}
\label{eq:flat-BD}
\|\widehat w_p\|_r
\;\asymp\;
\max\Big\{&
\kappa_{\mathrm{bulk}}^{\,\frac1r-1}\,(s a^2+\sigma^2)^{1/2}\,n^{\,\frac1r-\frac12},\;
\kappa_{\mathrm{bulk}}^{-1}\,(s a^2+\sigma^2)^{\frac{2-q}{2}}\,s^{1/r}|a|^{\,q-1}\,n^{\,\frac{q}{2}-1},
\\
&\kappa_{\mathrm{bulk}}^{-1}\,(s a^2+\sigma^2)^{1/2}\, s^{\,\max\{1/r,\,(q-1)/2\}}\,n^{-1/2}
\Big\}.
\end{align}
When \(r\le 2(p-1)\) and \(s\lesssim (d-s)\), the third term is absorbed by the first (Remark~\ref{rem:two-term}).

\section{From initialization scale to an effective $\ell_p$: a slope-matching view}
\label{app:alpha-to-p}
Figure~\ref{fig:alpha-to-p} visualizes the mapping $\alpha \mapsto p_{\mathrm{eff}}(\alpha)$ we use
throughout. The construction is data‑free (independent of $n$ and $\sigma$) and relies only on the
gradient‑flow potential that characterizes the two‑layer DLN implicit bias. Pseudocode can be found in Algorithm~\ref{alg:alpha-to-p}.

We start from the separable potential
\begin{align}
  Q_\alpha(\beta)
  &= \alpha^2 \sum_{i=1}^d q\!\left( \frac{\beta_i}{\alpha^2} \right),
  \\
  q(z)
  &= \int_0^z \operatorname{arcsinh}\!\left(\frac{u}{2}\right)\,du
   \;=\; 2 - \sqrt{4+z^2}
   \;+\; z\,\operatorname{arcsinh}\!\left(\frac{z}{2}\right).
  \label{eq:Qalpha}
\end{align}
At the coordinate level, letting $\psi_\alpha(t)\equiv \alpha^2\,q(t/\alpha^2)$ gives
\begin{align}
  \psi_\alpha'(t)
  &= \operatorname{arcsinh}\!\left(\frac{t}{2\alpha^2}\right),
  \\
  \psi_\alpha''(t)
  &= \frac{1}{\alpha^2 \sqrt{\,4 + (t/\alpha^2)^2\,}}
   \;=\; \frac{1}{\sqrt{\,4\alpha^4 + t^2\,}} .
  \label{eq:psi-derivs}
\end{align}
Asymptotics for $q$ control the limiting geometry (all logs are natural):
\begin{align}
  q(z)
  &= \frac{z^2}{4} - \frac{z^4}{192} + O(z^6),
  && z\to 0,
  \label{eq:q-small}
  \\
  q(z)
  &= z\!\left(\log z - 1\right) + 2 - \frac{1}{z} + O\!\left(\frac{1}{z^3}\right),
  && z\to \infty .
  \label{eq:q-large}
\end{align}
Hence $Q_\alpha$ behaves like $\ell_2^2$ as $\alpha\!\to\!\infty$ and like an $\ell_1$‑type penalty (up to a log) as $\alpha\!\to\!0$.

To turn this into a quantitative $\alpha\!\mapsto\!p$ mapping, we evaluate $Q_\alpha$ on the
$k$‑sparse, unit‑$\ell_2$ probes
\begin{align}
  \beta^{(k)} \in \mathbb{R}^d,\qquad
  \beta^{(k)}_i \in \{0,k^{-1/2}\},\qquad
  \|\beta^{(k)}\|_2 = 1,\qquad
  \#\{i:\beta^{(k)}_i \neq 0\}=k.
  \label{eq:beta-k}
\end{align}
For this family,
\begin{align}
  Q_\alpha\!\bigl(\beta^{(k)}\bigr)
  &= \alpha^2\,k \;
     q\!\left( \frac{1}{\alpha^2 \sqrt{k}} \right),
  \label{eq:Qalpha-k}
\end{align}
while $\ell_p$ (calibrated via $\|\beta\|_p^p$) has the exact scaling
\begin{align}
  \|\beta^{(k)}\|_p^p
  &= k \left(\frac{1}{\sqrt{k}}\right)^p
   \;=\; k^{\,1 - \tfrac{p}{2}}.
  \label{eq:lp-k}
\end{align}

We now fit a log-log slope to the $k$‑dependence of $Q_\alpha$ and match exponents. Fix $\alpha>0$,
choose a logarithmic grid $\mathcal{K}\subset\{1,2,\dots,d\}$ (e.g., up to $10^4$), and solve
\begin{align}
  \log Q_\alpha\!\bigl(\beta^{(k)}\bigr)
  \approx c(\alpha) + s(\alpha)\,\log k,
  \qquad k\in\mathcal{K}.
  \label{eq:slope-fit}
\end{align}
Comparing with \eqref{eq:lp-k} (which grows as $k^{\,1-p/2}$) yields
\begin{align}
  s(\alpha) = 1 - \frac{p_{\mathrm{eff}}(\alpha)}{2}
  \quad\Longrightarrow\quad
  p_{\mathrm{eff}}(\alpha) = 2\!\left(1 - s(\alpha)\right).
  \label{eq:peff}
\end{align}
The limits in \eqref{eq:q-small}--\eqref{eq:q-large} imply
\begin{align}
  \alpha \to \infty: \quad
  &Q_\alpha\!\bigl(\beta^{(k)}\bigr)
    = \frac{1}{4\alpha^2}
      + O\!\left(\frac{1}{\alpha^6 k}\right),
    \quad s(\alpha)\to 0,
    \quad p_{\mathrm{eff}}(\alpha)\to 2,
  \label{eq:limit-kernel}
\\[0.3em]
\alpha \to 0:\quad &
\begin{aligned}[t]
Q_\alpha\!\bigl(\beta^{(k)}\bigr)
  &= \sqrt{k}\,\Bigl(
        \log\!\bigl(\tfrac{1}{\alpha^2 \sqrt{k}}\bigr) - 1
      \Bigr) + 2\alpha^2 k -\; \alpha^4 k \sqrt{k}
      \;+\; O\!\bigl(\alpha^8 k^2 \sqrt{k}\bigr),\\
s(\alpha) &\to \tfrac{1}{2},\\
p_{\mathrm{eff}}(\alpha) &\to 1.
\end{aligned}
\label{eq:limit-rich}
\end{align}

Thus $p_{\mathrm{eff}}(\alpha)$ increases smoothly and monotonically from $1$ to $2$ as $\alpha$ grows,
exactly as depicted in Figure~\ref{fig:alpha-to-p}. The inverse problem---choosing $\alpha$ for a target
$p^\star\in[1,2]$---is the scalar root
\begin{align}
  p_{\mathrm{eff}}(\alpha) = p^\star,
  \label{eq:inverse-map}
\end{align}
which we solve by bisection using the monotonicity in $\alpha$ (Algorithm~\ref{alg:bisection}).

\begin{algorithm}[t]
  \caption{Slope-matching map $\alpha \mapsto p_{\mathrm{eff}}(\alpha)$}
  \label{alg:alpha-to-p}
  \begin{algorithmic}[1]
    \Require Log-grid $\mathcal{A}$ of $\alpha$ values; log-grid $\mathcal{K}\subset\{1,\dots,d\}$ of $k$ values
    \Ensure $\{(\alpha, p_{\mathrm{eff}}(\alpha)):\alpha\in\mathcal{A}\}$
    \ForAll{$\alpha \in \mathcal{A}$}
      \State Initialize lists $X \gets [\,]$, $Y \gets [\,]$ \Comment{$X=\{\log k\}$, $Y=\{\log Q_\alpha(\beta^{(k)})\}$}
      \ForAll{$k \in \mathcal{K}$}
        \State $z_k \gets 1/(\alpha^2\sqrt{k})$
        \State Compute $q(z_k)$ using the closed form in \eqref{eq:Qalpha}; if $|z_k|$ is small, use the series $q(z)=z^2/4 - z^4/192 + z^6/2560 + \cdots$ for stability
        \State $Q_k \gets \alpha^2\,k\,q(z_k)$
        \State Append $\log k$ to $X$; append $\log Q_k$ to $Y$
      \EndFor
      \State Fit $Y \approx c(\alpha) + s(\alpha)\, X$ by least squares
      \State $p_{\mathrm{eff}}(\alpha) \gets 2\,(1 - s(\alpha))$ \Comment{by \eqref{eq:peff}}
    \EndFor
    \State \Return $\{(\alpha, p_{\mathrm{eff}}(\alpha)):\alpha\in\mathcal{A}\}$
  \end{algorithmic}
\end{algorithm}

\begin{algorithm}[t]
  \caption{Inverse map $p^\star \mapsto \alpha^\star$ by bisection in $\log \alpha$}
  \label{alg:bisection}
  \begin{algorithmic}[1]
    \Require Target $p^\star\in[1,2]$; grid $\mathcal{K}$; bracket $0<\alpha_{\min}<\alpha_{\max}$ with $p_{\mathrm{eff}}(\alpha_{\min})\le p^\star \le p_{\mathrm{eff}}(\alpha_{\max})$; tolerance $\varepsilon>0$
    \Ensure $\alpha^\star$ with $\bigl|p_{\mathrm{eff}}(\alpha^\star)-p^\star\bigr|\le\varepsilon$
    \State $u_{\min}\gets\log \alpha_{\min}$,\; $u_{\max}\gets\log \alpha_{\max}$
    \While{$u_{\max}-u_{\min}>\varepsilon$}
      \State $u_{\text{mid}}\gets \tfrac{1}{2}(u_{\min}+u_{\max})$, \quad $\alpha_{\text{mid}}\gets e^{u_{\text{mid}}}$
      \State Compute $p_{\mathrm{eff}}(\alpha_{\text{mid}})$ via Algorithm~\ref{alg:alpha-to-p} restricted to this single $\alpha$
      \If{$p_{\mathrm{eff}}(\alpha_{\text{mid}}) < p^\star$}
        \State $u_{\min}\gets u_{\text{mid}}$
      \Else
        \State $u_{\max}\gets u_{\text{mid}}$
      \EndIf
    \EndWhile
    \State \Return $\alpha^\star \gets e^{(u_{\min}+u_{\max})/2}$
  \end{algorithmic}
\end{algorithm}

\begin{figure}[t]
  \centering
  \includegraphics[width=0.62\textwidth]{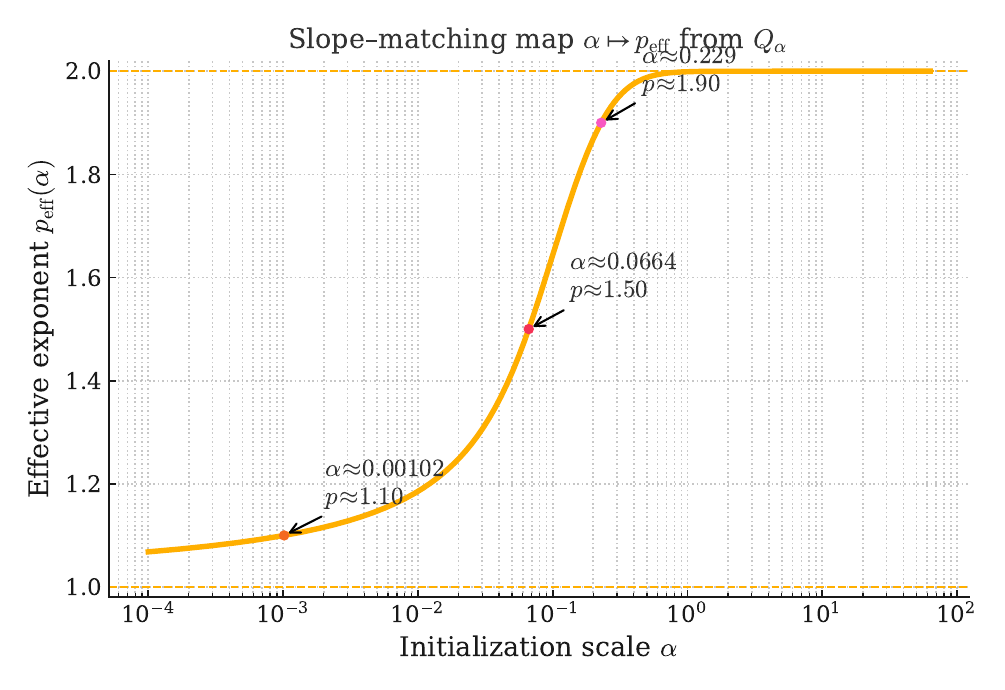}
  \vspace{-0.6em}
  \caption{Slope-matching map $\alpha\mapsto p_{\mathrm{eff}}(\alpha)$ (Algorithm~\ref{alg:alpha-to-p}), obtained by
  fitting the $k$‑sparse scaling of $Q_\alpha(\beta^{(k)})$ against the exact
  $k^{\,1-p/2}$ scaling of $\|\beta^{(k)}\|_p^p$. Target points ($p\!\in\!\{1.1,1.5,1.9\}$)
  are annotated; their corresponding $\alpha$ are solved by Algorithm~\ref{alg:bisection}.}
  \label{fig:alpha-to-p}
\end{figure}

\section{Additional noise sweeps: \texorpdfstring{$\sigma\in\{0,0.5\}$}{sigma in \{0, 0.5\}}}
\label{app:noise-sweeps}

\paragraph{Experimental protocol.}
We replicate the experiments of §\ref{subsec:linreg} and §\ref{subsec:dln} at two additional noise levels, $\sigma=0$ and $\sigma=0.5$, keeping everything else fixed (same $p\in\{1.1,1.5,1.9\}$ for explicit minimum‑$\ell_p$ runs; same $\alpha\in\{0.00102,0.0664,0.229\}$ for DLNs with the same $\alpha\!\mapsto\!p_{\mathrm{eff}}$ calibration as in Appendix~\ref{app:alpha-to-p}; same seeds and learning rates as indicated in the panel captions). Each plot overlays test MSE (left axis) and representative $\ell_r$ curves (right axis).

\emph{What the figures show and why.}
In Fig.~\ref{fig:e1-linreg-sig0}-Fig.~\ref{fig:flat-dln-sig05},
the slopes and regime rules from Theorem~\ref{thm:main_compact} and Corollaries~\ref{cor:e1}-\ref{cor:flat} are unchanged across $\sigma$; noise only rescales $\tau_s$ and thereby shifts the transition size
$n_\star\!\asymp\!(\kappa_{\mathrm{bulk}}\tau_s^{\,q}/W_q)^{2/(q-2)}$ [\eqref{eq:nstar}] and the spike‑side plateau levels [\eqref{eq:SD}].
Thus, compared to $\sigma{=}0.1$ in the main text: (i) at $\sigma{=}0$ elbows appear earlier and plateaus (for $r>2(p{-}1)$) occur sooner and at lower levels; (ii) at $\sigma{=}0.5$ elbows are delayed and spike‑side plateaus are higher. Bulk‑dominated panels retain the $n^{1/2}$ growth and the $r$‑ordering in \eqref{eq:BD}.

\begin{figure*}[t]
  \centering
  \subfigure[$p=1.1$ (sparsity‑leaning)]{
    \includegraphics[width=0.31\textwidth]{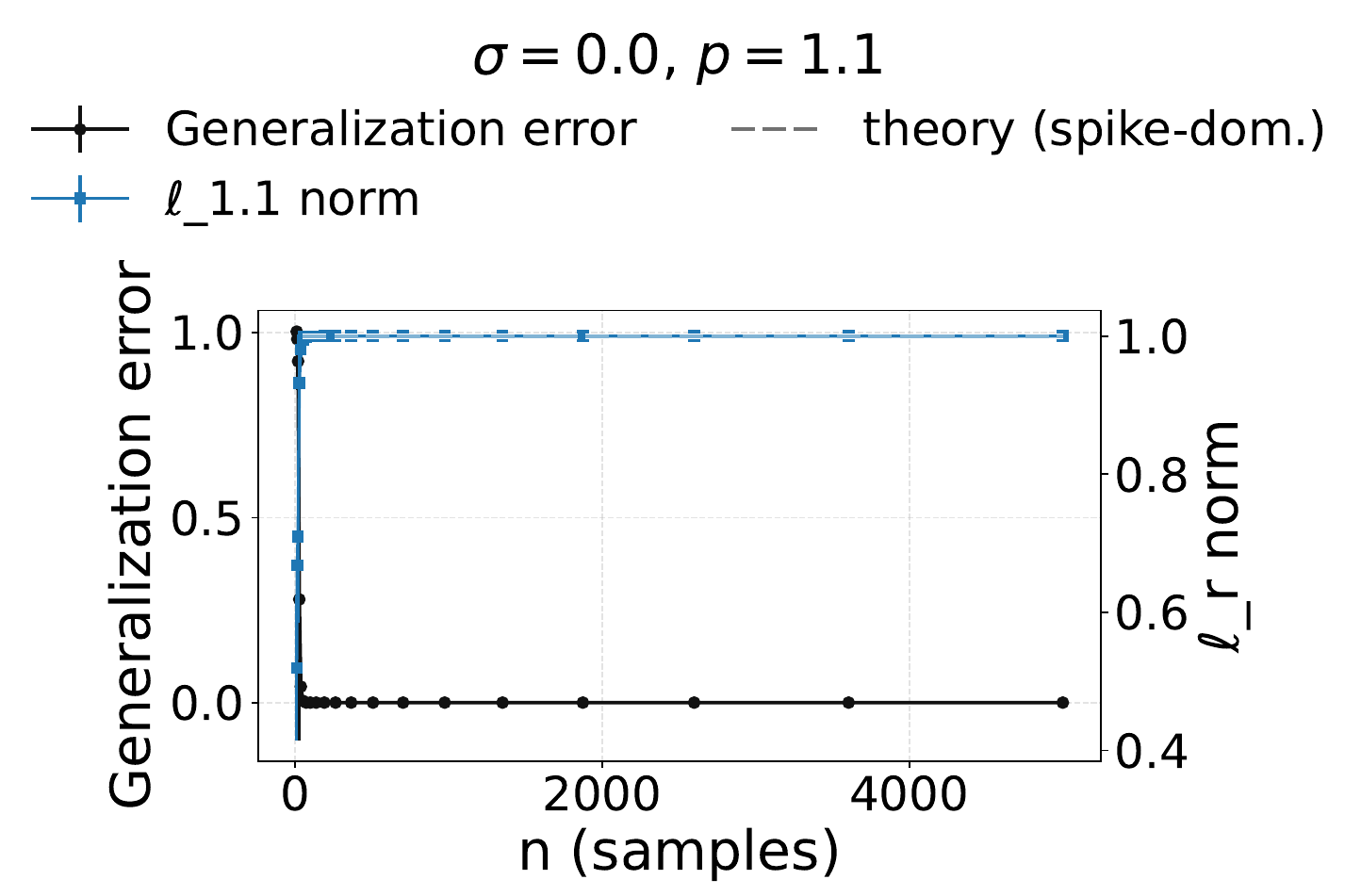}
  }\hfill
  \subfigure[$p=1.5$]{
    \includegraphics[width=0.31\textwidth]{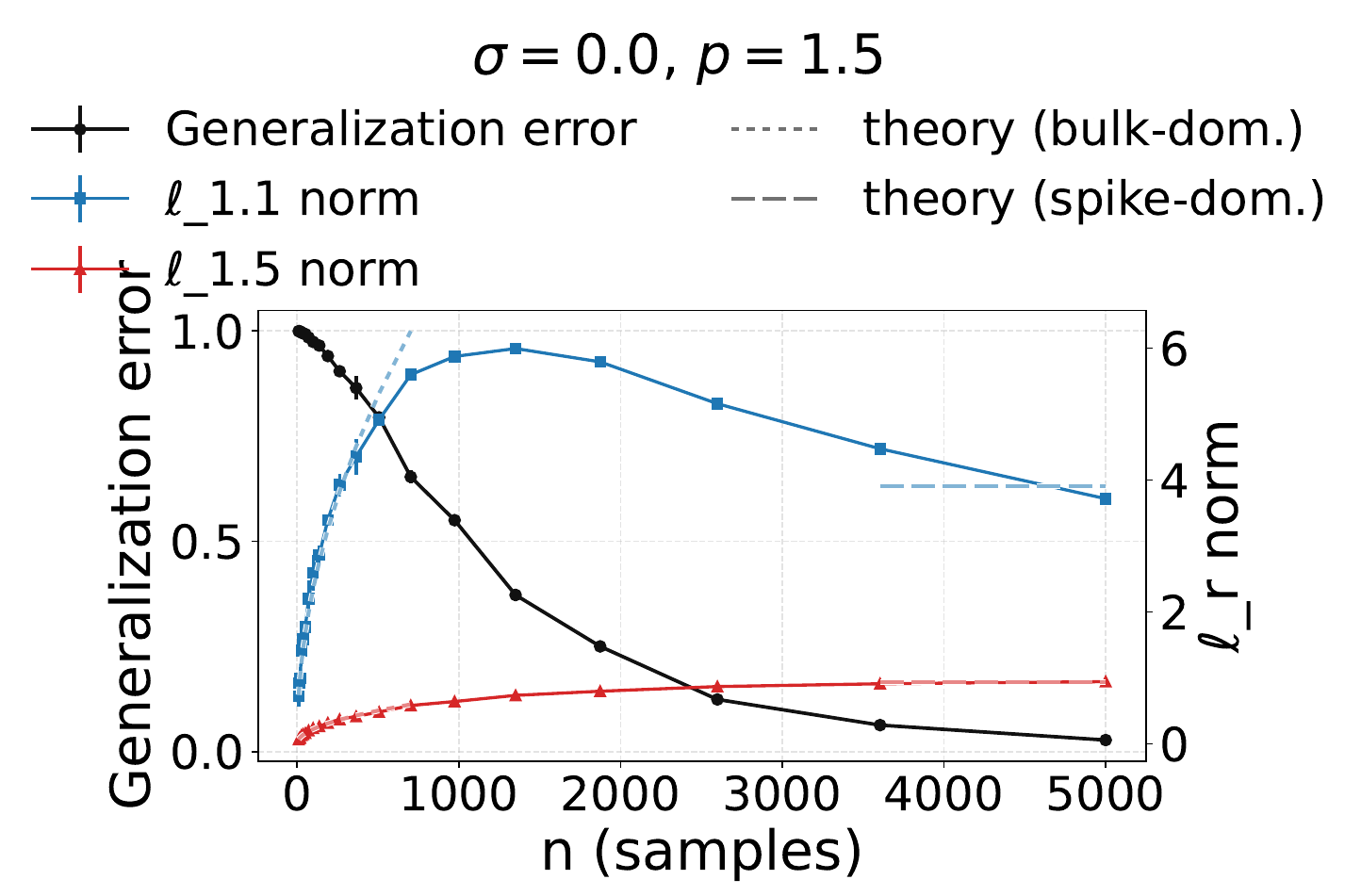}
  }\hfill
  \subfigure[$p=1.9$ (dense‑leaning)]{
    \includegraphics[width=0.31\textwidth]{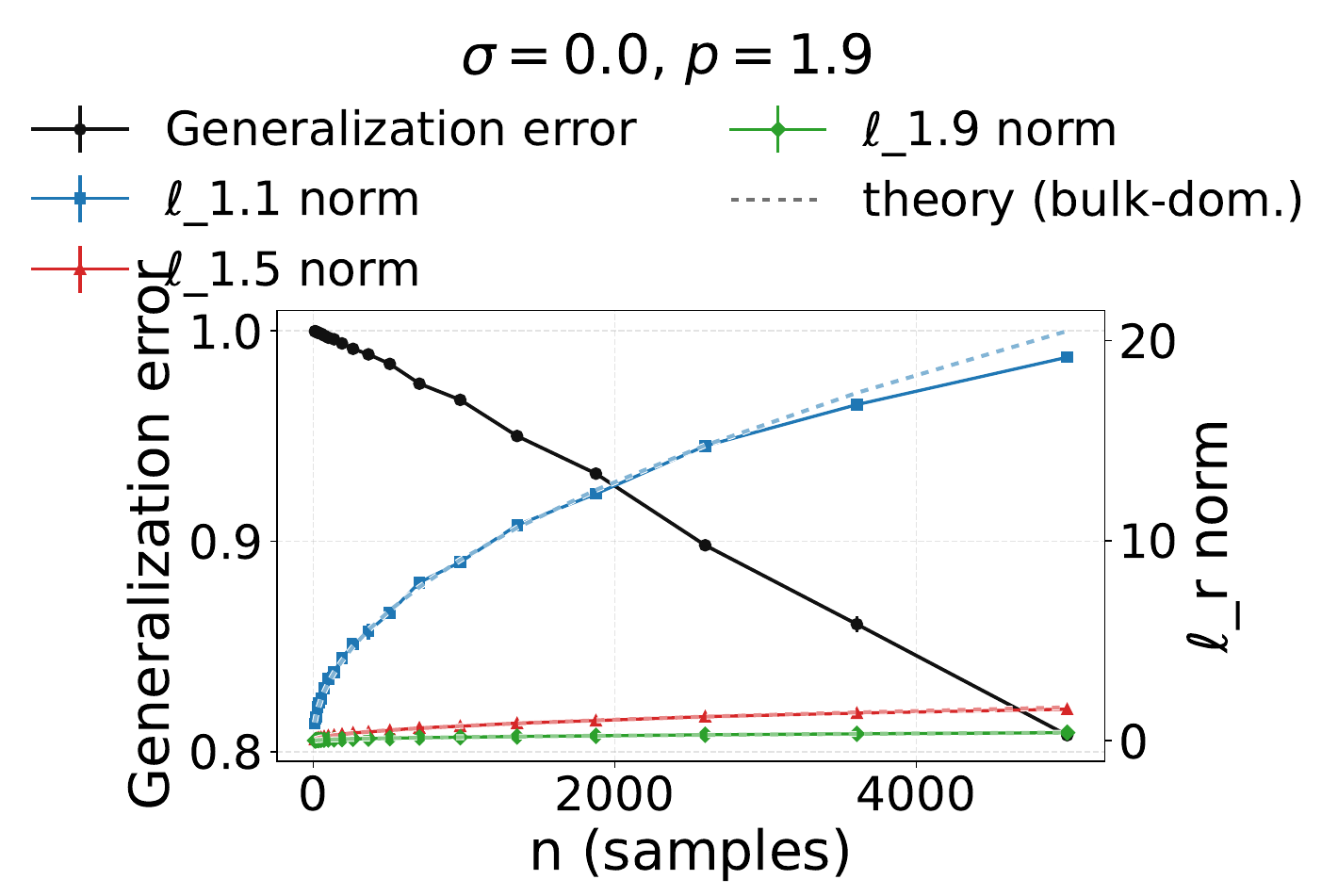}
  }
  \vspace{-0.3em}
  \caption{\textbf{Single spike $w^\star=e_1$; explicit minimum‑$\ell_p$ interpolation ($\sigma=0$).}
  Earlier elbows and lower spike‑side plateaus than at $\sigma{=}0.1$; bulk‑side traces keep the $n^{1/2}$ slope, consistent with \eqref{eq:SD}-\eqref{eq:BD}.}
  \label{fig:e1-linreg-sig0}
\end{figure*}

\begin{figure*}[t]
  \centering
  \subfigure[$p=1.1$ (sparsity‑leaning)]{
    \includegraphics[width=0.31\textwidth]{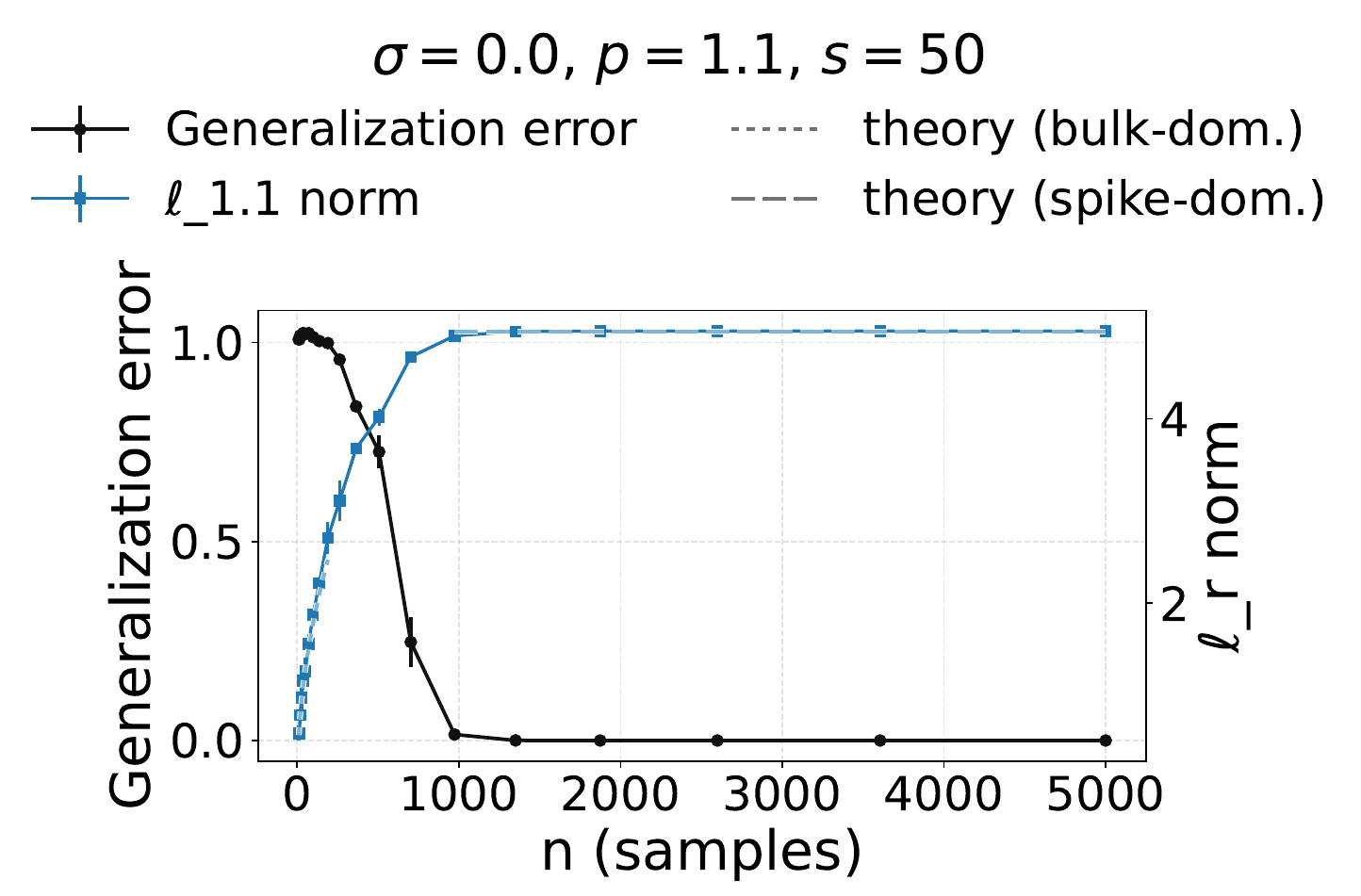}
  }\hfill
  \subfigure[$p=1.5$]{
    \includegraphics[width=0.31\textwidth]{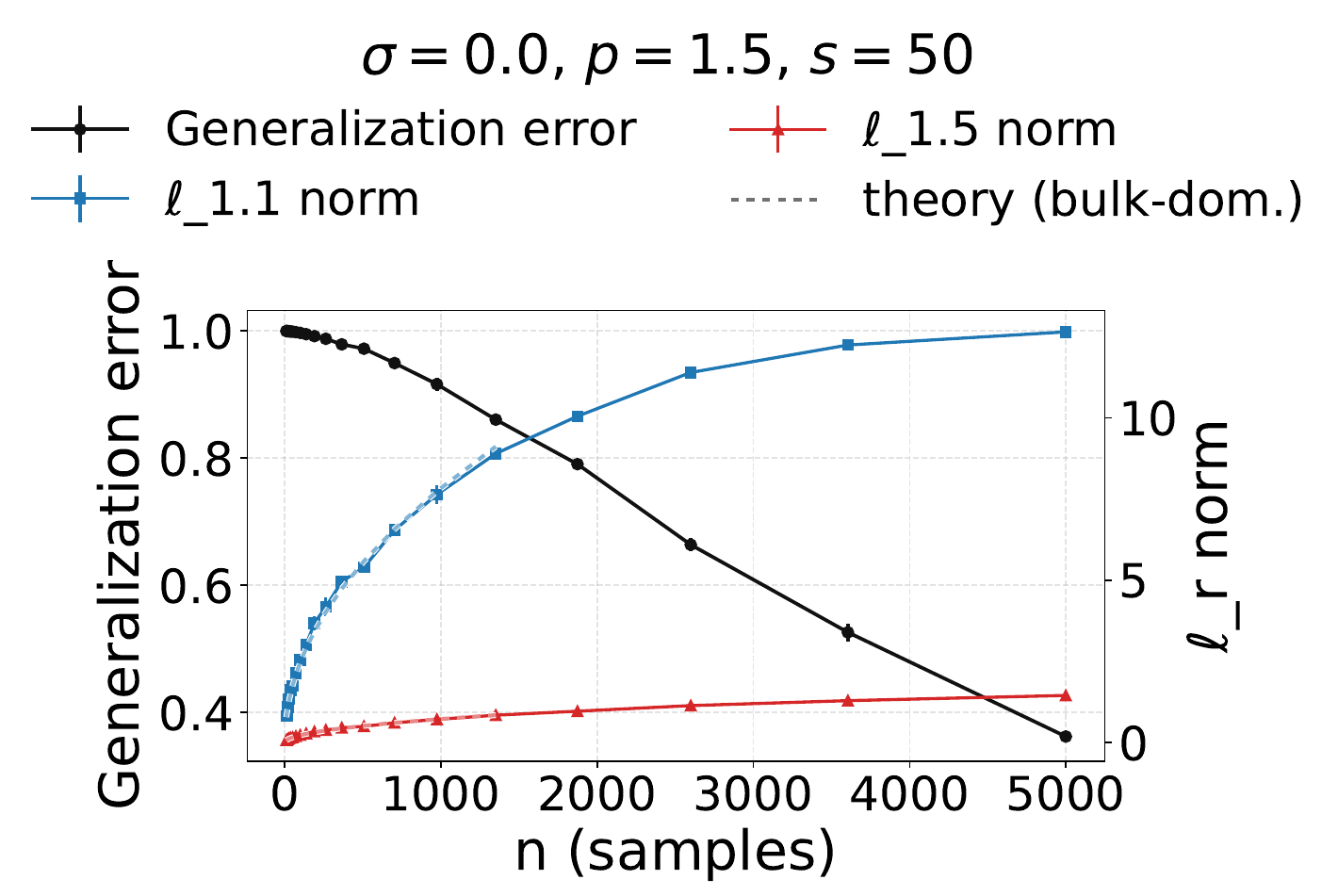}
  }\hfill
  \subfigure[$p=1.9$ (dense‑leaning)]{
    \includegraphics[width=0.31\textwidth]{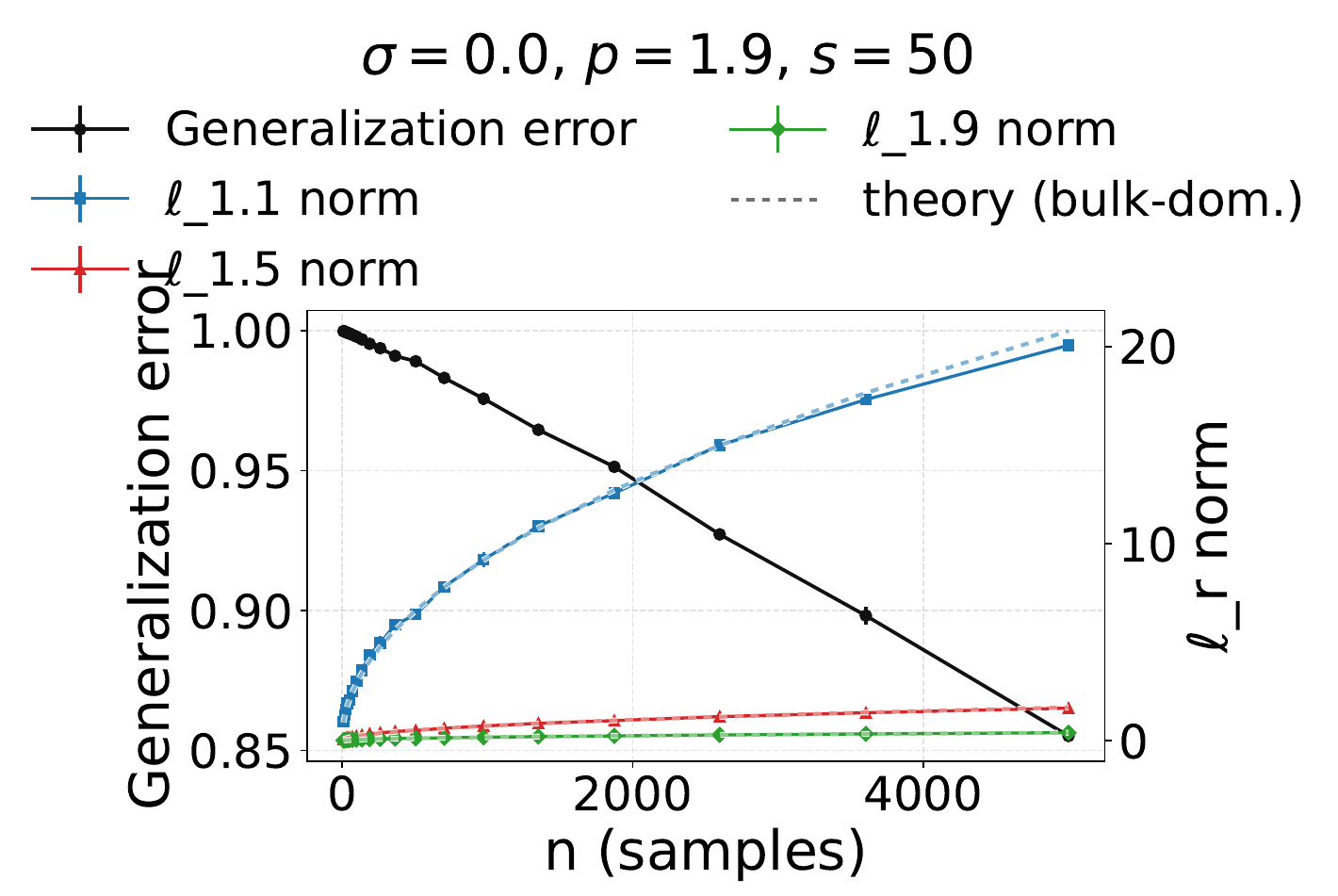}
  }
  \vspace{-0.3em}
  \caption{\textbf{Flat $w^\star$ ($s=50$); explicit minimum‑$\ell_p$ interpolation ($\sigma=0$).}
  Same slope/plateau rules as Corollary~\ref{cor:flat}, with a reduced transition scale and lower absolute $\ell_r$ levels compared to $\sigma{=}0.1$.}
  \label{fig:flat-linreg-sig0}
\end{figure*}

\begin{figure*}[t]
  \centering
  \subfigure[$\alpha=0.00102$, $\mathrm{lr}=0.001$, $s{=}1$]{
    \includegraphics[width=0.31\textwidth]{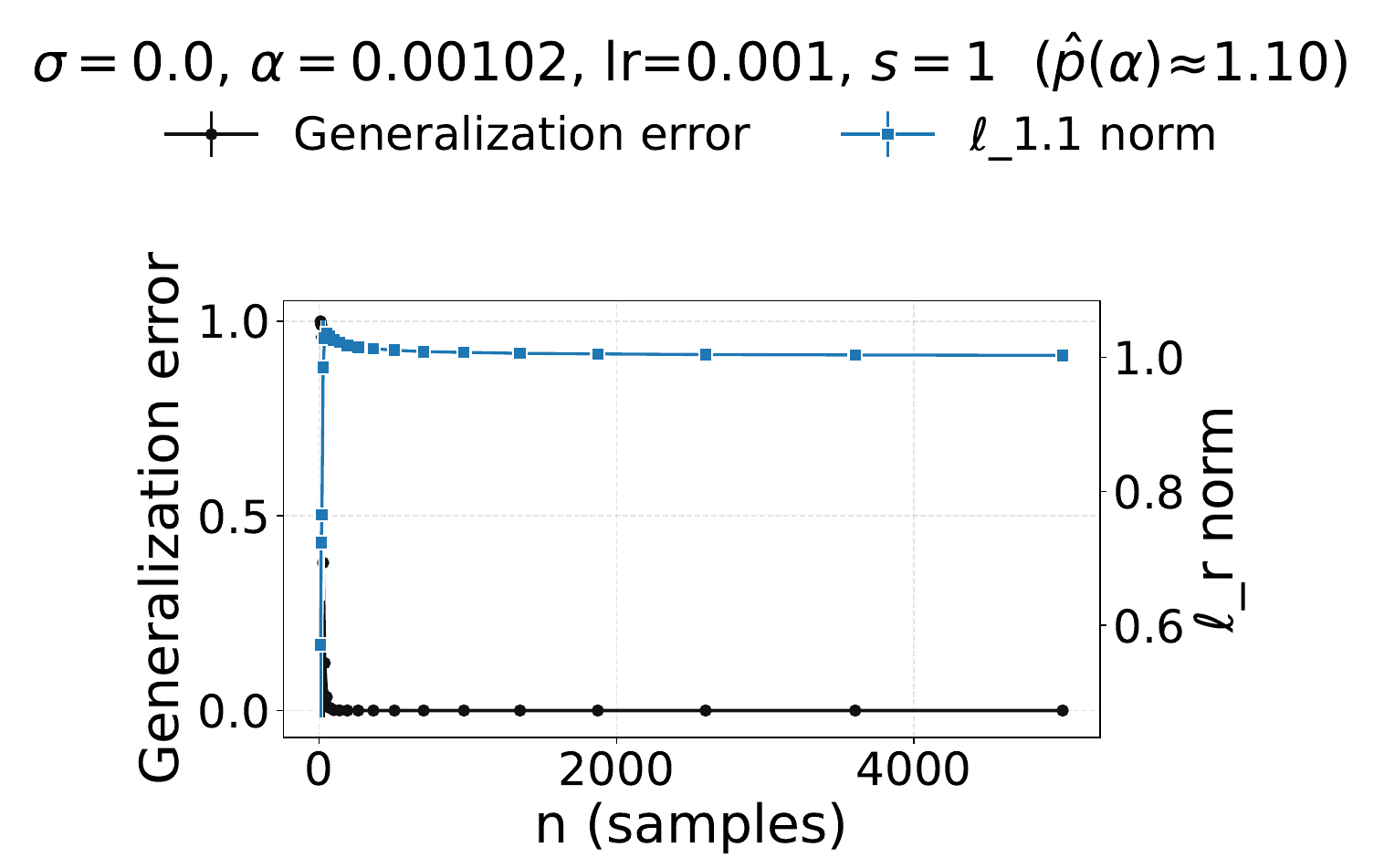}
  }\hfill
  \subfigure[$\alpha=0.0664$, $\mathrm{lr}=0.001$, $s{=}1$]{
    \includegraphics[width=0.31\textwidth]{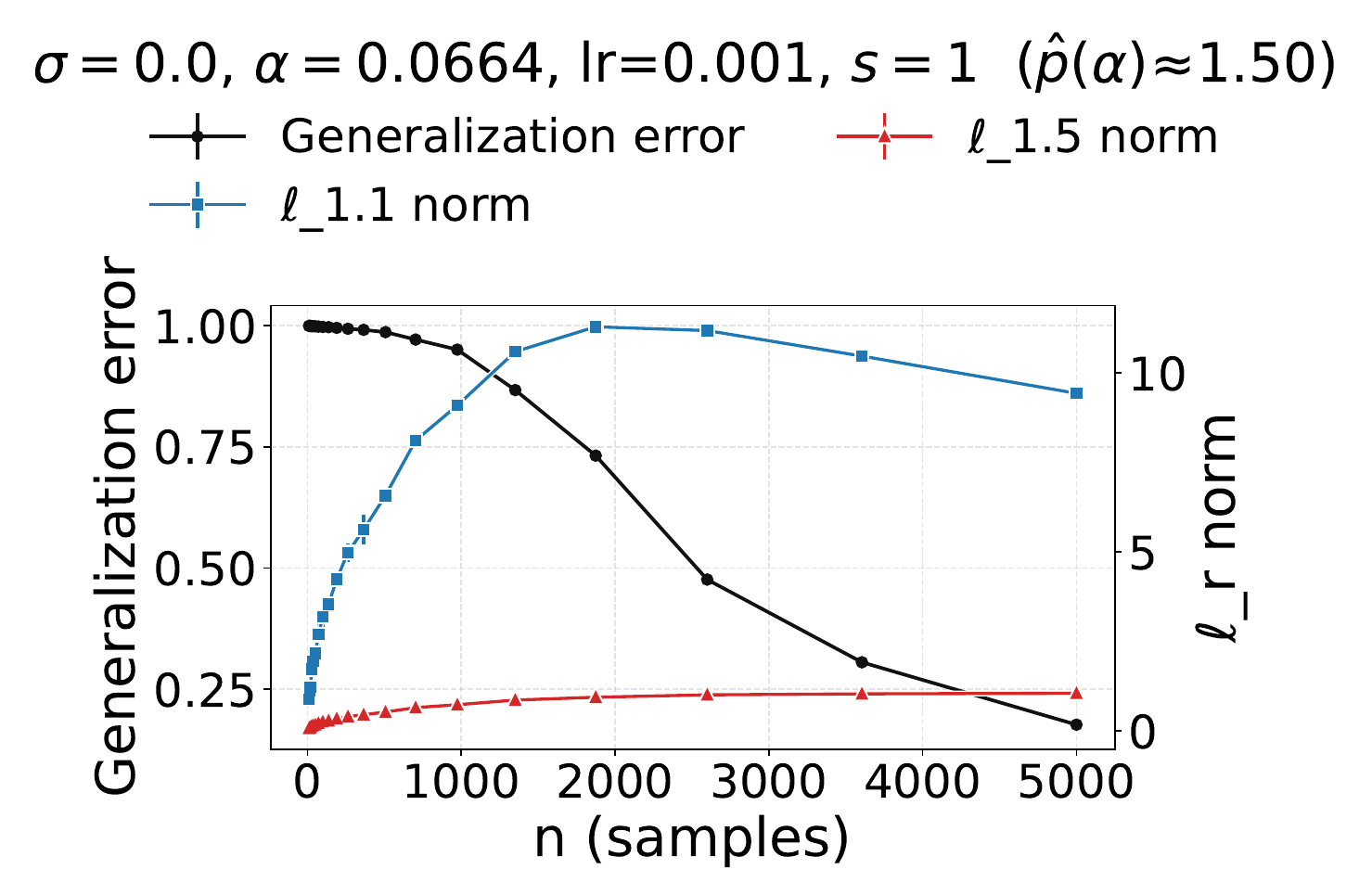}
  }\hfill
  \subfigure[$\alpha=0.229$, $\mathrm{lr}=0.001$, $s{=}1$]{
    \includegraphics[width=0.31\textwidth]{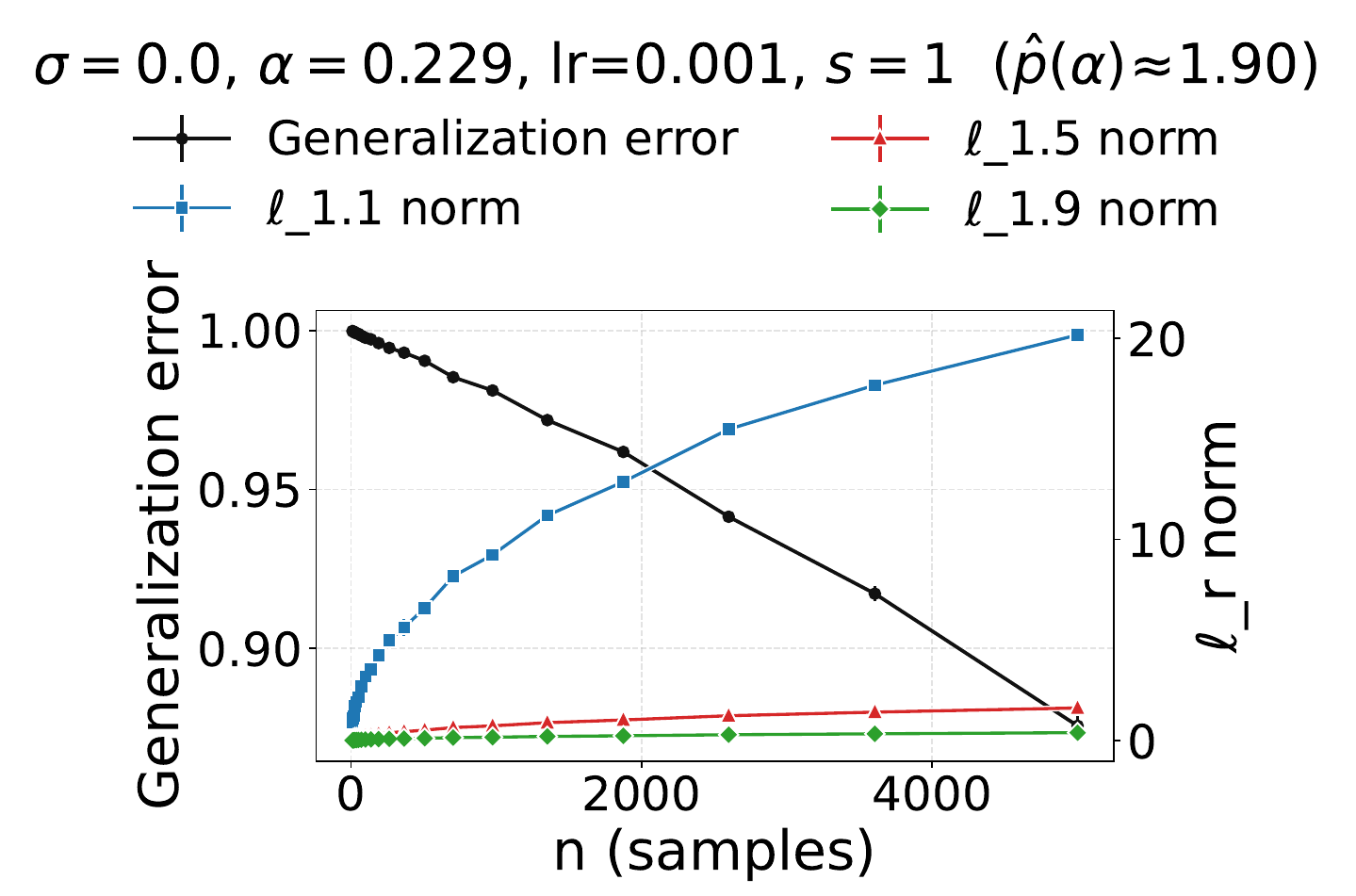}
  }
  \vspace{-0.3em}
  \caption{\textbf{Single spike $w^\star=e_1$; DLN ($\sigma=0$).}
  With $\alpha$ calibrated to $p_{\mathrm{eff}}(\alpha)$, the regime structure mirrors the explicit $p$ case: smaller $p_{\mathrm{eff}}$ exhibits earlier spike dominance and plateaus for $r>2(p{-}1)$; larger $p_{\mathrm{eff}}$ stays bulk‑dominated longer.}
  \label{fig:e1-dln-sig0}
\end{figure*}

\begin{figure*}[t]
  \centering
  \subfigure[$\alpha=0.00102$, $\mathrm{lr}=0.001$, $s{=}50$]{
    \includegraphics[width=0.31\textwidth]{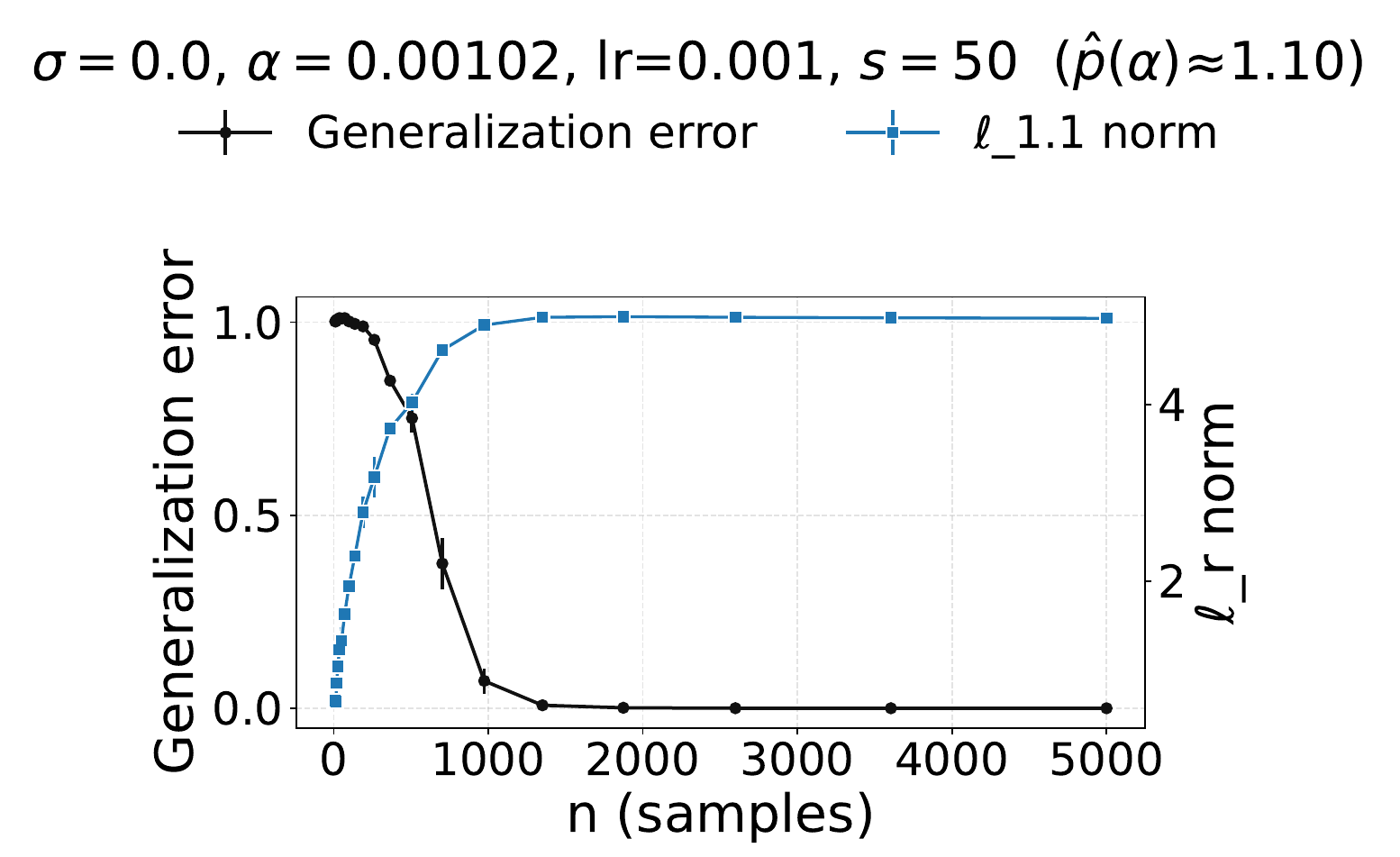}
  }\hfill
  \subfigure[$\alpha=0.0664$, $\mathrm{lr}=0.001$, $s{=}50$]{
    \includegraphics[width=0.31\textwidth]{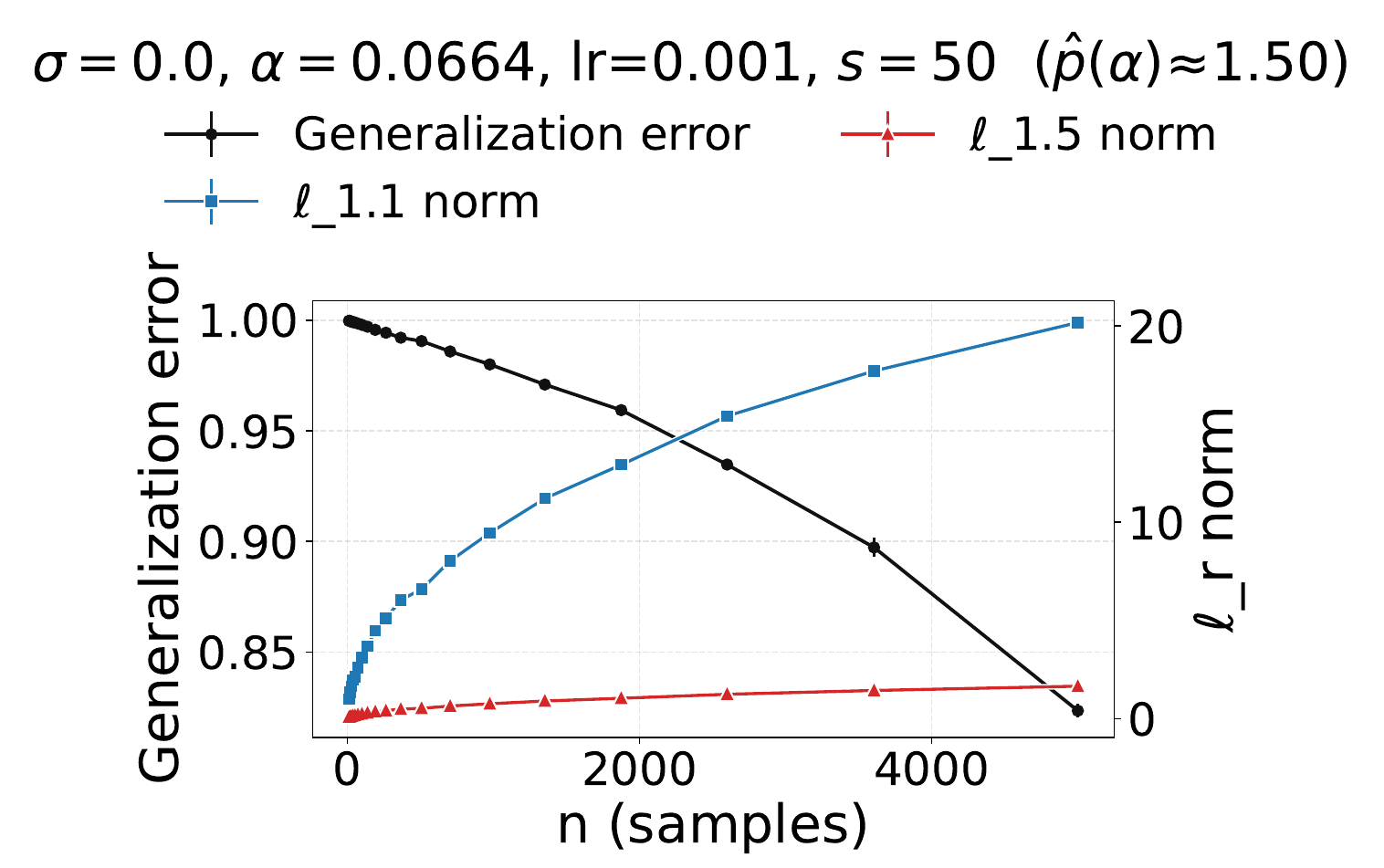}
  }\hfill
  \subfigure[$\alpha=0.229$, $\mathrm{lr}=0.001$, $s{=}50$]{
    \includegraphics[width=0.31\textwidth]{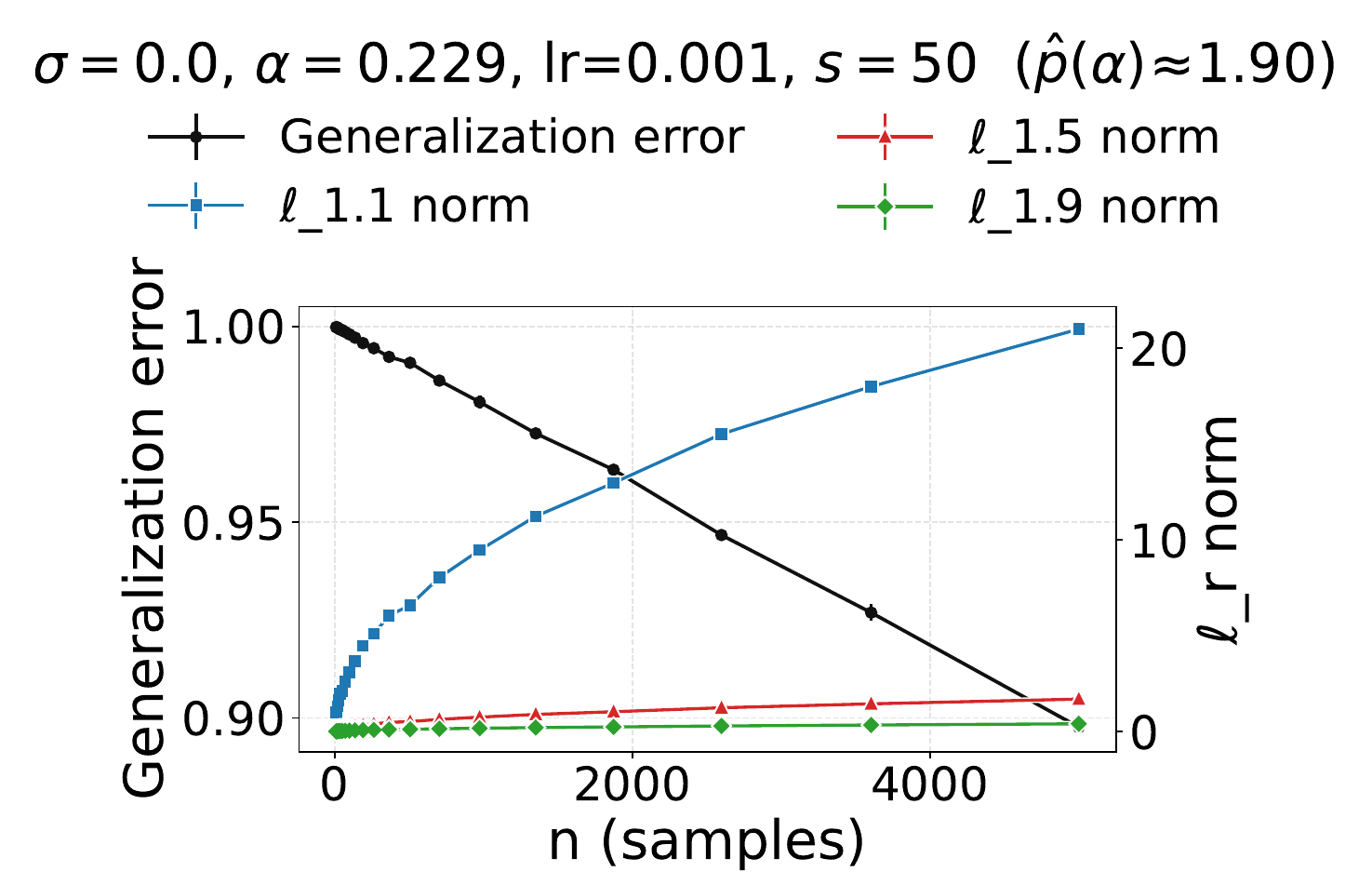}
  }
  \vspace{-0.3em}
  \caption{\textbf{Flat $w^\star$ ($s=50$); DLN ($\sigma=0$).}
  The elbow shifts with support size as in the flat‑support corollary; plateaus for $r>2(p{-}1)$ occur earlier and at lower levels than at $\sigma{=}0.1$, while bulk‑side $n^{1/2}$ growth persists where predicted.}
  \label{fig:flat-dln-sig0}
\end{figure*}

\begin{figure*}[t]
  \centering
  \subfigure[$p=1.1$ (sparsity‑leaning)]{
    \includegraphics[width=0.31\textwidth]{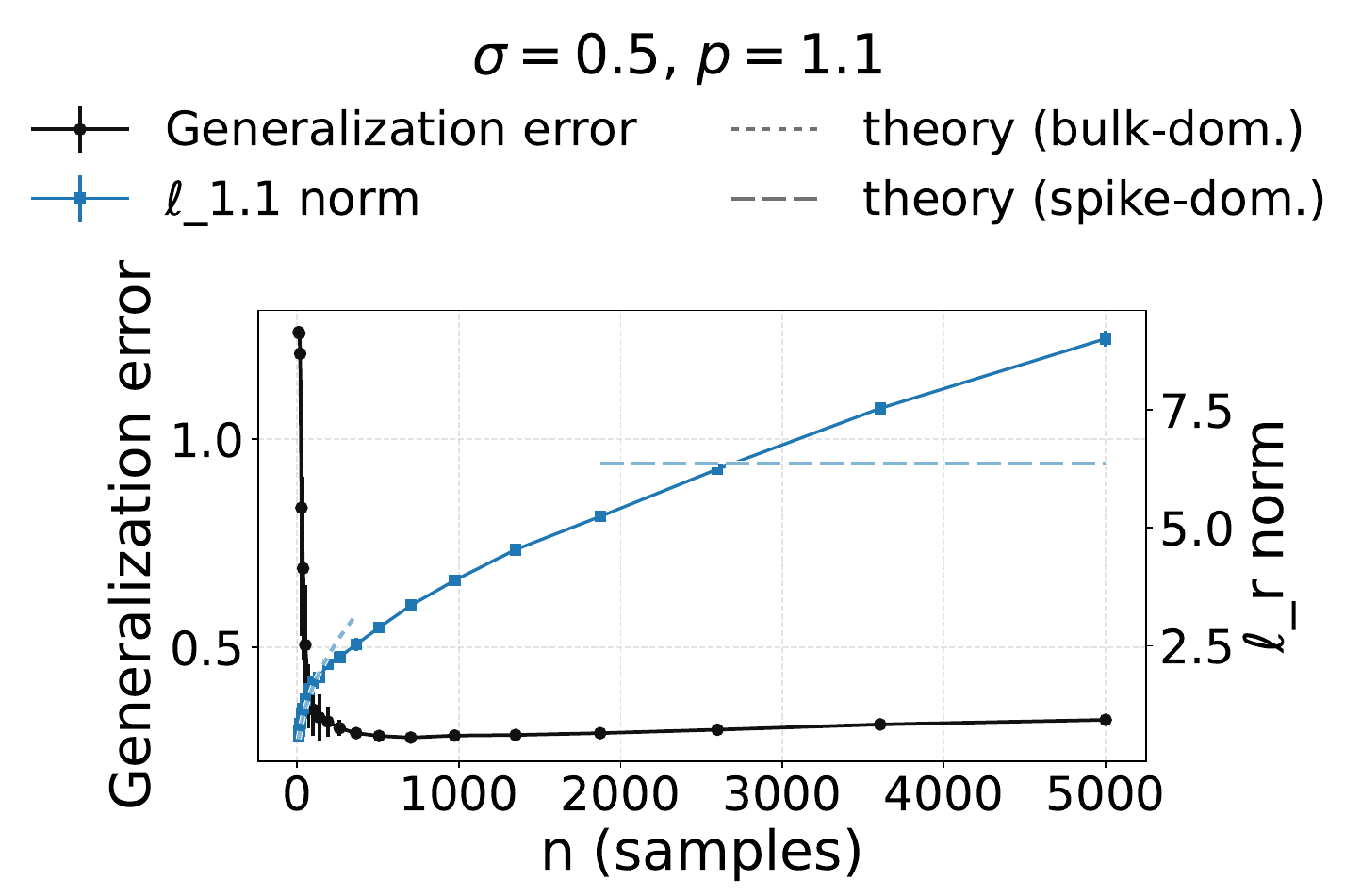}
  }\hfill
  \subfigure[$p=1.5$]{
    \includegraphics[width=0.31\textwidth]{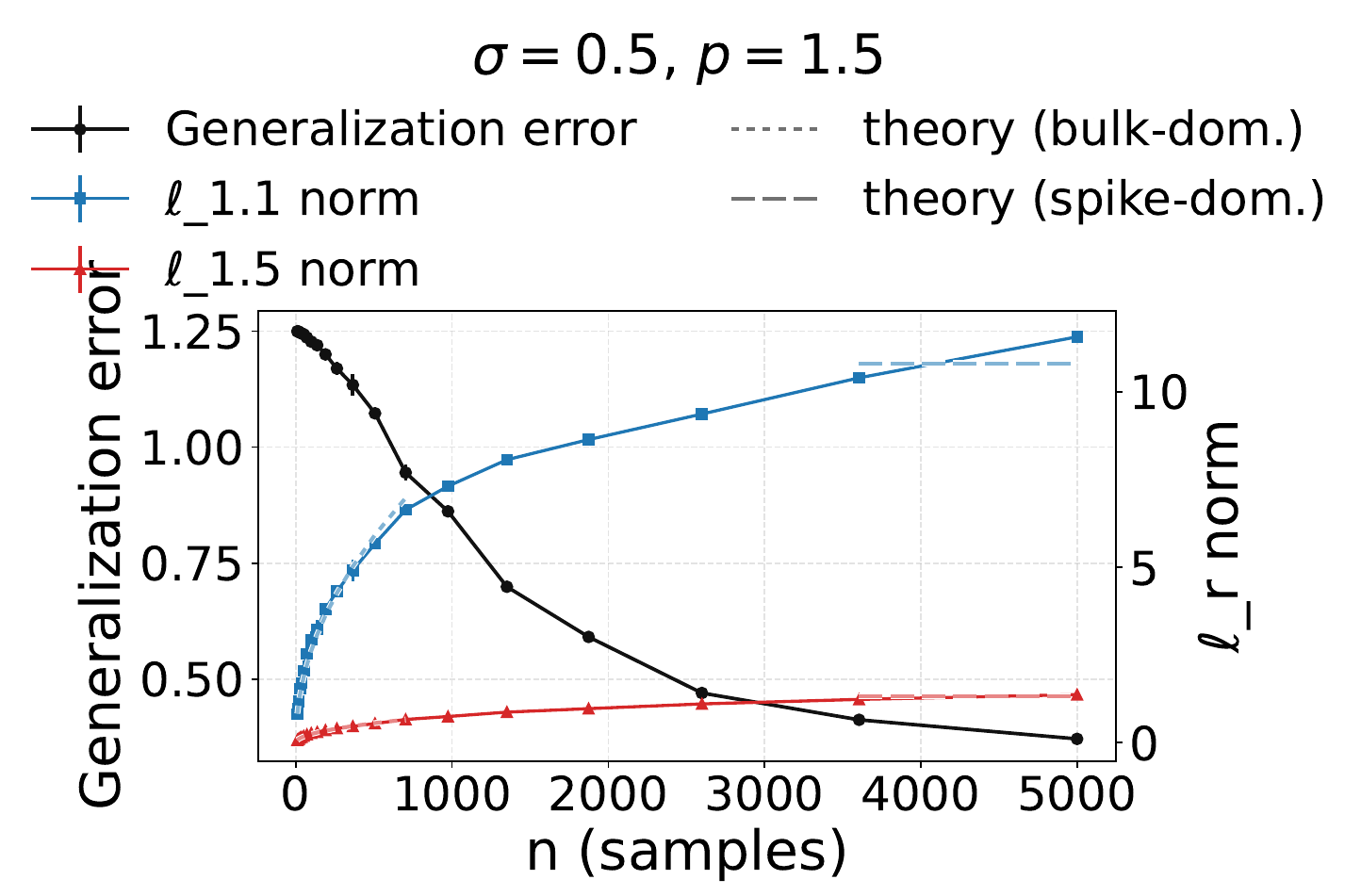}
  }\hfill
  \subfigure[$p=1.9$ (dense‑leaning)]{
    \includegraphics[width=0.31\textwidth]{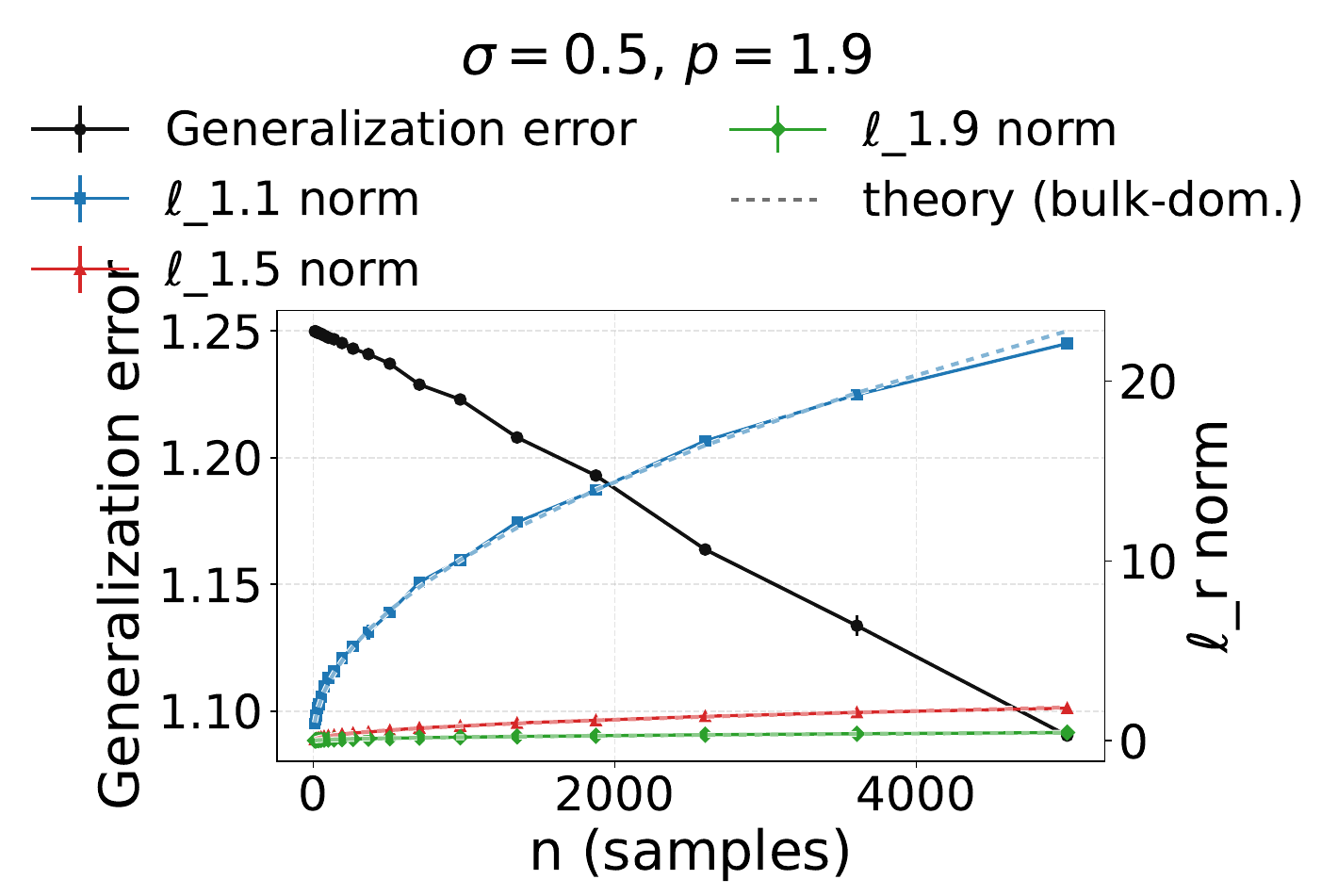}
  }
  \vspace{-0.3em}
  \caption{\textbf{Single spike $w^\star=e_1$; explicit minimum‑$\ell_p$ interpolation ($\sigma=0.5$).}
  Larger $\tau$ increases both $n_\star$ and plateau heights relative to $\sigma{=}0.1$. Bulk‑dominated panels retain the $n^{1/2}$ trend; $r>2(p{-}1)$ traces flatten only after the later transition, in line with \eqref{eq:SD}-\eqref{eq:BD}.}
  \label{fig:e1-linreg-sig05}
\end{figure*}

\begin{figure*}[t]
  \centering
  \subfigure[$p=1.1$ (sparsity‑leaning)]{
    \includegraphics[width=0.31\textwidth]{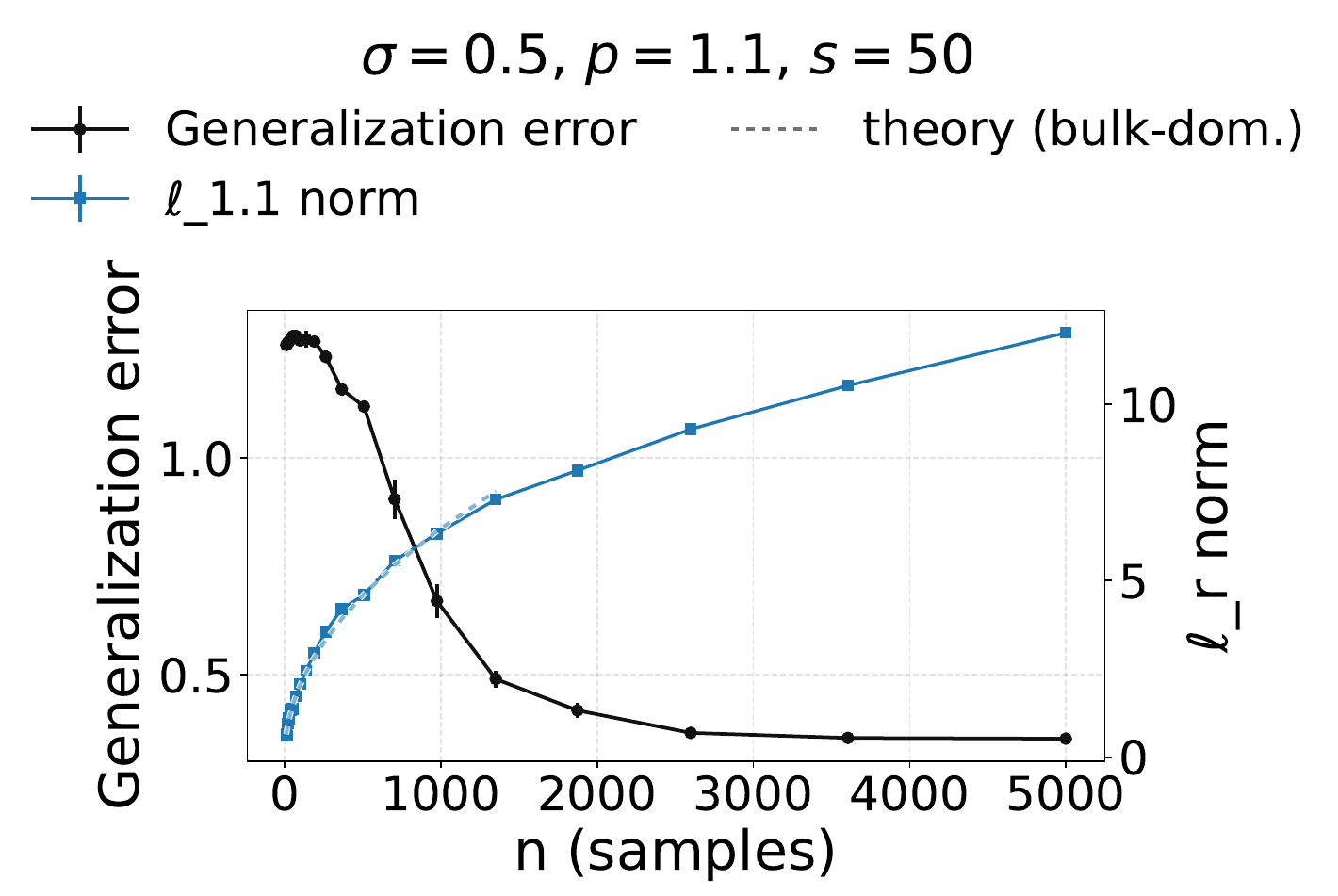}
  }\hfill
  \subfigure[$p=1.5$]{
    \includegraphics[width=0.31\textwidth]{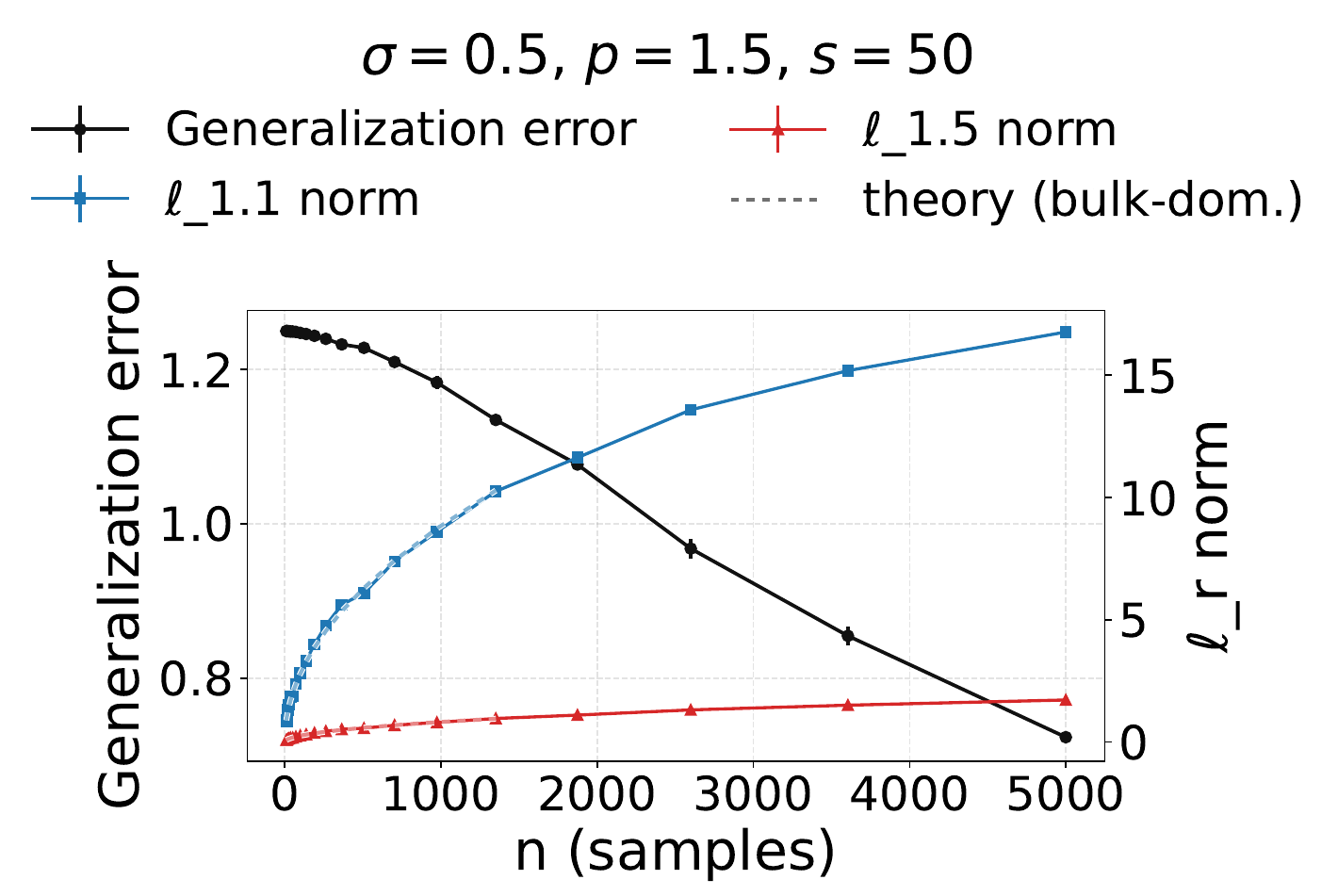}
  }\hfill
  \subfigure[$p=1.9$ (dense‑leaning)]{
    \includegraphics[width=0.31\textwidth]{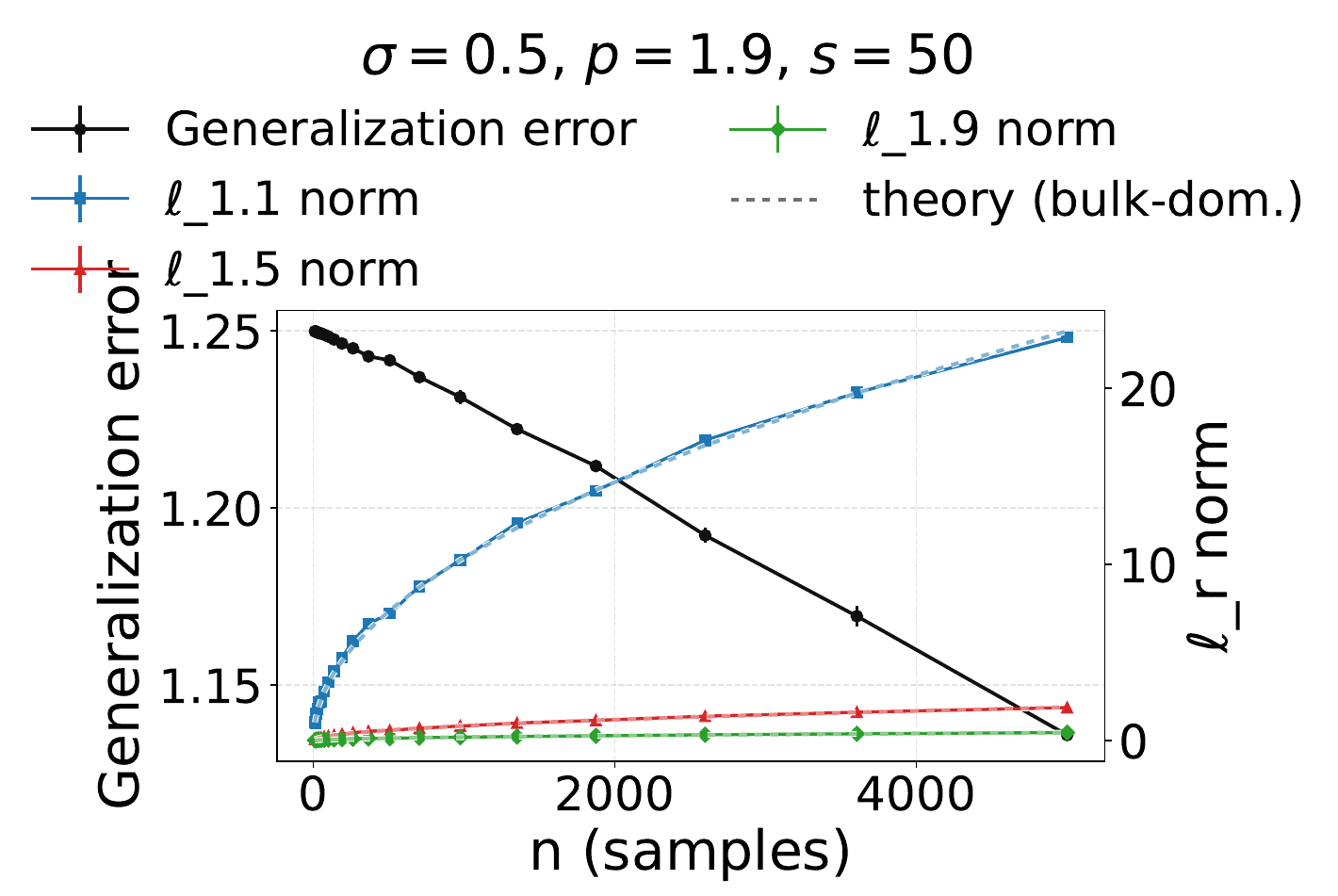}
  }
  \vspace{-0.3em}
  \caption{\textbf{Flat $w^\star$ ($s=50$); explicit minimum‑$\ell_p$ interpolation ($\sigma=0.5$).}
  The same slope/plateau rules apply, but both the elbow and plateau heights shift upward with $\sigma$ via $\tau_s$ and \eqref{eq:nstar}.}
  \label{fig:flat-linreg-sig05}
\end{figure*}

\begin{figure*}[t]
  \centering
  \subfigure[$\alpha=0.00102$, $\mathrm{lr}=0.1$, $s{=}1$]{
    \includegraphics[width=0.31\textwidth]{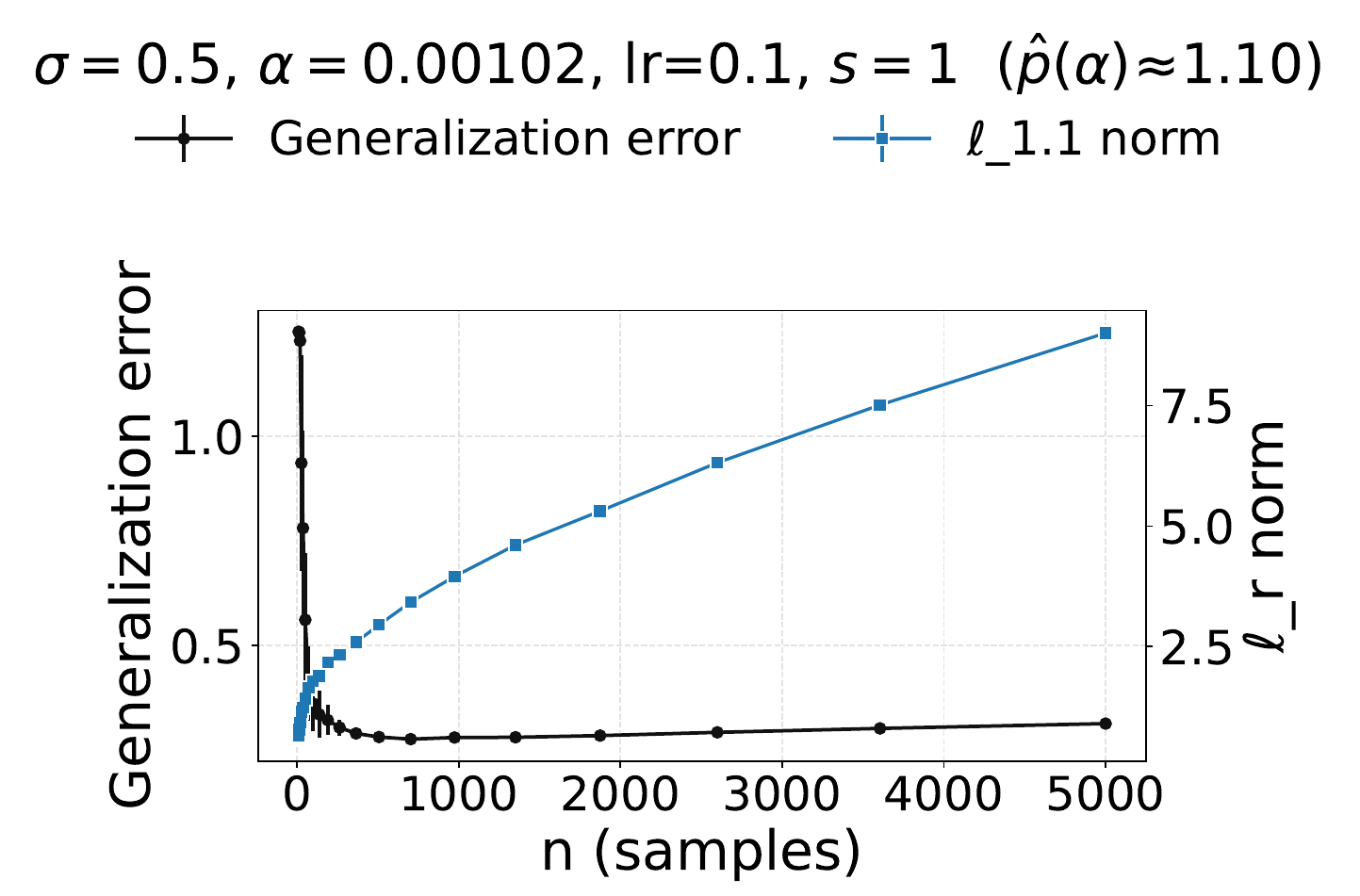}
  }\hfill
  \subfigure[$\alpha=0.0664$, $\mathrm{lr}=0.001$, $s{=}1$]{
    \includegraphics[width=0.31\textwidth]{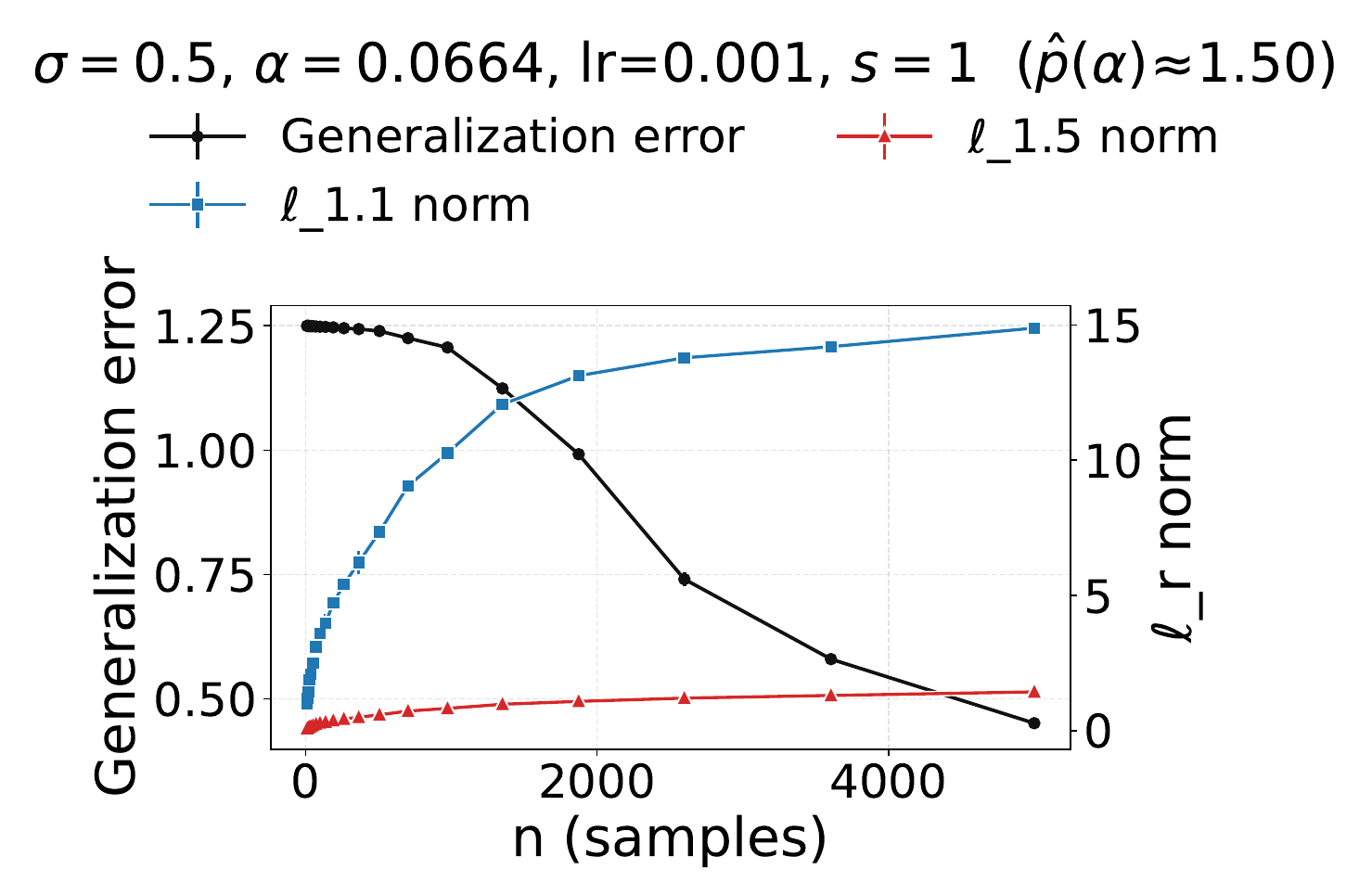}
  }\hfill
  \subfigure[$\alpha=0.229$, $\mathrm{lr}=0.001$, $s{=}1$]{
    \includegraphics[width=0.31\textwidth]{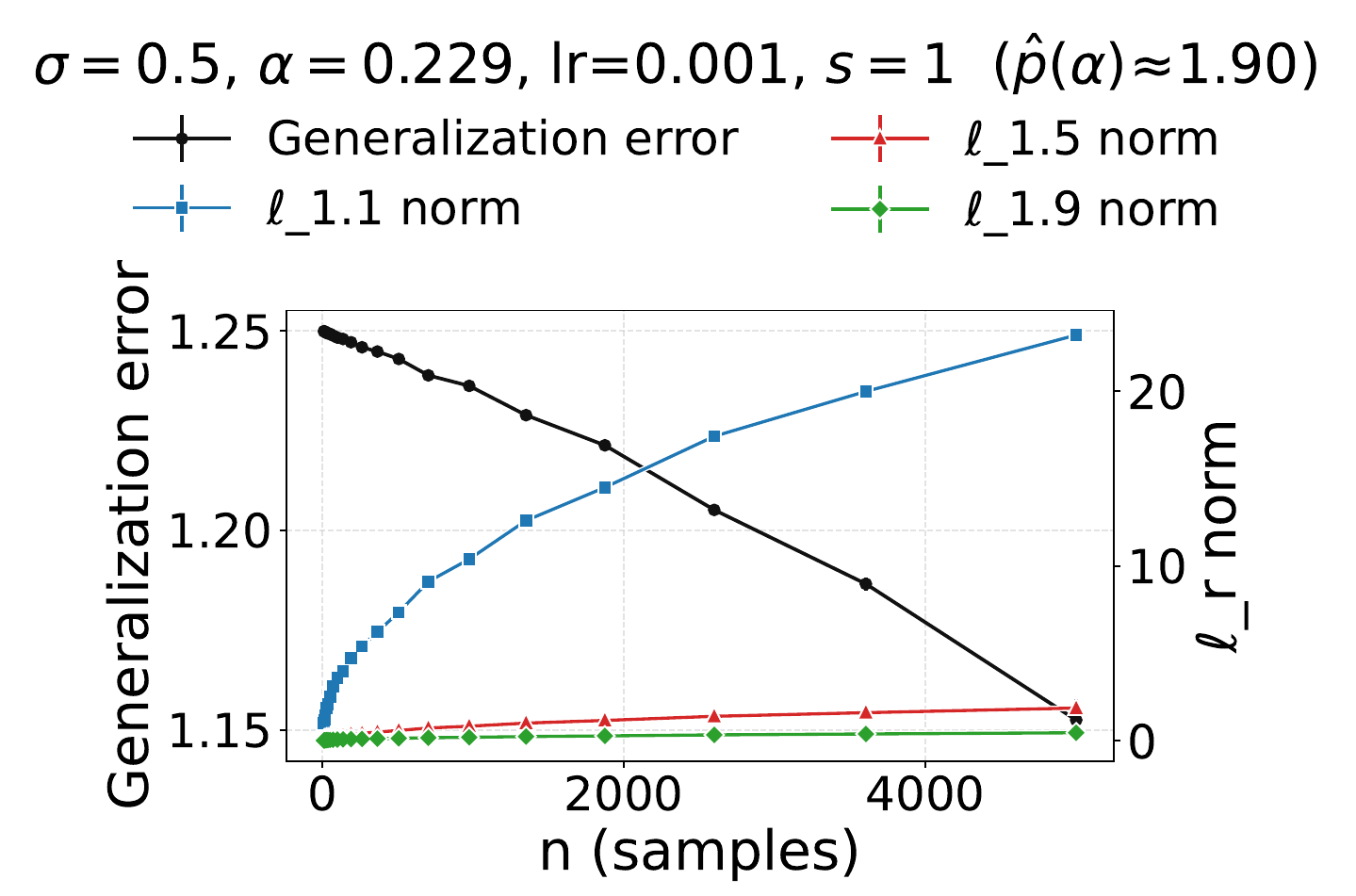}
  }
  \vspace{-0.3em}
  \caption{\textbf{Single spike $w^\star=e_1$; DLN ($\sigma=0.5$).}
  After calibrating $\alpha\!\mapsto\!p_{\mathrm{eff}}$, bulk growth persists to larger $n$ (larger $n_\star$), and spike‑side plateaus for $r>2(p{-}1)$ emerge later and at higher levels.}
  \label{fig:e1-dln-sig05}
\end{figure*}

\begin{figure*}[t]
  \centering
  \subfigure[$\alpha=0.00102$, $\mathrm{lr}=0.1$, $s{=}50$]{
    \includegraphics[width=0.31\textwidth]{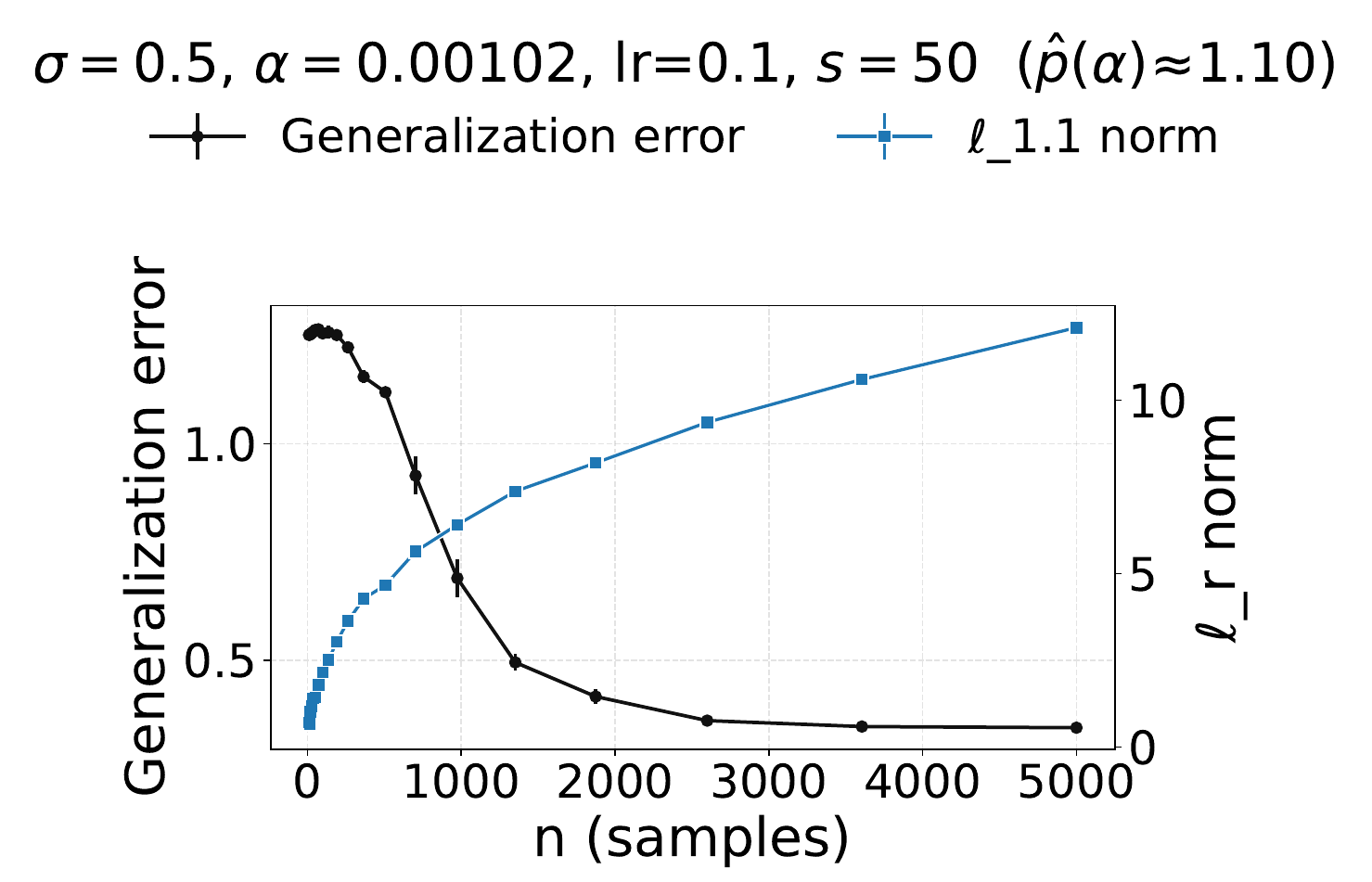}
  }\hfill
  \subfigure[$\alpha=0.0664$, $\mathrm{lr}=0.001$, $s{=}50$]{
    \includegraphics[width=0.31\textwidth]{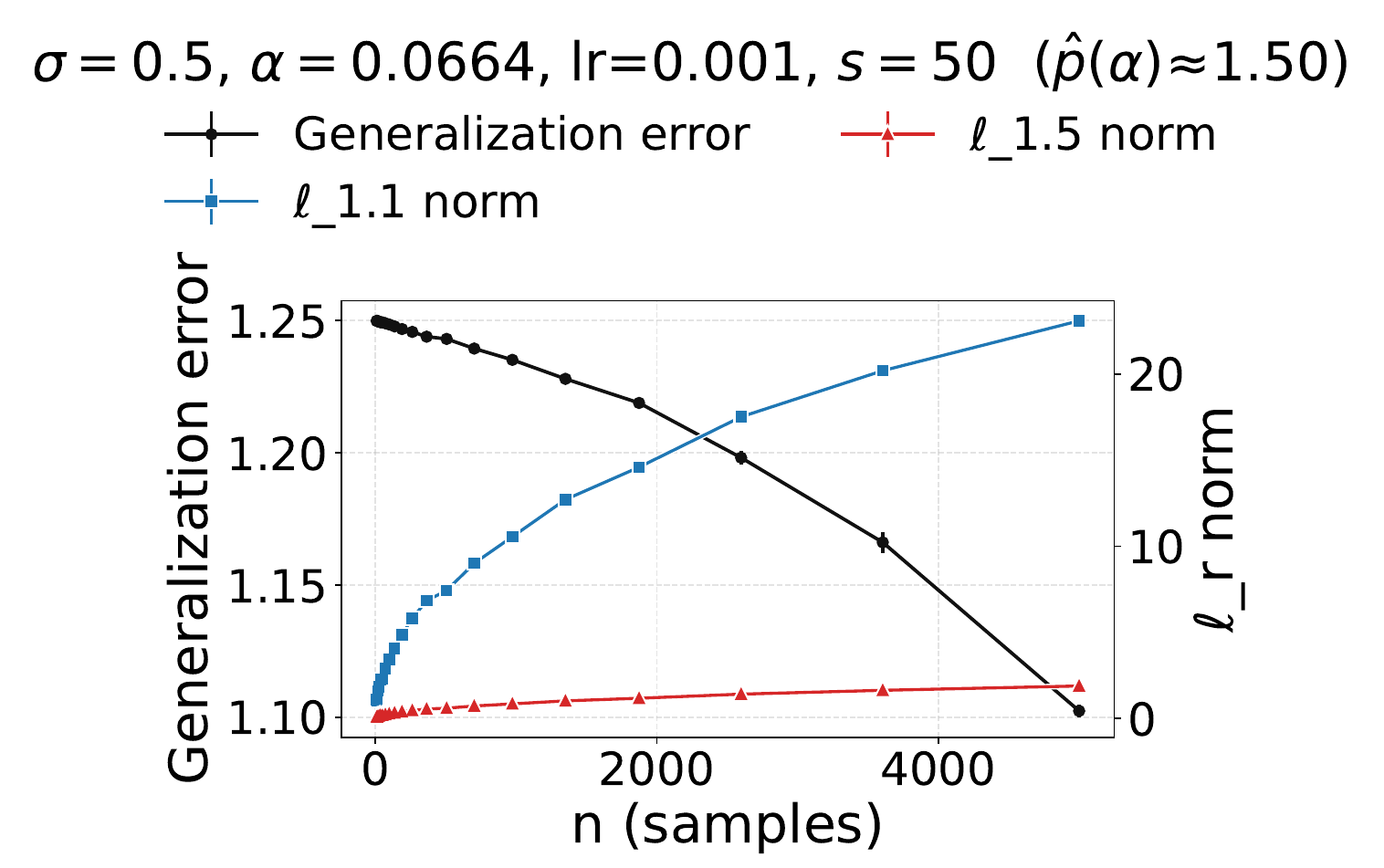}
  }\hfill
  \subfigure[$\alpha=0.229$, $\mathrm{lr}=0.001$, $s{=}50$]{
    \includegraphics[width=0.31\textwidth]{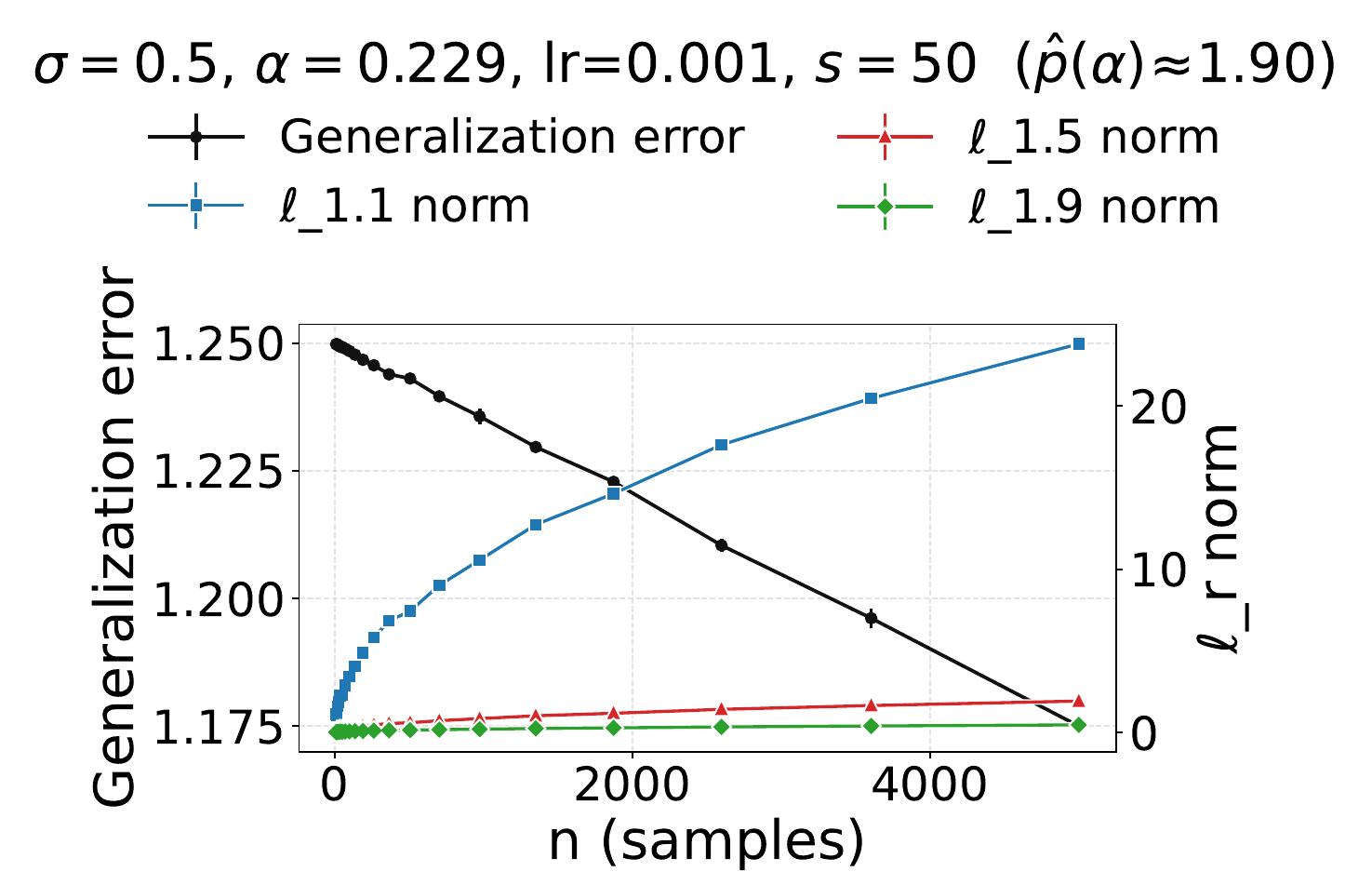}
  }
  \vspace{-0.3em}
  \caption{\textbf{Flat $w^\star$ ($s=50$); DLN ($\sigma=0.5$).}
  The $\sigma$‑driven increase in $\tau_s$ shifts $n_\star$ to larger $n$; otherwise the bulk vs.\ spike regime behavior matches the theory and the explicit $p$ experiments.}
  \label{fig:flat-dln-sig05}
\end{figure*}

\section{Finite learning rate effects}
\label{app:e1-finite-lr}

We consider the single-spike case $w^\star=e_1$ and a small shape parameter $\alpha=0.00102$ (so the calibrated $p_{\mathrm{eff}}(\alpha)\!\approx\!1.10$).
We vary the learning rate $\mathrm{lr}\in\{10^{-1},10^{-2},10^{-3}\}$ and the label-noise level $\sigma\in\{0,0.1,0.5\}$.
All panels plot generalization error (left axis) and $\ell_{1.1}$ norm (right axis) versus sample size $n$.

\paragraph{Observed effect.}
With \textbf{clean labels} ($\sigma=0$), the $\ell_{1.1}$ norm is essentially flat across $n$ and insensitive to $\mathrm{lr}$ (Fig.~\ref{fig:e1-lr-sig0}), consistent with a low-$p_{\mathrm{eff}}$ (sparse) implicit bias at small $\alpha$.
When \textbf{label noise is present} ($\sigma\in\{0.1,0.5\}$), increasing the learning rate makes $\ell_{1.1}$ \emph{increase with $n$} (Figs.~\ref{fig:e1-lr-sig01}, \ref{fig:e1-lr-sig05}); the transition point (the “elbow’’) beyond which the norm would plateau shifts to larger $n$ as $\mathrm{lr}$ grows.
Within the accessible sample sizes this rightward shift makes the curve look bulk‑dominated and rising---\emph{as if} the effective exponent $p_{\mathrm{eff}}$ were larger.

\paragraph{Why this happens.}
Finite step size together with label/gradient noise injects additional stochasticity into the discrete dynamics.
A useful approximation views (stochastic) gradient descent as a Langevin‑type process with an \emph{effective temperature} controlled by the learning rate and the noise level; this broadens the stationary distribution and leads to wider, less sparse solutions \citep{MandtHoffmanBlei2017,SmithLe2018,Yaida2018,JastrzebskiEtAl2017}.
For a single-spike target, that diffusion leaks mass into off‑signal coordinates during early training, nudging the geometry away from ``$\ell_1$‑like’’ toward a higher‑$p$ regime and delaying when the spike dominates---hence the elbow shifts right.
With \textbf{clean labels}, the gradient remains aligned with the spike and the small‑step implicit bias toward path/diagonal‑norm solutions is recovered \citep{neyshabur2015pathsgd,gunasekar2018implicit}.
The same qualitative phenomenon also appears for the denser case $s{=}50$ with a smaller magnitude.

\begin{figure*}[t]
  \centering
  \subfigure[$\sigma=0$, $\alpha=0.00102$, $\mathrm{lr}=0.1$, $s{=}1$]{
    \includegraphics[width=0.31\textwidth]{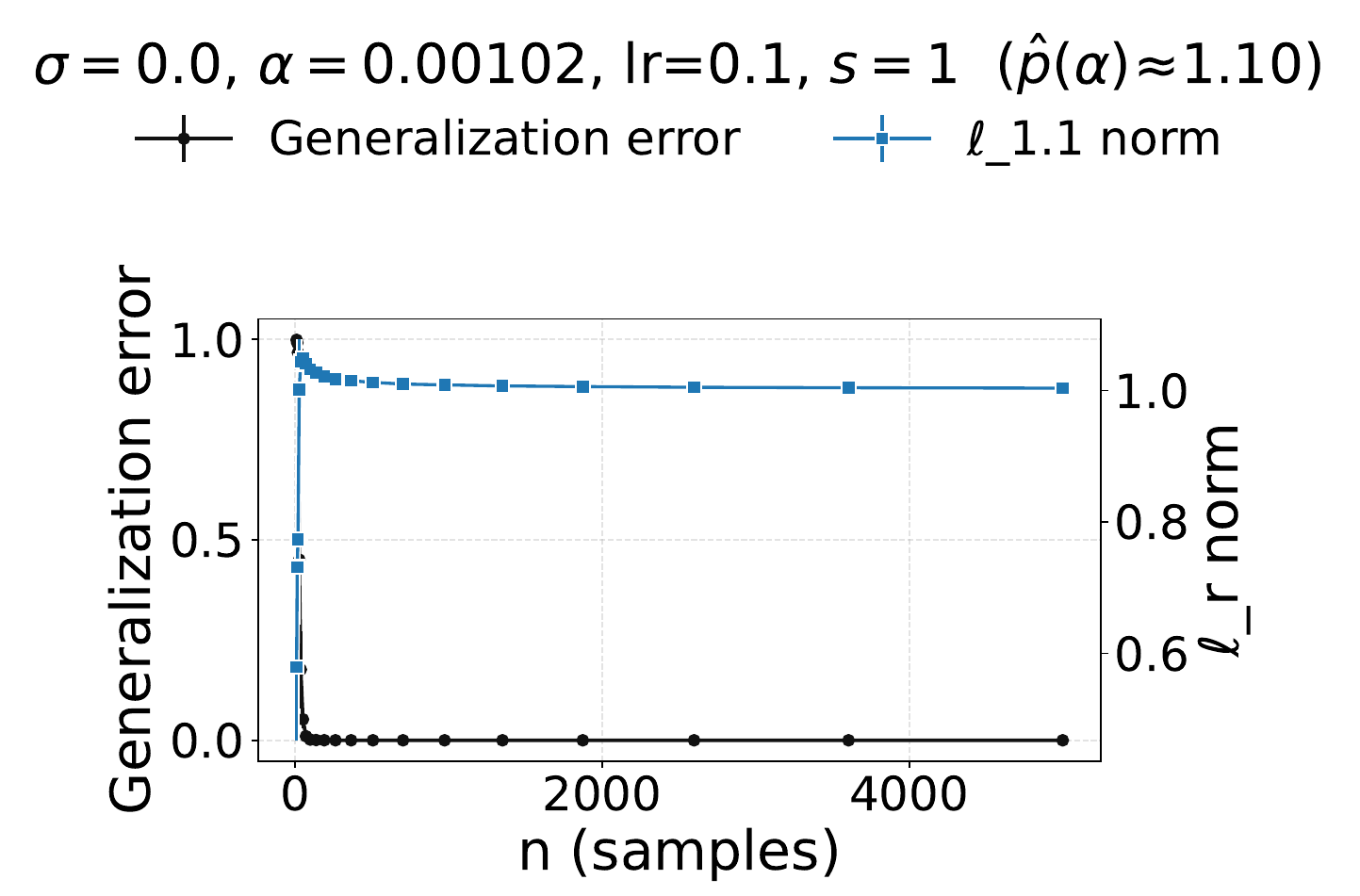}
  }\hfill
  \subfigure[$\sigma=0$, $\alpha=0.00102$, $\mathrm{lr}=0.01$, $s{=}1$]{
    \includegraphics[width=0.31\textwidth]{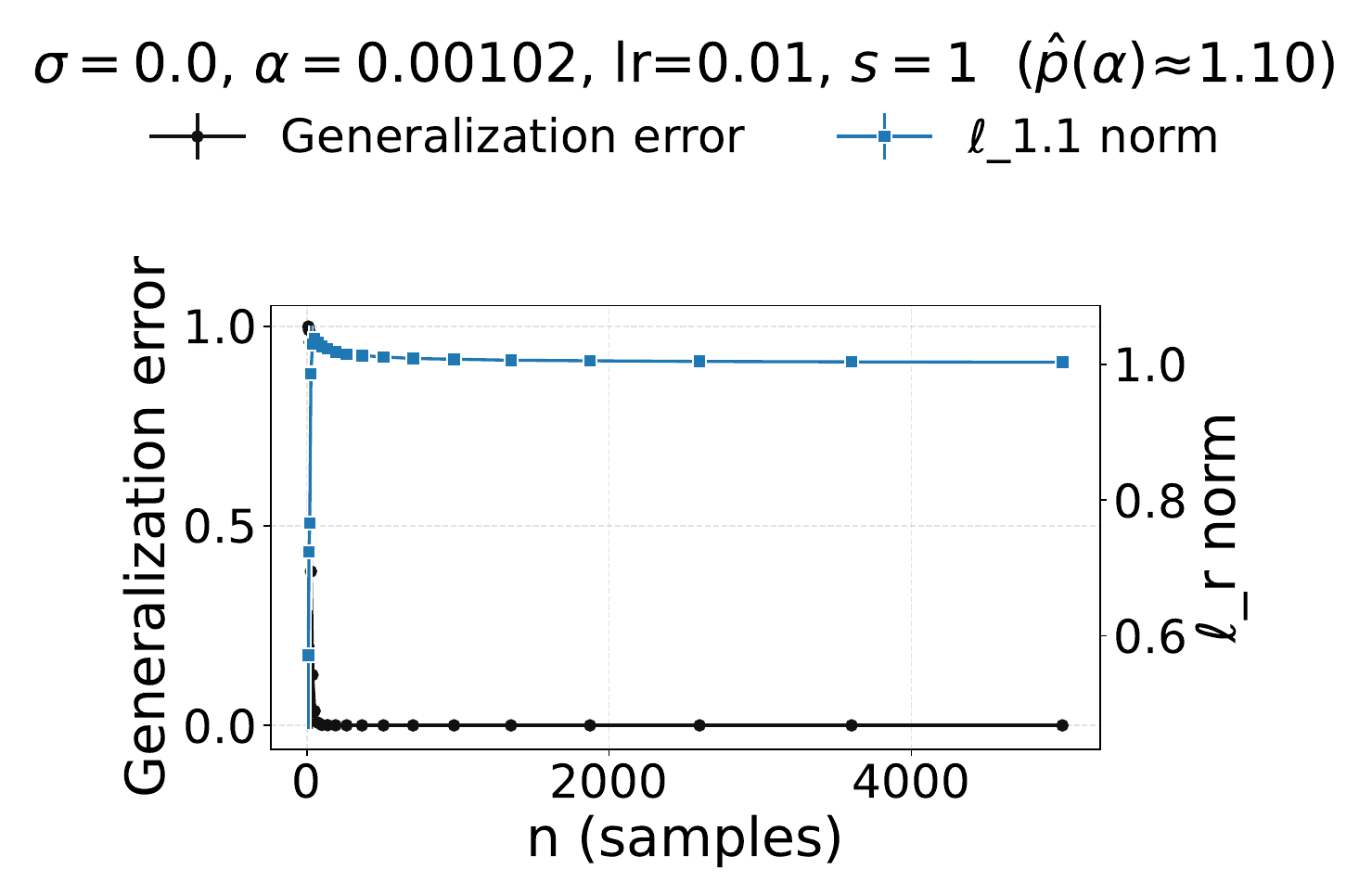}
  }\hfill
  \subfigure[$\sigma=0$, $\alpha=0.00102$, $\mathrm{lr}=0.001$, $s{=}1$]{
    \includegraphics[width=0.31\textwidth]{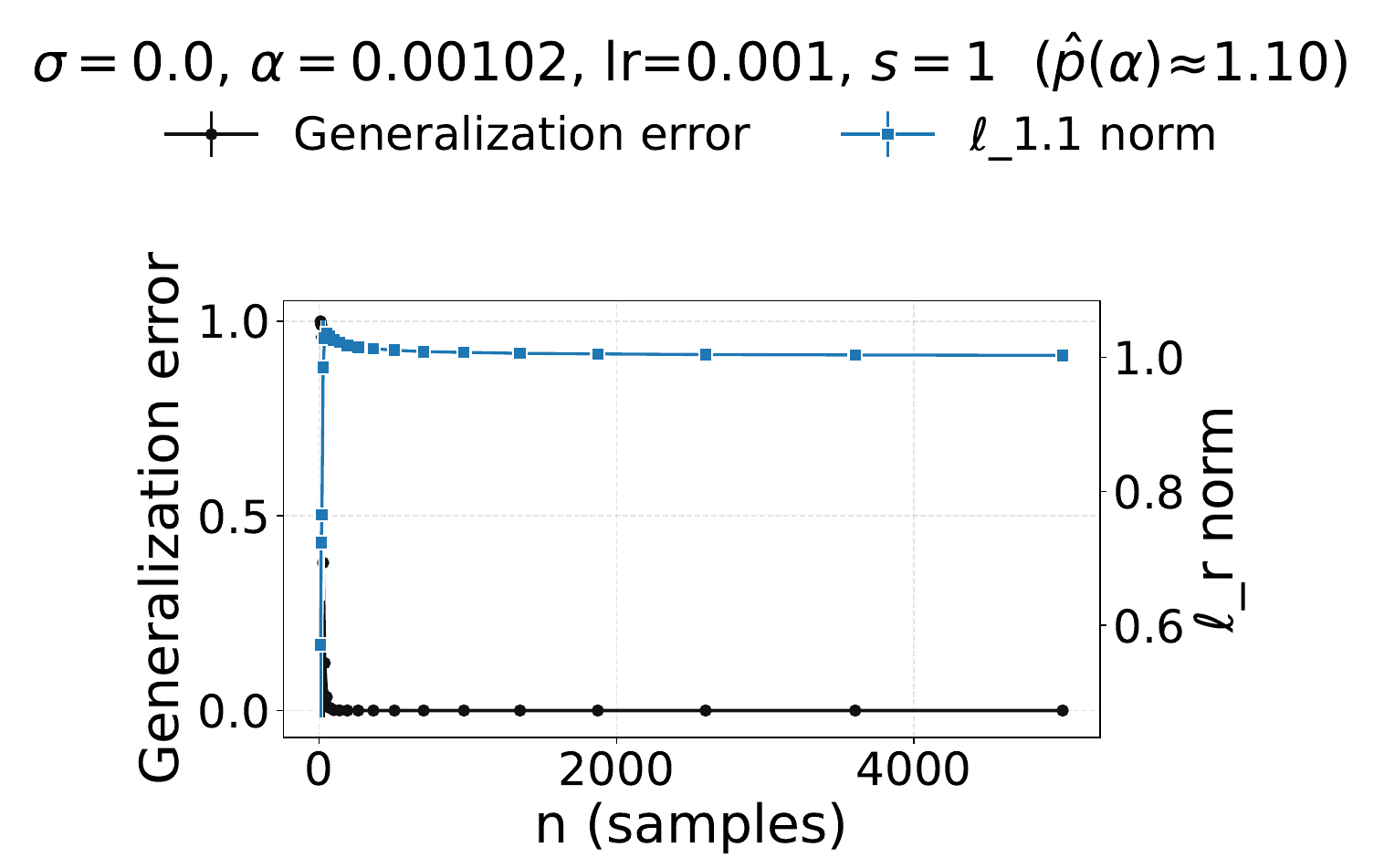}
  }
  \vspace{-0.3em}
  \caption{\textbf{$w^\star=e_1$ (sparsity $s{=}1$), clean labels.}
  $\ell_{1.1}$ rapidly plateaus and is insensitive to learning rate, consistent with a low-$p_{\mathrm{eff}}$ implicit bias at small $\alpha$.}
  \label{fig:e1-lr-sig0}
\end{figure*}

\begin{figure*}[t]
  \centering
  \subfigure[$\sigma=0.1$, $\alpha=0.00102$, $\mathrm{lr}=0.1$, $s{=}1$]{
    \includegraphics[width=0.31\textwidth]{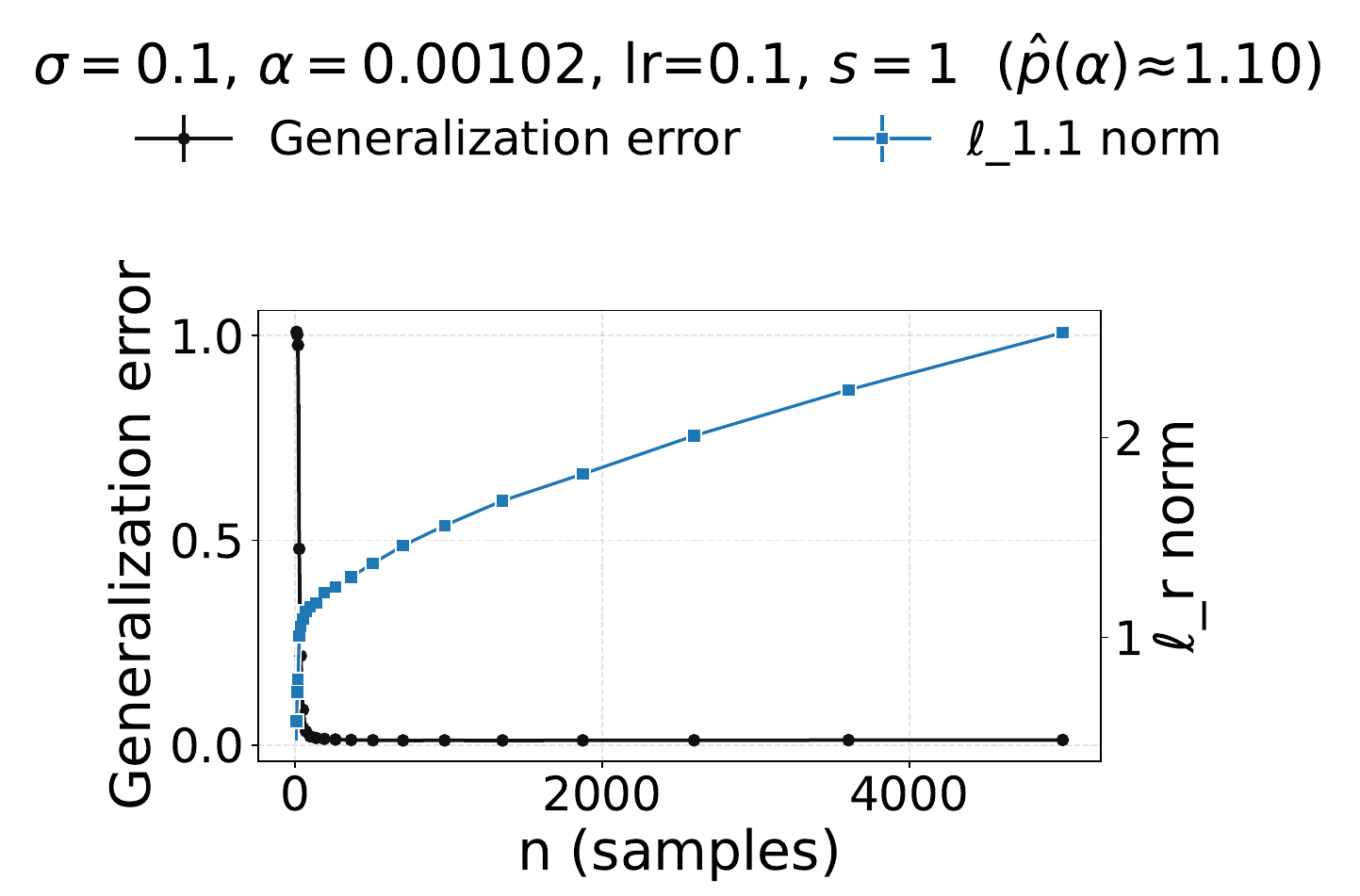}
  }\hfill
  \subfigure[$\sigma=0.1$, $\alpha=0.00102$, $\mathrm{lr}=0.01$, $s{=}1$]{
    \includegraphics[width=0.31\textwidth]{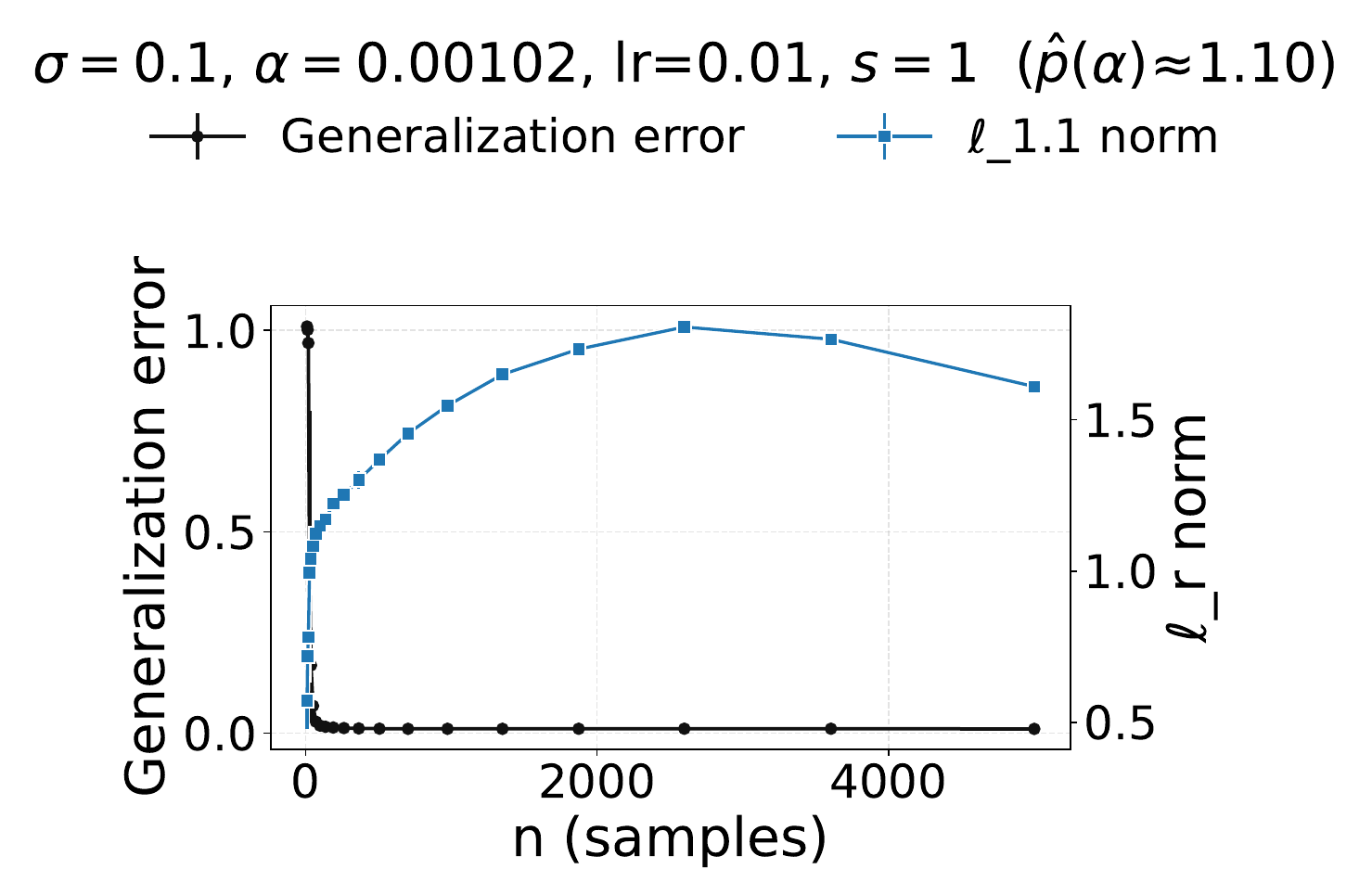}
  }\hfill
  \subfigure[$\sigma=0.1$, $\alpha=0.00102$, $\mathrm{lr}=0.001$, $s{=}1$]{
    \includegraphics[width=0.31\textwidth]{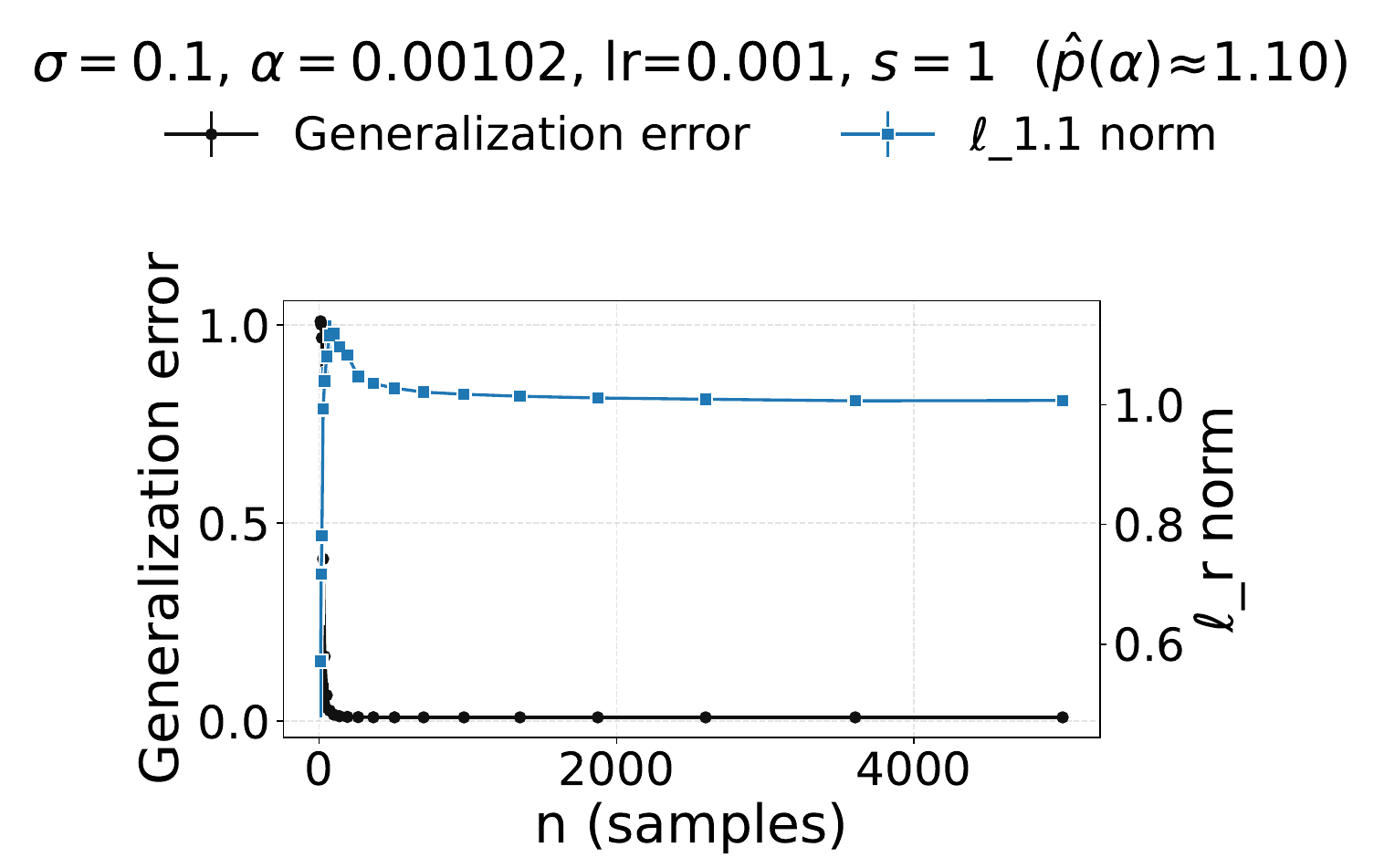}
  }
  \vspace{-0.3em}
  \caption{\textbf{$w^\star=e_1$ (sparsity $s{=}1$), moderate noise.}
  Larger learning rates produce a steadily rising $\ell_{1.1}$ and shift the elbow to larger $n$; decreasing $\mathrm{lr}$ suppresses the rise and restores a near‑plateau.}
  \label{fig:e1-lr-sig01}
\end{figure*}

\begin{figure*}[t]
  \centering
  \subfigure[$\sigma=0.5$, $\alpha=0.00102$, $\mathrm{lr}=0.1$, $s{=}1$]{
    \includegraphics[width=0.31\textwidth]{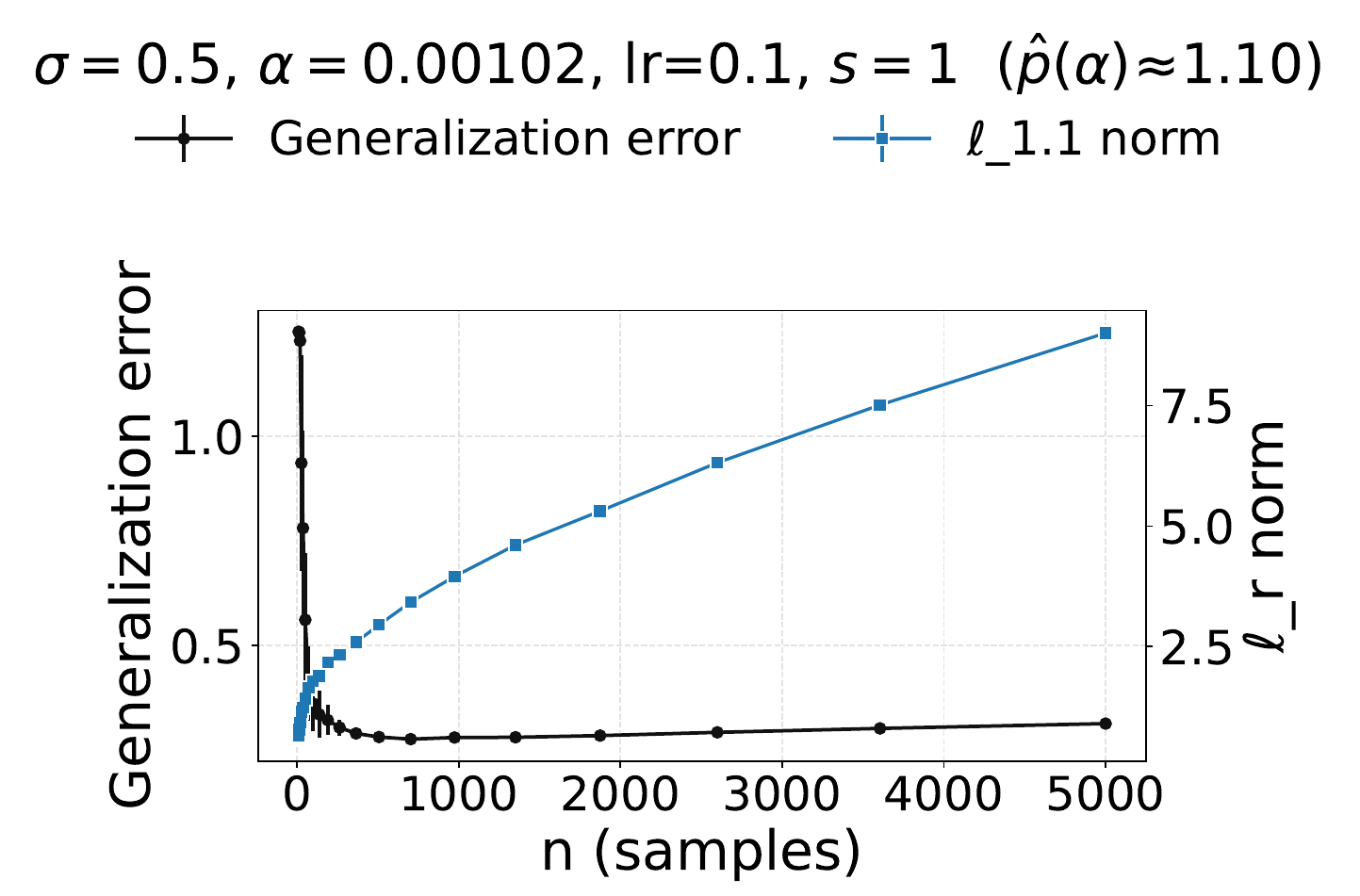}
  }\hfill
  \subfigure[$\sigma=0.5$, $\alpha=0.00102$, $\mathrm{lr}=0.01$, $s{=}1$]{
    \includegraphics[width=0.31\textwidth]{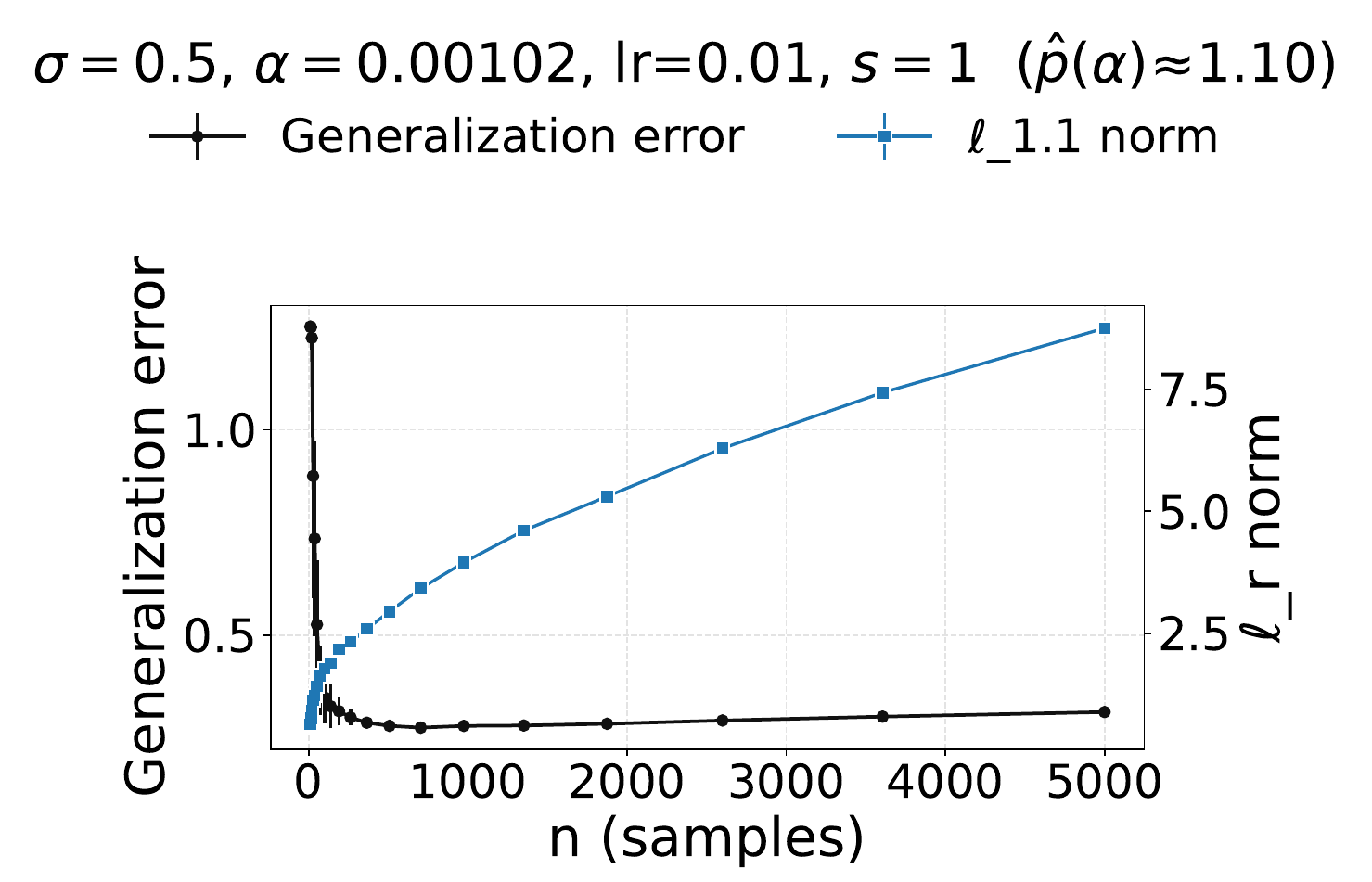}
  }\hfill
  \subfigure[$\sigma=0.5$, $\alpha=0.00102$, $\mathrm{lr}=0.001$, $s{=}1$]{
    \includegraphics[width=0.31\textwidth]{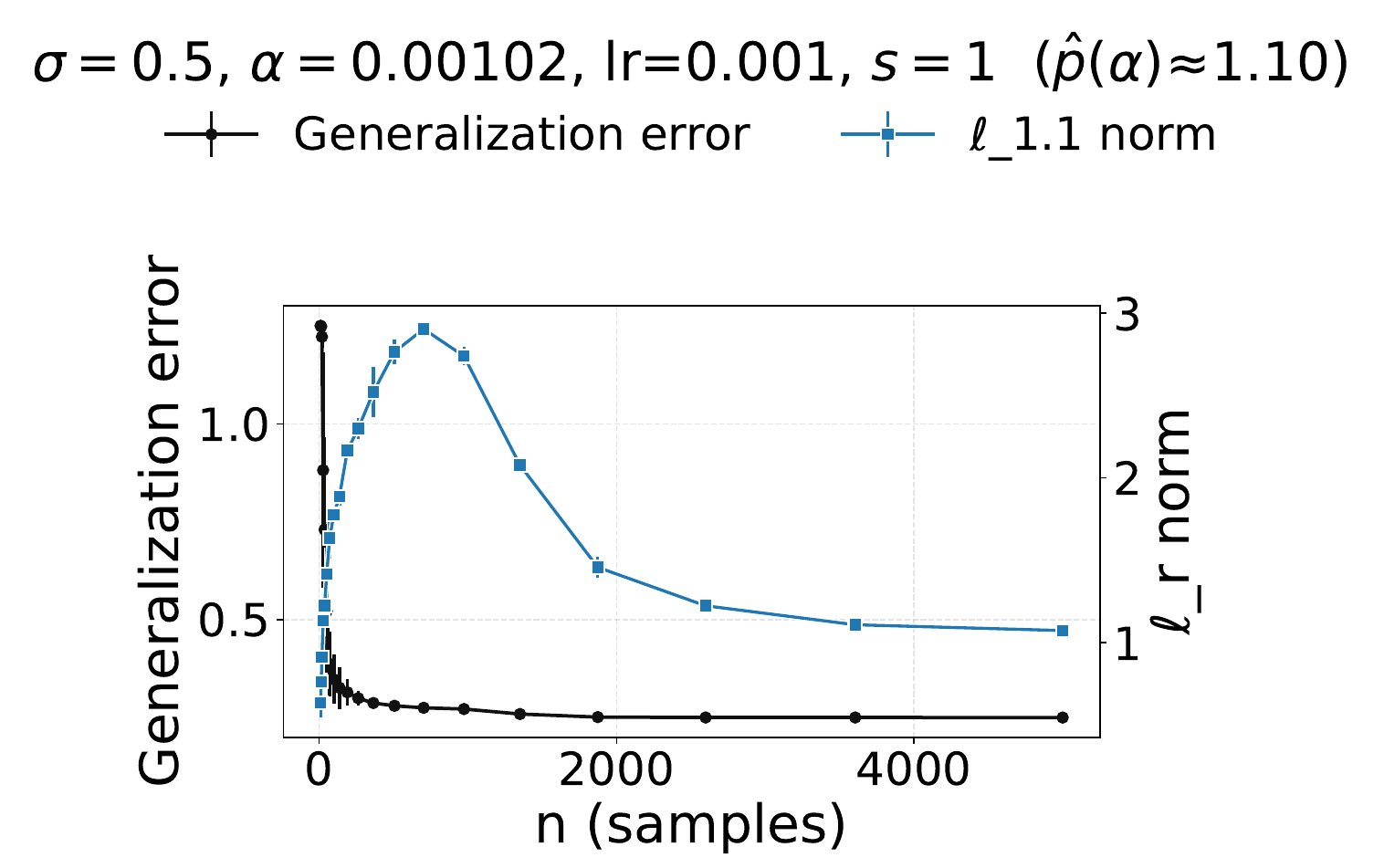}
  }
  \vspace{-0.3em}
  \caption{\textbf{$w^\star=e_1$ (sparsity $s{=}1$), heavy noise.}
  The learning‑rate‑induced increase in $\ell_{1.1}$ is strongest at high noise: $\mathrm{lr}{=}0.1$ (and to a lesser extent $0.01$) yields monotone growth with $n$, whereas $\mathrm{lr}{=}0.001$ shows a transient bump and then relaxes toward a plateau---evidence that the elbow shifts right under larger $\mathrm{lr}$.}
  \label{fig:e1-lr-sig05}
\end{figure*}

\section{Larger sparsity \(s\) for explicit \(\min\|w\|_{p}\) linear regression}
\label{app:large-s-explicit-p}

\begin{figure*}[t]
  \centering
  \subfigure[\(p{=}1.1,~s{=}500\)]{
    \includegraphics[width=0.31\textwidth]{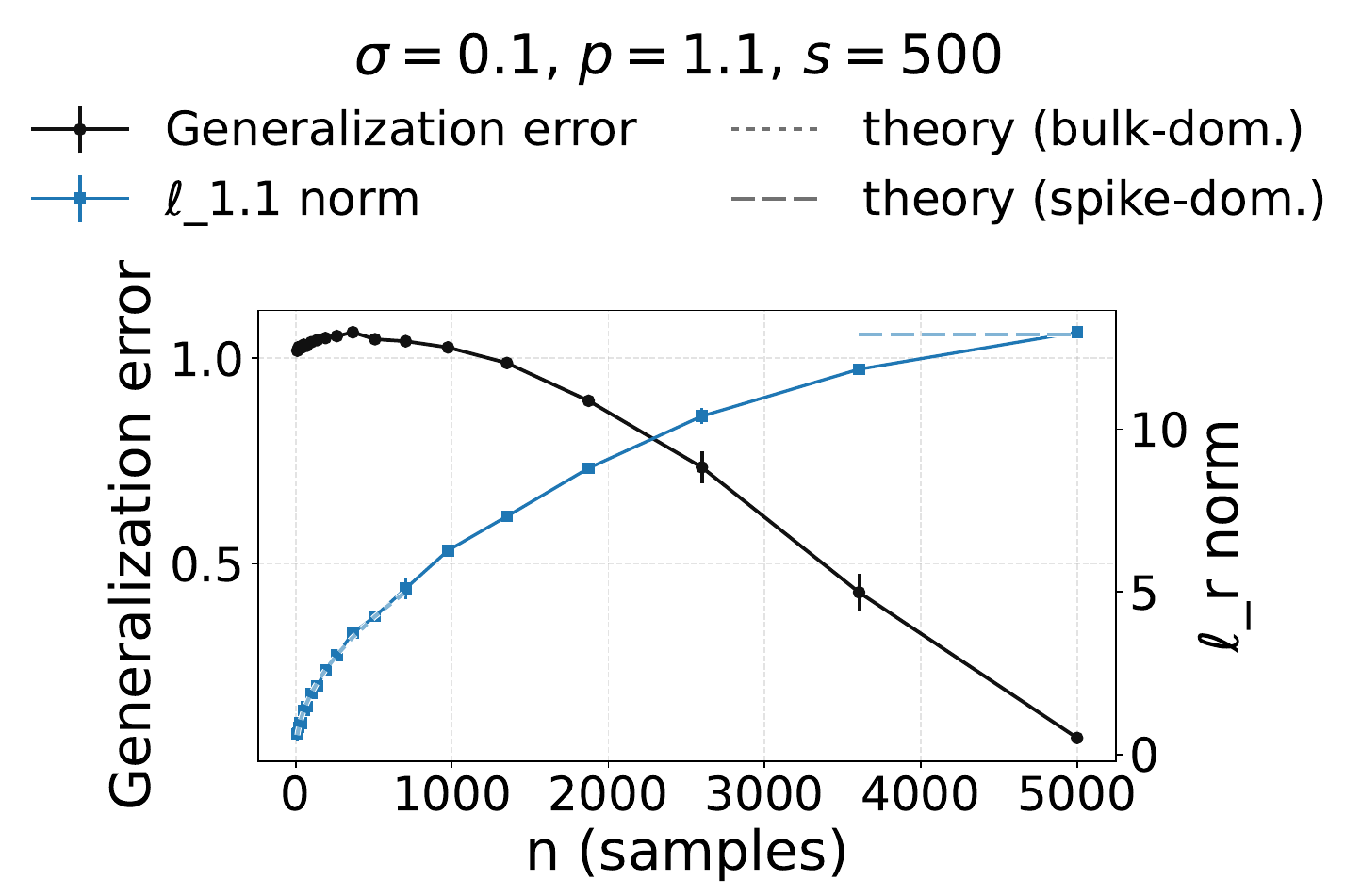}
  }\hfill
  \subfigure[\(p{=}1.5,~s{=}500\)]{
    \includegraphics[width=0.31\textwidth]{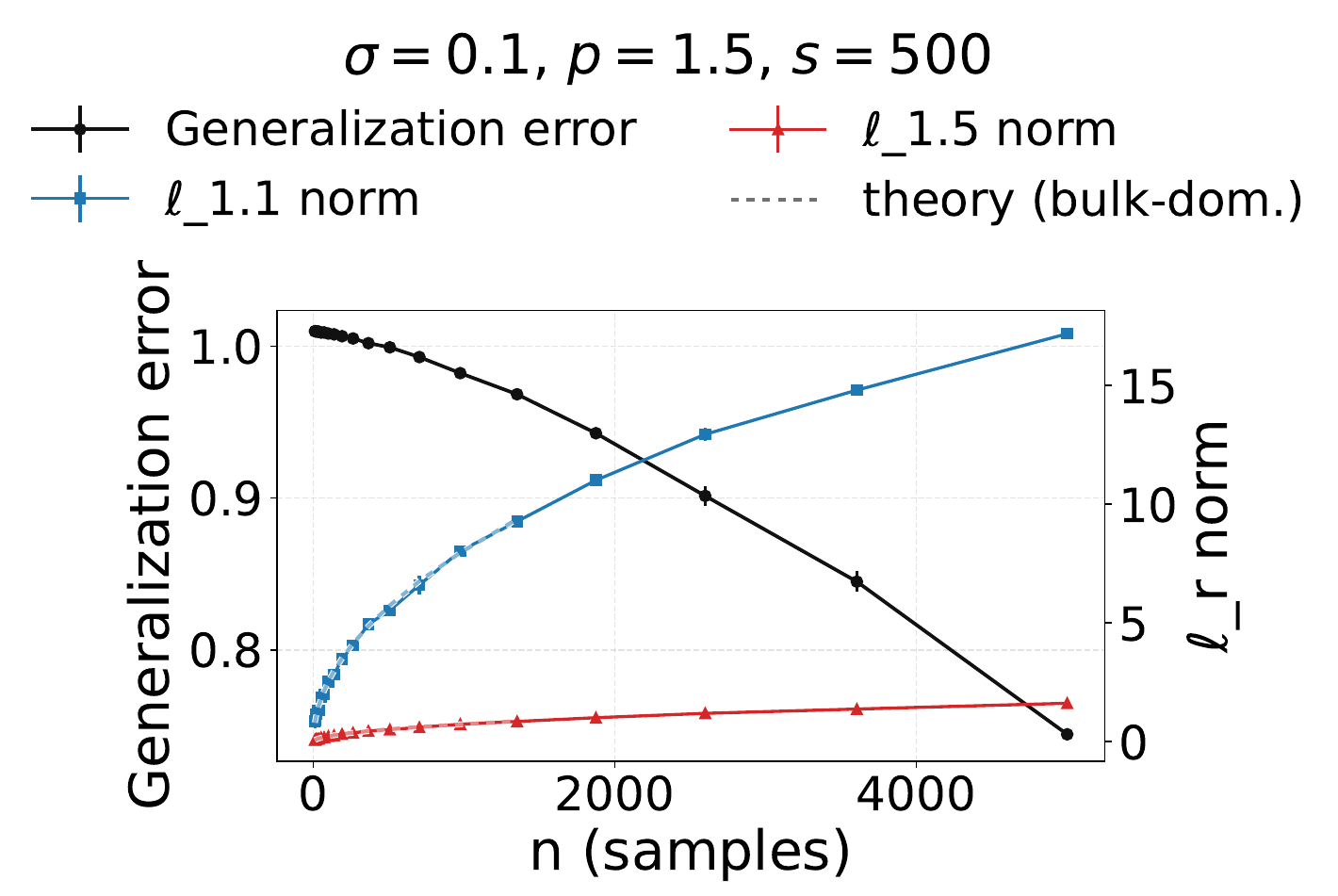}
  }\hfill
  \subfigure[\(p{=}1.9,~s{=}500\)]{
    \includegraphics[width=0.31\textwidth]{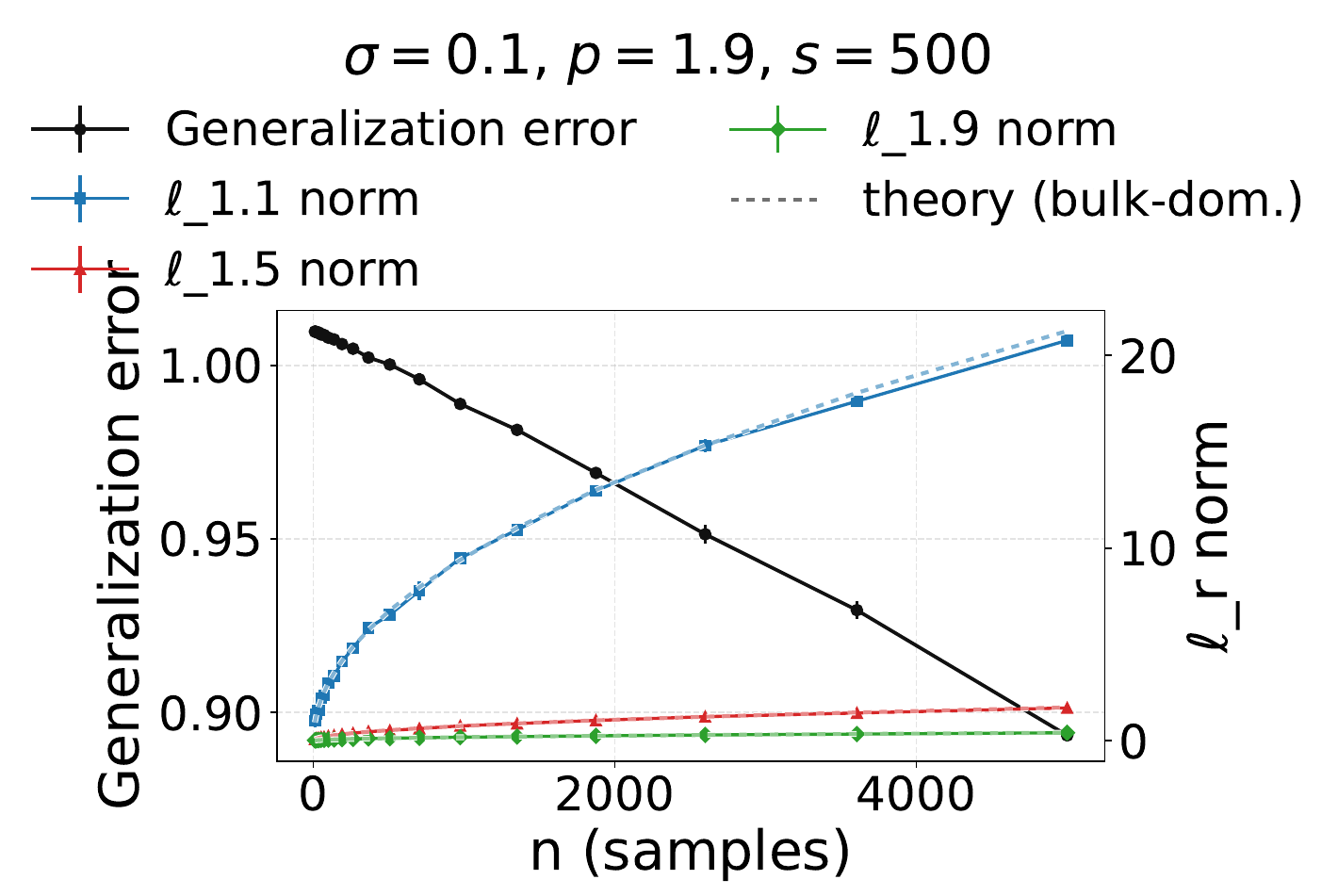}
  }
  \vspace{-0.3em}
  \caption{\textbf{Large sparsity, \(s{=}500\).}  
  Black---generalization error; colored---\(\ell_{r}\)-norms of the same interpolator (blue: \(\ell_{1.1}\), red: \(\ell_{1.5}\), green: \(\ell_{1.9}\)); gray dashed---bulk/spike overlays.}
  \label{fig:explicit-s500}
\end{figure*}

\begin{figure*}[t]
  \centering
  \subfigure[\(p{=}1.1,~s{=}5000\)]{
    \includegraphics[width=0.31\textwidth]{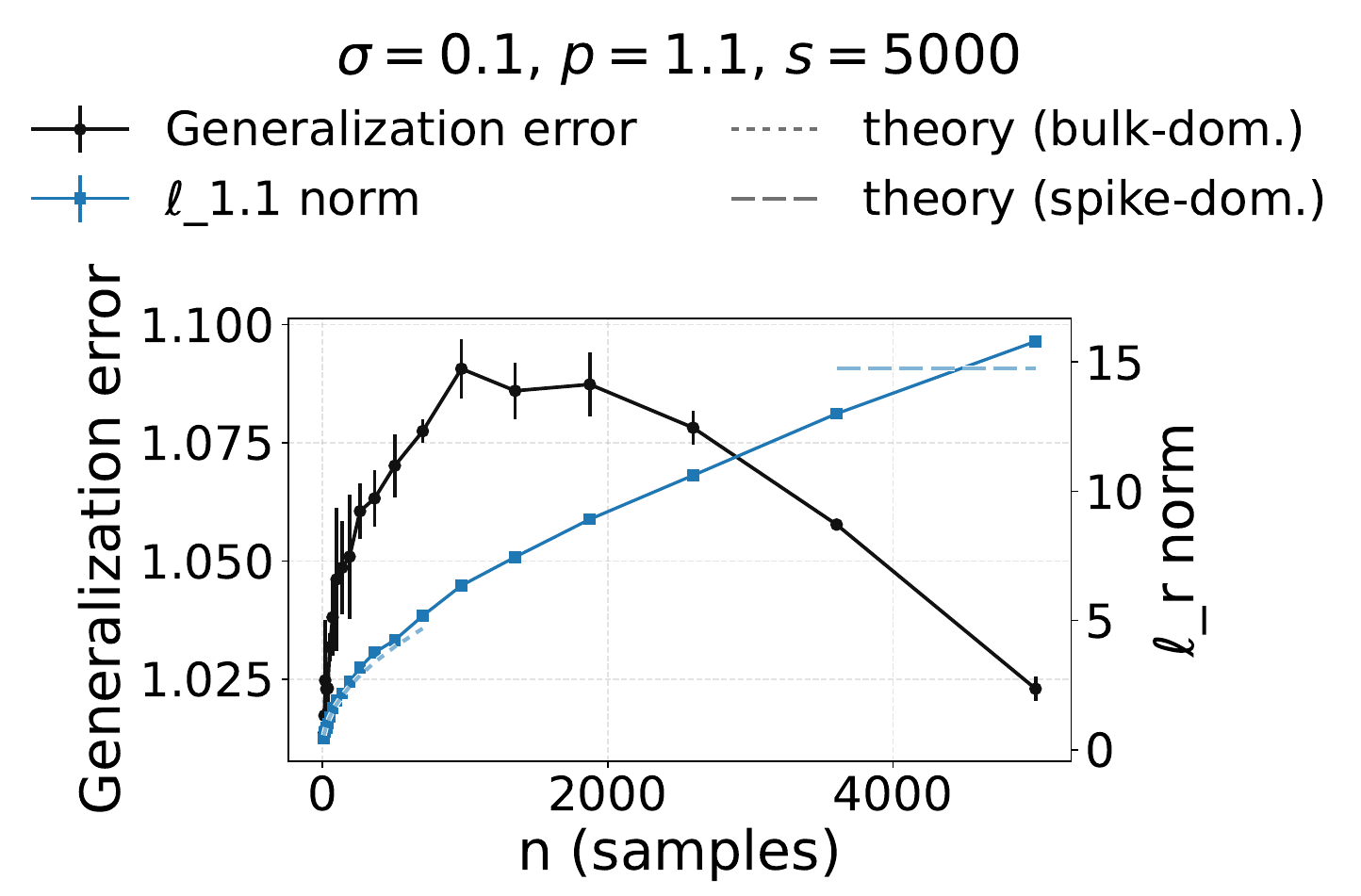}
  }\hfill
  \subfigure[\(p{=}1.5,~s{=}5000\)]{
    \includegraphics[width=0.31\textwidth]{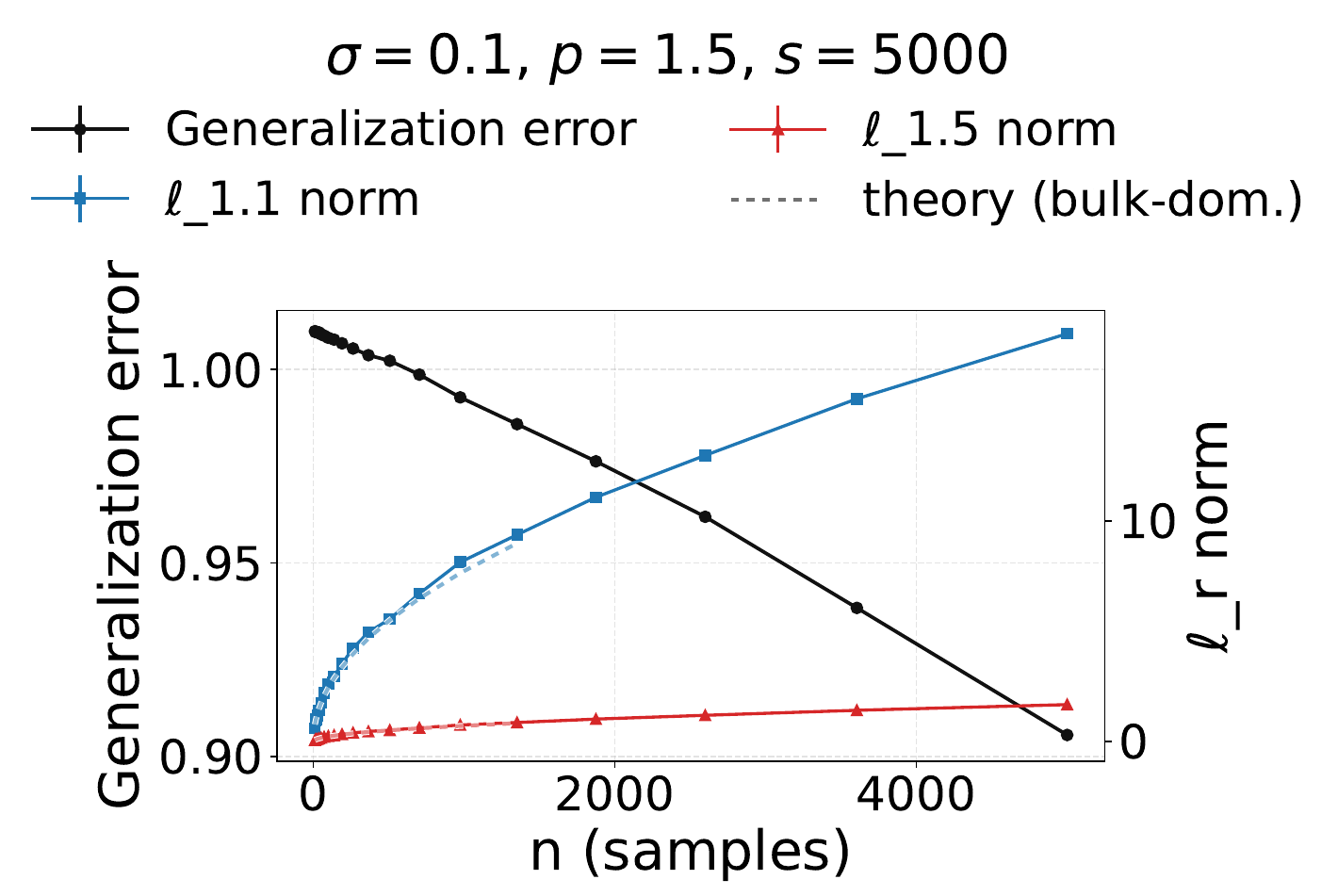}
  }\hfill
  \subfigure[\(p{=}1.9,~s{=}5000\)]{
    \includegraphics[width=0.31\textwidth]{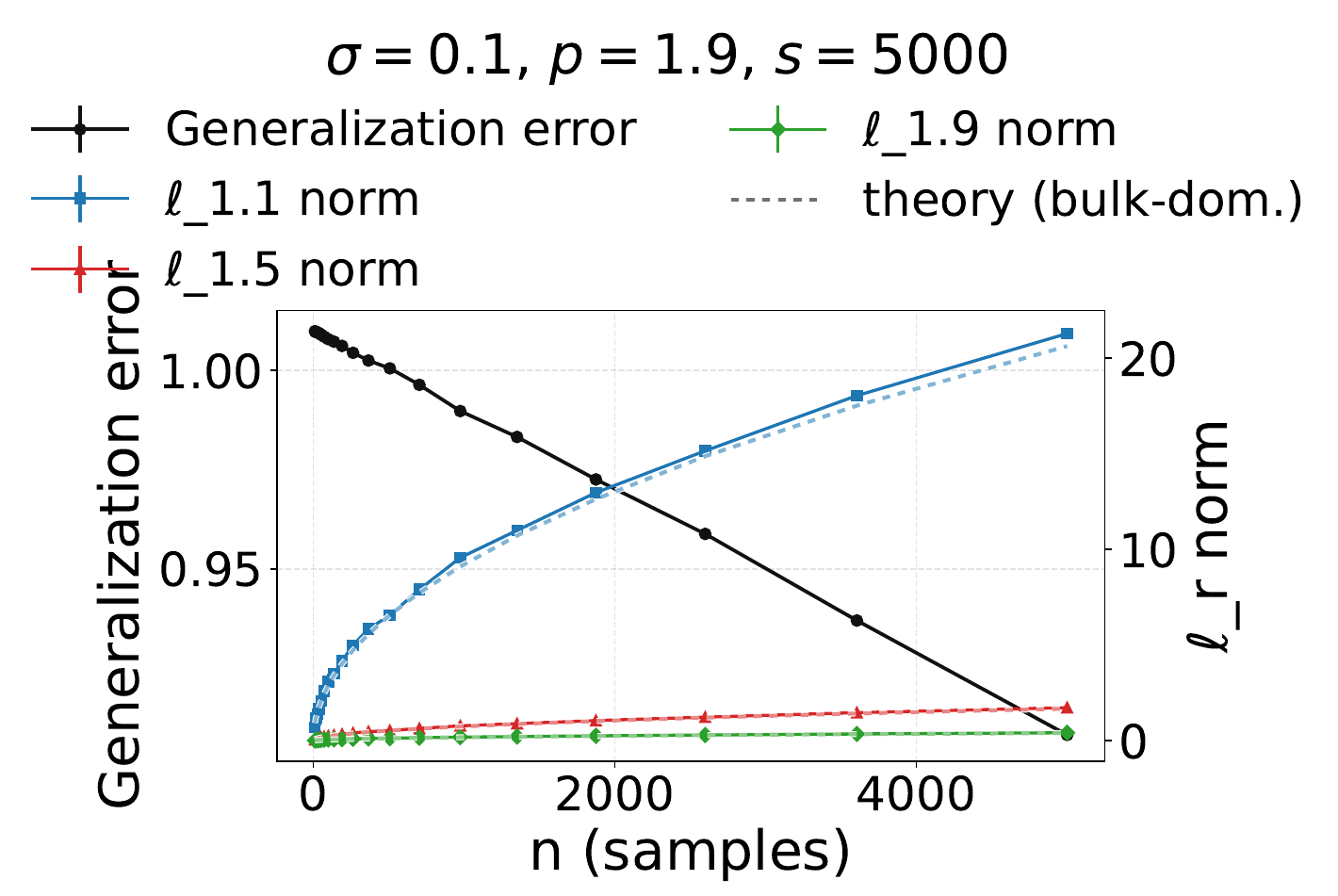}
  }
  \vspace{-0.3em}
  \caption{\textbf{Even larger sparsity, \(s{=}5000\).}
  Same conventions as Fig.~\ref{fig:explicit-s500}. Increasing \(s\) shifts the bulk\(\to\)spike crossover to larger \(n\).}
  \label{fig:explicit-s5000}
\end{figure*}
We revisit the explicit \(\min\|w\|_p\) experiments at larger sparsities \(s\in\{500,5000\}\) for \(p\in\{1.1,1.5,1.9\}\) under the same Gaussian design and noise \(\sigma=0.1\) as in the main text. Each panel reports generalization error (left axis) and several \(\ell_r\)-norms of the \emph{same} interpolating \(w\) (right axis); gray dashed curves are the bulk/spike theory overlays used earlier.

\paragraph{Comparison to \(s{=}50\).}
Across all three \(p\) values, the larger‑\(s\) experiments reprise the main‑text regime structure at larger sample sizes. For \(p\approx1\), lengthening the bulk‑dominated segment makes the initial \emph{increase} in generalization error clearly visible (especially at \(s{=}5000\)), after which the curve turns downward as alignment improves. For \(p\in\{1.5,1.9\}\), the same right‑shift occurs yet the curves remain monotone; the rounder objectives keep the estimator from over‑relying on noisy directions early on. In every panel, the blue \(\ell_{1.1}\) curve remains a useful “regime meter”: rapid growth signals bulk influence, and gradual approach toward the spike guide signals improving alignment---even though none of the \(\ell_r\) curves truly flatten within our plotted range.

\paragraph{Small \(p\) (here \(p{=}1.1\)).}
Relative to the \(s{=}50\) panels in the main text, both larger‑\(s\) slices preserve the same two‑phase story but the handoff happens later in \(n\).
At \(s{=}500\) (Fig.~\ref{fig:explicit-s500}a), generalization error is flat‑to‑slightly higher at small \(n\) while \(\|w\|_{1.1}\) rises rapidly; as \(n\) grows, generalization error begins to fall and the blue curve bends toward (but, in our range, does not meet) the spike overlay.
At \(s{=}5000\) (Fig.~\ref{fig:explicit-s5000}a), the shape is unmistakable: generalization error \emph{first increases} to a visible peak at intermediate \(n\) and then drops. The \(\ell_{1.1}\) curve keeps climbing throughout the displayed range, tracking the bulk‑dominated guide before gradually approaching the spike prediction (without flattening). This “up‑then‑down” with more samples matches the double‑descent picture for interpolating estimators---early fits lean on high‑variance bulk directions; only later does the solution align with signal---well documented in linear and deep settings \citep{belkin2019reconciling,nakkiran2020deep,hastie2022surprises}.

\paragraph{Larger \(p\) (here \(p{=}1.5\) and \(p{=}1.9\)).}
Compared to \(s{=}50\), the curves again shift rightward in \(n\), but the qualitative picture is unchanged: generalization error decreases \emph{monotonically} over the whole range for both sparsities (Figs.~\ref{fig:explicit-s500}b-c and \ref{fig:explicit-s5000}b-c). The minimized \(\ell_p\)-norms (red for \(p{=}1.5\), green for \(p{=}1.9\)) drift only slightly upward rather than plateauing, while the auxiliary \(\ell_{1.1}\) diagnostic continues its steady growth along the bulk guide. The absence of an initial increase in generalization error is consistent with the rounder geometry of larger‑\(p\) balls: the interpolating solution spreads weight more evenly and avoids the brittle, variance‑heavy fits that create the small‑\(p\) bump, echoing analyses of benign overfitting/ridgeless least squares and convex‑geometric shrinkage of descent cones \citep{bartlett2020benign,hastie2022surprises,chandrasekaran2012convex,amelunxen2014living}.

\section{Extending the \texorpdfstring{$\ell_r$}{l\_r}-Scaling Theorem to Diagonal Linear Networks}
\label{app:DLN-extension}

This section is a blueprint for porting our main $\ell_r$-scaling theorem from the minimum-$\ell_p$ interpolator to predictors selected by training \emph{diagonal linear networks} (DLNs) with arbitrary depth. The goal is to reuse the entire spike+bulk argument with minimal surgery by swapping in the right implicit regularizer and the right one-dimensional balance. The guidance below covers both the two-layer case and the general depth-$D$ case, aligning with the characterization of implicit bias in DLNs proved by~\citet{WoodworthEtAl2020KernelRich}.

In our $\min\ell_p$ analysis, the predictor among all interpolators is selected by a separable power potential, and the proof runs through a dual “link” that maps the ray variable back to primal coordinates. DLNs fit exactly the same template:
\begin{itemize}
  \item For two layers, the implicit regularizer is the hypentropy-type separable potential, and the link is the corresponding odd, strictly increasing map (Woodworth et~al., Thm.~1). Non-uniform initialization simply reweights coordinates multiplicatively throughout.
  \item For depth $D\ge3$, the implicit regularizer is again separable but with a depth-dependent link; Woodworth et~al.\ (Thm.~3) identify the unique depth-$D$ link and its inverse. Practically, you can treat it as “the $D$-link” playing the role occupied by the power map in $\min\ell_p$ and by the hypentropy link at $D=2$.
\end{itemize}
No other structural change is needed: once the link is fixed, every step of our proof goes through with the same spike/bulk decomposition and the same ray reduction.

As in the $\min\ell_p$ proof, restrict the dual variable to the ray spanned by the labels and determine a single scale $t$ from a strictly monotone one-dimensional balance. Conceptually:
\begin{itemize}
  \item In the \emph{kernel-like window} (small arguments of the link on both spike and bulk), the link linearizes and the entire analysis collapses to the $p=2$ case \emph{verbatim}. This is the “lazy” regime.
  \item In the \emph{rich-like window} (arguments large on the bulk and/or a dominant spike), the nonlinearity of the link controls the transition. For two layers, the balance yields a Lambert--$W$ controlled scale; for $D\!\ge\!3$, the depth-$D$ link gives a faster, polynomial-in-initialization transition. You do not need a closed form---just the monotonicity and the small/large-argument asymptotics.
\end{itemize}

\paragraph{Bulk block.}
Replace the power moment used in the $\min\ell_p$ bulk bound by the depth-appropriate scalar functional that averages the link across a standard Gaussian coordinate. Operationally:
\begin{itemize}
  \item Define a \emph{bulk scalar} by applying the DLN link at the ray scale to a single Gaussian coordinate and taking its $\ell_r$ moment (to the $1/r$). This plays the exact role of $m_t^{1/t}$ in the $\min\ell_p$ proof.
  \item Use the same Gaussian embedding for the bulk design to lift this scalar to the full bulk contribution. In the kernel-like window you recover the $p=2$ scaling exactly; in the rich-like window you get the accelerated depth-$D$ growth predicted by the link’s large-argument behavior.
  \item Keep track of the global scaling coming from the link’s overall prefactor (this carries the initialization scale); it multiplies both bulk and spike-remainder terms.
\end{itemize}

\paragraph{Spike block.}
On the spike coordinates, keep the original two-part structure:
\begin{itemize}
  \item \emph{Spike-main:} apply the link to the mean shift determined by the signal; if a single coordinate dominates the one-dimensional balance, the selected predictor saturates at the spike scale and becomes essentially independent of the initialization (up to lower-order logarithmic or depth-dependent corrections).
  \item \emph{Spike-remainder:} control the residual Gaussian fluctuation by the same operator-norm and concentration events as in the $\min\ell_p$ proof; its $\ell_r$ size is the bulk scalar (at the ray scale) times $s^{\max\{1/r,\,1/2\}}$, again multiplied by the link’s global prefactor.
\end{itemize}
When spikes are \emph{meek} relative to the bulk (no dominant coordinate), the spike block linearizes and you are back in the $p=2$ laws.

\paragraph{Unified bound.}
After these replacements, the final display has the identical three-term structure:
\begin{quote}
\emph{DLN predictor’s $\ell_r$ size} $=$ \emph{maximum of} (spike-main, bulk, spike-remainder),
\end{quote}
with each term obtained from the $\min\ell_p$ counterpart by: (i) replacing the power link with the DLN link; (ii) inserting the link’s global prefactor; and (iii) using the DLN bulk scalar in place of the power moment. In the kernel-like window this reproduces the $p=2$ version \emph{exactly}; in the rich-like window you get either bulk-controlled growth (Lambert--$W$ for two layers; depth-accelerated for $D\!\ge\!3$) or spike saturation.

\paragraph{Depth and initialization intricacy.}
\begin{itemize}
  \item \textbf{Depth $D\ge3$.} The depth-$D$ link is odd, strictly increasing, and has a simple linearization at the origin and an explicit rational form away from it (Woodworth et~al., Thm.~3). This yields the same kernel-like reduction and a sharper rich-like transition than at $D=2$. You never need its closed form---only its monotonicity and asymptotics.
  \item \textbf{Non-uniform initialization.} The per-coordinate \emph{shape} of the initialization simply reweights the separable potential and carries multiplicatively through the link. Every bound inherits these weights in a purely multiplicative way (Woodworth et~al., Thm.~1).
  \item \textbf{Limits.} Large initialization recovers the minimum-$\ell_2$ norm predictor; vanishing initialization recovers the minimum-$\ell_1$ predictor (with the usual caveats on how small “small” must be). These are the DLN analogues of the kernel and rich limits and hold for all depths covered above.
\end{itemize}

\paragraph{A handy dictionary for porting the proof.}
To translate any display or lemma from the $\min\ell_p$ analysis to DLNs, we can make the following substitutions:
\begin{enumerate}
  \item \textbf{Power link} $\to$ \textbf{DLN link}: replace the power map by the depth-appropriate link (hypentropy at two layers; the depth-$D$ link from Woodworth et~al.\ otherwise), including its global prefactor.
  \item \textbf{Ray scale} $\to$ \textbf{DLN balance}: keep the same one-dimensional, strictly monotone balance along the label ray; solve it numerically or via asymptotics (linear in the kernel-like window; Lambert--$W$ at two layers and power-law at depth $D\!\ge\!3$ in the rich-like window).
  \item \textbf{Bulk scalar}: replace the power moment by the $\ell_r$ moment of the DLN link applied to a single Gaussian coordinate at the ray scale; lift via the Gaussian embedding exactly as before.
  \item \textbf{Spike block}: reuse the deterministic-plus-Gaussian decomposition, the operator-norm and concentration events, and the same $\ell_r$ geometry; only the link and its global prefactor change.
\end{enumerate}

With the substitutions above, the $\ell_r$-scaling analysis for the minimum-$\ell_p$ interpolator transfers directly to DLNs of any depth. The proof structure, the spike/bulk decomposition, and the final three-term form remain identical; only the link and its scalar balance change. Two layers inherit a Lambert--$W$ bulk scale; deeper networks transition faster with initialization due to their depth-$D$ link. In the kernel-like window, everything collapses to the $p=2$ bounds almost word-for-word.

\end{document}